\documentclass[11pt]{article}
\usepackage{fullpage}
\usepackage{amsmath,amsthm,amsfonts,amssymb,dsfont,bm,booktabs}
\usepackage{enumerate,color,xcolor}
\usepackage[colorlinks,linkcolor=cyan,citecolor=blue]{hyperref}
\usepackage{url}
\usepackage{graphicx}
\usepackage{subcaption}
\usepackage{float}
\usepackage{placeins} %
\usepackage{tikz}
\usetikzlibrary{arrows.meta}
\numberwithin{equation}{section}
\theoremstyle{plain}

\newtheorem{theorem}{Theorem} \newtheorem{corollary}[theorem]{Corollary}  \newtheorem{lemma}[theorem]{Lemma} \newtheorem{proposition}[theorem]{Proposition} \newtheorem{assumption}[theorem]{Assumption}

\newcommand{\ep}{\varepsilon}  \newcommand{\Law}{\mathrm{Law}}  

\begin{document}
\begin{center}
{\Large\bf Reflected Anchored Langevin Algorithms}
\end{center}

\begin{center}
{Changwei Tu}\,\footnote{Hong Kong University of Science and Technology (Guangzhou), Guangzhou, Guangdong Province, People's Republic of China, ctu570@connect.hkust-gz.edu.cn}, 
{Xiaoyu Wang}\,\footnote{Hong Kong University of Science and Technology (Guangzhou), Guangzhou, Guangdong Province, People's Republic of China, xiaoyuwang@hkust-gz.edu.cn}, 
{Yingli Wang}\,\footnote{School of Mathematical Sciences, Fudan University, Shanghai, People's Republic of China, yingliwang@fudan.edu.cn}, {Xicheng Zhang}\,\footnote{School of Mathematics and Statistics, Beijing Institute of Technology, Beijing 100081, People's Republic of China, XichengZhang@gmail.com}, 
Lingjiong Zhu\,\footnote{Department of Mathematics, Florida State University, Tallahassee, Florida, United States of America, zhu@math.fsu.edu}
\end{center}

\begin{center}
\today
\end{center}

\begin{abstract}
First order Langevin algorithms for constrained sampling in machine learning, such as projected Langevin Monte Carlo which are based on discretizations of reflected Langevin dynamics, require differentiable log densities that limits their applicability. This paper introduces reflected anchored Langevin dynamics (RALD), a reflected diffusion that converges to non-differentiable targets on constrained domains. The method uses a smooth anchored reference potential and multiplies the drift and noise covariance of its reflected Langevin dynamics by the same state dependent scaling factor. Its Euler-Maruyama discretization with projection gives reflected anchored Langevin Monte Carlo (RALMC) algorithm. We prove explicit convergence bounds and iteration complexity for RALMC in the 2-Wasserstein distance to the target distribution. Numerical experiments are provided to illustrate the theoretical predictions and the empirical performance of the method.
\end{abstract}

\section{Introduction}

The problem of \textit{sampling} a target distribution
\begin{align}\label{eqn:target:unconstrained}
\pi(x)\propto\exp(-U(x)),\qquad x\in\mathbb{R}^{d},
\end{align}
for a function $U:\mathbb{R}^d \to \mathbb{R}$
arises in many applications, including Bayesian statistical inference \cite{gelman1995bayesian}, Bayesian formulations of inverse problems \cite{stuart2010inverse}, and Bayesian classification and regression tasks in machine learning \cite{andrieu2003introduction,teh2016consistency,DistMCMC19,GIWZ2024,DIGing,regime2025}.
\textit{Langevin algorithms} are popular \textit{Markov chain Monte Carlo} (MCMC) methods that allow one to sample from \eqref{eqn:target:unconstrained}, and the most classical one is based on the discretization of
\textit{overdamped Langevin dynamics} (OLD) \cite{Dalalyan,DM2016,DM2017,DK2017,Raginsky}:
\begin{equation}\label{eq:overdamped-2}
dX_{t}=-\nabla U(X_{t})dt+\sqrt{2}dW_{t},
\end{equation}
where $B_{t}$ is a standard $d$-dimensional Brownian motion. Under some mild assumptions on $U(\cdot)$, the diffusion \eqref{eq:overdamped-2} admits a unique stationary distribution \eqref{eqn:target:unconstrained} \cite{chiang1987diffusion,stroock-langevin-spectrum}.

In recent years, there has been growing interest in sampling a target distribution \eqref{eqn:target:unconstrained}, where the potential function $U(x)$ is \textit{non-differentiable} such that the classical OLD \eqref{eq:overdamped-2} is not applicable since
it requires the existence of the gradient of $U$ everywhere in $\mathbb{R}^{d}$. Several approaches modify the non-differentiable potential function to enable gradient based sampling. \cite{Zhou14} proposed a gradient based adaptive stochastic search method that smooths the original objective by integrating it against a parameterized family of exponential densities, producing a differentiable surrogate. \cite{Durmus18} introduced a proximal MCMC method using the Moreau-Yoshida envelope (MYE) to regularize the nonsmooth component. Inspired by \cite{Durmus18}, \cite{Goldman21} applied the Moreau-Yoshida envelope in both overdamped and underdamped Langevin algorithms and reported empirical success across discrete densities. Similarly, \cite{Mou19} developed the Metropolis adjusted proximal gradient Langevin dynamics (MAPGLD) with theoretical convergence guarantees. \cite{SR2020} introduced Stochastic Proximal Langevin Algorithm (SPLA), which further extends this line of work to nonsmooth convex functions that may take infinite values. For subgradient based approaches, \cite{Durmus19} proposed the Stochastic Subgradient Langevin Dynamics (SSGLD), which relaxes the differentiability requirement by using subdifferentials. \cite{Chatterji20} provided further convergence results under similar conditions. Another strategy for handling nonsmoothness is Gaussian smoothing. \cite{nesterov2017random} provide guarantees for nonsmooth, Lipschitz, and convex objectives, while \cite{Chatterji20} analyze smoothed Langevin dynamics under convexity and subdifferentiability assumptions.

Very recently, \cite{GNZZ2025} introduced \textit{anchored Langevin dynamics} (ALD) from a probabilistic perspective:
\begin{equation}\label{anchored:SDE}
dX_{t}=-\nabla U_{0}(X_{t})e^{(U-U_{0})(X_{t})}dt+\sqrt{2}e^{\frac{1}{2}(U-U_{0})(X_{t})}dW_{t},
\end{equation}
where $U,U_{0}:\mathbb{R}^{d}\rightarrow\mathbb{R}$ are continuous functions, and $W_{t}$ is a standard Brownian motion in $\mathbb{R}^d$. In \eqref{anchored:SDE}, $U_{0}$ plays the role of an \textit{anchored reference potential}. Note that in \eqref{anchored:SDE}, even though we require $U_{0}$ to be differentiable, $U$ may not be differentiable. It is proved in \cite{GNZZ2025} that under mild conditions, \eqref{anchored:SDE} admits $\pi\propto e^{-U(x)}$ as the unique invariant distribution. Hence \eqref{anchored:SDE} can be used to sample $\pi\propto e^{-U(x)}$ where $U(x)$ is not necessarily differentiable. It is known that the classical overdamped Langevin dynamics fails to sample heavy tailed distributions with exponential ergodicity. In other words, it does not converge to the target exponentially fast in time, so convergence can be slow even when convergence holds. See e.g. \cite{RT96}. As a remedy, it is shown in \cite{GNZZ2025} that the anchored Langevin dynamics \eqref{anchored:SDE} can sample a \textit{heavy tailed} Gibbs distribution $\pi\propto e^{-U(x)}$ with convergence being exponentially fast in time.

On the other hand, there has also been a growing interest in the machine learning literature to study constrained sampling problem, where the target distribution is supported on a constrained domain:
\begin{equation}\label{target:dist:constrained}
\pi\propto e^{-U(x)},\qquad x\in\mathcal{C}.
\end{equation} 
A standard approach is to discretize \textit{reflected Langevin dynamics} (RLD), namely an overdamped Langevin diffusion with reflected boundary. In the literature, \cite{bubeck2015finite, bubeck2018sampling} studied the \textit{projected Langevin Monte Carlo} (PLMC) algorithm for constrained sampling that projects the iterates back to the constraint set after applying the Langevin step. Since the seminal works \cite{bubeck2015finite, bubeck2018sampling} where the density $\pi$ is assumed to be log concave, there have been many studies proposing various Langevin algorithms for constrained sampling.
\cite{Lamperski2021} considers the projected stochastic gradient Langevin dynamics in the setting of nonconvex smooth Lipschitz $U$ on a convex body where the gradient noise is assumed to have finite variance with a uniform sub Gaussian structure. See also \cite{zheng2022constrained}.
Proximal Langevin Monte Carlo is proposed in \cite{Brosse} for constrained sampling. See also \cite{SR2020} for a study on the proximal stochastic gradient Langevin algorithm from a primal dual perspective. Proximal Langevin methods are particularly relevant for nonsmooth composite potentials when the corresponding proximal map is tractable. Our approach is different: the anchored construction keeps the constrained reflected dynamics explicit and, in the constant reference case, leads to a projected zeroth order scheme that only requires evaluations of $U$, see also \cite{2026IslamZhu} for a study in the decentralized setting.
Langevin algorithms based on mirror descent (see e.g. \cite{hsieh2018mirrored,Chewi2020,Zhang2020,TaoMirror2021,Ahn2021}) can also be used for constrained sampling. This line was initiated by \cite{hsieh2018mirrored}, inspired by classical mirror descent in optimization.
More recently, inspired by the penalty method in the optimization literature, penalized Langevin Monte Carlo algorithms are proposed and studied in \cite{GHZ2022}, where the potential function $U$ can be nonconvex in general. In addition, constrained nonconvex exploration combined with replica exchange Langevin dynamics was proposed and studied in \cite{constraint_replica}.
Recently, nonreversible Langevin algorithms with skew reflected boundary based on skew reflected nonreversible Langevin dynamics (SRNLD) were proposed and studied in \cite{DFTWZ2025}, where an antisymmetric matrix $J$ is added to the dynamics, that can accelerate the convergence of the (reversible) projected Langevin Monte Carlo.
However, how to choose the antisymmetric matrix $J$ in practice is a challenging problem. Very recently, \cite{WTWZ2025} showed acceleration for constrained sampling through the lens of large deviations and asymptotic variance for SRNLD under the additional constraint that $J(x)\nu(x)=0$, where $\nu(x)$ is the outward unit normal vector. For ball constraints, and more generally for constrained domains characterized by sublevel sets that include smoothed $\ell_{p}$ balls as special cases, \cite{WTWZ2025} provided an explicit construction of $J(x)$. Under this additional condition, skew reflection becomes regular reflection. Extensive numerical experiments using both synthetic and real data in \cite{WTWZ2025} showed improved performance for choices of $J$ satisfying this condition.

In this paper, we study reflected anchored Langevin algorithms for constrained sampling with non-differentiable targets. In particular, the constant reference construction gives a projected zeroth order scheme that requires no gradient evaluations of the target potential. In addition to handling nonsmooth constrained targets, we also identify mechanisms through which the state dependent scaling factor can improve mixing relative to PLMC, both at the level of Dirichlet forms and through local bottleneck crossing probabilities.

Consider the constrained sampling problem \eqref{target:dist:constrained}, 
where $\mathcal{C}\subsetneq\mathbb{R}^{d}$, and $U(x)$ is not necessarily differentiable. In this paper, we propose \textit{reflected anchored Langevin dynamics} (RALD):
\begin{equation}\label{eqn:RALD}
dX_{t}=-\nabla U_{0}(X_{t})e^{(U-U_{0})(X_{t})}dt+\sqrt{2}e^{\frac{1}{2}(U-U_{0})(X_{t})}dW_{t}-\nu_tL(dt),
\end{equation}
where $U_{0}:\mathbb{R}^{d}\rightarrow\mathbb{R}$ plays the role as an anchored reference potential, $W_{t}$ is a standard Brownian motion in $\mathbb{R}^d$, and $\nu_t$ is an outward unit normal vector at $X_t$ for $L(dt)$ almost every $t$. The term $-\nu_tL(dt)$ reflects the process at the boundary and ensures $X_{t}\in\mathcal{C}$ for every $t$ given that $X_{0}\in\mathcal{C}$. In particular, $K_t:=\int_{0}^{t}\nu_sL(ds)$ is an outward normal bounded variation regulator and the measure $L(dt)$ is such that $L([0,t])$ is finite, $L(dt)$ is supported on $\{t:X_{t}\in\partial\mathcal{C}\}$. In~\eqref{eqn:RALD}, the \emph{state dependent scaling factor} $a(x):=e^{U(x)-U_0(x)}$ multiplies the reference drift $-\nabla U_0(x)$ by $a(x)$ and the noise amplitude $\sqrt{2}$ by $\sqrt{a(x)}$. Thus the instantaneous noise covariance is $2a(x)I_d$. We refer to $a$ as the scaling factor below.

To implement \eqref{eqn:RALD} in practice, based on its discretization, we propose \textit{reflected anchored Langevin Monte Carlo} (RALMC):
\begin{equation}\label{eqn:RALMC}
x_{k+1}=\mathcal{P}_{\mathcal{C}}\left(x_{k}-\eta\nabla U_{0}(x_{k})e^{(U-U_{0})(x_{k})}+\sqrt{2\eta}e^{\frac{1}{2}(U-U_{0})(x_{k})}\xi_{k+1}\right),
\end{equation}
where $\mathcal{P}_{\mathcal{C}}$ is the projection onto the set $\mathcal{C}$, defined as $\mathcal P_{\mathcal{C}}(x):=\arg\min_{y\in \mathcal{C}}\|x-y\|$ for any $x\in\mathbb{R}^{d}$, $\eta>0$ is the stepsize, or learning rate, $\xi_{k}$ are i.i.d. Gaussian random vectors $\mathcal{N}(0,I_{d})$.

For ALD \eqref{anchored:SDE} to sample $\pi\propto e^{-U(x)}$ on $\mathbb{R}^{d}$, in practice, one needs to choose $U_{0}$ such that the difference $U(x)-U_{0}(x)$ is bounded so that the term $e^{(U-U_{0})(x)}$ will not cause instability. Indeed, \cite{GNZZ2025} chooses $U_{0}(x)$ to be the Gaussian smoothing of $U(x)$ to sample $\pi\propto e^{-U(x)}$ for non-differentiable $U(x)$. On the other hand, RALD \eqref{eqn:RALD} is constrained on a compact domain $\mathcal{C}$ and the term $e^{(U-U_{0})(x)}$ is automatically uniformly bounded on $\mathcal{C}$, which makes the implementation more desirable. To sample $\pi\propto e^{-U(x)}$ for non-differentiable $U(x)$ on $\mathcal{C}$, one may choose $U_{0}(x)\equiv 0$ such that \eqref{eqn:RALD} becomes: \footnote{Note that one can also choose $U_{0}(x)\equiv c_{0}$ for some constant $c_{0}$. Then \eqref{eqn:RALD} becomes: $dX_{t}=\sqrt{2}\sigma_{0}e^{\frac{1}{2}U(X_{t})}dW_{t}-\nu_tL(dt)$ with $\sigma_{0}=e^{-\frac{1}{2}c_{0}}$, and \eqref{eqn:RALMC} becomes: $x_{k+1}=\mathcal{P}_{\mathcal{C}}\left(x_{k}+\sqrt{2\eta}\sigma_{0}e^{\frac{1}{2}U(x_{k})}\xi_{k+1}\right)$. By taking the stepsize to be $\eta\sigma_{0}^{2}$, one can see that for the discrete algorithm, choosing $U_{0}(x)\equiv c_{0}$ is equivalent to $U_{0}(x)\equiv 0$.}
\begin{equation}\label{eqn:RALD:zero}
dX_{t}=\sqrt{2}e^{\frac{1}{2}U(X_{t})}dW_{t}-\nu_tL(dt),
\end{equation}
and \eqref{eqn:RALMC} becomes:
\begin{equation}\label{eqn:RALMC:zero}
x_{k+1}=\mathcal{P}_{\mathcal{C}}\left(x_{k}+\sqrt{2\eta}e^{\frac{1}{2}U(x_{k})}\xi_{k+1}\right).
\end{equation}
The algorithm \eqref{eqn:RALMC:zero} is a simple zeroth order algorithm that can sample $\pi\propto e^{-U(x)}$ on $\mathcal{C}$ without access to the gradient of $U(x)$.

The main contributions are as follows.

\begin{itemize}
\item
We introduce \emph{reflected anchored Langevin dynamics} (RALD), which modifies both the drift and diffusion terms of reflected Langevin dynamics (RLD) through an anchored reference potential $U_0(x)$ and gives a state dependent diffusion term. The discretization of RALD yields \emph{reflected anchored Langevin Monte Carlo} (RALMC), a Markov chain Monte Carlo algorithm that can sample a target distribution $\pi\propto e^{-U(x)}$ supported on a constrained domain $\mathcal{C}$, where $U$ is not necessarily differentiable.
\item
For RALD, a continuous time SDE with reflected boundary, we prove that it preserves the target distribution $\pi\propto e^{-U(x)}$, $x\in\mathcal{C}$ as an invariant distribution under regularity assumptions on the boundary (Theorem~\ref{thm:Gibbs}). Using reversibility and the associated Dirichlet form established in Proposition~\ref{prop:continuous:structure}, we prove an explicit convergence guarantee for RALD to the target distribution in $\chi^{2}$ divergence when the target distribution satisfies a Poincar\'{e} inequality (Theorem~\ref{thm:chi:square}) and, as a corollary, in $\mathcal W_2$ distance (Corollary~\ref{cor:W:2}).
\item
For RALMC, we obtain discretization error bounds in $\mathcal W_2$ distance under two sets of assumptions on the reference function $U_0$. Theorem~\ref{thm:final} treats the dissipative drift setting and gives the explicit iteration bound in Corollary~\ref{cor:complexity}. For a constant reference, Theorem~\ref{thm:U:0:const:strong} combines fixed block projected Euler-Maruyama estimates with long time stability of the continuous semigroup. Its discretization error constant is independent of the total running time. Corollary~\ref{cor:complexity:const} gives $\widetilde{\mathcal O}(\ep^{-4})$ iterations for fixed model parameters, the dependence on heat kernel and geometric quantities remains explicit and is not claimed to be dimension independent.
\item
Numerical experiments compare RALMC with PLMC on synthetic constrained sampling problems and real data Bayesian inference tasks. For the truncated Laplace distribution and the truncated Gaussian mixture model, RALMC with a constant reference potential converges faster than PLMC in $\mathcal W_2$ distance. For constrained Bayesian logistic regression with non-differentiable Laplace, SCAD, and MCP priors, experiments on the MAGIC Gamma Telescope and Breast Cancer Wisconsin datasets show that RALMC with a constant reference improves accuracy at small stepsizes, while RALMC with a Gaussian smoothed reference remains stable at larger stepsizes where the constant reference variant can become unstable.
\end{itemize}

\subsection*{Notations}

We summarize the notations below.
\begin{itemize}
\item For a differentiable function $f:\mathbb{R}^d\to\mathbb{R}$, $\nabla f:=(\partial_1 f,\cdots,\partial_d f)$.
\item For a vector $x\in\mathbb{R}^{d}$, let $\Vert x\Vert:=\sqrt{\langle x,x\rangle}$ be the Euclidean norm.
\item For a random variable $X$, define $\Vert X\Vert_{p}:=\left(\mathbb{E}[|X|^p]\right)^{1/p}$ for any $p\geq 1$.
\item
For a matrix $A=(a_{ij})_{1\leq i,j\leq d}\in\mathbb{R}^{d\times d}$, let $\Vert A\Vert_{\mathrm{F}}:=\sqrt{\text{tr}(AA^{\top})}$ be the Frobenius norm.
\item
Let $(\mathcal{M},d)$ be a metric space that is Polish. For any $\mu,\nu\in\mathcal{P}_{2}(\mathcal{M})$, the space consisting of all probability measures on $M$ with the finite second moment (based on the metric $d$), the 2-Wasserstein distance $\mathcal{W}_{2}$ (see e.g. \cite{villani2008optimal}) between $\mu$ and $\nu$ is defined as: $\mathcal{W}_{2}(\mu,\nu):=\left(\inf\mathbb{E}\left[\left(d(Y,Z)\right)^{2}\right]\right)^{1/2}$, where the infimum is taken over all joint distributions of the random variables $Y,Z$ with marginal distributions $\mu,\nu$ respectively.
\item
The total variation (TV) distance between two probability measures $\mu$ and $\nu$ on a sigma algebra $\mathcal{F}$ is defined as $\mathrm{TV}(\mu,\nu):=\Vert\mu-\nu\Vert_{\mathrm{TV}}:=\sup_{A\in \mathcal{F}} |\mu(A)-\nu(A)|$.
\item
The KL divergence between two probability measures $\mu$ and $\nu$ on a measurable space $\mathcal{X}$, where $\mu$ is absolutely continuous with respect to $\nu$, is defined as: $\mathrm{KL}(\mu\Vert\nu):=\int_{\mathcal{X}}\frac{d\mu}{d\nu}\log\left(\frac{d\mu}{d\nu}\right)d\nu.$
\item
The $\chi^{2}$ divergence between two probability measures $\mu$ and $\nu$ on a measurable space $\mathcal{X}$, where $\mu$ is absolutely continuous with respect to $\nu$, is defined as: $\chi^{2}(\mu\Vert\nu)=\int_{\mathcal{X}}\left(\frac{d\mu}{d\nu}-1\right)^{2}d\nu$.
\end{itemize}

\section{Main Results}

The following geometry for the constrained domain 
is assumed throughout the main results and their proofs, all volume integrals are over the interior.

\begin{assumption}\label{asmpt:set}
The constraint set $\mathcal{C}\subset\mathbb{R}^d$ is bounded, closed, convex, and has nonempty interior. In addition, we assume $0\in\mathcal{C}$ and $\mathcal{C}\subset B(0,R)$ for some $R>0$.
\end{assumption}
We use the outward normal cone
\begin{equation}\label{eq:outward:normal:cone}
N_{\mathcal C}^{\rm out}(x)
:=\left\{v\in\mathbb R^d:
\langle v,y-x\rangle\leq0\ \text{for every }y\in\mathcal C\right\},
\qquad x\in\mathcal C.
\end{equation}
The regulator $K$ points outward, so the normal Skorokhod reflection of a driving path $Y$ is written as
\begin{equation}\label{eq:skorokhod:sign:convention}
X_t=Y_t-K_t,
\qquad
dK_t\in N_{\mathcal C}^{\rm out}(X_t).
\end{equation}
Here $Y$ may be right continuous with left limits, $X_t\in\mathcal C$, and $K_0=0$. The regulator has finite variation on each bounded time interval, $d|K|$ is supported on $\{t:X_t\in\partial\mathcal C\}$, and $dK_t=\nu_t\,d|K|_t$ with $\nu_t\in N_{\mathcal C}^{\rm out}(X_t)\cap\mathbb S^{d-1}$ for $d|K|$ almost every $t$, see \cite{tanaka1979stochastic}. We define
\begin{equation}
\label{eqn:notation}
b(x):=-\nabla U_0(x) e^{(U-U_0)(x)},\quad \sigma(x):= e^{\frac{1}{2}(U-U_0)(x)}.
\end{equation}
Consider the reflected anchored Langevin dynamics, i.e. the continuous time RALD in~\eqref{eqn:RALD}, with the notations in~\eqref{eqn:notation}:
\begin{equation}\label{simplified eqn:RALD}
    dX_t = b(X_t)dt+\sqrt{2}\sigma(X_t)dW_t-dK_t,
    \qquad dK_t=\nu_t L(dt),
\end{equation}
where $W_t$ is a standard Brownian motion in $\mathbb R^d$, $L(dt)=d|K|_t$, and $|K|_t$ denotes the total variation of $K$ on $[0,t]$. The corresponding driving process satisfies
\begin{equation*}
     dY_t =  b(X_t) dt +\sqrt{2}\sigma(X_t) dW_t , \qquad Y_0 = X_0\in\mathcal{C}.
\end{equation*}

\subsection{Continuous time analysis}\label{sec:continuous}

We first state the boundary and potential regularity conditions, establish well posedness, and identify the Neumann form. The resulting heat kernel and energy identities give invariance and convergence.

\begin{assumption}[Continuous time regularity]\label{asmpt:continuous:regularity}
In addition to Assumption~\ref{asmpt:set}, suppose that $\partial\mathcal C$ is $C^2$ or $C^{1,1}$, that $U-U_0$ is continuous on $\overline{\mathcal C}$, and that $U_0\in C^2(\overline{\mathcal C})$.
\end{assumption}

\begin{lemma}[Well posedness by time change]\label{lem:reflected:timechange}
Under Assumption~\ref{asmpt:continuous:regularity}, the normally reflected martingale problem with Neumann boundary condition for
$$
\mathcal Lf=e^{U-U_0}\bigl(\Delta f-\langle\nabla U_0,\nabla f\rangle\bigr)
$$
is well posed for every initial point in $\mathcal C$. Write $(P_t)_{t\geq0}$ for its transition semigroup.
\end{lemma}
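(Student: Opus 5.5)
The plan is to reduce to the classical reflected diffusion with reference potential $U_0$ and then obtain $\mathcal L$ by a random time change, since $\mathcal L=a\,\mathcal L_0$ with $a:=e^{U-U_0}$ and
$$
\mathcal L_0f=\Delta f-\langle\nabla U_0,\nabla f\rangle .
$$
Under Assumption~\ref{asmpt:continuous:regularity}, $\nabla U_0$ is Lipschitz on the compact convex set $\mathcal C$ (as $U_0\in C^2(\overline{\mathcal C})$), and $\partial\mathcal C$ is $C^{1,1}$. Hence the normally reflected SDE
$$
dZ_s=-\nabla U_0(Z_s)\,ds+\sqrt2\,dB_s-dK^0_s
$$
has a pathwise unique strong solution by \cite{tanaka1979stochastic}, using convexity and the Lipschitz drift. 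This gives existence and uniqueness for the Neumann martingale problem of $\mathcal L_0$: the test class is $f\in C^2(\overline{\mathcal C})$ with $\partial_\nu f=0$ on $\partial\mathcal C$, and $f(Z_s)-\int_0^s\mathcal L_0f(Z_r)\,dr$ is a martingale. Uniqueness in law for this martingale problem follows from the standard equivalence between Neumann martingale problems and weak solutions of the Skorokhod SDE in smooth domains (Stroock--Varadhan). This equivalence is the step where the $C^2/C^{1,1}$ boundary is used.

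Next comes the time change. Since $U-U_0$ is continuous on the compact set $\overline{\mathcal C}$, there are constants with $0<c\le a\le C<\infty$. Define
$$
A_s:=\int_0^s a(Z_r)^{-1}\,dr,\qquad \tau_t:=A^{-1}(t),\qquad X_t:=Z_{\tau_t}.
$$
Here $A$ is continuous, strictly increasing, and bi-Lipschitz with $A_s\to\infty$, so $\tau$ is well defined and $\tau_t'=a(X_t)$. By optional stopping applied to the martingale for $Z$ at the stopping times $\tau_t$ (bounded by $Ct$), together with the change of variables $\int_0^{\tau_t}\mathcal L_0f(Z_r)\,dr=\int_0^t a(X_u)\mathcal L_0f(X_u)\,du$, the process $f(X_t)-\int_0^t\mathcal Lf(X_u)\,du$ is a martingale for each test $f$. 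This proves existence. At the SDE level, $W_t:=\int_0^{\tau_t}a(Z_r)^{-1/2}dB_r$ is a Brownian motion by L\'evy's characterization, and $K_t:=K^0_{\tau_t}$ remains an outward normal regulator supported on boundary times, which recovers \eqref{simplified eqn:RALD}.

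For uniqueness, I reverse the construction. Given any solution $X$ of the $\mathcal L$ martingale problem, set
$$
\tilde A_t:=\int_0^t a(X_u)\,du,\qquad Z_s:=X_{\tilde A^{-1}(s)} .
$$
The same optional stopping argument shows that $Z$ solves the $\mathcal L_0$ Neumann martingale problem from the same initial point, so its law is unique. The law of $X$ is then a measurable functional of the path of $Z$, namely $X_t=Z_{\tau_t}$ with $\tau$ determined by $Z$ through $A$. Hence the law of $X$ is unique.

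The Markov property and the semigroup $(P_t)$ then follow from well posedness for every initial point, via the standard Stroock--Varadhan measurable selection and uniqueness argument. I expect the main obstacle to be the bookkeeping that the time change preserves the Neumann boundary structure. Concretely, one must check that $d|K|$ stays supported on $\{X_t\in\partial\mathcal C\}$ with $\nu_t$ in the outward normal cone, and that the class of test functions is preserved in both directions. The positivity and boundedness of $a$, which follow from compactness and continuity, are what make both directions of the argument go through.
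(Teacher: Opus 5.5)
Your proposal is correct and follows the paper's proof essentially step for step. For existence you take Tanaka's pathwise unique reflected solution for $\mathcal L_0$, apply the clock $A_s=\int_0^s a(Y_r)^{-1}dr$ to get $X_t=Y_{\tau_t}$ and $K_t=K^0_{\tau_t}$, and rescale the Brownian motion; for uniqueness you use the inverse time change $\int_0^t a(X_r)\,dr$ to transfer uniqueness in law from the base problem to $\mathcal L=a\mathcal L_0$. You are, if anything, more explicit than the paper on two points. You spell out the step the paper takes for granted, namely passing from pathwise uniqueness of the base Skorokhod SDE to uniqueness for the Neumann martingale problem of $\mathcal L_0$, which is where the $C^2$ or $C^{1,1}$ boundary is needed. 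You also explain how $(P_t)_{t\geq0}$ is obtained from well posedness.
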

\begin{proof}
The proof is given in Appendix~\ref{proof:lem:reflected:timechange}.
\end{proof}
Lemma~\ref{lem:reflected:timechange} constructs RALD and establishes uniqueness in law. The next proposition identifies the analytic structure of its transition semigroup: part~(i) records the generator on Neumann test functions, while parts~(ii)-(iii) establish the associated symmetric form and heat kernel properties. The uniqueness proved in the lemma is essential for identifying the form semigroup with that of RALD.

We write $H^1(\mathcal C)$ for the Sobolev space of functions whose values and first weak derivatives are square integrable on the interior of $\mathcal C$ with respect to Lebesgue measure, with norm
$$
\|f\|_{H^1(\mathcal C)}^2:=\int_{\mathcal C}(|f|^2+\|\nabla f\|^2)\,dx.
$$
No boundary condition is imposed on this space. And we get the following proposition.

\begin{proposition}[Generator, Neumann form, and heat kernel]\label{prop:continuous:structure}
Under Assumption~\ref{asmpt:continuous:regularity}, let $(P_t)_{t\geq0}$ be the transition semigroup from Lemma~\ref{lem:reflected:timechange}, and write $Z:=\int_{\mathcal C}e^{-U(x)}dx$ and $\pi(dx):=Z^{-1}e^{-U(x)}\mathbf1_{\mathcal C}(x)dx$. Then:
\begin{enumerate}[(i)]
\item For $f\in C^2(\overline{\mathcal C})$ satisfying the Neumann condition
\begin{equation}\label{eqn:boundary}
\nabla f\cdot\nu=0\quad\text{on }\partial\mathcal C,
\end{equation}
the generator acts as
\begin{equation}\label{eqn:generator}
\mathcal Lf=e^{U-U_0}\bigl(\Delta f-\langle\nabla U_0,\nabla f\rangle\bigr).
\end{equation}
\item The form
$$
\mathcal E(f,g):=\frac1Z\int_{\mathcal C}e^{-U_0(x)}\langle\nabla f(x),\nabla g(x)\rangle dx,\qquad \mathcal D(\mathcal E)=H^1(\mathcal C),
$$
is a densely defined closed symmetric Markov form on $L^2(\pi)$ whose semigroup is $P_t$. Its nonpositive self adjoint generator agrees with $\mathcal L$ on the Neumann test functions in (i). Moreover, $P_t1=1$ and, for every $f\in L^2(\pi)$ and $t>0$,
$$
P_tf\in\mathcal D(\mathcal L),\qquad
\frac{d}{dt}\|P_tf\|_{L^2(\pi)}^2=-2\mathcal E(P_tf,P_tf).
$$
\item The semigroup $P_t$ admits a jointly continuous, strictly positive Neumann heat kernel $\kappa_t(x,y)$ relative to $\pi$ on $(0,\infty)\times\mathcal C\times\mathcal C$, including boundary points. The Lebesgue transition density $p(t,x,y)=\kappa_t(x,y)Z^{-1}e^{-U(y)}$ has the same continuity and positivity properties. In particular, for every $s>0$,
$$
0<\min_{x,y\in\mathcal C}\kappa_s(x,y)\leq\max_{x,y\in\mathcal C}\kappa_s(x,y)<\infty.
$$
\end{enumerate}
\end{proposition}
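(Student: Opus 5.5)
The plan is to treat RALD as a random time change of the reflected diffusion with unit noise and drift $-\nabla U_0$, and to move all analytic statements to a weighted Neumann problem on $\mathcal C$. For (i), take $f\in C^2(\overline{\mathcal C})$ satisfying \eqref{eqn:boundary} and apply It\^o's formula to $f(X_t)$ along \eqref{simplified eqn:RALD}. The regulator contributes the term $-\int\langle\nabla f(X_s),\nu_s\rangle L(ds)$. This term vanishes, because $L$ is carried by $\{X_s\in\partial\mathcal C\}$ and $\nabla f\cdot\nu=0$ there. What remains is $b\cdot\nabla f+\sigma^2\Delta f=e^{U-U_0}(\Delta f-\langle\nabla U_0,\nabla f\rangle)$ plus a martingale. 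This gives \eqref{eqn:generator}, in agreement with the martingale problem of Lemma~\ref{lem:reflected:timechange}. On nonsmooth convex corners the outward normal is only a cone element; here the $C^{1,1}$ assumption makes it unique.

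For (ii), the measure $\pi$ has density $Z^{-1}e^{-U}$, and the factor $e^{U-U_0}$ cancels it: for Neumann test functions $f,g$, the divergence theorem gives
\begin{equation*}
-\int_{\mathcal C}(\mathcal Lf)\,g\,d\pi=\frac1Z\int_{\mathcal C}e^{-U_0}\langle\nabla f,\nabla g\rangle\,dx-\frac1Z\int_{\partial\mathcal C}e^{-U_0}g\,\partial_\nu f\,dS=\mathcal E(f,g).
\end{equation*}
Closedness of $\mathcal E$ on $H^1(\mathcal C)\subset L^2(\pi)$ follows because $e^{-U}$ and $e^{-U_0}$ are bounded above and below on the compact set $\mathcal C$. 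This makes the $L^2(\pi)$ norm and the $\mathcal E_1$ norm equivalent to the $L^2(dx)$ and $H^1$ norms. Density holds since $C^1(\overline{\mathcal C})\subset H^1$. The Markov property comes from the normal contraction property of $\nabla$ on $H^1$. The form $\mathcal E$ then has a self adjoint generator $A$, and $A$ extends $\mathcal L$ on Neumann $C^2$ functions by the identity above. Constants lie in $H^1$ with $\mathcal E(1,\cdot)=0$, so $P_t1=1$. For $t>0$, analyticity of the symmetric semigroup gives $P_tf\in\mathcal D(A)$, and
\[
\frac{d}{dt}\|P_tf\|^2=2\langle AP_tf,P_tf\rangle=-2\mathcal E(P_tf,P_tf).
\]
To identify the form semigroup with the process semigroup, I would use that the set of Neumann $C^2$ functions is a core for $A$. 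This is elliptic regularity for the divergence-form Neumann problem, written as $\nabla\cdot(e^{-U_0}\nabla u)=e^{-U}(\lambda u-h)$ with $C^2$ coefficients and a $C^{1,1}$ boundary. Given this core, the Hunt process of the form solves the same martingale problem, and uniqueness from Lemma~\ref{lem:reflected:timechange} forces the two laws to agree. This gives the same $P_t$.

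For (iii), the form is comparable with the standard Neumann Dirichlet form on the bounded Lipschitz (indeed $C^{1,1}$) domain $\mathcal C$, with weights bounded above and below. The Sobolev/Nash inequality on $\mathcal C$ therefore transfers to $\mathcal E$. This gives ultracontractivity and a bounded kernel $\kappa_t$. Joint continuity up to the boundary and strict positivity follow from De Giorgi--Nash--Moser theory for uniformly elliptic divergence-form operators with measurable, here continuous, weights and Neumann data, in the form of Gaussian two-sided bounds on a uniform domain. Positivity can also be obtained from the lower Gaussian bound and the connectedness of the convex set $\mathcal C$. Compactness then yields the uniform min/max bounds, and $p=\kappa_t Z^{-1}e^{-U}$ inherits continuity and positivity. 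I expect the main obstacle to be the identification step in (ii), namely the core and regularity argument together with the matching of the Hunt process to the martingale problem. A secondary difficulty is boundary continuity of $\kappa_t$, which I would handle by citing Neumann heat kernel estimates (Davies; Gyrya--Saloff-Coste) rather than proving them.
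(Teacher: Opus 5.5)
Your architecture is the paper's: a weighted Neumann form with weights bounded above and below, integration by parts putting Neumann $C^2$ functions in the domain of the generator, the Hunt process of the form solving the Neumann martingale problem, uniqueness from Lemma~\ref{lem:reflected:timechange}, and Harnack or Gaussian-bound theory on a convex (hence uniform) domain for the kernel.

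\textbf{The gap.} The identification you obtain holds only for quasi-every, hence $\pi$-almost every, starting point. The Hunt process and Fukushima's decomposition are determined only up to an exceptional set, and for $d\geq2$ every single point has zero capacity. So you get $P_tf=S_tf$ in $L^2(\pi)$, and a continuous kernel for the \emph{form} semigroup. Nothing in your argument shows $P_t(x,dy)=\kappa_t(x,y)\pi(dy)$ at a prescribed $x$, least of all at $x\in\partial\mathcal C$, which is $\pi$-null.

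That pointwise representation is exactly what ``including boundary points'' asserts. It is also used later for initial laws that charge $\partial\mathcal C$: arbitrary invariant measures in the uniqueness part of Theorem~\ref{thm:Gibbs}, and laws of the projected scheme (which put positive mass on the boundary) in Theorem~\ref{thm:U:0:const:strong}.

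\textbf{How the paper closes it.} The paper proves a Feller property, in four steps.
\begin{enumerate}
\item Couple the base reflected diffusions $Y^{x_n},Y^x$ (drift $-\nabla U_0$, additive noise) with one Brownian motion. The convex reflection sign gives $\sup_{u\leq T}\|Y^{x_n}_u-Y^x_u\|\leq e^{\operatorname{Lip}(\nabla U_0)T}\|x_n-x\|$.
\item The clocks $\int_0^u e^{-(U-U_0)(Y^{x_n}_r)}dr$ then converge uniformly. So do their inverses, since the slopes are bounded above and below.
\item Hence $X^{x_n}_t\to X^x_t$ almost surely, and $P_tf(x_n)\to P_tf(x)$ for $f\in C(\mathcal C)$.
\item Continuity of both sides and full support of $\pi$ extend $P_tf(x)=\int\kappa_t(x,y)f(y)\pi(dy)$ from $\pi$-a.e.\ $x$ to every $x$.
\end{enumerate}
You need this step, or an equivalent device that removes the exceptional set, before the min/max bounds in (iii) become statements about $P_t$.

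\textbf{The core step.} Separately, the operator-core step in (ii) is unnecessary, and as justified it does not work. For the Hunt process to solve the martingale problem, two things suffice. First, each Neumann $C^2$ function $f$ must lie in $\mathcal D(A)$ with $Af=\mathcal Lf$. Your integration by parts already gives this once $g$ ranges over all of $H^1(\mathcal C)$, since the boundary term vanishes for every $g$ because $\partial_\nu f=0$. Second, you need a countable family of such $f$, dense for $\|f\|_\infty+\|\mathcal Lf\|_\infty$, so the exceptional set does not depend on $f$. In any case, your elliptic-regularity argument would not produce $C^2(\overline{\mathcal C})$ resolvent solutions. Since $U$ is only continuous, the right-hand side $e^{-U}(\lambda u-h)$ is at best continuous, which yields $W^{2,p}$ regularity but not $C^2$.
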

\begin{proof}
The proof is given in Appendix~\ref{sec:proof:continuous:structure}.
\end{proof}

Proposition~\ref{prop:continuous:structure} underpins the convergence analysis. Symmetry and conservativity in part~(ii) give invariance and reversibility, and the energy identity yields $\chi^2$ decay when combined with the Poincar\'e inequality. The heat kernel bounds in part~(iii) ensure uniqueness of the invariant distribution and supply the constants for the long time stability estimate in Lemma~\ref{lem:const:semigroup:stability}, which is used in the time uniform discretization bound of Theorem~\ref{thm:U:0:const:strong}. We first identify the invariant distribution in the next theorem.

\begin{theorem}\label{thm:Gibbs}
Under Assumption~\ref{asmpt:continuous:regularity}, the Gibbs distribution $\pi(dx)\propto e^{-U(x)}\mathbf{1}_{\mathcal C}(x)\,dx$ is the unique invariant distribution of \eqref{eqn:RALD}.
\end{theorem}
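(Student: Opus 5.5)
The proof splits into existence, which comes from Proposition~\ref{prop:continuous:structure}(ii), and uniqueness, which comes from the heat kernel bounds in part~(iii).

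\emph{Invariance.} By Proposition~\ref{prop:continuous:structure}(ii), $P_t$ is the semigroup of a symmetric Markov form on $L^2(\pi)$, so each $P_t$ is self adjoint on $L^2(\pi)$, and $P_t1=1$. For every bounded measurable $f$ (which lies in $L^2(\pi)$, since $\pi$ is a probability measure on the bounded set $\mathcal C$),
$$
\int_{\mathcal C}P_tf\,d\pi=\langle P_tf,1\rangle_{L^2(\pi)}=\langle f,P_t1\rangle_{L^2(\pi)}=\int_{\mathcal C}f\,d\pi .
$$
Hence $\pi$ is invariant for $P_t$, and symmetry gives reversibility as well. Lemma~\ref{lem:reflected:timechange} says the martingale problem is well posed, so $P_t$ is the transition semigroup of the solution of \eqref{eqn:RALD}. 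This is the step that uses the uniqueness in law proved there. Therefore $\pi$ is an invariant distribution of RALD.

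The only step needing care is the identification of the form semigroup with the law of the process, but it is already built into Proposition~\ref{prop:continuous:structure}(ii). The form-level argument also avoids any integration by parts with the nonsmooth weight $e^{-U}$. A direct check that $\int\mathcal Lf\,d\pi=0$ for Neumann $f$ would work too, since $e^{U-U_0}e^{-U}=e^{-U_0}$ and $\int e^{-U_0}(\Delta f-\langle\nabla U_0,\nabla f\rangle)\,dx=\int_{\partial\mathcal C}e^{-U_0}\nabla f\cdot\nu=0$. That check, however, would additionally require a core argument.

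\emph{Uniqueness.} Let $\mu$ be any invariant probability measure on $\mathcal C$ and fix $s>0$. By Proposition~\ref{prop:continuous:structure}(iii),
$$
\mu(A)=\int_{\mathcal C}P_s\mathbf 1_A\,d\mu=\int_{\mathcal C}\int_A\kappa_s(x,y)\,\pi(dy)\,\mu(dx)
$$
for every Borel $A$. So $\mu\ll\pi$, with density $h(y)=\int\kappa_s(x,y)\,\mu(dx)$, which is bounded between $\min\kappa_s>0$ and $\max\kappa_s<\infty$. In particular $h\in L^2(\pi)$.

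Invariance of $\mu$ together with the symmetry of $P_s$ gives $P_sh=h$. To see this, note that for every bounded $g$,
$$
\langle g,h\rangle_{L^2(\pi)}=\int g\,d\mu=\int P_sg\,d\mu=\langle P_sg,h\rangle_{L^2(\pi)}=\langle g,P_sh\rangle_{L^2(\pi)} .
$$
Now we use the uniform lower bound on the kernel. Set $c=\min\kappa_s>0$, and assume $\pi$ is not a point mass, which holds because $\mathcal C$ has nonempty interior. Write
$$
\kappa_s(x,y)=c+\bigl(\kappa_s(x,y)-c\bigr).
$$
The second piece gives a sub-Markov kernel of mass $1-c$, so $P_s$ is a strict contraction in total variation (Doeblin's condition). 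Two invariant measures must then coincide, and so $\mu=\pi$.

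An alternative finish uses the energy identity in Proposition~\ref{prop:continuous:structure}(ii). Since $h=P_sh$, it lies in $\mathcal D(\mathcal L)\subset H^1(\mathcal C)$, and $\|P_th\|^2$ is constant in $t$. Hence $\mathcal E(h,h)=0$, so $\nabla h=0$ a.e. Because the interior of the convex set $\mathcal C$ is connected, $h$ is constant, and the normalization forces $\mu=\pi$.
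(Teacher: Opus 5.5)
Your proposal is correct and follows essentially the same route as the paper: invariance comes from self-adjointness of $P_t$ on $L^2(\pi)$ together with $P_t1=1$, and uniqueness comes from the Doeblin minorization $P_t(x,dy)\geq c\,\pi(dy)$ supplied by the strictly positive continuous heat kernel of Proposition~\ref{prop:continuous:structure}(iii). The paper takes $\varepsilon_t=\tfrac12\min\kappa_t$ so that the remainder normalizes to a genuine Markov kernel, while you keep a sub-Markov remainder of mass $1-c$; your preliminary steps ($\mu\ll\pi$ and $P_sh=h$) are not needed for the Doeblin argument and serve only your alternative Dirichlet-form finish, which is also valid.
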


\begin{proof}
The proof is given in Appendix~\ref{sec:thm:Gibbs}.
\end{proof}

Let $\mu_t=\Law(X_t)$. Under Assumption~\ref{asmpt:continuous:regularity}, $\pi$ admits a finite Poincar\'e constant, as verified in Appendix~\ref{sec:thm:chi:square}. The following theorem combines any valid constant $C_P$ with the energy identity in Proposition~\ref{prop:continuous:structure}~(ii) to quantify convergence.

\begin{theorem}\label{thm:chi:square}
Under Assumption~\ref{asmpt:continuous:regularity}, assume that $\mu_0\ll\pi$ and $\chi^2(\mu_0\Vert\pi)<\infty$. If $\pi$ satisfies the Poincar\'{e} inequality
$$
    \operatorname{Var}_{\pi}(f)
    \leq
    C_P\int_{\mathcal C}\|\nabla f\|^2\,d\pi
$$
for every $f\in H^1(\mathcal C)$, \footnote{This is the standard spectral gap form of the Poincar\'e inequality. See e.g. \cite{Bakry2008,Bakry2013}.}then
\begin{equation}
	\chi^{2}(\mu_{t}\Vert\pi)\leq
	\chi^{2}(\mu_{0}\Vert\pi)e^{-2\alpha_{0} t/C_{P}},
\end{equation}
provided that $\alpha_{0}:=e^{\inf_{x\in\mathcal{C}}(U(x)-U_{0}(x))}>0$.
\end{theorem}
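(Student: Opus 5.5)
The plan is the standard $L^2$ energy argument applied to the density ratio, using Proposition~\ref{prop:continuous:structure}~(ii). Set $h_t:=d\mu_t/d\pi$. Since $\chi^2(\mu_0\Vert\pi)<\infty$, we have $h_0\in L^2(\pi)$. By reversibility of $P_t$ with respect to $\pi$ (the form $\mathcal E$ is symmetric on $L^2(\pi)$ and $P_t1=1$), the law at time $t$ satisfies $\mu_t=\mu_0P_t$. Hence $h_t=P_th_0$, because for bounded $g$
$$\int g\,d\mu_t=\int P_tg\,h_0\,d\pi=\int g\,P_th_0\,d\pi .$$
Consequently
$$\chi^2(\mu_t\Vert\pi)=\|h_t-1\|_{L^2(\pi)}^2=\|P_t(h_0-1)\|_{L^2(\pi)}^2 .$$
Write $f_t:=P_t(h_0-1)$. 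Since $\int(h_0-1)\,d\pi=0$ and $\pi$ is invariant (Theorem~\ref{thm:Gibbs}), we get $\int f_t\,d\pi=0$ for all $t$.

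By the energy identity in Proposition~\ref{prop:continuous:structure}~(ii), for $t>0$,
$$\frac{d}{dt}\|f_t\|_{L^2(\pi)}^2=-2\mathcal E(f_t,f_t),$$
with $f_t\in\mathcal D(\mathcal L)\subset H^1(\mathcal C)$. The key comparison is between $\mathcal E$ and the Dirichlet energy of $\pi$. Since $e^{-U_0}=e^{U-U_0}e^{-U}\ge \alpha_0 e^{-U}$ on $\mathcal C$,
$$\mathcal E(f,f)=\frac1Z\int_{\mathcal C}e^{-U_0}\|\nabla f\|^2dx\ \ge\ \alpha_0\int_{\mathcal C}\|\nabla f\|^2d\pi .$$
Combining this with the assumed Poincar\'e inequality and $\int f_t\,d\pi=0$ gives
$$\mathcal E(f_t,f_t)\ge \frac{\alpha_0}{C_P}\operatorname{Var}_\pi(f_t)=\frac{\alpha_0}{C_P}\|f_t\|^2_{L^2(\pi)} .$$
Therefore
$$\frac{d}{dt}\|f_t\|^2\le -\frac{2\alpha_0}{C_P}\|f_t\|^2 \quad\text{on } (0,\infty).$$

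Gr\"onwall's lemma then gives the bound on $[s,t]$ for each $s>0$. To let $s\downarrow0$, I would use continuity of $t\mapsto\|f_t\|^2$ at $0$, which holds by strong continuity of the symmetric Markov semigroup on $L^2(\pi)$. This yields the stated decay $\chi^2(\mu_t\Vert\pi)\le\chi^2(\mu_0\Vert\pi)e^{-2\alpha_0t/C_P}$.

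The main point requiring care is the first step: identifying the density of $\mu_t$ with $P_th_0$. This needs the pathwise law from Lemma~\ref{lem:reflected:timechange} to coincide with the $L^2(\pi)$ form semigroup, which Proposition~\ref{prop:continuous:structure}~(ii) provides, and it also uses symmetry of $P_t$. Differentiability of $\|f_t\|^2$ is handled only for $t>0$ and then extended by the limiting argument above. The positivity $\alpha_0>0$ is automatic from continuity of $U-U_0$ on the compact set $\mathcal C$, but it is kept as stated.
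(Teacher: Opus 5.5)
Your proposal is correct and follows essentially the same route as the paper. It identifies $h_t=P_th_0$ through self-adjointness, combines the energy identity of Proposition~\ref{prop:continuous:structure}~(ii) with $e^{-U_0}\geq\alpha_0e^{-U}$ and the Poincar\'e inequality, then applies Gr\"onwall on $[s,t]$ and lets $s\downarrow0$ by strong continuity; working with the centered $f_t=P_t(h_0-1)$ instead of $h_t$ is only a cosmetic difference.
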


\begin{proof}
The proof is given in Appendix~\ref{sec:thm:chi:square}.
\end{proof}

Via total variation, the $\chi^2$ estimate on the bounded constraint set converts to the $\mathcal{W}_2$ bound
in the following corollary.

\begin{corollary}\label{cor:W:2}
Under the assumptions of Theorem~\ref{thm:chi:square}, assume that $\mathcal{C}$ is contained in a ball with radius $R>0$ and centered at zero. Then, we have
\begin{equation}
\mathcal{W}_{2}(\mu_{t},\pi)\leq 2^{3/4}R\left(\chi^{2}(\mu_{0}\Vert\pi)\right)^{1/4}\cdot e^{-\alpha_{0} t/(2C_{P})},
\end{equation}
where $\alpha_{0}$ and $C_{P}$ are defined in Theorem~\ref{thm:chi:square}.
\end{corollary}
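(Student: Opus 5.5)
The plan is to pass from $\mathcal W_2$ to total variation using the boundedness of $\mathcal C$, then from total variation to $\chi^2$, and finally to invoke Theorem~\ref{thm:chi:square}. The only thing to check is that the constants combine to exactly $2^{3/4}R$ and the exponent $\alpha_0t/(2C_P)$.

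\emph{Step 1: $\mathcal W_2$ versus TV.} Both $\mu_t$ and $\pi$ are supported in $\mathcal C\subset B(0,R)$, so any two points $x,y\in\mathcal C$ satisfy $\|x-y\|\le 2R$. Take the maximal coupling $(Y,Z)$ of $\mu_t$ and $\pi$, for which $\mathbb P(Y\neq Z)=\mathrm{TV}(\mu_t,\pi)$. Then
\begin{equation*}
\mathcal W_2^2(\mu_t,\pi)\le \mathbb E\|Y-Z\|^2\le (2R)^2\,\mathbb P(Y\ne Z)=4R^2\,\mathrm{TV}(\mu_t,\pi).
\end{equation*}
This uses the supremum-over-sets convention for $\mathrm{TV}$ fixed in the Notations. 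The cruder bound $\|x-y\|\le 2R$ already gives the stated constant, so no sharper diameter estimate is needed.

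\emph{Step 2: TV versus $\chi^2$.} Write $h=d\mu_t/d\pi$, which exists because $\mu_0\ll\pi$ and $\pi$ is invariant. By Cauchy--Schwarz,
\begin{equation*}
\mathrm{TV}(\mu_t,\pi)=\tfrac12\int|h-1|\,d\pi\le \tfrac12\bigl(\chi^2(\mu_t\Vert\pi)\bigr)^{1/2}.
\end{equation*}

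\emph{Step 3: apply Theorem~\ref{thm:chi:square}.} Substituting $\chi^2(\mu_t\Vert\pi)\le\chi^2(\mu_0\Vert\pi)e^{-2\alpha_0t/C_P}$ and combining Steps 1--2 gives
\begin{equation*}
\mathcal W_2^2(\mu_t,\pi)\le 2R^2\bigl(\chi^2(\mu_0\Vert\pi)\bigr)^{1/2}e^{-\alpha_0t/C_P}.
\end{equation*}
Taking square roots yields
\begin{equation*}
\mathcal W_2(\mu_t,\pi)\le \sqrt2\,R\,\bigl(\chi^2(\mu_0\Vert\pi)\bigr)^{1/4}e^{-\alpha_0t/(2C_P)}.
\end{equation*}
Since $\sqrt2\le 2^{3/4}$, this implies the claimed bound. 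The stated constant $2^{3/4}$ is what results if one drops the factor $\tfrac12$ in Step 2, i.e.\ uses the slightly weaker $\mathrm{TV}\le(\chi^2/2)^{1/2}$, or equivalently any such harmless loss; our argument gives the sharper $\sqrt2$.

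The main point requiring care is that $\mu_t\ll\pi$ holds at every $t$, so that Theorem~\ref{thm:chi:square} applies. This follows from invariance of $\pi$ (Theorem~\ref{thm:Gibbs}) together with the heat kernel representation in Proposition~\ref{prop:continuous:structure}(iii). Everything else is routine.
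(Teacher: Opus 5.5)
Your proof is correct and matches the paper's argument: Cauchy--Schwarz gives $\mathrm{TV}\le\tfrac12\sqrt{\chi^2}$, a maximal coupling gives $\mathcal W_2^2\le 4R^2\,\mathrm{TV}$, and together these yield the constant $\sqrt2\le 2^{3/4}$ before Theorem~\ref{thm:chi:square} is applied. One small correction to your closing aside: dropping the $\tfrac12$ entirely would give the constant $2$, not $2^{3/4}$. The constant $2^{3/4}$ comes from replacing $\tfrac12$ by $1/\sqrt2$, which is the bound $\mathrm{TV}\le(\chi^2/2)^{1/2}$ you also wrote, so your ``i.e.'' equates two different bounds.
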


\begin{proof}
The proof is given in Appendix~\ref{sec:cor:W:2}.
\end{proof}

In the unconstrained setting, anchored Langevin dynamics admits a random time change interpretation \cite{GNZZ2025}. With reflection, the same time change also acts on the boundary local time. The finite time estimates below therefore use a direct coupling of the reflected diffusion and the projected Euler-Maruyama scheme.

\subsection{Discrete time analysis}\label{sec:discrete}

We compare RALMC with RALD by coupling their driving Brownian motions. We first construct the projected process and its regulator, then state the coefficient assumptions and error bounds.

In order to couple the continuous time RALD ~\eqref{simplified eqn:RALD} with the discrete time algorithm ~\eqref{eqn:RALMC}, we follow the Skorokhod problem in Section 2.1 of~\cite{bubeck2015finite}. We first introduce the discrete dynamics and define the unconstrained iterative updates:
\begin{equation}\label{y:bar:process}
\bar{y}_k:=\bar{y}_{(k-1)}+\eta b\left(x_{k-1}\right)+\sqrt{2 \eta} \sigma\left(x_{k-1}\right) \xi_k, \quad \bar{y}_0=x_0,
\end{equation}
where $\xi_k \sim \mathcal{N}\left(0, I_d\right)$ are i.i.d. Gaussian random vectors and $x_{k}$ are the iterates of RALMC in \eqref{eqn:RALMC}. By the definition of RALMC in \eqref{eqn:RALMC}, we have
\begin{equation}
\label{eqn:x&y}
x_k=\mathcal{P}_C\left(x_{k-1}+\bar{y}_k-\bar{y}_{k-1}\right) .
\end{equation}

Next, we define the piecewise continuous time interpolation of the $\bar{y}_{k}$ process in \eqref{y:bar:process} such that
$$
\bar{Y}_t:=\bar{y}_{[t / \eta]}, \quad t \geq 0.
$$
Then $\bar{Y}_t = \bar{y}_k$ is constant on each interval $[k\eta,(k+1)\eta)$ and it satisfies
\begin{equation}
\label{eq:bar:Y:x}
\bar{Y}_{k \eta}-\bar{Y}_{(k-1) \eta}=\eta b\left(x_{k-1}\right)+\sqrt{2 \eta} \sigma\left(x_{k-1}\right) \xi_k, \qquad \bar{y}_0 = x_0,
\end{equation}
where $\xi_k \sim \mathcal{N}\left(0, I_d\right)$ are i.i.d. Gaussian random vectors and $x_{k}$ are the iterates of RALMC in \eqref{eqn:RALMC}. We now construct the outward regulator of the piecewise constant driver $\bar Y$. Set $\bar X_0=\bar Y_0=X_0$, $\bar K_0=0$, and, for $k\geq1$,
\begin{align}
\Delta\bar Y_k
&:=\bar Y_{k\eta}-\bar Y_{(k-1)\eta},
\qquad
z_k:=\bar X_{(k-1)\eta}+\Delta\bar Y_k,\nonumber\\
\Delta\bar K_k
&:=z_k-\mathcal P_{\mathcal C}(z_k),
\qquad
\bar X_{k\eta}:=\mathcal P_{\mathcal C}(z_k),
\qquad
\bar K_t:=\sum_{j=1}^{\lfloor t/\eta\rfloor}\Delta\bar K_j,
\qquad t\geq0.\label{eq:discrete:regulator:increment}
\end{align}
The variational characterization of the orthogonal projection gives
\begin{equation}\label{eq:projection:outward:normal}
\Delta\bar K_k\in
N_{\mathcal C}^{\rm out}(\bar X_{k\eta}).
\end{equation}
If $\Delta\bar K_k\neq0$, write $\Delta\bar L_k:=\|\Delta\bar K_k\|$ and $\bar\nu_k:=\Delta\bar K_k/\Delta\bar L_k$. Then $\bar\nu_k\in N_{\mathcal C}^{\rm out}(\bar X_{k\eta})\cap\mathbb S^{d-1}$, and
\begin{equation}\label{eq:bar:K:normal:representation}
\bar K_t
=\sum_{j=1}^{\lfloor t/\eta\rfloor}\bar\nu_j\Delta\bar L_j,
\qquad
|\bar K|_t
=\sum_{j=1}^{\lfloor t/\eta\rfloor}\Delta\bar L_j.
\end{equation}
Here a summand with $\Delta\bar K_j=0$ is set equal to zero.

The identity $\bar X=\bar Y-\bar K$ follows by induction. Indeed, if $\bar X_{(k-1)\eta}=\bar Y_{(k-1)\eta}-\bar K_{(k-1)\eta}$, then
\begin{align}
\bar Y_{k\eta}-\bar K_{k\eta}
&=\bar Y_{(k-1)\eta}+\Delta\bar Y_k
-\bar K_{(k-1)\eta}-\Delta\bar K_k\nonumber\\
&=\bar X_{(k-1)\eta}+\Delta\bar Y_k-\Delta\bar K_k
=z_k-\bigl(z_k-\mathcal P_{\mathcal C}(z_k)\bigr)
=\bar X_{k\eta}.\label{eq:discrete:skorokhod:identity}
\end{align}
Both sides are constant on $[k\eta,(k+1)\eta)$, so $\bar X_t=\bar Y_t-\bar K_t$ for every $t\geq0$. Comparing \eqref{eq:discrete:regulator:increment} with~\eqref{eqn:x&y} under the coupling $\xi_k=(W_{k\eta}-W_{(k-1)\eta})/\sqrt\eta$ gives
\begin{equation}\label{eq:grid:pathwise:identity}
x_k=\bar X_{k\eta}\quad\text{almost surely}.
\end{equation}
Replacing $x_k$ by $\bar X_{k\eta}$ therefore gives
\begin{equation}
    \bar{X}_t = X_{0} +\int_0^{\lfloor\frac{t}{\eta}\rfloor\eta}b\left(\bar{X}_{\lfloor\frac{s}{\eta}\rfloor\eta}\right)ds+\int_0^{\lfloor\frac{t}{\eta}\rfloor\eta}\sqrt{2}\sigma\left(\bar{X}_{\lfloor\frac{s}{\eta}\rfloor\eta}\right)dW_s- \bar{K}_t, \qquad t\in [k\eta,(k+1)\eta).
    \label{eqn:bar:x}
\end{equation}
In addition, $\bar{Y}_{k\eta}$ in~\eqref{eq:bar:Y:x} can be written as
\begin{equation}
\label{eqn:bar:y}
\bar{Y}_{k\eta} := \bar{Y}_{(k-1)\eta}+
\eta b\left(\bar{X}_{(k-1)\eta}\right)+
\sqrt{2\eta}\sigma\left(\bar{X}_{(k-1)\eta}\right)\xi_k, \qquad\bar{Y}_0=X_0,
\end{equation}
where $\xi_{k} \sim \mathcal{N}(0,I_d)$ are i.i.d. Gaussian random vectors.

Then we impose regularity and dissipativity assumptions directly on the effective coefficients $b$ and $\sigma$ defined in~\eqref{eqn:notation} in the following assumptions.

\begin{assumption}\label{asmpt:strongly:convex}
There exists $m>0$ such that:
\begin{equation}
\langle b(x)-b(y),\,x-y\rangle \leq -m\|x-y\|^2,
\qquad \text{for any $x,y\in{\mathcal{C}}$}.
\end{equation}
\end{assumption}

\begin{assumption}\label{asmpt:smooth}
The drift $b:\mathcal{C}\to\mathbb{R}^d$ and the scalar diffusion coefficient $\sigma:\mathcal{C}\to(0,\infty)$ are Lipschitz on $\mathcal{C}$. Viewing the diffusion matrix as $\sigma(x)I_d$, there exist constants $M>0$ and $\alpha\geq0$ such that for all $x,y\in\mathcal{C}$,
\begin{equation}
\|b(x)-b(y)\| \leq M\|x-y\|, \quad \|\sigma(x)I_d-\sigma(y)I_d\|_{\mathrm{F}} \leq \sqrt{\alpha}\|x-y\|,
\end{equation}
\end{assumption}
Under Assumptions~\ref{asmpt:set} and~\ref{asmpt:smooth}, composing $b$ and $\sigma$ with the projection $\mathcal P_{\mathcal C}$ gives globally Lipschitz extensions to $\mathbb R^d$. Thus \cite[Theorem~4.1]{tanaka1979stochastic} ensures that the reflected SDE has a pathwise unique strong solution without additional smoothness assumptions on $\partial\mathcal C$. Moreover, since $\mathcal C\subset B(0,R)$, the same Lipschitz bounds imply, for every $x\in\mathcal C$,
\begin{equation}\label{eqn:b:sigma:bdd}
\|b(x)\|\leq B_b:=\|b(0)\|+MR,\qquad
\|\sigma(x)I_d\|_{\mathrm F}\leq S_\sigma:=\|\sigma(0)I_d\|_{\mathrm F}+\sqrt{\alpha}R.
\end{equation}
These regularity and boundedness properties suffice for the regulator and bridge estimates below. The discretization estimate in Theorem~\ref{global:error:2} additionally uses Assumption~\ref{asmpt:strongly:convex} with $2m>1+4\alpha$. This condition must be checked for the effective drift $b=-e^{U-U_0}\nabla U_0$, since strong convexity of $U_0$ alone does not ensure dissipativity after multiplication by the state dependent factor. The Gaussian smoothing analysis below provides sufficient conditions. For a constant reference potential $U_0\equiv c_0$, the drift vanishes, so the positive dissipativity condition in Assumption~\ref{asmpt:strongly:convex} cannot hold. This case therefore requires a separate discretization argument, developed in Theorem~\ref{thm:U:0:const:strong} and Corollary~\ref{cor:complexity:const} without a dissipativity assumption.

In order to study the discretization error, we construct all processes using the same Brownian motion and introduce a continuously reflected auxiliary process $\hat X_t$ whose coefficients are frozen at the Euler-Maruyama grid values. This auxiliary process yields the decomposition $\|X_t-\bar X_t\|\leq\|X_t-\hat X_t\|+\|\hat X_t-\bar X_t\|$. The first term is controlled by the dissipativity of the drift and the Lipschitz continuity of the coefficients, following the mean square comparison method in~\cite{liSqrtdDimensionDependence2021}. The second term measures the error caused by replacing continuous reflection with projection at the grid points. Its control requires simultaneous estimates for the continuous regulator $\hat K$ and the projected regulator $\bar K$, using the stability estimates for reflected Euler-Maruyama schemes in~\cite{slominski2001euler}. We note that Girsanov's theorem cannot be used to compare $X_t$ and $\bar X_t$ directly because their diffusion coefficients are different, so the PLMC argument in \cite{bubeck2018sampling} cannot be used directly.

Defining the auxiliary process $\hat X_t$ by
\begin{equation}
\label{eqn:hat:x}
d\hat{X}_t = b\left(\bar{X}_{\lfloor t/\eta \rfloor\eta}\right)dt + \sqrt{2}\sigma\left(\bar{X}_{\lfloor t/\eta \rfloor\eta}\right)dW_t - d\hat{K}_t, \qquad \hat{X}_0=X_0,
\end{equation}
where $d\hat K_t=\hat\nu_t\hat L(dt)$, $\hat\nu_t\in N_{\mathcal C}^{\rm out}(\hat X_t)\cap\mathbb S^{d-1}$ for $\hat L(dt)$ almost every $t$. Thus $\hat X_t=\hat Y_t-\hat K_t\in\mathcal C$ for every $t$. Then $(\hat{X}_t, \hat{K}_t)$ solves the Skorokhod problem associated with
\begin{equation}
\label{eqn:hat:Y}
\hat{Y}_t = X_0 + \int_0^tb\left(\bar{X}_{\lfloor\frac{s}{\eta}\rfloor\eta}\right)ds+\int_0^t\sqrt{2}\sigma\left(\bar{X}_{\lfloor\frac{s}{\eta}\rfloor\eta}\right)dW_s.
\end{equation}
For $t\in[k\eta,(k+1)\eta)$, $\hat Y_t$ evolves with the coefficients frozen at $\bar X_{k\eta}$, and $\hat Y_{k\eta}=\bar Y_{k\eta}$. The process $\hat X$ serves as a continuous bridge between the projected Euler-Maruyama process $\bar X$ and the reflected diffusion $X$. We first prove the following lemmas and then we introduce our main discretization error results. When coupling $\bar{X}_t$ and $\hat{X}_t$, we use Lemma 2.3 in~\cite{bubeck2018sampling} to have the following supporting lemma.
\begin{lemma}
\label{lemma:bound:bm}
There exists a universal constant $C>0$ such that, for every $T\geq e$ and $0<\eta\leq1$,
\begin{equation}
\mathbb{E}\left[\sup_{s,t\in[0,T],|s-t|\leq \eta}\|W_t- W_s\|^2\right]
\leq Cd\eta\log(T/\eta).
\end{equation}
\end{lemma}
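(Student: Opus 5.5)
The plan is to reduce the statement to the one-dimensional modulus-of-continuity estimate for Brownian motion, i.e.\ to Lemma~2.3 of \cite{bubeck2018sampling}, and then to keep track of how the constant depends on $d$. Write $W=(W^1,\dots,W^d)$ with independent standard one-dimensional coordinates. Then
$$
\sup_{|s-t|\leq\eta}\|W_t-W_s\|^2\leq\sum_{i=1}^d\sup_{|s-t|\leq\eta}|W^i_t-W^i_s|^2 ,
$$
because the supremum of a sum is at most the sum of the suprema. Taking expectations, it suffices to show that a single coordinate satisfies $\mathbb E[\sup_{|s-t|\leq\eta}|B_t-B_s|^2]\leq C\eta\log(T/\eta)$, with $C$ universal.

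For the one-dimensional bound I would partition $[0,T]$ into $N=\lceil T/\eta\rceil$ intervals $I_j=[j\eta,(j+1)\eta]$. Any pair $s<t$ with $|s-t|\leq\eta$ lies in $I_j\cup I_{j+1}$ for some $j$, so
$$
|B_t-B_s|\leq 2\max_{0\le j\le N}\ \sup_{u\in I_j}|B_u-B_{j\eta}|+\max_j|B_{(j+1)\eta}-B_{j\eta}| ,
$$
and the right-hand side is at most a constant times $\max_j\sup_{u\in I_j}|B_u-B_{j\eta}|$.

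Each variable $M_j:=\sup_{u\in I_j}|B_u-B_{j\eta}|$ has the same law as $\sqrt\eta\,\sup_{[0,1]}|B|$. By the reflection principle this law has Gaussian tails, $\mathbb P(M_j>x)\leq 4e^{-x^2/(2\eta)}$. The standard maximal inequality for $N+1$ such variables then gives
$$
\mathbb E\Bigl[\max_j M_j^2\Bigr]\leq 2\eta\log(4(N+1))+C\eta .
$$
One can get this by integrating the union bound $\mathbb P(\max_j M_j^2>x)\leq\min(1,4(N+1)e^{-x/(2\eta)})$ and splitting the integral at $x_0=2\eta\log(4(N+1))$.

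The main obstacle is turning $\log(4(N+1))+1$ into a universal multiple of $\log(T/\eta)$. Here the hypotheses $T\geq e$ and $\eta\leq1$ give $T/\eta\geq e$, hence $\log(T/\eta)\geq1$. Since $N+1\leq T/\eta+2\leq 3T/\eta$, we get $\log(4(N+1))+C\leq\log 12+\log(T/\eta)+C\leq C'\log(T/\eta)$. Summing this one-dimensional bound over the $d$ coordinates yields the factor $d$ and completes the argument.

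Alternatively, one can simply cite Lemma~2.3 of \cite{bubeck2018sampling}, which states exactly this bound, and remark that the conditions $T\geq e$ and $\eta\le1$ are what make the logarithmic factor absorb all additive constants.
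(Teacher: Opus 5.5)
Your argument is correct and complete, but it follows a different route from the paper.

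\textbf{The paper's route.} It never splits into coordinates. It uses overlapping windows $[i\eta,(i+2)\eta]$ around the grid points $(i+1)\eta$ to reduce the modulus to $4\,\mathbb E[\max_{i<N}Y_i^2]$, where $Y_i=\sup_{u\in[i\eta,(i+2)\eta]}\|W_u-W_{(i+1)\eta}\|$. It then controls the maximum by a moment trick rather than by tails:
\begin{itemize}
\item Jensen gives $\mathbb E[\max_iY_i^2]\le N^{1/p}\|Y_0\|_{2p}^2$;
\item Doob's $L^{2p}$ inequality for the submartingale $\|W_u\|$ gives $\|Y_0\|_{2p}\le 4\,\bigl\|\,\|W_\eta\|\,\bigr\|_{2p}$;
\item Gaussian moment growth gives $\bigl\|\,\|W_\eta\|\,\bigr\|_{2p}^2\le c_1\eta dp$;
\item choosing $p=\lceil\log N\rceil$ makes $N^{1/p}\le e$.
\end{itemize}
The dimension enters through moments of $\|G\|$, not through a sum over coordinates.

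\textbf{Your route.} You split $\|W_t-W_s\|^2$ coordinatewise. For one coordinate you use the reflection-principle tail $\mathbb P(M_j>x)\le 4e^{-x^2/(2\eta)}$, a union bound over $N+1$ intervals, and tail integration. This gives the explicit bound $\mathbb E[\max_jM_j^2]\le 2\eta\log(4(N+1))+2\eta$. Your absorption of the additive constants via $T/\eta\ge e$ is exactly how the hypotheses are used in the paper as well.

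\textbf{Comparison.} Your argument is more elementary, needing neither Doob's $L^p$ inequality nor Gaussian moment asymptotics in $p$, and its constants are explicit. It is in fact the argument the paper itself sketches later, in the proof of Proposition~\ref{prop:const:strong:euler}, for the grid-interval variant of this bound. What it costs is sharpness in $d$. Bounding the supremum of a sum by the sum of coordinatewise suprema locks in the product $d\log(T/\eta)$, whereas the true size is of order $\eta\,(d+\log(T/\eta))$. The paper's vector-valued moment argument would recover this additive form if one kept $\bigl\|\,\|G\|\,\bigr\|_{2p}^2\le d+2p$ instead of relaxing it to $c_1dp$. For the lemma as stated, both routes suffice.

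On your alternative of citing Lemma~2.3 of \cite{bubeck2018sampling} directly: the paper presents its proof as an adaptation of that lemma, not a verbatim quotation. Your self-contained argument is therefore the one to rely on.
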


\begin{proof}
The proof is given in Appendix~\ref{sec:lemma:bound:bm}.
\end{proof}

For a vector valued function $K$ of bounded variation, let $|K|_t:=\operatorname{Var}_{[0,t]}(K)$. The outward regulator of the bridge satisfies $d\hat K_t=\hat\nu_t\hat L(dt)$, where $\hat L$ is supported on $\{t:\hat X_t\in\partial\mathcal C\}$ and $\|\hat\nu_t\|=1$ for $\hat L(dt)$ almost every $t$. Consequently,
\begin{equation}\label{eq:hat:K:variation}
|\hat K|_t=\int_{[0,t]}\|\hat\nu_s\|\hat L(ds)
=\hat L([0,t]).
\end{equation}
The next lemma gives second moment bounds for the total variations of the continuous regulator $\hat K$ and the projected regulator $\bar K$, which are used in the bridge estimate below.
\begin{lemma}\label{lemma:bound:K}
Under Assumptions~\ref{asmpt:smooth} and~\ref{asmpt:set}, suppose that the initial law is supported in $\mathcal C_{r_0}:=\{x\in\mathcal C:\operatorname{dist}(x,\partial\mathcal C)\geq r_0\}$ for some $r_0>0$. Then, for every $t>0$,
\begin{equation}\label{eq:hat:K:L2:bound}
\mathbb E\left[|\hat K|_t^2\right]\leq At^2+Bt,
\end{equation}
where
\begin{align}
A  :=3r_0^{-2}\left(4R^2B_b^2+S_\sigma^4\right),
\qquad
B  :=24r_0^{-2}R^2S_\sigma^2.
\end{align}
For every $T=k\eta$ with $k\in\mathbb N$ and $0<\eta\leq1$, the bound satisfies
\begin{equation}\label{eq:bar:K:L2:bound}
\mathbb E\left[|\bar K|_T^2\right]\leq \bar A T^2+\bar B T,
\end{equation}
where
\begin{align}
\bar A
:=3r_0^{-2}\left(4R^2B_b^2+2B_b^4+8S_\sigma^4\right),
\qquad
\bar B
:=3r_0^{-2}\left(8R^2S_\sigma^2+16S_\sigma^4\right),
\end{align}
with constants $B_b$ and $S_{\sigma}$ defined in~\eqref{eqn:b:sigma:bdd}.
\end{lemma}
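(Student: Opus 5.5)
The plan is to use one geometric fact, which converts the total variation of an outward regulator into a quantity that an energy identity controls. Since $X_0\in\mathcal C_{r_0}$, the ball $B(X_0,r_0)$ lies in $\mathcal C$. Fix $x\in\partial\mathcal C$ and $v\in N_{\mathcal C}^{\rm out}(x)\cap\mathbb S^{d-1}$. The point $X_0+r_0v$ belongs to $\mathcal C$, so \eqref{eq:outward:normal:cone} gives $\langle v,X_0+r_0v-x\rangle\leq0$, that is,
$$
\langle v,x-X_0\rangle\geq r_0 .
$$
The same bound holds for the unnormalised increments: $\langle \Delta\bar K_k,\bar X_{k\eta}-X_0\rangle\geq r_0\|\Delta\bar K_k\|$, by \eqref{eq:projection:outward:normal}. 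Combined with \eqref{eq:hat:K:variation}, this yields
$$
r_0|\hat K|_t\leq\int_0^t\langle \hat X_s-X_0,d\hat K_s\rangle,
$$
and the analogous sum bound for $\bar K$.

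\textbf{Continuous regulator.} I will apply Itô's formula to $\|\hat X_t-X_0\|^2$ using \eqref{eqn:hat:x}. The quadratic variation term is $2d\,\sigma(\bar X_{\lfloor s/\eta\rfloor\eta})^2\,ds\leq 2S_\sigma^2\,ds$, because $\|\sigma I_d\|_{\mathrm F}^2=d\sigma^2$. Dropping the nonnegative term $\|\hat X_t-X_0\|^2$ and using $\|\hat X_s-X_0\|\leq2R$ together with \eqref{eqn:b:sigma:bdd}, I expect
$$
r_0|\hat K|_t\leq 2RB_b\,t+S_\sigma^2\,t+\sqrt2\,|M_t|,
\qquad
M_t:=\int_0^t\sigma\langle\hat X_s-X_0,dW_s\rangle .
$$
Squaring with $(a+b+c)^2\leq3(a^2+b^2+c^2)$ and using Itô's isometry in the form $\mathbb E M_t^2\leq4R^2S_\sigma^2t$ gives
$$
\mathbb E|\hat K|_t^2\leq 3r_0^{-2}\bigl(4R^2B_b^2+S_\sigma^4\bigr)t^2+24r_0^{-2}R^2S_\sigma^2\,t .
$$
This is exactly \eqref{eq:hat:K:L2:bound}, which confirms the approach. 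One check is needed here: the variance bound on $M_t$ is really $\int\sigma^2\|\hat X_s-X_0\|^2\,ds\leq4R^2\sigma^2t$, and I will bound $\sigma^2$ by $S_\sigma^2$ since $d\geq1$.

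\textbf{Projected regulator.} This is the discrete analogue of the same argument. Write $\bar x_k=\bar X_{k\eta}$ and $z_k=\bar x_{k-1}+\Delta\bar Y_k$, and expand
$$
\|\bar x_k-X_0\|^2=\|z_k-X_0\|^2-2\langle\Delta\bar K_k,\bar x_k-X_0\rangle-\|\Delta\bar K_k\|^2 .
$$
It follows that
$$
2r_0\|\Delta\bar K_k\|\leq\|\bar x_{k-1}-X_0\|^2-\|\bar x_k-X_0\|^2+2\langle\bar x_{k-1}-X_0,\Delta\bar Y_k\rangle+\|\Delta\bar Y_k\|^2 .
$$
Telescoping over $k\leq T/\eta$ gives
$$
2r_0|\bar K|_T\leq 2\sum_k\langle\bar x_{k-1}-X_0,\eta b\rangle+2\sqrt{2\eta}\sum_k\sigma\langle\bar x_{k-1}-X_0,\xi_k\rangle+\sum_k\|\Delta\bar Y_k\|^2 .
$$
The first sum is at most $2RB_bT$. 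The second is a discrete martingale, whose second moment is at most $8R^2S_\sigma^2T$. For the third, I use $\|\Delta\bar Y_k\|^2\leq2\eta^2B_b^2+4\eta\sigma^2\|\xi_k\|^2$. Its mean is at most $2\eta B_b^2T+4S_\sigma^2T$. Its second moment is controlled by Cauchy–Schwarz over the $T/\eta$ terms together with $\mathbb E\|\xi\|^4=d^2+2d\leq3d^2$.

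\textbf{Main obstacle.} The hardest part will be bookkeeping in this third sum, so that the result takes the form $\bar AT^2+\bar BT$ with the stated constants $2B_b^4$, $8S_\sigma^4$ and $16S_\sigma^4$. I will use $\eta\leq1$ to absorb powers of $\eta$. I will also split $\|\xi_k\|^2=d+(\|\xi_k\|^2-d)$. The centred part contributes a term linear in $T$, namely the $16S_\sigma^4T$ contribution. The mean part contributes to $T^2$, giving $8S_\sigma^4$, up to the factor $3$ coming from squaring the three-term bound.
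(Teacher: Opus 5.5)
Your proposal is correct and follows essentially the same route as the paper. It uses the interior-ball normal-cone inequality $\langle x-X_0,v\rangle\geq r_0$, It\^o's formula (respectively the discrete energy identity with telescoping), and the $3(a^2+b^2+c^2)$ squaring with the same drift, martingale and quadratic-variation bounds.

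The one thing to drop is the remark about ``Cauchy--Schwarz over the $T/\eta$ terms.'' The bound $\mathbb E(\sum_k\|\xi_k\|^2)^2\leq 3k^2d^2$ would put $24S_\sigma^4$ rather than $8S_\sigma^4$ into $\bar A$, and then the stated bound fails to follow for $T>1$. The stated constants come, as in the paper, from two steps. First bound $\sigma^2\leq S_\sigma^2/d$ pathwise. Then use the exact chi-squared moment $\mathbb E(\sum_k\|\xi_k\|^2)^2=k^2d^2+2kd$, which is precisely your mean/centred split.
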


\begin{proof}
The proof is given in Appendix~\ref{sec:lemma:bound:K}.
\end{proof}

The next lemma controls the bridge error uniformly over the entire time interval. The bound is necessary because $\bar Y$ and $\hat Y$ do not agree between grid points.

\begin{lemma}\label{lemma:alg2bri}
Under Assumptions~\ref{asmpt:smooth} and~\ref{asmpt:set}, suppose that the initial law is supported in $\mathcal C_{r_0}:=\{x\in\mathcal C:\operatorname{dist}(x,\partial\mathcal C)\geq r_0\}$ for some $r_0>0$. Let $T=k\eta\geq e$ and $0<\eta\leq1$. Set
\begin{align}
H_K
&:=\sqrt A+\sqrt{B/e}+\sqrt{\bar A}+\sqrt{\bar B/e},\label{eq:bridge:HK}\\
C_0
&:=2B_b^2+4CdS_\sigma^2
+4H_K\left(\sqrt2B_b+2\sqrt{Cd}\,S_\sigma\right),\label{eq:bridge:C0}
\end{align}
where $C$ is the universal constant in Lemma~\ref{lemma:bound:bm}, and $A,B,\bar A,\bar B$ are defined in Lemma~\ref{lemma:bound:K}. Then,
\begin{equation}\label{eq:bridge:uniform}
\mathbb E\left[\sup_{0\leq t\leq T}
\left\|\bar X_t-\hat X_t\right\|^2\right]
\leq C_0\sqrt\eta\,T\log(T/\eta).
\end{equation}
\end{lemma}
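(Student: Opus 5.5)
The plan is to use the standard stability estimate for the Skorokhod problem on a convex set, in the form used by \cite{slominski2001euler} and in the proof of Lemma~2.3 of \cite{bubeck2018sampling}. The Skorokhod problem on a convex domain is $1/2$-H\"older stable with respect to the total variation of the regulators. Concretely, if $X=Y-K$ and $X'=Y'-K'$ both solve (possibly c\`adl\`ag) Skorokhod problems on $\mathcal C$ with outward regulators, then
\begin{equation*}
\|X_t-X'_t\|^2\leq\|Y_t-Y'_t\|^2+4\sup_{s\leq t}\|Y_s-Y'_s\|\bigl(|K|_t+|K'|_t\bigr).
\end{equation*}
I will take $X=\bar X$, $Y=\bar Y$, $K=\bar K$ and $X'=\hat X$, $Y'=\hat Y$, $K'=\hat K$.

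To derive this inequality, I would expand $\|X_t-X'_t\|^2=\|(Y_t-Y'_t)-(K_t-K'_t)\|^2$. I would write $\|K_t-K'_t\|^2$ and the cross term via integration by parts against $dK$ and $dK'$. The outward normal property, namely $\langle dK_s,X'_s-X_s\rangle\leq0$ and its symmetric counterpart from \eqref{eq:outward:normal:cone}, then kills the sign-definite parts. What remains is bounded by $\sup\|Y-Y'\|$ times the total variations. For the piecewise constant $\bar K$, the jumps are handled through \eqref{eq:projection:outward:normal}, and the extra jump terms have the favourable sign.

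Next I would take the supremum over $t\in[0,T]$ and expectations, and apply Cauchy--Schwarz to the product term:
\begin{equation*}
\mathbb E\sup_t\|\bar X_t-\hat X_t\|^2\leq \mathbb E\sup_t\|\bar Y_t-\hat Y_t\|^2+4\bigl(\mathbb E\sup_t\|\bar Y_t-\hat Y_t\|^2\bigr)^{1/2}\Bigl(\bigl(\mathbb E|\bar K|_T^2\bigr)^{1/2}+\bigl(\mathbb E|\hat K|_T^2\bigr)^{1/2}\Bigr).
\end{equation*}
To bound the driver difference, note that $\hat Y_{k\eta}=\bar Y_{k\eta}$. For $t\in[k\eta,(k+1)\eta)$ this gives
\begin{equation*}
\hat Y_t-\bar Y_t=(t-k\eta)\,b(\bar X_{k\eta})+\sqrt2\,\sigma(\bar X_{k\eta})(W_t-W_{k\eta}).
\end{equation*}
By \eqref{eqn:b:sigma:bdd} and $|t-k\eta|\leq\eta$, the sup-norm squared is at most $2B_b^2\eta^2+4S_\sigma^2\sup_{|s-t|\leq\eta}\|W_t-W_s\|^2$. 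Lemma~\ref{lemma:bound:bm} then yields
\begin{equation*}
\mathbb E\sup_t\|\bar Y_t-\hat Y_t\|^2\leq 2B_b^2\eta^2+4CdS_\sigma^2\eta\log(T/\eta)\leq\bigl(2B_b^2+4CdS_\sigma^2\bigr)\eta\log(T/\eta),
\end{equation*}
using $\eta\leq1$ and $\log(T/\eta)\geq1$.

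I would then take square roots, using $\sqrt{a+b}\leq\sqrt a+\sqrt b$, to get the bound $(\sqrt2B_b+2\sqrt{Cd}S_\sigma)\sqrt{\eta\log(T/\eta)}$. For the regulators, Lemma~\ref{lemma:bound:K} gives $(\mathbb E|\hat K|_T^2)^{1/2}\leq\sqrt{AT^2+BT}\leq(\sqrt A+\sqrt{B/e})T$, since $T\geq e$ implies $T\leq T^2/e$. The same argument applies to $\bar K$, and summing the two gives $H_K T$. Combining everything, I would use $\eta\log\leq\sqrt\eta\,T\log$ and $\sqrt{\eta\log(T/\eta)}\leq\sqrt\eta\log(T/\eta)$, both valid because $T\geq1$ and $\log(T/\eta)\geq1$. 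The total is then at most $C_0\sqrt\eta\,T\log(T/\eta)$, with exactly the stated $C_0$.

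I expect the main obstacle to be the stability inequality in the mixed setting, where one regulator is continuous and the other is a pure-jump process. The integration by parts must account for the jumps of $\bar K$, and in particular for the quadratic jump contribution $\sum\|\Delta\bar K_j\|^2$. My first attempt would be to write $\|X_t-X_t'\|^2-\|Y_t-Y_t'\|^2$ through the identity $\|a-c\|^2-\|a\|^2=-2\langle a-c,c\rangle-\|c\|^2$. I would then verify that every jump term enters with a nonpositive sign thanks to the projection characterization, so that the constant $4$ survives.
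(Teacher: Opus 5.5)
Your proposal is correct and follows essentially the same route as the paper: the convex Skorokhod (Tanaka) comparison derived from the post-jump bounded-variation identity, with the normal-cone term $\int\langle \bar X_s-\hat X_s,d(\bar K_s-\hat K_s)\rangle\geq0$ and the jump sum dropped; then the driver bound via Lemma~\ref{lemma:bound:bm}, Cauchy--Schwarz with Lemma~\ref{lemma:bound:K}, and elementary absorptions into $\sqrt\eta\,T\log(T/\eta)$. The only difference is cosmetic: you absorb $2B_b^2\eta^2$ into $\eta\log(T/\eta)$ before taking square roots, whereas the paper keeps $\sqrt2B_b\eta$ separate and uses $T\eta\leq\sqrt\eta\,T\log(T/\eta)$, and both yield exactly the stated $C_0$.
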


\begin{proof}
The proof is given in Appendix~\ref{sec:lemma:alg2bri}.
\end{proof}

We now prove the main discretization error result in the following theorem by controlling the remaining error term.

\begin{theorem}\label{global:error:2}
Under Assumptions~\ref{asmpt:strongly:convex},~\ref{asmpt:smooth} and \ref{asmpt:set}, moreover, we assume that the common initial law is supported in $\mathcal C_{r_0}:=\{x\in\mathcal C:\operatorname{dist}(x,\partial\mathcal C)\geq r_0\}$ for some $r_0>0$. For any $k \in \mathbb{N}$, $0<\eta\leq1$, and $T = k\eta \geq e$, we have
\begin{equation}
\mathcal{W}^2_2\left(\mathrm{Law}(\bar{X}_T),\mathrm{Law}(X_T)\right)  \leq \sqrt{\eta}T\log(T/\eta)
\left(1+\sqrt{\frac{M^2+4\alpha}{2m-1-4\alpha}}\right)^2 C_0,
\end{equation}
provided $2m>1+4\alpha$, where $C_0$ is given in \eqref{eq:bridge:C0}.
\end{theorem}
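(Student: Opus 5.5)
We bound $\mathcal W_2^2(\mathrm{Law}(\bar X_T),\mathrm{Law}(X_T))$ by $\mathbb E\|\bar X_T-X_T\|^2$ under the synchronous coupling already set up: $X$, $\hat X$ and $\bar X$ are all driven by the same Brownian motion $W$ and start from the same $X_0$. Then
\[
\bigl(\mathbb E\|X_T-\bar X_T\|^2\bigr)^{1/2}\le \bigl(\mathbb E\|X_T-\hat X_T\|^2\bigr)^{1/2}+\bigl(\mathbb E\|\hat X_T-\bar X_T\|^2\bigr)^{1/2}.
\]
Lemma~\ref{lemma:alg2bri} already gives the second term: it is at most $\sqrt{C_0\sqrt\eta\,T\log(T/\eta)}=:\sqrt{\Delta}$. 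It therefore suffices to show
\[
\mathbb E\|X_T-\hat X_T\|^2\le \frac{M^2+4\alpha}{2m-1-4\alpha}\,\Delta .
\]
The constant $(1+\sqrt{\cdot})^2$ in the statement then follows by squaring.

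For the first term, apply It\^o's formula to $\|X_t-\hat X_t\|^2$. Both processes are continuous semimartingales, since $\hat X$ is a continuously reflected process. The reflection terms contribute
\[
-2\langle X_t-\hat X_t,dK_t\rangle+2\langle X_t-\hat X_t,d\hat K_t\rangle\le 0 .
\]
This holds because $dK_t\in N^{\rm out}_{\mathcal C}(X_t)$ with $\hat X_t\in\mathcal C$, and symmetrically for $d\hat K_t$. Write $e_t=X_t-\hat X_t$ and $\bar x_t:=\bar X_{\lfloor t/\eta\rfloor\eta}$. The drift term is split as $b(X_t)-b(\bar x_t)=[b(X_t)-b(\hat X_t)]+[b(\hat X_t)-b(\bar x_t)]$. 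Assumption~\ref{asmpt:strongly:convex} bounds the first piece by $-2m\|e_t\|^2$. Young's inequality bounds the second by $\|e_t\|^2+M^2\|\hat X_t-\bar x_t\|^2$.

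For the It\^o correction $2\|\sigma(X_t)I_d-\sigma(\bar x_t)I_d\|_F^2$, use $(a+b)^2\le 2a^2+2b^2$ and Assumption~\ref{asmpt:smooth}. This gives at most $4\alpha\|e_t\|^2+4\alpha\|\hat X_t-\bar x_t\|^2$. Taking expectations, the martingale term vanishes because the coefficients are bounded. We obtain
\[
\frac{d}{dt}\mathbb E\|e_t\|^2\le -(2m-1-4\alpha)\,\mathbb E\|e_t\|^2+(M^2+4\alpha)\,\mathbb E\|\hat X_t-\bar x_t\|^2,\qquad e_0=0 .
\]

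We then bound the forcing uniformly. Since $\bar X$ is piecewise constant, $\bar x_t=\bar X_t$, and so
\[
\sup_{t\le T}\mathbb E\|\hat X_t-\bar x_t\|^2\le \mathbb E\sup_{t\le T}\|\hat X_t-\bar X_t\|^2\le \Delta
\]
by Lemma~\ref{lemma:alg2bri}. Gr\"onwall's inequality with rate $\lambda=2m-1-4\alpha>0$ then gives
\[
\mathbb E\|e_T\|^2\le \frac{M^2+4\alpha}{\lambda}\bigl(1-e^{-\lambda T}\bigr)\Delta\le \frac{M^2+4\alpha}{2m-1-4\alpha}\,\Delta .
\]
Combining this with the triangle inequality above finishes the proof.

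The main obstacle is the bookkeeping in the It\^o step. We must make sure that $X$ and $\hat X$ are genuinely comparable continuous reflected semimartingales, so the cross term with the regulators has the right sign. We also need the splitting constants to match exactly the stated $1$ and $4\alpha$, including the factor $2$ in front of the squared Frobenius norm. All heavy lifting about projection versus reflection has already been done in Lemmas~\ref{lemma:bound:K} and~\ref{lemma:alg2bri}.
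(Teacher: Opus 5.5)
Your proposal is correct and follows the paper's proof essentially step for step. Both arguments use the same synchronous coupling with the $L^2$ triangle inequality through $\hat X$, the same normal-cone sign for the regulator cross term, and the same Young/Frobenius splitting, which yields the rate $2m-1-4\alpha$ and the forcing constant $M^2+4\alpha$; both then invoke Lemma~\ref{lemma:alg2bri} for the forcing. The only cosmetic difference is that the paper applies the product rule to $e^{qt}\|Z_t\|^2$ and works in integrated form. That is the rigorous reading of your $\frac{d}{dt}$ inequality, since the regulator contribution is a nonpositive measure rather than a $dt$-density.
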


\begin{proof}
The proof is given in Appendix~\ref{sec:global:error:2}.
\end{proof}

Combining the continuous time convergence of RALD in Corollary~\ref{cor:W:2} with the discretization error between RALD and RALMC in Theorem~\ref{global:error:2} gives the following finite time convergence guarantee for RALMC in $\mathcal W_2$ distance.

\begin{theorem}\label{thm:final}
Suppose that Assumptions~\ref{asmpt:strongly:convex},~\ref{asmpt:smooth}, and~\ref{asmpt:set} hold, and that the hypotheses of Theorem~\ref{thm:chi:square} hold. Assume that the common initial law is supported in $\mathcal C_{r_0}:=\{x\in\mathcal C:\operatorname{dist}(x,\partial\mathcal C)\geq r_0\}$ for some $r_0>0$, and
$$
2m>1+4\alpha.
$$
Define
\begin{equation}\label{eq:thm:final:Cbar}
\bar C_0:=\left(1+\sqrt{\frac{M^2+4\alpha}{2m-1-4\alpha}}\right)\sqrt{C_0},
\end{equation}
where $C_0$ is given in Theorem~\ref{global:error:2}. Then, for every $k\in\mathbb N$ and $0<\eta\leq1$ such that $k\eta\geq e$,
\begin{equation}\label{eq:thm:final:explicit}
\mathcal W_2(\mathrm{Law}(x_k),\pi)
\leq 2^{3/4}R\left(\chi^2(\mu_0\Vert\pi)\right)^{1/4}e^{-\alpha_0k\eta/(2C_P)}
+\bar C_0\eta^{1/4}(k\eta)^{1/2}(\log k)^{1/2},
\end{equation}
where $\alpha_0$ and $C_P$ are defined in Theorem~\ref{thm:chi:square}. In the special case $\sigma\equiv1$, one has $U=U_0$ and RALMC reduces to PLMC for the differentiable target potential. If $\sigma\equiv1$ and $B_b,M,m,R,r_0$ are independent of $d$, then $\bar C_0=\mathcal O(d)$ and
\begin{equation}\label{eq:thm:final:dimension}
\mathcal W_2(\mathrm{Law}(x_k),\pi)
\leq 2^{3/4}R\left(\chi^2(\mu_0\Vert\pi)\right)^{1/4}e^{-\alpha_0k\eta/(2C_P)}
+\widetilde{\mathcal O}\left(d\eta^{1/4}(k\eta)^{1/2}\right).
\end{equation}
\end{theorem}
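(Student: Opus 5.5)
The plan is to apply the triangle inequality for $\mathcal W_2$ with the continuous time RALD $X$ as the intermediate law, coupled to RALMC through the common Brownian motion as in \eqref{eq:grid:pathwise:identity}. Writing $T=k\eta$, we have $x_k=\bar X_T$ almost surely, so
\begin{equation*}
\mathcal W_2(\mathrm{Law}(x_k),\pi)\leq \mathcal W_2(\mathrm{Law}(\bar X_T),\mathrm{Law}(X_T))+\mathcal W_2(\mu_T,\pi),
\end{equation*}
where $\mu_T=\mathrm{Law}(X_T)$ and $X_0$ has the common initial law $\mu_0$.

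For the second term, the hypotheses of Theorem~\ref{thm:chi:square} hold and $\mathcal C\subset B(0,R)$ by Assumption~\ref{asmpt:set}. Corollary~\ref{cor:W:2} therefore gives $2^{3/4}R(\chi^2(\mu_0\Vert\pi))^{1/4}e^{-\alpha_0 k\eta/(2C_P)}$ directly.

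For the first term, we use Theorem~\ref{global:error:2}. Its hypotheses are Assumptions~\ref{asmpt:strongly:convex}, \ref{asmpt:smooth} and \ref{asmpt:set}, the support condition in $\mathcal C_{r_0}$, $0<\eta\le1$, $T=k\eta\ge e$ and $2m>1+4\alpha$, all of which are assumed. Taking square roots gives $\bar C_0\,\eta^{1/4}T^{1/2}(\log(T/\eta))^{1/2}$ with $\bar C_0$ as in \eqref{eq:thm:final:Cbar}. The key observation is that $T/\eta=k$, so $\log(T/\eta)=\log k$ and $T^{1/2}=(k\eta)^{1/2}$. This yields \eqref{eq:thm:final:explicit}. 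Note that $T\ge e$ with $\eta\le1$ forces $k\ge e$, so $\log k\geq1$ and nothing degenerates.

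For the special case, $\sigma\equiv1$ means $e^{U-U_0}\equiv1$. Then $U=U_0$, $b=-\nabla U$, and \eqref{eqn:RALMC} is PLMC. In this case $\alpha=0$ and $S_\sigma=\|I_d\|_{\mathrm F}=\sqrt d$. We track the dimension through $A,B,\bar A,\bar B$ in Lemma~\ref{lemma:bound:K}, each of which is a combination of $r_0^{-2}$, $R^2$, $B_b^2$, $B_b^4$, $S_\sigma^2=d$ and $S_\sigma^4=d^2$. Hence $\sqrt A,\sqrt{\bar A},\sqrt{B},\sqrt{\bar B}=\mathcal O(d)$, so $H_K=\mathcal O(d)$. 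Then in \eqref{eq:bridge:C0}, $4CdS_\sigma^2=\mathcal O(d^2)$ and $4H_K(\sqrt2B_b+2\sqrt{Cd}\sqrt d)=\mathcal O(d\cdot d)=\mathcal O(d^2)$, giving $C_0=\mathcal O(d^2)$ and thus $\sqrt{C_0}=\mathcal O(d)$. The prefactor $1+\sqrt{M^2/(2m-1)}$ is dimension free, so $\bar C_0=\mathcal O(d)$. Absorbing $(\log k)^{1/2}$ into $\widetilde{\mathcal O}$ gives \eqref{eq:thm:final:dimension}.

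The only step needing care is this dimension bookkeeping. In particular, the $S_\sigma^4$ terms are quadratic in $d$ inside $A$ and $\bar A$, and become linear in $d$ only after the square root is taken. The main argument itself is a direct combination of earlier results.
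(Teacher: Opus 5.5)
Your proposal is correct and follows the paper's proof essentially step for step. You use the triangle inequality through $\mathrm{Law}(X_{k\eta})$ and bound the mixing term by Corollary~\ref{cor:W:2}. For the discretization term you take the square root of Theorem~\ref{global:error:2} with $\log(T/\eta)=\log k$, and the dimension bookkeeping is the same as the paper's: $\alpha=0$, $S_\sigma=\sqrt d$, $H_K=\mathcal O(d)$, $C_0=\mathcal O(d^2)$, hence $\bar C_0=\mathcal O(d)$.
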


\begin{proof}
The proof is given in Appendix~\ref{sec:thm:final}.
\end{proof}

As a corollary of Theorem~\ref{thm:final}, we obtain the iteration complexity of RALMC in \eqref{eqn:RALMC}.

\begin{corollary}\label{cor:complexity}
Under the assumptions of Theorem~\ref{thm:final}, fix $\ep>0$ and set
$$
    A_0:=2^{3/4}R\left(\chi^2(\mu_0\Vert\pi)\right)^{1/4},
    \qquad
    T_\ep:=\max\left\{e,
    \frac{2C_P}{\alpha_0}\log\left(1 \vee \frac{2A_0}{\ep}\right)\right\}.
$$
Choose an integer $k$ satisfying
\begin{equation}\label{eq:complexity:valid:k}
    k\geq\max\{3,T_\ep\},
    \qquad
    \frac{k}{(\log k)^2}
    \geq
    \frac{16\bar C_0^4T_\ep^3}{\ep^4},
\end{equation}
and set $\eta:=T_\ep/k$. Then $0<\eta\leq1$, $k\eta=T_\ep\geq e$, and
$$
    \mathcal W_2(\Law(x_k),\pi)\leq\ep.
$$
In particular,
$$
    k
    =\widetilde{\mathcal O}\left(
    T_\ep+\frac{\bar C_0^4T_\ep^3}{\ep^4}
    \right).
$$
Under the additional scaling assumptions in the second part of Theorem~\ref{thm:final}, and when $A_0,C_P,\alpha_0$ are dimension independent, this becomes
\begin{align}\label{under:additional}
    \eta=\widetilde{\mathcal O}\left(\frac{\ep^4}{d^4}\right),
    \qquad
    k=\widetilde{\mathcal O}\left(\frac{d^4}{\ep^4}\right).
\end{align}
\end{corollary}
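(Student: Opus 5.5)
The plan is to plug the choices $k$ and $\eta=T_\ep/k$ directly into the explicit bound \eqref{eq:thm:final:explicit} of Theorem~\ref{thm:final}, and to make each of its two terms at most $\ep/2$.

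\textbf{Admissibility.} Since $k\geq T_\ep$, we have $\eta=T_\ep/k\leq1$. Also $k\eta=T_\ep\geq e$ by the definition of $T_\ep$. Hence Theorem~\ref{thm:final} applies with $T=k\eta=T_\ep$.

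\textbf{Exponential term.} We need $A_0e^{-\alpha_0T_\ep/(2C_P)}\leq \ep/2$. Because $T_\ep\geq \frac{2C_P}{\alpha_0}\log(1\vee 2A_0/\ep)$, we get $e^{-\alpha_0T_\ep/(2C_P)}\leq \min\{1,\ep/(2A_0)\}$. This gives the required bound, and it holds trivially when $A_0=0$.

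\textbf{Discretization term.} Substituting $\eta=T_\ep/k$ gives
$$\bar C_0\eta^{1/4}T_\ep^{1/2}(\log k)^{1/2}=\bar C_0T_\ep^{3/4}k^{-1/4}(\log k)^{1/2}.$$
Requiring this to be at most $\ep/2$ and raising to the fourth power, it is equivalent to $16\bar C_0^4T_\ep^3(\log k)^2/k\leq\ep^4$. That is exactly the second condition in \eqref{eq:complexity:valid:k}. The condition $k\geq3$ guarantees $\log k\geq1>0$, so the fourth-power manipulation is legitimate and the $\log$ factor is well defined. Summing the two halves gives $\mathcal W_2(\Law(x_k),\pi)\leq\ep$.

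\textbf{Order of $k$.} This is the only mildly delicate step: showing that the smallest admissible $k$ is $\widetilde{\mathcal O}(T_\ep+\bar C_0^4T_\ep^3/\ep^4)$. For $k\geq3$, the map $k\mapsto k/(\log k)^2$ is increasing once $k\geq e^2$. So to satisfy $k/(\log k)^2\geq Q$, with $Q:=16\bar C_0^4T_\ep^3/\ep^4$, it suffices to take $k=\lceil c\,Q\log^2(Q\vee e)\rceil$ for a suitable absolute constant $c$. Taking the maximum with $\max\{3,T_\ep\}$ then yields the stated order.

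\textbf{Dimension scaling.} Under the scaling assumptions of Theorem~\ref{thm:final}, we have $\bar C_0=\mathcal O(d)$. Since $A_0,C_P,\alpha_0$ are dimension independent, $T_\ep=\widetilde{\mathcal O}(1)$. Therefore $k=\widetilde{\mathcal O}(d^4/\ep^4)$, and $\eta=T_\ep/k=\widetilde{\mathcal O}(\ep^4/d^4)$, which is \eqref{under:additional}.
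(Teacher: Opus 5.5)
Your proposal is correct and follows the paper's proof essentially step for step. You get admissibility from $k\geq T_\ep$, bound the mixing term by $\ep/2$ through the definition of $T_\ep$, bound the discretization term by $\ep/2$ by raising it to the fourth power and invoking the second condition in \eqref{eq:complexity:valid:k}, and read off the order of $k$ and the $d^4$ scaling from $\bar C_0=\mathcal O(d)$. Your explicit choice $k=\lceil c\,Q\log^2(Q\vee e)\rceil$, together with the monotonicity of $k/(\log k)^2$ for $k\geq e^2$, just makes concrete the paper's brief ``up to logarithmic factors'' remark.
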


\begin{proof}
The proof is given in Appendix~\ref{sec:cor:complexity}.
\end{proof}

\subsection{The case of constant \texorpdfstring{$U_0$}{U0}}
As discussed in the introduction, the boundedness of the constraint set makes a constant reference potential a natural and practically useful choice for RALMC. When $U_0 \equiv c_0$, the drift coefficient in~\eqref{eqn:notation} satisfies $b\equiv 0$, and the target potential enters the dynamics only through the state dependent diffusion coefficient $\sigma(x)=e^{\frac12(U(x)-c_0)}$. The resulting algorithm is a zeroth order projected scheme. It only requires evaluations of $U$ and does not require access to $\nabla U$. This feature is useful for constrained sampling with non-differentiable or nonconvex potentials.

When $b\equiv0$, the drift contraction used in the preceding discretization estimate is unavailable. We compare Euler-Maruyama scheme and RALD on a fixed block, then use heat kernel smoothing and the spectral gap to control the propagation of these local errors. The block length is chosen from model parameters, independently of the total running time and desired accuracy. Nondegeneracy alone is not used as a contraction estimate. Throughout the paper, $a(x)$ denotes
\begin{equation}\label{eq:scalar:mobility}
    a(x):=e^{U(x)-U_0(x)}=(\sigma(x))^2.
\end{equation}
\begin{assumption}[Uniform ellipticity]\label{assump:elliptic}
There exist constants $0<\lambda\leq\Lambda<\infty$ such that, for every $x\in\overline{\mathcal C}$,
\begin{equation}\label{eq:uniform:elliptic}
    \lambda I_d
    \preceq
    [\sigma(x)I_d][\sigma(x)I_d]^\top
    =a(x)I_d
    \preceq
    \Lambda I_d.
\end{equation}
\end{assumption}
Since $\sigma(x)>0$, condition~\eqref{eq:uniform:elliptic} is equivalent to
\begin{equation}\label{eq:uniform:elliptic:scalar}
    \lambda\leq a(x)=(\sigma(x))^2\leq\Lambda,
    \qquad x\in\overline{\mathcal C},
\end{equation}
or, equivalently, $\sqrt\lambda\leq\sigma(x)\leq\sqrt\Lambda$. The covariance matrix per unit time of the martingale part in~\eqref{eq:sde:constU0} is therefore $2a(x)I_d$. With $U_0\equiv c_{0}$, consider the reflected anchored Langevin dynamics
\begin{equation}\label{eq:sde:constU0}
dX_t=\sqrt{2}\,\sigma(X_t)\,dW_t-dK_t,\qquad X_t\in\overline{\mathcal C},
\end{equation}
where $\sigma(x)=\exp \left(\frac12(U(x)-c_0)\right)$ and $K_t$ is the outward normal regulator at $\partial\mathcal C$. Equivalently, $dK_t=\nu_tL(dt)$ with $\nu_t\in N_{\mathcal C}^{\rm out}(X_t)\cap\mathbb S^{d-1}$ for $L(dt)$ almost every $t$. At grid times, the discrete dynamics are represented by the piecewise constant Euler-Maruyama scheme
\begin{equation}\label{eq:cont-interp}
\bar{X}_t
=
X_0+\int_0^{\left\lfloor\frac{t}{\eta}\right\rfloor\eta} \sqrt{2}\,
\sigma \left(\bar{X}_{\left\lfloor\frac{s}{\eta}\right\rfloor \eta}\right) dW_s-\bar{K}_t,
\qquad t \in[k \eta,(k+1) \eta).
\end{equation}
Recall that, when $U_{0}\equiv c_{0}$, the reflected Euler-Maruyama scheme is given by
\begin{equation}\label{eq:proj-euler}
\bar{X}_{(k+1) \eta}=\mathcal{P}_{\mathcal{C}}\left(\bar{X}_{k \eta}+\sqrt{2 \eta} \sigma\left(\bar{X}_{k \eta}\right) \xi_{k+1}\right),
\end{equation}
where $\xi_k \sim\mathcal{N}\left(0, I_d\right)$ are i.i.d. random vectors and $X_0 = \bar{X}_0$.

We first compare the projected scheme with the diffusion over a fixed interval of length $s$. The next proposition is uniform over the initial point, including boundary points, so it can be applied again after each block. %
\begin{proposition}[Fixed block projected Euler-Maruyama approximation]
\label{prop:const:strong:euler}
Consider $U_0\equiv c_0$. Suppose Assumption~\ref{asmpt:set} holds and $\sigma$ is Lipschitz on $\mathcal C$. Write
$$
\left\|\left[\sigma(x)-\sigma(y)\right]I_d\right\|_{\mathrm F}\leq\ell_\sigma\|x-y\|.
$$
For every fixed $s>0$, there is a finite constant $I_s$, given explicitly in \eqref{eq:const:Is}, such that, for every integer $n\geq1$ and $\eta=s/n$,
\begin{equation}\label{eq:const:strong:euler:log-rate}
\sup_{x\in\mathcal C}
\left(\mathbb E_x\left\|\bar X_s-X_s\right\|^2\right)^{1/2}
\leq I_s\left[\eta\left(1+\log\frac{s}{\eta}\right)\right]^{1/4}.
\end{equation}
The two processes use the same Brownian motion and initial point. The bound is uniform over all initial points in $\mathcal C$, including its boundary.
\end{proposition}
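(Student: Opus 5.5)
The plan is to rerun the coupling argument behind Theorem~\ref{global:error:2} on a single block $[0,s]$, with two changes. The drift contraction (unavailable since $b\equiv0$) is replaced by a Gronwall argument whose constant depends only on $s$. The requirement that the initial law lie in $\mathcal C_{r_0}$ is removed by a different regulator estimate. All processes are driven by the same Brownian motion from the same point $x\in\mathcal C$, and we use the bridge $\hat X$ of \eqref{eqn:hat:x} with $b\equiv0$. We split
$$
\|X_s-\bar X_s\|\leq\|X_s-\hat X_s\|+\|\hat X_s-\bar X_s\|.
$$

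\emph{Step 1 (regulator bounds uniform in the initial point).} Since $\mathcal C$ is convex with nonempty interior, fix $x^*$ and $r>0$ with $B(x^*,r)\subset\mathcal C$. For every $y\in\partial\mathcal C$ and every unit vector $v\in N^{\rm out}_{\mathcal C}(y)$, testing \eqref{eq:outward:normal:cone} with $x^*+rv$ gives $\langle v,y-x^*\rangle\geq r$. Hence $d|\hat K|_t\leq r^{-1}\langle \hat X_t-x^*,d\hat K_t\rangle$, and similarly for each projection jump of $\bar K$ at $\bar X_{k\eta}$. We then apply It\^o's formula to $\|\hat X_t-x^*\|^2$, and its discrete analogue to $\|\bar X_{k\eta}-x^*\|^2$. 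Because $\|\cdot-x^*\|\leq 2R$ and $\|\sigma\|_{\mathrm F}\leq S_\sigma$, this gives
$$
|\hat K|_t\leq r^{-1}\Bigl(2R^2+ S_\sigma^2 t+\text{martingale}\Bigr).
$$
Taking second moments yields $\mathbb E|\hat K|_t^2,\ \mathbb E|\bar K|_t^2\leq \tilde A t^2+\tilde B t$, with constants as in Lemma~\ref{lemma:bound:K} but with $r_0$ replaced by the inradius-type constant $r$ and an extra additive $O(R^4/r^2)$ term. These bounds hold for every starting point, including boundary points. This is the step I expect to be the main obstacle, since the existing Lemma~\ref{lemma:bound:K} genuinely uses $r_0>0$. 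The convexity trick above is what I would try first.

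\emph{Step 2 (bridge error).} We follow Lemma~\ref{lemma:alg2bri} and use the Skorokhod stability estimate of \cite{slominski2001euler}:
$$
\sup_{t\leq s}\|\bar X_t-\hat X_t\|^2\lesssim \sup_{t\leq s}\|\bar Y_t-\hat Y_t\|^2+\sup_{t\leq s}\|\bar Y_t-\hat Y_t\|\bigl(|\bar K|_s+|\hat K|_s\bigr).
$$
Within each block, $\bar Y_t-\hat Y_t$ is a frozen-coefficient Brownian increment of length at most $\eta$, so its supremum is at most $\sqrt2\,S_\sigma\sup_{|u-v|\leq\eta}\|W_u-W_v\|$. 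Lemma~\ref{lemma:bound:bm} gives a bound on its second moment of order $d\eta\log(s/\eta)$. For $s<e$ we use the easy modification with $1+\log(s/\eta)$ in place of $\log(s/\eta)$, for example by applying the lemma on $[0,e]$. Cauchy--Schwarz combined with Step 1 then gives
$$
\mathbb E\sup_{t\leq s}\|\bar X_t-\hat X_t\|^2\leq C_s'\bigl[\eta(1+\log(s/\eta))\bigr]^{1/2}.
$$

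\emph{Step 3 (diffusion versus bridge).} We apply It\^o's formula to $\|X_t-\hat X_t\|^2$. The reflection cross term $-2\langle X_t-\hat X_t,dK_t-d\hat K_t\rangle$ is nonpositive by monotonicity of the outward normal cone. The quadratic variation term is at most $2\ell_\sigma^2\|X_t-\bar X_{\lfloor t/\eta\rfloor\eta}\|^2\,dt$. Since $\bar X_t=\bar X_{\lfloor t/\eta\rfloor\eta}$, we have
$$
\|X_t-\bar X_{\lfloor t/\eta\rfloor\eta}\|\leq\|X_t-\hat X_t\|+\|\hat X_t-\bar X_t\|.
$$
Taking expectations and applying Gronwall gives
$$
\mathbb E\|X_s-\hat X_s\|^2\leq 4\ell_\sigma^2 s\,e^{4\ell_\sigma^2 s}\sup_{t\leq s}\mathbb E\|\hat X_t-\bar X_t\|^2 .
$$
Combining this with Step 2 and taking square roots yields \eqref{eq:const:strong:euler:log-rate}, with
$$
I_s:=\bigl(1+2\ell_\sigma\sqrt{s}\,e^{2\ell_\sigma^2 s}\bigr)\sqrt{C_s'}.
$$
All constants depend only on $s,d,R,r,S_\sigma,\ell_\sigma$, and not on $x$.
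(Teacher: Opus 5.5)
Your proposal is correct and follows essentially the same route as the paper. The paper also takes regulator bounds from the squared distance to a fixed interior-ball center (rather than to $X_0$), bounds the bridge error by the convex Skorokhod comparison, and closes with a Gronwall argument for $\mathbb E\|X_t-\hat X_t\|^2$; your constant $4\ell_\sigma^2 s\,e^{4\ell_\sigma^2 s}$ is just a slightly cruder form of the paper's $e^{4\ell_\sigma^2 s}-1$.

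One point needs tidying: Lemma~\ref{lemma:bound:bm} assumes $\eta\le1$, which fails when $n<s$, so invoking it on $[0,e]$ or $[0,s]$ does not cover every $n$. As the paper does, bound the grid-anchored quantity $\max_{j<n}\sup_{t\in[j\eta,(j+1)\eta]}\|W_t-W_{j\eta}\|^2$ directly by Brownian scaling and a union bound; this gives $C d\eta(1+\log n)$ for every $s>0$ and every $n\geq1$.
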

\begin{proof}
The proof is given in Appendix~\ref{sec:prop:const:strong:euler}.
\end{proof}

Proposition~\ref{prop:const:strong:euler} controls the error produced over one block, but it does not control the accumulation over many blocks. The next lemma controls how the continuous semigroup propagates each block error, the short time estimation applies when the remaining evolution time is less than $2s$, while the long time estimate gives exponential decay when it is at least $2s$.

\begin{lemma}[Long time Wasserstein stability]\label{lem:const:semigroup:stability}
Suppose $U_0\equiv c_0$, Assumptions~\ref{asmpt:set}, \ref{asmpt:continuous:regularity}, and \ref{assump:elliptic} hold, and $\sigma$ is Lipschitz with constant $\ell_\sigma$ as above. Let $C_P$ be the Poincar\'e constant in Theorem~\ref{thm:chi:square}, and set $g_0=\lambda/C_P$. Write $\kappa_t(x,y)=dP_t(x,\cdot)/d\pi(y)$ for the heat kernel relative to $\pi$. Proposition~\ref{prop:continuous:structure}~(iii) ensures that the following extrema are finite and strictly positive. For a fixed $s>0$, define
\begin{equation}\label{eq:const:heatkernel:constants}
m_s:=\min_{x,y\in\mathcal C}\kappa_s(x,y)>0,\quad
M_s:=\max_{x,y\in\mathcal C}\kappa_s(x,y)<\infty,\quad
H_s:=e^{\ell_\sigma^2s}\sqrt{\frac{M_s}{2e\,g_0s\,m_s}}.
\end{equation}
Then, for any probability measures $\mu,\nu$ on $\mathcal C$,
\begin{align}
\mathcal W_2(\mu P_t,\nu P_t)&\leq e^{\ell_\sigma^2t}\mathcal W_2(\mu,\nu), \qquad 0\leq t< 2s,\label{eq:const:short:stability}\\
\mathcal W_2(\mu P_t,\nu P_t)&\leq H_s e^{-g_0(t-2s)}\mathcal W_2(\mu,\nu), \qquad t\geq2s.\label{eq:const:long:stability}
\end{align}
\end{lemma}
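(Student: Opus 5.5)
The proof has two regimes.

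\emph{Short time, \eqref{eq:const:short:stability}.} We use synchronous coupling. Fix an optimal coupling $(X_0,X_0')$ of $(\mu,\nu)$ and drive both reflected diffusions \eqref{eq:sde:constU0} with the same Brownian motion. By It\^o's formula,
\begin{align*}
d\|X_t-X_t'\|^2
&=2\sqrt2\langle X_t-X_t',(\sigma(X_t)-\sigma(X_t'))dW_t\rangle
+2d(\sigma(X_t)-\sigma(X_t'))^2dt\\
&\quad-2\langle X_t-X_t',dK_t-dK_t'\rangle .
\end{align*}
The reflection term is nonpositive. Indeed, $dK_t\in N^{\rm out}_{\mathcal C}(X_t)$ and $X_t'\in\mathcal C$ give $\langle X_t-X'_t,dK_t\rangle\geq0$, and symmetrically $\langle X_t-X_t',dK_t'\rangle\leq0$. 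The It\^o correction is $2\|(\sigma(X_t)-\sigma(X_t'))I_d\|_{\rm F}^2\leq2\ell_\sigma^2\|X_t-X_t'\|^2$. Taking expectations (the martingale term is a true martingale by boundedness) and applying Gronwall gives
$$
\mathbb E\|X_t-X_t'\|^2\leq e^{2\ell_\sigma^2t}\,\mathbb E\|X_0-X_0'\|^2 .
$$
This is exactly \eqref{eq:const:short:stability}, and it holds in fact for all $t\geq0$.

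\emph{Long time, \eqref{eq:const:long:stability}.} For $t\geq2s$ we write $P_t=P_s\,P_{t-2s}\,P_s$ and chain three estimates.

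First, $\mathcal W_2(\mu P_s,\nu P_s)\leq e^{\ell_\sigma^2 s}\mathcal W_2(\mu,\nu)$ by the coupling step. More precisely, I would keep the coupling itself: with $\gamma$ the synchronous coupling at time $s$ and $\mu_s=\mu P_s$, we have $\mathbb E\|X_s-X_s'\|^2\leq e^{2\ell_\sigma^2s}\mathcal W_2^2(\mu,\nu)$.

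Second, we need a transport-to-$\chi^2$ bound. The measures $\mu P_{s}$ and $\nu P_s$ both have densities with respect to $\pi$ lying in $[m_s,M_s]$. We use the spectral gap to contract the signed density $h:=d(\mu P_s-\nu P_s)/d\pi$, which has $\pi$-mean zero. By Theorem~\ref{thm:chi:square}, with $\alpha_0\geq\lambda$ so the rate is $g_0$, we get $\|P_{t-2s}h\|_{L^2(\pi)}\leq e^{-g_0(t-2s)}\|h\|_{L^2(\pi)}$.

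Third, at the final step $P_s$ converts $L^2$ back to $\mathcal W_2$. For a probability $\rho$ with density $f\geq m_s$ relative to $\pi$ on a set where the heat kernel is bounded, we use the weighted $\dot H^{-1}$ inequality $\mathcal W_2(\rho_1,\rho_2)\leq \frac{2}{\sqrt{\min f}}\|\rho_1-\rho_2\|_{\dot H^{-1}(\pi)}$. We then bound $\|\cdot\|_{\dot H^{-1}}$ by the spectral gap: $\|g\|_{\dot H^{-1}(\pi)}^2\leq \frac{C_P}{\lambda}\|g\|_{L^2(\pi)}^2$ for mean-zero $g$. The Dirichlet form of Proposition~\ref{prop:continuous:structure}(ii) with $a\geq\lambda$ gives $\mathcal E\geq\lambda\int\|\nabla f\|^2d\pi$.

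\emph{Obtaining the prefactor.} The factor $\sqrt{1/(2e\,g_0s)}$ in $H_s$ indicates that the authors bound $\|h\|_{L^2(\pi)}$ for $h$ the pushforward of a transport perturbation. The estimate I expect is: for $f=d\mu_s/d\pi$ and $f'=d\nu_s/d\pi$, along the coupling, $\|f-f'\|_{L^2}\lesssim \sqrt{M_s}\,\|\nabla_x\kappa_s\|\cdot\mathbb E\|X_s-X_s'\|$. The gradient of the heat kernel would be controlled by the smoothing bound $\|\nabla P_r\varphi\|^2\leq \frac{1}{2\lambda r}\|\varphi\|_\infty^2$-type, or equivalently by the energy identity $\int_0^r\mathcal E(P_u\varphi)du\leq\frac12\|\varphi\|^2$. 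Optimizing $r e^{-2g_0 r}$ gives the factor $1/(2eg_0)$, which is where the constant $e$ comes from.

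The main obstacle is this duality step: converting the $L^2$ smallness of $P_{t-2s}h$ into a $\mathcal W_2$ bound, and converting the initial $\mathcal W_2(\mu,\nu)$ into $L^2$ smallness, with linear rather than square-root dependence on $\mathcal W_2(\mu,\nu)$. I would attempt it via Kantorovich duality on $1$-Lipschitz-type test functions combined with $\mathcal W_2^2\leq \frac{2}{m_s}\|\cdot\|_{\dot H^{-1}}^2$ (Peyre's inequality), using $m_s$ from the lower density bound and $M_s$ from the upper bound.
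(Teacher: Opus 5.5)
Your short-time argument is correct and is exactly the paper's (synchronous coupling, the convex reflection sign, Gr\"onwall). The long-time part has a genuine gap, and it is the one you flag yourself: nothing in your chain converts a Wasserstein distance into an $L^2(\pi)$ bound on a density difference.

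\textbf{Where the chain breaks.} Your first step gives $\mathcal W_2(\mu P_s,\nu P_s)\leq e^{\ell_\sigma^2s}\mathcal W_2(\mu,\nu)$. Your second step, however, needs $\|h\|_{L^2(\pi)}$ for $h=d(\mu P_s-\nu P_s)/d\pi$. A $\mathcal W_2$ bound does not control that norm without a further quantitative smoothing step. At the same time you spend the last factor $P_s$ on the conversion $L^2\to\mathcal W_2$, which costs no time at all. For $t\geq s$ the densities of $\mu P_t,\nu P_t$ are already $\geq m_s$, and the Benamou--Brenier/Peyre estimate gives $\mathcal W_2^2(a\pi,b\pi)\leq \frac{C_P}{m_s}\|a-b\|_{L^2(\pi)}^2$. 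This uses the Euclidean gradient in $L^2(\pi)$, so $\lambda$ should not appear there, and there is no factor $2$. With your constants you would not land on $H_s$. The substitute you suggest, a bound on $\nabla_x\kappa_s$ or an $L^\infty$ reverse Poincar\'e inequality, is not available. Proposition~\ref{prop:continuous:structure}~(iii) gives only continuity and two-sided bounds for $\kappa_s$, $\sigma$ is merely Lipschitz, and no curvature condition is assumed.

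\textbf{The missing idea.} Work on test functions and spend both factors $P_s$ on the $\mathcal W\to L^2$ side. There are three steps.
\begin{enumerate}[(a)]
\item Use your synchronous coupling for functions rather than measures. Together with Cauchy--Schwarz and $\kappa_s\leq M_s$, it gives the $H^1\to\mathrm{Lip}$ estimate $\operatorname{Lip}(P_su)\leq e^{\ell_\sigma^2s}\sqrt{M_s}\,\|\nabla u\|_{L^2(\pi)}$.
\item Spectral calculus with $\sup_{z\geq0}ze^{-2sz}=1/(2es)$ and the gap $g_0$ gives, for $\pi(f)=0$ and $t\geq2s$,
\[
\lambda\|\nabla P_{t-s}f\|_{L^2(\pi)}^2\leq\mathcal E(P_{t-s}f,P_{t-s}f)\leq\frac{e^{-2g_0(t-2s)}}{2es}\|f\|_{L^2(\pi)}^2 .
\]
This is where the $e$ in $H_s$ comes from, not from optimizing over a time parameter. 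It is also the only place $\lambda$ enters.
\item Duality gives the required linear dependence on the initial distance:
\[
\left\|\frac{d(\mu P_t-\nu P_t)}{d\pi}\right\|_{L^2(\pi)}=\sup_{\pi(f)=0,\ \|f\|_{L^2(\pi)}\leq1}\int_{\mathcal C}P_tf\,d(\mu-\nu)\leq\sup_{f}\operatorname{Lip}(P_tf)\,\mathcal W_1(\mu,\nu),
\]
with $P_tf=P_s(P_{t-s}f)$.
\end{enumerate}
Chaining (a)--(c) with the transport estimate, and using $g_0=\lambda/C_P$ and $\mathcal W_1\leq\mathcal W_2$, yields
\[
\mathcal W_2(\mu P_t,\nu P_t)\leq\sqrt{\frac{C_P}{m_s}}\,e^{\ell_\sigma^2s}\sqrt{\frac{M_s}{2es\lambda}}\,e^{-g_0(t-2s)}\mathcal W_2(\mu,\nu)=H_se^{-g_0(t-2s)}\mathcal W_2(\mu,\nu).
\]
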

\begin{proof}
The proof is given in Appendix~\ref{sec:const:semigroup:stability}.
\end{proof}

We now combine the local error in Proposition~\ref{prop:const:strong:euler} with the propagation bounds in Lemma~\ref{lem:const:semigroup:stability}. In the telescoping comparison, errors followed by evolution times $0$ and $s$ use the short time bound, and those followed by times $2s,3s,\ldots$ are bounded by a geometric series. This gives the factor $B_s$ below independently of the number of blocks. Adding the bound on continuous-time convergence to $\pi$ then gives a bound 
in the next theorem consisting of mixing and discretization errors.

\begin{theorem}[Time uniform discretization bound for constant reference]
\label{thm:U:0:const:strong}
Under the assumptions of Lemma~\ref{lem:const:semigroup:stability}, fix $s>0$ independently of the total running time and of the desired accuracy. Let
\begin{equation}\label{eq:const:Fs}
B_s:=1+e^{\ell_\sigma^2s}+\frac{H_s}{1-e^{-g_0s}},
\qquad F_s:=B_sI_s.
\end{equation}
For any initial law $\mu_0$ on $\mathcal C$, integers $m,n\geq1$, and $\eta=s/n$, $T=ms$,
\begin{equation}\label{eq:const:uniform:comparison}
\mathcal W_2\left(\Law(\bar X_T),\mu_0P_T\right)
\leq F_s\left[\eta \left(1+\log\frac{s}{\eta}\right)\right]^{1/4}.
\end{equation}
If also $\mu_0\ll\pi$ and $\chi^2(\mu_0\Vert\pi)<\infty$, denote $A_0:=2^{3/4}R(\chi^2(\mu_0\Vert\pi))^{1/4}$, then
\begin{equation}\label{eq:const-ref-unconditional-W2}
\mathcal W_2\left(\Law(\bar X_T),\pi\right)
\leq A_0e^{-g_0T/2}
+F_s\left[\eta\left(1+\log\frac{s}{\eta}\right)\right]^{1/4}.
\end{equation}
The constant $F_s$ is independent of $m$, $T$.%
\end{theorem}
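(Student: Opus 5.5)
The plan is a telescoping comparison over blocks of length $s$. Write $Q$ for the one-block transition kernel of the projected Euler--Maruyama chain, that is, $n$ steps of \eqref{eq:proj-euler} with $\eta=s/n$. Since $T=ms$, we have $\Law(\bar X_T)=\mu_0Q^m$, while $\mu_0P_T=\mu_0P_s^m$. The telescoping identity is
\begin{equation*}
\mu_0Q^m-\mu_0P_s^m=\sum_{j=1}^{m}\Bigl(\mu_0Q^{j}P_s^{m-j}-\mu_0Q^{j-1}P_s^{m-j+1}\Bigr).
\end{equation*}
Applying the triangle inequality for $\mathcal W_2$ gives
\begin{equation*}
\mathcal W_2(\mu_0Q^m,\mu_0P_T)\leq\sum_{j=1}^m\mathcal W_2\bigl((\nu_jQ)P_{(m-j)s},(\nu_jP_s)P_{(m-j)s}\bigr),
\qquad \nu_j:=\mu_0Q^{j-1}.
\end{equation*}
This works because $\mathcal C$ is bounded, so every measure involved lies in $\mathcal P_2(\mathcal C)$.

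\textbf{Local error.} For each $j$, I will couple $\nu_jQ$ and $\nu_jP_s$. Draw $x\sim\nu_j$, then run the projected scheme and the reflected diffusion from $x$ with the same Brownian motion. Proposition~\ref{prop:const:strong:euler} is uniform in the initial point, boundary points included, so
\begin{equation*}
\mathcal W_2(\nu_jQ,\nu_jP_s)\leq\Bigl(\int\mathbb E_x\|\bar X_s-X_s\|^2\,\nu_j(dx)\Bigr)^{1/2}\leq I_s\,\epsilon_\eta,
\qquad \epsilon_\eta:=\bigl[\eta(1+\log(s/\eta))\bigr]^{1/4}.
\end{equation*}
This uniformity is exactly why the proposition can be reapplied after each block, whatever law $\nu_j$ the chain has reached.

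\textbf{Propagation.} Next I apply Lemma~\ref{lem:const:semigroup:stability} with $t=(m-j)s$.
\begin{itemize}
\item For $j=m$ and $j=m-1$ we have $t\in\{0,s\}$, which is below $2s$. The short-time bound gives factors $1$ and $e^{\ell_\sigma^2 s}$.
\item For $m-j=i\geq2$, the long-time bound gives the factor $H_se^{-g_0(i-2)s}$.
\end{itemize}
Summing over all $j$ gives
\begin{equation*}
\mathcal W_2(\mu_0Q^m,\mu_0P_T)\leq I_s\epsilon_\eta\Bigl(1+e^{\ell_\sigma^2s}+H_s\sum_{i\geq0}e^{-g_0is}\Bigr)=B_sI_s\epsilon_\eta=F_s\epsilon_\eta,
\end{equation*}
which is \eqref{eq:const:uniform:comparison}. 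The bound is independent of $m$ because the tail is a convergent geometric series. If $m=1$, only the first term appears, and the bound still holds.

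\textbf{Convergence to $\pi$.} For \eqref{eq:const-ref-unconditional-W2}, I use the triangle inequality $\mathcal W_2(\Law(\bar X_T),\pi)\leq\mathcal W_2(\Law(\bar X_T),\mu_0P_T)+\mathcal W_2(\mu_0P_T,\pi)$. The second term is handled by Corollary~\ref{cor:W:2}. Its rate is $e^{-\alpha_0T/(2C_P)}$, where here $\alpha_0=e^{\inf(U-c_0)}=\inf_{\mathcal C}a$. Assumption~\ref{assump:elliptic} gives $\alpha_0\geq\lambda$, so $e^{-\alpha_0T/(2C_P)}\leq e^{-g_0T/2}$. This yields the term $A_0e^{-g_0T/2}$.

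\textbf{Main obstacle.} No step is deep once Proposition~\ref{prop:const:strong:euler} and Lemma~\ref{lem:const:semigroup:stability} are available; the work is bookkeeping. Three points need care:
\begin{itemize}
\item The telescoping must be applied to the laws, with each term propagated by the true semigroup $P$. This matters because the lemma gives stability for $P$, not for $Q$.
\item The gluing/coupling step requires measurable selection of the pathwise couplings indexed by $x$. Because $\mathcal C$ is compact, this is standard.
\item One must check that the short/long split matches the definition of $B_s$, namely two short-time terms and a geometric tail starting at $i=2$ with factor $H_s$.
\end{itemize}
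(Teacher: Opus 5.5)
Your proposal is correct and matches the paper's proof essentially step for step. It uses the same telescoping chain $\mu_0Q^jP_{(m-j)s}$, the one-block error $I_s[\eta(1+\log(s/\eta))]^{1/4}$ from the initial-point-uniform Proposition~\ref{prop:const:strong:euler}, propagation with the short-time factors $1$ and $e^{\ell_\sigma^2 s}$ plus a geometric tail weighted by $H_s$, and finally Corollary~\ref{cor:W:2} with $\alpha_0\geq\lambda$ and $g_0=\lambda/C_P$; the one remark is that the measurable-selection concern is unnecessary, since the synchronous coupling can be run directly from a random initial point drawn independently of the Brownian motion, which is how the paper conditions on the initial state.
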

\begin{proof}
The proof is given in Appendix~\ref{sec:thm:U:0:const:strong}.
\end{proof}

Theorem~\ref{thm:U:0:const:strong} separates the roles of the number of blocks $m$ and the number of steps per block $n$: increasing $m$ reduces the mixing error, while increasing $n$ reduces the discretization error. The next corollary chooses these two integers so that each term is at most $\ep/2$, yielding a total of $k=mn$ iterations.

\begin{corollary}[Iteration criterion for constant reference]
\label{cor:complexity:const}
Under the assumptions of Theorem~\ref{thm:U:0:const:strong}, including its initial $\chi^2$ condition, fix $s>0$. Given $\ep>0$, choose integers $m,n\geq1$ satisfying
$$
m\geq\frac{2}{g_0s}\log\left(1+\frac{2A_0}{\ep}\right),
\qquad
\frac{n}{(1 + \log n)}\geq\frac{16F_s^4s}{\ep^4}.
$$
Then $\eta=s/n$ and $k=mn$ give $\mathcal W_2(\Law(\bar X_{k\eta}),\pi)\leq\ep$. For fixed model parameters and fixed $s$, one can choose $k=\widetilde{\mathcal O}(\ep^{-4})$ as $\ep\rightarrow0$. This order concerns accuracy only, it does not assert polynomial dependence on dimension or all model parameters.
\end{corollary}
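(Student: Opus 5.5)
The plan is to plug the choices of $m$ and $n$ into the two terms of \eqref{eq:const-ref-unconditional-W2} in Theorem~\ref{thm:U:0:const:strong} and show that each is at most $\ep/2$. Take $\eta=s/n$ and $T=ms=k\eta$ with $k=mn$. Then $\bar X_{k\eta}=\bar X_T$, and the theorem applies directly because $m,n\geq1$ are integers.

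\emph{Mixing term.} From $m\geq\frac{2}{g_0s}\log(1+2A_0/\ep)$ we get $g_0T/2=g_0ms/2\geq\log(1+2A_0/\ep)$. Hence
$$A_0e^{-g_0T/2}\leq\frac{A_0}{1+2A_0/\ep}\leq\frac{\ep}{2}.$$

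\emph{Discretization term.} Since $s/\eta=n$, the bracket equals $\frac{s}{n}(1+\log n)$. The condition $\frac{n}{1+\log n}\geq\frac{16F_s^4s}{\ep^4}$ gives $\frac{s(1+\log n)}{n}\leq\frac{\ep^4}{16F_s^4}$. Taking fourth roots yields
$$F_s\left[\eta\left(1+\log\frac{s}{\eta}\right)\right]^{1/4}\leq\frac{\ep}{2}.$$
Adding the two bounds gives $\mathcal W_2(\Law(\bar X_{k\eta}),\pi)\leq\ep$.

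\emph{Complexity order.} Fix the model parameters and $s$, so $A_0,g_0,F_s$ are constants. We may choose $m=\lceil\frac{2}{g_0s}\log(1+2A_0/\ep)\rceil=\mathcal O(\log(1/\ep))$. For $n$ we need $n/(1+\log n)\geq c\,\ep^{-4}$ with $c=16F_s^4s$. Since $n\mapsto n/(1+\log n)$ is increasing for $n\geq1$, the choice $n=\lceil 2c\,\ep^{-4}(1+\log(2c\,\ep^{-4}))\rceil$ suffices. To check this, write $N=2cL\ep^{-4}$ with $L=1+\log(2c\ep^{-4})$. Then $1+\log N=L+\log L\leq 2L$ for $L\geq1$, so $N/(1+\log N)\geq c\ep^{-4}$. (For small $\ep$ we have $L\geq1$; otherwise use $n=1$ if it already satisfies the condition, or note that the condition holds for all large $n$.) Thus $n=\mathcal O(\ep^{-4}\log(1/\ep))$ and $k=mn=\widetilde{\mathcal O}(\ep^{-4})$.

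The only mildly delicate step is this inversion of $n/(1+\log n)$; everything else follows directly from Theorem~\ref{thm:U:0:const:strong}. No dimension claim is made, since $F_s$ contains heat-kernel constants whose dependence on $d$ is untracked.
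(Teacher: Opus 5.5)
Your proposal is correct and follows the paper's proof: both terms of \eqref{eq:const-ref-unconditional-W2} are bounded by $\varepsilon/2$ in exactly the same way. The one difference is in the $\widetilde{\mathcal O}(\varepsilon^{-4})$ part. You use the explicit choice $n=\lceil 2c\varepsilon^{-4}(1+\log(2c\varepsilon^{-4}))\rceil$ with a monotonicity check, which costs only one log factor, whereas the paper argues asymptotically for $n=\lceil 16F_s^4s\,\varepsilon^{-4}[\log(1/\varepsilon)]^2\rceil$; your version is fine. One small fix: take $m=\max\{1,\lceil\cdot\rceil\}$ as the paper does, since your ceiling is $0$ when $A_0=0$ and the corollary requires $m\geq1$.
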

\begin{proof}
The proof is given in Appendix~\ref{sec:cor:complexity:const}.
\end{proof}

The preceding iteration criterion holds for any fixed $s>0$, but its constant $F_s$ depends on the heat kernel ratio $M_s/m_s$ through $H_s$ from Proposition~\ref{prop:const:strong:euler} and Lemma~\ref{lem:const:semigroup:stability}. The next corollary chooses $s$ from the spectral gap and a diagonal heat kernel value so that this ratio is at most $3$. A separate bound on $\ell_\sigma^2/g_0$ then controls the remaining exponential factors in $F_s$, without being required for the time uniform result itself.

\begin{corollary}[Choice of block length]\label{cor:const:conditioning}
Under the assumptions of Lemma~\ref{lem:const:semigroup:stability}, define
\begin{align*}
A_{\rm hk}&:=\sup_{x\in\mathcal C}\left\{\kappa_{2/g_0}(x,x)-1\right\}
=\sup_{x\in\mathcal C}\chi^2\left(P_{1/g_0}(x,\cdot)\Vert\pi\right)<\infty,\\
a_*&:=2+\log\max\left\{1,2A_{\rm hk}\right\},\qquad s_*:=a_*/g_0.
\end{align*}
Then $m_{s_*}\geq1/2$, $M_{s_*}\leq3/2$, and
$$
H_{s_*}\leq \exp\!\left(\frac{\ell_\sigma^2}{g_0}a_*\right)\sqrt{\frac{3}{2a_*}}.
$$
If, additionally, $\ell_\sigma^2/g_0\leq\ell_*$ for a specified constant $\ell_*\geq0$, then
\begin{equation}\label{eq:const:conditioning:Fs}
F_{s_*}\leq6\exp(6\ell_*)\max\left\{1,2A_{\rm hk}\right\}^{3\ell_*}\sqrt{G_{s_*}},
\end{equation}
where $G_s$, defined in \eqref{eq:const:Is}, has only dependence on $s$.
\end{corollary}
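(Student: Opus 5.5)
The plan has three steps: a spectral expansion of the heat kernel, then the direct bound on $H_{s_*}$, then assembling $F_{s_*}=B_{s_*}I_{s_*}$.

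\textbf{Step 1: heat kernel bounds.} By Proposition~\ref{prop:continuous:structure}, $P_t$ is self adjoint on $L^2(\pi)$ with $P_t1=1$. The Poincar\'e inequality together with $a\geq\lambda$ gives
$$
\mathcal E(f,f)\geq \lambda\int\|\nabla f\|^2d\pi\geq g_0\operatorname{Var}_\pi(f).
$$
This yields the spectral gap $\|P_tf-\pi f\|_{L^2(\pi)}\leq e^{-g_0t}\|f-\pi f\|_{L^2(\pi)}$. Define $h_x:=\kappa_{t}(x,\cdot)-1$. By the semigroup property and symmetry,
$$
\kappa_{2t}(x,x)-1=\|h_x\|_{L^2(\pi)}^2=\chi^2(P_t(x,\cdot)\Vert\pi).
$$
This gives the identity defining $A_{\rm hk}$ with $t=1/g_0$; finiteness follows from part (iii).

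For $s\geq 2/g_0$, write
$$
\kappa_s(x,y)-1=\bigl\langle P_{s-2/g_0}h_x^{(1/g_0)},\,h_y^{(1/g_0)}\bigr\rangle_{L^2(\pi)},
$$
where $h^{(1/g_0)}_x=\kappa_{1/g_0}(x,\cdot)-1$ has mean zero. Cauchy--Schwarz and the spectral gap then give
$$
|\kappa_s(x,y)-1|\leq e^{-g_0(s-2/g_0)}A_{\rm hk}.
$$
At $s_*=a_*/g_0$ we have $g_0(s_*-2/g_0)=a_*-2=\log\max\{1,2A_{\rm hk}\}$. The deviation is therefore at most $A_{\rm hk}/\max\{1,2A_{\rm hk}\}\leq 1/2$, which gives $m_{s_*}\geq1/2$ and $M_{s_*}\leq3/2$. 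This is the key step. Note that $a_*\geq2$, so $s_*\geq 2/g_0$ and the expansion is legitimate.

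\textbf{Step 2: the bound on $H_{s_*}$.} Substitute into \eqref{eq:const:heatkernel:constants}. Since $M_{s_*}/m_{s_*}\leq3$ and $g_0s_*=a_*$,
$$
H_{s_*}\leq e^{\ell_\sigma^2a_*/g_0}\sqrt{3/(2ea_*)}.
$$
This is already sharper than the stated bound, because $e>1$.

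\textbf{Step 3: the bound on $F_{s_*}$.} Set $\theta:=\max\{1,2A_{\rm hk}\}$, so that $e^{\ell_*a_*}=e^{2\ell_*}\theta^{\ell_*}$.

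\emph{The factor $B_{s_*}$.} We have $e^{\ell_\sigma^2s_*}\leq e^{2\ell_*}\theta^{\ell_*}$. Also $1-e^{-g_0s_*}\geq1-e^{-2}$, and $\sqrt{3/(2a_*)}\leq\sqrt{3/4}$. Hence
$$
B_{s_*}\leq 1+e^{2\ell_*}\theta^{\ell_*}\bigl(1+\sqrt{3/4}/(1-e^{-2})\bigr)\leq 3e^{2\ell_*}\theta^{\ell_*}.
$$

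\emph{The factor $I_{s_*}$.} This requires the explicit form of $I_s$ from \eqref{eq:const:Is}. I expect it to have the shape $I_s^2=e^{c\ell_\sigma^2s}G_s$, with $c\leq 8$ coming from Gronwall in the proof of Proposition~\ref{prop:const:strong:euler}. The exponential prefactor then contributes $e^{c\ell_*a_*/2}\leq e^{4\ell_*}\theta^{2\ell_*}$.

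\emph{Assembly.} Multiplying the two factors gives $F_{s_*}\leq 6e^{6\ell_*}\theta^{3\ell_*}\sqrt{G_{s_*}}$, where the extra factor $2$ absorbs slack.

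The main obstacle is this last bookkeeping: matching the precise Gronwall exponent inside $I_s$ so that the total exponent is at most $6\ell_*$ and the power of $\theta$ is at most $3\ell_*$. If the exponent in $I_s$ turns out larger, I would absorb the excess into $G_s$ via its definition.
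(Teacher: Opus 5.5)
Your Steps 1 and 2 match the paper's argument: the mean-zero functions $h_x=\kappa_{1/g_0}(x,\cdot)-1$, the identity $\kappa_{2/g_0+v}(x,y)-1=\langle P_vh_x,h_y\rangle_{L^2(\pi)}$, the gap $g_0=\lambda/C_P$, and evaluation at $v=s_*-2/g_0$. The gap is in Step 3, which you leave unfinished at exactly the point where the exponents $6\ell_*$ and $3\ell_*$ are decided.

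By \eqref{eq:const:Is}, $I_s=\bigl(1+\sqrt{e^{4\ell_\sigma^2 s}-1}\bigr)\sqrt{G_s}\le 2e^{2\ell_\sigma^2 s}\sqrt{G_s}$. Write $\theta=\max\{1,2A_{\rm hk}\}$, as you do. Since $\ell_\sigma^2 s_*\le \ell_* a_*=2\ell_*+\ell_*\log\theta$, this gives $I_{s_*}\le 2e^{4\ell_*}\theta^{2\ell_*}\sqrt{G_{s_*}}$.

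Your provisional bookkeeping does not reach this. If $I_s^2=e^{c\ell_\sigma^2 s}G_s$, the prefactor is bounded by $e^{c\ell_*}\theta^{c\ell_*/2}$. That is at most $e^{4\ell_*}\theta^{2\ell_*}$ only when $c\le 4$. With $c=8$ you would end with $e^{10\ell_*}\theta^{5\ell_*}$, which is not the stated bound.

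Your fallback of absorbing any excess into $G_s$ is also not available. $G_s=\beta\sqrt s+4\sqrt\beta\,V_s$ is a fixed quantity that does not involve $\ell_\sigma$. The corollary is a claim about that specific quantity, and its point is that all $\ell_\sigma$ dependence sits in the explicit exponential prefactor.

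Once $I_s$ is written out, the factor $2$ in the constant $6$ is not slack. We have $6=3\cdot 2$, where the $2$ comes from $1+\sqrt{e^{4x}-1}\le 2e^{2x}$. So you need $B_{s_*}\le 3e^{\ell_\sigma^2 s_*}$ exactly.

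Your chain using $\sqrt{3/(2a_*)}\le\sqrt{3/4}$ does not deliver this. It gives the constant $1+\bigl(1+\sqrt{3/4}/(1-e^{-2})\bigr)\approx 3.0016$, which exceeds $3$ when the exponential factor is close to $1$.

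Use instead the sharper bound $H_{s_*}\le e^{\ell_\sigma^2 s_*}\sqrt{3/(2ea_*)}$ that you already derived in Step 2, together with $1\le e^{\ell_\sigma^2 s_*}$, as the paper does. This gives $B_{s_*}\le e^{\ell_\sigma^2 s_*}\bigl(2+\sqrt{3/(4e)}/(1-e^{-2})\bigr)\le 3e^{\ell_\sigma^2 s_*}$, since the bracket is about $2.61$.

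Multiplying the two factors then gives $F_{s_*}\le 6e^{3\ell_\sigma^2 s_*}\sqrt{G_{s_*}}\le 6e^{6\ell_*}\theta^{3\ell_*}\sqrt{G_{s_*}}$, which is the claim.
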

\begin{proof}
The proof is given in Appendix~\ref{sec:const:conditioning}.
\end{proof}
Only the last quantitative bound uses $\ell_\sigma^2/g_0\leq\ell_*$. The time uniform theorem and the bound $M_{s_*}/m_{s_*}\leq3$ require no such restriction. For a Lipschitz potential, a sufficient condition is $dC_P\Lambda\,\operatorname{Lip}(U)^2/(4\lambda)\leq\ell_*$. This permits nonsmooth, nonconvex potentials, but restricts coefficient variation relative to mixing. The remaining quantity $A_{\rm hk}$ is kept explicit rather than assumed dimension independent.

\subsection{\texorpdfstring{Gaussian smoothing reference potential}{Gaussian smoothing reference potential}}
We now give sufficient conditions on a Gaussian smoothed reference for the effective coefficients to satisfy the dissipative assumptions, and then control the error from approximating this reference by Monte Carlo averages. We adapt the Gaussian smoothing construction of Section~5.2 in~\cite{GNZZ2025} to the constrained setting. Let $U=f+g$ on $\mathbb{R}^{d}$ and, for $\mu>0$, define
\begin{equation}\label{def:Gaussian smoothing}
g_{\mu}(x):=\mathbb{E}[g(x+\mu\xi)],\qquad
U_{\mu}(x):=f(x)+g_{\mu}(x),\qquad
\xi\sim\mathcal{N}(0,I_d).
\end{equation}
The Gaussian convolution makes $g_{\mu}$ continuously differentiable and
\begin{equation}\label{eq:Gaussian:smoothing:gradient}
\nabla g_{\mu}(x)
=\frac{1}{\mu}\mathbb{E}\!\left[\xi g(x+\mu\xi)\right],
\end{equation}
where the identity follows from Gaussian integration by parts. In particular, $\mathbb E\xi=0$, and $g(x+\mu\xi)$ can be equivalently replaced by $g(x+\mu\xi)-g(x)$. If $U_0=U_\mu$ is used as the reference potential, then the coefficients of RALMC become
\begin{align}
&b(x)=-\nabla U_\mu(x)e^{U(x)-U_\mu(x)}
=-(\nabla f(x)+\nabla g_\mu(x))e^{g(x)-g_\mu(x)},
\\
&\sigma(x)=e^{(U(x)-U_\mu(x))/2}=e^{(g(x)-g_\mu(x))/2}.
\end{align}

\begin{assumption}\label{assumption:f:Gaussian:smoothing}
The function $f:\mathbb{R}^{d}\to\mathbb{R}$ is continuously differentiable. There exist $L_f,m_f>0$ such that, for all $x,y\in\mathcal C$,
\begin{align}
\|\nabla f(x)-\nabla f(y)\|&\leq L_f\|x-y\|,\qquad
\langle\nabla f(x)-\nabla f(y),x-y\rangle
\geq m_f\|x-y\|^2.
\end{align}
\end{assumption}

\begin{assumption}\label{assumption:g:Gaussian:smoothing}
The function $g:\mathbb{R}^{d}\to\mathbb{R}$ is Lipschitz with constant $K$ and weakly convex with parameter $\rho$: for all $x,y\in\mathbb{R}^{d}$ and $\lambda\in[0,1]$,
\begin{align}
|g(x)-g(y)|&\leq K\|x-y\|,\\
g(\lambda x+(1-\lambda)y)
&\leq \lambda g(x)+(1-\lambda)g(y)
+\frac{\rho}{2}\lambda(1-\lambda)\|x-y\|^2.
\label{assumption:g:convex:Gaussian:smoothing}
\end{align}
\end{assumption}

Under these assumptions, the next result (Lemma~\ref{lemma:propositions of Gaussian smoothing}) bounds the difference $U-U_\mu$, the gradient of $U_\mu$, and shows that smoothing preserves the weak convexity parameter of $g$. These estimates will be used in Lemma~\ref{lemma:Gaussian:sufficient:conditions} to check dissipativity and Lipschitz continuity of the effective coefficients $b$ and $\sigma$.

\begin{lemma}\label{lemma:propositions of Gaussian smoothing}
Suppose Assumptions~\ref{asmpt:set}, \ref{assumption:f:Gaussian:smoothing}, and \ref{assumption:g:Gaussian:smoothing} hold. Set $h_{\mu}:=U-U_{\mu}=g-g_{\mu}$. Then
\begin{gather}
\sup_{x\in\mathcal C}|h_{\mu}(x)|
\leq K\mu\sqrt d, \label{eq:Gaussian:hmu:sup}\\
\sup_{\substack{x,y\in\mathcal C\\x\neq y}}
\frac{|h_{\mu}(x)-h_{\mu}(y)|}{\|x-y\|}
\leq 2K, \label{eq:Gaussian:hmu:Lipschitz}\\
\sup_{x\in\mathcal C}\|\nabla U_{\mu}(x)\|
\leq \|\nabla f(0)\|+L_fR+K. \label{eq:Gaussian:Gmu:explicit}
\end{gather}
$g_{\mu}$ is weakly convex with parameter $\rho$ and has Lipschitz gradient with constant $K\sqrt d/\mu$. If $m_f>\rho$, then $U_{\mu}$ is strongly convex with constant $m_f-\rho$ on $\mathcal C$ and has
\begin{equation}
\operatorname{Lip}(\nabla U_\mu)\leq L_f+\frac{K\sqrt d}{\mu}.
\end{equation}
\end{lemma}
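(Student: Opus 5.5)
We verify each claim in turn, working directly from the definition $g_\mu(x)=\mathbb E[g(x+\mu\xi)]$ and the Gaussian integration by parts formula \eqref{eq:Gaussian:smoothing:gradient}.

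\emph{Bounds on $h_\mu$.} For \eqref{eq:Gaussian:hmu:sup}, note that $h_\mu(x)=\mathbb E[g(x)-g(x+\mu\xi)]$. The Lipschitz property of $g$ then gives
$$|h_\mu(x)|\leq K\mu\,\mathbb E\|\xi\|\leq K\mu\sqrt d$$
by Jensen's inequality. For \eqref{eq:Gaussian:hmu:Lipschitz}, both $g$ and $g_\mu$ are $K$-Lipschitz; for $g_\mu$ this holds because $|g_\mu(x)-g_\mu(y)|\leq\mathbb E|g(x+\mu\xi)-g(y+\mu\xi)|\leq K\|x-y\|$. Hence $h_\mu=g-g_\mu$ is $2K$-Lipschitz.

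\emph{Gradient bound and weak convexity.} Since $g_\mu$ is $C^1$ and $K$-Lipschitz, we have $\|\nabla g_\mu\|\leq K$ everywhere. Combined with $\|\nabla f(x)\|\leq\|\nabla f(0)\|+L_f\|x\|\leq\|\nabla f(0)\|+L_fR$ on $\mathcal C\subset B(0,R)$, this yields \eqref{eq:Gaussian:Gmu:explicit}. For weak convexity, apply \eqref{assumption:g:convex:Gaussian:smoothing} to the points $x+\mu\xi$ and $y+\mu\xi$. Their difference is still $x-y$, so taking expectations preserves the inequality with the same $\rho$.

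\emph{Lipschitz gradient of $g_\mu$ (main obstacle).} This is the only step needing care. By \eqref{eq:Gaussian:smoothing:gradient},
$$\nabla g_\mu(x)-\nabla g_\mu(y)=\mu^{-1}\mathbb E\bigl[\xi\bigl(g(x+\mu\xi)-g(y+\mu\xi)\bigr)\bigr].$$
Pairing with a unit vector $v$ and applying Cauchy--Schwarz gives
$$\bigl|\langle\nabla g_\mu(x)-\nabla g_\mu(y),v\rangle\bigr|\leq\mu^{-1}\bigl(\mathbb E\langle\xi,v\rangle^2\bigr)^{1/2}K\|x-y\|=\frac{K}{\mu}\|x-y\|.$$
This is in fact sharper than the stated $K\sqrt d/\mu$. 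The cruder route, bounding by $\mathbb E\|\xi\|\leq\sqrt d$, gives exactly the stated constant, and either route suffices.

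\emph{Strong convexity of $U_\mu$.} Weak convexity of the $C^1$ function $g_\mu$ is equivalent to convexity of $g_\mu+\frac\rho2\|\cdot\|^2$. This gives the monotonicity inequality
$$\langle\nabla g_\mu(x)-\nabla g_\mu(y),x-y\rangle\geq-\rho\|x-y\|^2.$$
Adding the strong monotonicity of $\nabla f$ on the convex set $\mathcal C$ from Assumption~\ref{assumption:f:Gaussian:smoothing} shows that $\nabla U_\mu$ is $(m_f-\rho)$-strongly monotone on $\mathcal C$. This is equivalent to $(m_f-\rho)$-strong convexity of $U_\mu$ there, because $\mathcal C$ is convex. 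Finally, $\operatorname{Lip}(\nabla U_\mu)\leq L_f+K\sqrt d/\mu$ follows from the triangle inequality applied to $\nabla f+\nabla g_\mu$.
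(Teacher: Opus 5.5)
Your proof is correct and follows the paper's argument essentially step by step: the same Jensen and Lipschitz bounds for $h_\mu$, $\|\nabla g_\mu\|\leq K$ for the gradient bound, shifting the weak convexity inequality, monotonicity of $\nabla g_\mu+\rho x$, and the triangle inequality. The one deviation is your unit-vector pairing in the gradient-Lipschitz step, which correctly yields the sharper dimension-free constant $K/\mu$ (even $\sqrt{2/\pi}\,K/\mu$, since $|g(x+\mu\xi)-g(y+\mu\xi)|\leq K\|x-y\|$ pointwise). The paper instead uses the cruder route via $\mathbb E\|\xi\|\leq\sqrt d$, which you also mention and which gives exactly the stated constant $K\sqrt d/\mu$.
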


\begin{proof}
The proof is given in Appendix~\ref{sec:proof:Gaussian:preliminary:bounds}.
\end{proof}

Lemma~\ref{lemma:propositions of Gaussian smoothing} controls $U_\mu$, but the effective drift is $-e^{h_\mu}\nabla U_\mu$, not $-\nabla U_\mu$. We therefore need to control the change in the drift caused by $e^{h_\mu}$ and the variation of the diffusion coefficient $e^{h_\mu/2}$. The next lemma, motivated by Proposition~22 in~\cite{GNZZ2025}, gives a condition under which these changes preserve the dissipativity required by Theorem~\ref{global:error:2}. For fixed $\mu>0$, define their Lipschitz bounds by
\begin{align}
\varepsilon_{b,\mu}
&:=
\sup_{\substack{x,y\in\mathcal C\\x\neq y}}
\frac{
\left\|
\nabla U_\mu(x)(e^{h_\mu(x)}-1)
-\nabla U_\mu(y)(e^{h_\mu(y)}-1)
\right\|}
{\|x-y\|},
\\
\varepsilon_{\sigma,\mu}
&:=
\sup_{\substack{x,y\in\mathcal C\\x\neq y}}
\frac{
|e^{h_\mu(x)/2}-e^{h_\mu(y)/2}|
}
{\|x-y\|}.
\end{align}

\begin{lemma}[Gaussian smoothing coefficient condition]
\label{lemma:Gaussian:sufficient:conditions}
Suppose Assumptions~\ref{asmpt:set}, \ref{assumption:f:Gaussian:smoothing}, and \ref{assumption:g:Gaussian:smoothing} hold with $m_f>\rho$. If
\begin{equation}\label{eq:Gaussian:perturbation:small}
    \varepsilon_{b,\mu}
    +2d\varepsilon_{\sigma,\mu}^2
    <m_f-\rho-\frac12,
\end{equation}
then
\begin{equation}\label{eq:Gaussian:exact:coefficients}
    b(x):=-\nabla U_\mu(x)e^{h_\mu(x)},
    \qquad
    \sigma(x):=e^{h_\mu(x)/2}
\end{equation}
satisfy Assumptions~\ref{asmpt:strongly:convex} and \ref{asmpt:smooth} with
\begin{equation}\label{eq:Gaussian:m:M:alpha}
    m_\mu
    :=m_f-\rho-\varepsilon_{b,\mu},\qquad
    M_\mu
    :=L_f+\frac{K\sqrt d}{\mu}+\varepsilon_{b,\mu},\qquad
    \alpha_\mu
    :=d\varepsilon_{\sigma,\mu}^2.
\end{equation}
Moreover,
$$
    2m_\mu>1+4\alpha_\mu,
$$
and hence the coefficient assumptions of Theorem~\ref{global:error:2} hold.
\end{lemma}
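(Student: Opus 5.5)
The plan is to decompose the effective drift as a perturbation of the reference drift and to read off each constant directly from the definitions of $\varepsilon_{b,\mu}$ and $\varepsilon_{\sigma,\mu}$. Write
$$
b(x)=-\nabla U_\mu(x)e^{h_\mu(x)}=-\nabla U_\mu(x)-\nabla U_\mu(x)\bigl(e^{h_\mu(x)}-1\bigr).
$$

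\emph{Dissipativity.} For $x,y\in\mathcal C$ the inner product splits as
$$
\langle b(x)-b(y),x-y\rangle=-\langle\nabla U_\mu(x)-\nabla U_\mu(y),x-y\rangle-\bigl\langle \nabla U_\mu(x)(e^{h_\mu(x)}-1)-\nabla U_\mu(y)(e^{h_\mu(y)}-1),x-y\bigr\rangle .
$$
\begin{itemize}
\item For the first term, Lemma~\ref{lemma:propositions of Gaussian smoothing} gives strong convexity of $U_\mu$ with constant $m_f-\rho$ on $\mathcal C$ (this uses $m_f>\rho$). Hence the first term is at most $-(m_f-\rho)\|x-y\|^2$. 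Strong convexity on the convex set $\mathcal C$ is equivalent to monotonicity of the gradient there, since $U_\mu$ is $C^1$.
\item For the second term, Cauchy--Schwarz and the definition of $\varepsilon_{b,\mu}$ bound it by $\varepsilon_{b,\mu}\|x-y\|^2$.
\end{itemize}
Together these give Assumption~\ref{asmpt:strongly:convex} with $m_\mu=m_f-\rho-\varepsilon_{b,\mu}$. This constant is positive because \eqref{eq:Gaussian:perturbation:small} forces $\varepsilon_{b,\mu}<m_f-\rho-\tfrac12$.

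\emph{Lipschitz bounds.} The same decomposition and the triangle inequality give $\|b(x)-b(y)\|\le \operatorname{Lip}(\nabla U_\mu)\|x-y\|+\varepsilon_{b,\mu}\|x-y\|$. By Lemma~\ref{lemma:propositions of Gaussian smoothing}, $\operatorname{Lip}(\nabla U_\mu)\le L_f+K\sqrt d/\mu$, which yields $M_\mu$. For the diffusion coefficient, $\|(\sigma(x)-\sigma(y))I_d\|_{\mathrm F}=\sqrt d\,|e^{h_\mu(x)/2}-e^{h_\mu(y)/2}|\le \sqrt d\,\varepsilon_{\sigma,\mu}\|x-y\|$. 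This is exactly $\sqrt{\alpha_\mu}\|x-y\|$ with $\alpha_\mu=d\varepsilon_{\sigma,\mu}^2$.

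Positivity of $\sigma$ is immediate since $\sigma=e^{h_\mu/2}$. Lipschitz continuity of $b$ and $\sigma$ also requires $\varepsilon_{b,\mu},\varepsilon_{\sigma,\mu}<\infty$:
\begin{itemize}
\item $\varepsilon_{\sigma,\mu}$ is finite because $h_\mu$ is $2K$-Lipschitz and bounded by $K\mu\sqrt d$ by \eqref{eq:Gaussian:hmu:sup}--\eqref{eq:Gaussian:hmu:Lipschitz}, so $e^{h_\mu/2}$ is Lipschitz with constant at most $Ke^{K\mu\sqrt d/2}$.
\item $\varepsilon_{b,\mu}$ is finite by the product rule, using the bounded gradient \eqref{eq:Gaussian:Gmu:explicit}, the Lipschitz gradient of $U_\mu$, and the bounded, Lipschitz factor $e^{h_\mu}-1$.
\end{itemize}
In any case, finiteness of both quantities is implicit in hypothesis \eqref{eq:Gaussian:perturbation:small}.

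\emph{Final inequality.} We have $2m_\mu-1-4\alpha_\mu=2(m_f-\rho-\varepsilon_{b,\mu})-1-4d\varepsilon_{\sigma,\mu}^2=2\bigl(m_f-\rho-\tfrac12-\varepsilon_{b,\mu}-2d\varepsilon_{\sigma,\mu}^2\bigr)$, which is positive by \eqref{eq:Gaussian:perturbation:small}. This verifies the condition $2m>1+4\alpha$ required in Theorem~\ref{global:error:2}.

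The only delicate point is the strong convexity step, that is, using the $C^1$ structure and convexity of $\mathcal C$ to turn the convexity statement of Lemma~\ref{lemma:propositions of Gaussian smoothing} into gradient monotonicity. Everything else is bookkeeping.
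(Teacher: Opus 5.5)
Your proposal is correct and follows the paper's proof essentially step for step. It uses the same splitting $b=-\nabla U_\mu-r_\mu$ with $r_\mu=\nabla U_\mu(e^{h_\mu}-1)$, strong monotonicity of $\nabla U_\mu$ together with the definition of $\varepsilon_{b,\mu}$ for dissipativity, the triangle inequality for $M_\mu$, the $\sqrt d$ Frobenius factor for $\alpha_\mu$, and the same algebraic identity for $2m_\mu-1-4\alpha_\mu$; your side remarks on the finiteness of $\varepsilon_{b,\mu}$ and $\varepsilon_{\sigma,\mu}$ match the explicit bounds the paper records separately in its appendix.
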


\begin{proof}
The proof is given in Appendix~\ref{sec:proof:Gaussian:sufficient:conditions}.
\end{proof}

The condition in Lemma~\ref{lemma:Gaussian:sufficient:conditions} can be verified using explicit upper bounds, see Appendix~\ref{sec:Gaussian:explicit:criterion}. We denote
$$
    H_\mu:=K\mu\sqrt d,
    \qquad
    L_\mu:=L_f+\frac{K\sqrt d}{\mu},
    \qquad
    G:=\|\nabla f(0)\|+L_fR+K.
$$
The perturbation terms satisfy
$$
    \varepsilon_{\sigma,\mu}
    \leq Ke^{H_\mu/2},
    \qquad
    \varepsilon_{b,\mu}
    \leq L_\mu\left(e^{H_\mu}-1\right)+2KGe^{H_\mu}.
$$
Consequently, the computable condition
\begin{equation}\label{eq:Gaussian:perturbation:explicit}
    L_\mu\left(e^{H_\mu}-1\right)
    +2KGe^{H_\mu}
    +2dK^2e^{H_\mu}
    <m_f-\rho-\frac12
\end{equation}
is sufficient for \eqref{eq:Gaussian:perturbation:small}.

We note that the Lipschitz and weak convexity assumptions on $g$ do not by themselves imply that $\varepsilon_{b,\mu}$ or $\varepsilon_{\sigma,\mu}$ tends to zero. For example, in one dimension with $g(x)=|x|$ and $0$ in the interior of $\mathcal C$,
$$
    g_\mu'(x)=2\Phi(x/\mu)-1,
    \qquad
    h_\mu(0)=-\mu\sqrt{2/\pi},
    \qquad
    \lim _{t \rightarrow 0^+} \frac{h_\mu(t)-h_\mu(0)}{t}=1.
$$
Here $\Phi$ denotes the standard normal distribution function. Hence,
$$
    \varepsilon_{\sigma,\mu}
    \geq\frac12e^{h_\mu(0)/2}
    =\frac12e^{-\mu\sqrt{2/\pi}/2},
$$
which does not tend to zero. Therefore, we will use the finite coefficient margin in \eqref{eq:Gaussian:perturbation:small} in Theorem~\ref{theorem:Gaussian:MC:convergence}. For penalties such as $\ell_1$, SCAD \eqref{eqn:SCAD}, or MCP \eqref{eqn:MCP}, this margin must be verified for the chosen $f,g,\mu$, and domain. In Bayesian inference, this construction applies when $f$ is a smooth negative log likelihood and $g$ is a nonsmooth negative log prior. The reference potential $U_\mu=f+g_{\mu}$ preserves the smooth component and smooths only the non-differentiable component.

We next replace the two expectations in~\eqref{def:Gaussian smoothing} and~\eqref{eq:Gaussian:smoothing:gradient} by independent Monte Carlo averages. Fix an integer $N\geq1$. At each iteration $k$, let $\{\xi_{i,k}\}_{i=1}^{N}$ and $\{\hat\xi_{i,k}\}_{i=1}^{N}$ be two independent batches of i.i.d. $\mathcal N(0,I_d)$ vectors, independent across $k$ and independent of $x_0$. Define, for $x\in\mathcal C$,
\begin{align}
\tilde U_{\mu,k}(x)
&:=f(x)+\frac1N\sum_{i=1}^{N}g(x+\mu\xi_{i,k}),
\\
\widetilde{\nabla} U_{\mu,k}(x)
&:=\nabla f(x)+\frac{1}{\mu N}\sum_{i=1}^{N}
\hat\xi_{i,k}\left(g\left(x+\mu\hat\xi_{i,k}\right)-g(x)\right).
\end{align}
Let $\{\xi_k\}_{k\geq1}$ be i.i.d. $\mathcal N(0,I_d)$ vectors, independent of all Monte Carlo batches and of $x_0$. Starting from $\tilde x_0=x_0$, we synchronously couple the RALMC algorithms with ideal and approximated Gaussian smoothing reference function by
\begin{align}
x_{k+1}
&=\mathcal P_{\mathcal C}\!\left(
x_k+\eta b(x_k)+\sqrt{2\eta}\,\sigma(x_k)\xi_{k+1}\right),
\\
\tilde x_{k+1}
&=\mathcal P_{\mathcal C}\!\left(
\tilde x_k+\eta\tilde b_k(\tilde x_k)
+\sqrt{2\eta}\,\tilde\sigma_k(\tilde x_k)\xi_{k+1}\right),
\label{eq:Gaussian:MC:RALMC}
\end{align}
where \begin{equation}
\tilde b_k(x):=-\widetilde{\nabla} U_{\mu,k}(x)
e^{U(x)-\tilde U_{\mu,k}(x)},\qquad
\tilde\sigma_k(x):=e^{(U(x)-\tilde U_{\mu,k}(x))/2}.
\end{equation}
Denote $\nu_k:=\Law(x_k)$ and $\tilde\nu_k:=\Law(\tilde x_k)$. The following lemma bounds the Monte Carlo approximation error uniformly in $k$, 
whose proof uses synchronous coupling and the nonexpansiveness of $\mathcal P_{\mathcal C}$.

\begin{lemma}\label{lemma:Gaussian:MC:error}
Suppose Assumptions~\ref{asmpt:set}, \ref{assumption:f:Gaussian:smoothing}, and \ref{assumption:g:Gaussian:smoothing} hold, condition~\eqref{eq:Gaussian:perturbation:small} is satisfied, and $m_\mu,M_\mu,\alpha_\mu$ are defined by \eqref{eq:Gaussian:m:M:alpha}. Lemma~\ref{lemma:Gaussian:sufficient:conditions} then gives $2m_\mu>1+4\alpha_\mu$. Define
\begin{align}
&A_{\mu,N}
:=\frac{K^2d(d+2)}{N}e^{2K\mu\sqrt d+2K^2\mu^2/N}
+\left(\|\nabla f(0)\|+L_fR+K\right)^2e^{2K\mu\sqrt d}
\left(e^{2K^2\mu^2/N}-1\right),
\label{eq:Gaussian:A}\\
&B_{\mu,N}
:=e^{K\mu\sqrt d}
\left(e^{K^2\mu^2/(2N)}-1\right).
\label{eq:Gaussian:B}
\end{align}
Then, for every $k\in\mathbb N$ and
\begin{equation}\label{eq:Gaussian:eta:conservative}
0<\eta\leq
\min\left\{1,
\frac{2m_\mu-1-4\alpha_\mu}
{2\bigl(\max\{0,M_\mu^2-2m_\mu\}+M_\mu^2\bigr)}
\right\},
\end{equation}
we have
\begin{align}
\mathcal W_2^2(\nu_k,\tilde\nu_k)
&\leq
\frac{2}{2m_\mu-1-4\alpha_\mu}
\left((1+\eta)A_{\mu,N}+4dB_{\mu,N}\right).
\label{eq:Gaussian:MC:W2}
\end{align}
\end{lemma}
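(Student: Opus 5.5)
Since $\mathcal W_2^2(\nu_k,\tilde\nu_k)\le\mathbb E\|x_k-\tilde x_k\|^2$ under the synchronous coupling in \eqref{eq:Gaussian:MC:RALMC}, it suffices to bound $e_k:=\mathbb E\|x_k-\tilde x_k\|^2$ uniformly in $k$. The plan is a one step recursion $e_{k+1}\le(1-c\eta)e_k+\eta D$, which gives $\sup_k e_k\le D/c$ because $e_0=0$.

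\textbf{Step 1: one step expansion.} By nonexpansiveness of $\mathcal P_{\mathcal C}$,
$$\|x_{k+1}-\tilde x_{k+1}\|\le\|\Delta_k+\eta(b(x_k)-\tilde b_k(\tilde x_k))+\sqrt{2\eta}(\sigma(x_k)-\tilde\sigma_k(\tilde x_k))\xi_{k+1}\|,$$
where $\Delta_k:=x_k-\tilde x_k$. I split
$$b(x_k)-\tilde b_k(\tilde x_k)=[b(x_k)-b(\tilde x_k)]+[b(\tilde x_k)-\tilde b_k(\tilde x_k)],$$
and split the $\sigma$ difference the same way. Since $\xi_{k+1}$ is independent of $(x_k,\tilde x_k)$ and of the batches at step $k$, squaring and taking expectations removes the cross terms with the noise. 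This leaves
$$e_{k+1}\le \mathbb E\Big[\|\Delta_k\|^2+2\eta\langle\Delta_k,\delta b\rangle+\eta^2\|\delta b\|^2+2\eta d\,|\delta\sigma|^2\Big],$$
with $\delta b,\delta\sigma$ the total coefficient differences.

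\textbf{Step 2: deterministic and stochastic parts.} For the exact coefficient part I use Lemma~\ref{lemma:Gaussian:sufficient:conditions}: dissipativity with $m_\mu$, Lipschitz constant $M_\mu$, and $d|\sigma(x)-\sigma(y)|^2\le\alpha_\mu\|x-y\|^2$. For the Monte Carlo part, I use $2ab\le a^2+b^2$ to get
$$2\eta\langle\Delta_k,\varepsilon_b\rangle\le\eta\|\Delta_k\|^2+\eta\|\varepsilon_b\|^2.$$
This Young step is where the "$-1$" in $2m_\mu-1-4\alpha_\mu$ comes from. The $\sigma$ terms use $(u+v)^2\le2u^2+2v^2$, and the factor $4$ in $4\alpha_\mu$ suggests constants of this type. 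Altogether,
$$e_{k+1}\le\bigl(1-\eta(2m_\mu-1-4\alpha_\mu)+2\eta^2M_\mu^2\bigr)e_k+\eta\bigl((1+\eta)\cdot2\,\mathbb E\|\varepsilon_b\|^2+4d\,\mathbb E|\varepsilon_\sigma|^2\bigr),$$
up to constants. The stepsize restriction \eqref{eq:Gaussian:eta:conservative} is then used to absorb $2\eta^2M_\mu^2$ into half of the contraction; the $\max\{0,M_\mu^2-2m_\mu\}$ term handles the bookkeeping of the exact drift square term. This yields $c=(2m_\mu-1-4\alpha_\mu)/2$, which matches the prefactor in \eqref{eq:Gaussian:MC:W2}.

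\textbf{Step 3 (main obstacle): uniform Monte Carlo bounds.} I need
$$\sup_{x\in\mathcal C}\mathbb E\|b(x)-\tilde b_k(x)\|^2\lesssim A_{\mu,N}\quad\text{and}\quad\mathbb E|\sigma(x)-\tilde\sigma_k(x)|^2\le B_{\mu,N},$$
where the conditioning on $\tilde x_k$ is legitimate because the batches are fresh at each step.

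Write $\bar g_N:=\frac1N\sum_i g(x+\mu\xi_{i,k})$. Then
$$\tilde\sigma_k=e^{h_\mu/2}e^{(g_\mu-\bar g_N)/2}.$$
Each summand $g(x+\mu\xi)$ is $K\mu$-Lipschitz in $\xi$, so Gaussian concentration gives the moment generating function bound $\mathbb E e^{\theta(g_\mu-\bar g_N)}\le e^{\theta^2K^2\mu^2/(2N)}$. Because $\sigma$ and $\tilde\sigma_k$ share the factor $e^{h_\mu/2}$,
$$\mathbb E|\sigma-\tilde\sigma_k|^2=e^{h_\mu}\,\mathbb E\big(1-e^{(g_\mu-\bar g_N)/2}\big)^2.$$
Expanding the square and using Jensen for the linear term gives $e^{h_\mu}\bigl(\mathbb E e^{g_\mu-\bar g_N}-1\bigr)$, and the moment generating function bound with $\theta=1$ together with $|h_\mu|\le K\mu\sqrt d$ from Lemma~\ref{lemma:propositions of Gaussian smoothing} produce a bound of the form $e^{K\mu\sqrt d}(e^{cK^2\mu^2/N}-1)$. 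Matching exactly $B_{\mu,N}$ is where I expect the constants to need care.

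For the drift, I write
$$b-\tilde b_k=-(\nabla U_\mu-\widetilde\nabla U_{\mu,k})e^{U-\tilde U_{\mu,k}}-\nabla U_\mu\bigl(e^{U-U_\mu}-e^{U-\tilde U_{\mu,k}}\bigr).$$
The batches are independent, so the expectation factorizes. For the first term, the gradient estimator has variance at most
$$\frac{1}{\mu^2N}\mathbb E\|\hat\xi\|^2\bigl(g(x+\mu\hat\xi)-g(x)\bigr)^2\le\frac{K^2}{N}\mathbb E\|\hat\xi\|^4=\frac{K^2d(d+2)}{N},$$
and $\mathbb E e^{2(U-\tilde U)}\le e^{2K\mu\sqrt d+2K^2\mu^2/N}$. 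For the second term I use $\|\nabla U_\mu\|\le G$ from \eqref{eq:Gaussian:Gmu:explicit} and $\mathbb E(e^{Z}-1)^2\le\mathbb E e^{2Z}-1\le e^{2K^2\mu^2/N}-1$. Together these reproduce $A_{\mu,N}$, possibly up to a harmless factor $2$.
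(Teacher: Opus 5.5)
Your architecture matches the paper's: synchronous coupling, nonexpansive projection, conditioning on $\tilde x_k$ because the batches are fresh, and Gaussian concentration plus Jensen for the exponential factors. Your Step~3 is also sharper than you feared. Taking $\theta=1$ gives exactly $B_{\mu,N}$. In the drift error, the cross term vanishes by independence of the two batches and unbiasedness of the gradient estimator, so you get exactly $A_{\mu,N}$ with no extra factor $2$.

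The gap is in Step~2. You apply Young only to $2\eta\langle\Delta_k,\varepsilon_b\rangle$ and then split $\eta^2\|\delta b\|^2\le2\eta^2M_\mu^2\|\Delta_k\|^2+2\eta^2\|\varepsilon_b\|^2$. This gives the factor $1-(2m_\mu-1-4\alpha_\mu)\eta+2M_\mu^2\eta^2$ and the Monte Carlo weight $\eta(1+2\eta)$. To reach the rate $\frac12(2m_\mu-1-4\alpha_\mu)\eta$, which produces the prefactor in \eqref{eq:Gaussian:MC:W2}, you would need $\eta\le(2m_\mu-1-4\alpha_\mu)/(4M_\mu^2)$. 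Since $m_\mu>0$, one has $\max\{0,M_\mu^2-2m_\mu\}+M_\mu^2<2M_\mu^2$. So \eqref{eq:Gaussian:eta:conservative} admits strictly larger stepsizes, and your claim that it absorbs $2\eta^2M_\mu^2$ into half of the contraction fails near the top of the range. The additive term also comes out as $(1+2\eta)A_{\mu,N}$ rather than $(1+\eta)A_{\mu,N}$. Because the lemma is fully explicit, ``up to constants'' means you prove a strictly weaker statement.

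The missing idea is to apply Young to the exact-coefficient Euler increment $a:=\Delta_k+\eta\bigl(b(x_k)-b(\tilde x_k)\bigr)$ as a whole:
\[
\|a+\eta\varepsilon_b\|^2\le(1+\eta)\|a\|^2+\eta(1+\eta)\|\varepsilon_b\|^2,
\qquad
\|a\|^2\le\bigl(1-2m_\mu\eta+M_\mu^2\eta^2\bigr)\|\Delta_k\|^2 .
\]
Since $a$ is already contracted, the Young penalty $\eta\|a\|^2$ carries an extra $-2m_\mu\eta^2$. Expanding gives $(1+\eta)(1-2m_\mu\eta+M_\mu^2\eta^2)+4\alpha_\mu\eta=1-(2m_\mu-1-4\alpha_\mu)\eta+(M_\mu^2-2m_\mu)\eta^2+M_\mu^2\eta^3$. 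For $\eta\le1$ this is at most $1-\frac12(2m_\mu-1-4\alpha_\mu)\eta$ under exactly the condition \eqref{eq:Gaussian:eta:conservative}.

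Keep your $\sigma$ split, which gives $4\alpha_\mu\eta$ and $4\eta dB_{\mu,N}$, and sum the geometric series from $e_0=0$. This yields \eqref{eq:Gaussian:MC:W2} exactly, and it is how the paper closes the recursion.
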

\begin{proof}
The proof is given in Appendix~\ref{sec:proof:Gaussian:MC:error}.
\end{proof}
Lemma~\ref{lemma:Gaussian:MC:error} bounds the distance between the Monte Carlo iterates $\tilde\nu_k$ and the exact coefficient iterates $\nu_k$, uniformly in $k$. To obtain an error bound relative to the target $\pi$, we combine it with Theorem~\ref{thm:final} through the triangle inequality
\begin{align}\label{ineq:two:terms}
\mathcal W_2(\tilde\nu_k,\pi)\leq\mathcal W_2(\tilde\nu_k,\nu_k)+\mathcal W_2(\nu_k,\pi).
\end{align}
The first term in \eqref{ineq:two:terms} is controlled by Lemma~\ref{lemma:Gaussian:MC:error}, and the second term in \eqref{ineq:two:terms} contains the mixing and discretization errors. This gives the three terms in \eqref{eq:Gaussian:final:W2} in the following Theorem~\ref{theorem:Gaussian:MC:convergence}. The coefficient assumptions above do not imply $U_\mu\in C^2(\overline{\mathcal C})$. The continuous time part of the next theorem therefore retains Assumption~\ref{asmpt:continuous:regularity} as a separate condition. For a smoothing level $\mu$ satisfying \eqref{eq:Gaussian:perturbation:small}, let $C_{0,\mu}$ denote the constant $C_0$ in Theorem~\ref{global:error:2} evaluated at the coefficients in \eqref{eq:Gaussian:exact:coefficients}, and set
\begin{align}
\alpha_{0,\mu}
:=\exp\!\left(\inf_{x\in\mathcal C}h_\mu(x)\right)
\geq e^{-K\mu\sqrt d}, \quad
\bar C_{0,\mu}
:=\left(1+\sqrt{\frac{M_\mu^2+4\alpha_\mu}
{2m_\mu-1-4\alpha_\mu}}\right)\sqrt{C_{0,\mu}}.
\label{eq:Gaussian:Cbar}
\end{align}
The lower bound for $\alpha_{0,\mu}$ follows from Lemma~\ref{lemma:propositions of Gaussian smoothing}.

\begin{theorem}\label{theorem:Gaussian:MC:convergence}
Suppose Assumptions~\ref{asmpt:set}, \ref{assumption:f:Gaussian:smoothing}, and \ref{assumption:g:Gaussian:smoothing} hold. Fix $\mu\in(0,1]$ satisfying \eqref{eq:Gaussian:perturbation:small}, and suppose Assumption~\ref{asmpt:continuous:regularity} holds with $U_0=U_\mu$. Suppose also that $\pi$ satisfies the Poincar\'{e} inequality used in Theorem~\ref{thm:chi:square}, that $\nu_0=\mu_0\ll\pi$ with $\chi^2(\nu_0\Vert\pi)<\infty$, and that the common initial law is supported in $\mathcal C_{r_0}:=\{x\in\mathcal C:\operatorname{dist}(x,\partial\mathcal C)\geq r_0\}$ for some $r_0>0$. Then, for every $N\geq1$, $k\in\mathbb N$ with $k\eta\geq e$, and every $\eta$ satisfying \eqref{eq:Gaussian:eta:conservative}, the law of RALMC iterates in~\eqref{eq:Gaussian:MC:RALMC} satisfies
\begin{align}
\mathcal W_2(\tilde\nu_k,\pi)
&\leq 2^{3/4}R\left(\chi^2(\nu_0\Vert\pi)\right)^{1/4}
e^{-\alpha_{0,\mu}k\eta/(2C_P)}
+\eta^{1/4}(k\eta)^{1/2}(\log k)^{1/2}\bar C_{0,\mu}\nonumber\\
&\qquad\qquad+\left(\frac{2}{2m_\mu-1-4\alpha_\mu}
\left((1+\eta)A_{\mu,N}+4dB_{\mu,N}\right)\right)^{1/2}.
\label{eq:Gaussian:final:W2}
\end{align}
Here $C_P$ is defined in Theorem~\ref{thm:chi:square}, $A_{\mu,N}$ and $B_{\mu,N}$ are defined in \eqref{eq:Gaussian:A}-\eqref{eq:Gaussian:B}. Consequently, for every fixed $\mu$ satisfying these assumptions and every $\ep>0$, there exist $T=k\eta$, $\eta>0$, and $N\geq1$ such that $\mathcal W_2(\tilde\nu_k,\pi)\leq\ep$.
\end{theorem}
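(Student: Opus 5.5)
The proof splits the error with the triangle inequality \eqref{ineq:two:terms}, $\mathcal W_2(\tilde\nu_k,\pi)\leq\mathcal W_2(\tilde\nu_k,\nu_k)+\mathcal W_2(\nu_k,\pi)$, and bounds each term by a result already stated, applied to the exact smoothed coefficients \eqref{eq:Gaussian:exact:coefficients}.

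\emph{First term.} By Lemma~\ref{lemma:Gaussian:sufficient:conditions}, condition \eqref{eq:Gaussian:perturbation:small} implies that $b$ and $\sigma$ in \eqref{eq:Gaussian:exact:coefficients} satisfy Assumptions~\ref{asmpt:strongly:convex} and~\ref{asmpt:smooth} with constants $m_\mu,M_\mu,\alpha_\mu$, and that $2m_\mu>1+4\alpha_\mu$. For $\eta$ in the range \eqref{eq:Gaussian:eta:conservative}, Lemma~\ref{lemma:Gaussian:MC:error} applies directly. Taking square roots in \eqref{eq:Gaussian:MC:W2} gives the third term of \eqref{eq:Gaussian:final:W2}, uniformly in $k$.

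\emph{Second term.} Here $\nu_k$ is the law of RALMC with the exact coefficients, and I invoke Theorem~\ref{thm:final} with $U_0=U_\mu$. Its hypotheses need to be checked one by one:
\begin{enumerate}[(a)]
\item Assumptions~\ref{asmpt:strongly:convex}, \ref{asmpt:smooth} and~\ref{asmpt:set}, together with $2m_\mu>1+4\alpha_\mu$, come from Lemma~\ref{lemma:Gaussian:sufficient:conditions}.
\item The hypotheses of Theorem~\ref{thm:chi:square} are assumed: Assumption~\ref{asmpt:continuous:regularity} with $U_0=U_\mu$, the Poincar\'e inequality, and $\chi^2(\nu_0\Vert\pi)<\infty$.
\item $\alpha_{0,\mu}>0$ because $h_\mu\geq -K\mu\sqrt d$ by Lemma~\ref{lemma:propositions of Gaussian smoothing}.
\item The initial law is supported in $\mathcal C_{r_0}$, and $0<\eta\leq 1$, $k\eta\geq e$ hold by the conditions imposed.
\end{enumerate}
Theorem~\ref{thm:final} then gives the mixing term and the discretization term $\bar C_{0,\mu}\eta^{1/4}(k\eta)^{1/2}(\log k)^{1/2}$, where $\bar C_{0,\mu}$ is \eqref{eq:thm:final:Cbar} evaluated at $m_\mu,M_\mu,\alpha_\mu,C_{0,\mu}$, which matches \eqref{eq:Gaussian:Cbar}. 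Summing the two bounds gives \eqref{eq:Gaussian:final:W2}.

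\emph{Consequence.} Fix $\ep>0$. I choose the three parameters in turn.
\begin{enumerate}[(1)]
\item Choose $T$ so that the exponential term is at most $\ep/3$.
\item Choose $N$ so that the third term is at most $\ep/3$. This is possible because $A_{\mu,N}\to0$ and $B_{\mu,N}\to0$ as $N\to\infty$: the first part of $A_{\mu,N}$ is $O(1/N)$, and $e^{cK^2\mu^2/N}-1=O(1/N)$. The factor $(1+\eta)\leq 2$ is bounded, and $\mu$ is fixed.
\item With $T$ fixed, take $k$ large and $\eta=T/k$. Then $\eta$ is below the threshold \eqref{eq:Gaussian:eta:conservative}, and $k\eta=T\geq e$ once $T$ is enlarged if needed. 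The discretization term is $\bar C_{0,\mu}T^{3/4}k^{-1/4}(\log k)^{1/2}$, which tends to $0$ as $k\to\infty$, so it is eventually at most $\ep/3$.
\end{enumerate}

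The main obstacle is bookkeeping rather than analysis: confirming that every hypothesis of Theorem~\ref{thm:final} holds for $U_0=U_\mu$, and that the $\eta$-range \eqref{eq:Gaussian:eta:conservative} is compatible with the conditions of Theorem~\ref{thm:final}. Because the range in \eqref{eq:Gaussian:eta:conservative} is contained in $(0,1]$, the two theorems can be applied at the same $\eta$.
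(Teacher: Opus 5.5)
Your proposal is correct and follows the paper's proof essentially step for step. It uses the triangle inequality \eqref{ineq:two:terms}, Lemma~\ref{lemma:Gaussian:MC:error} for the Monte Carlo term, and Theorem~\ref{thm:final} with $U_0=U_\mu$ after checking its hypotheses through Lemma~\ref{lemma:Gaussian:sufficient:conditions}, followed by a sequential choice of $T$, $k$, $N$. The only cosmetic difference is that you fix $N$ before $k$ using $(1+\eta)\leq 2$, whereas the paper fixes $T$, $k$, $\eta$ first and then chooses $N$. Both orderings work because the Monte Carlo bound is uniform over $\eta\in(0,1]$.
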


\begin{proof}
The proof is given in Appendix~\ref{sec:proof:Gaussian:MC:convergence}.
\end{proof}

The three terms in \eqref{eq:Gaussian:final:W2} represent mixing, discretization, and Monte Carlo error. Fix any admissible $\mu>0$. Choose $T$ to control mixing, then $k$ with $\eta=T/k$ to control discretization, and finally $N$ to control Monte Carlo error. Each term can be made at most $\ep/3$.

\subsection{Comparison with PLMC}
\label{sec:plmc:comparison}
Throughout this section, suppose that $\mathcal C$ satisfies Assumption~\ref{asmpt:set}. We compare RALD and RALMC with projected Langevin dynamics after fixing a common time scale. Define the unnormalized scaling factor, its stationary mean, and the normalized scaling factor by
$$
    a_0(x):=e^{U(x)-U_0(x)},
    \qquad
    \bar a:=\int_{\mathcal C}a_0(x)d\pi(x),
    \qquad
    a_{\rm n}(x):=\frac{a_0(x)}{\bar a}.
$$
The normalization is introduced to remove an arbitrary global change of time before comparing RALD with PLD. In particular, Equation~\eqref{eqn:generator} gives
$$
    \mathcal Lf(x)
    =a_0(x)\left[\Delta f(x)-\langle\nabla U_0(x),\nabla f(x)\rangle\right].
$$
If $a_0$ is multiplied by a constant $c>0$, then the generator, semigroup, and spectral gap satisfy
$$
    \mathcal L^{(c)}=c\mathcal L,
    \qquad
    P_t^{(c)}=P_{ct},
    \qquad
    \lambda^{(c)}=c\lambda.
$$
This multiplication changes only the clock. For example, when $U$ is differentiable, take $U_0=U-\log c$. Then $a_0(x)=c$, $\nabla U_0=\nabla U$, and
$$
    \mathcal Lf
    =c\left[\Delta f-\langle\nabla U,\nabla f\rangle\right]
    =c\mathcal L_{\rm PLD}f.
$$
Thus the unnormalized process is exactly PLD run $c$ times faster. Since $c$ is arbitrary, an unnormalized comparison could produce an arbitrary apparent acceleration without changing the dynamics itself. To remove this ambiguity, define the normalized generator directly by
$$
    \mathcal L_{\rm n}f(x)
    :=\frac{1}{\bar a}\mathcal Lf(x)
    =a_{\rm n}(x)\left[
        \Delta f(x)-\langle\nabla U_0(x),\nabla f(x)\rangle
      \right].
$$
Then
$$
    \int_{\mathcal C}a_{\rm n}d\pi=1,
    \qquad
    \int_{\mathcal C}\operatorname{tr}\left[2a_{\rm n}(x)I_d\right]d\pi(x)
    =2d,
$$
so normalized RALD and PLD have the same stationary average of the instantaneous noise covariance. More generally, the shifts $U\mapsto U+c_1$ and $U_0\mapsto U_0+c_0$ give
$$
    a_0\mapsto e^{c_1-c_0}a_0,
    \qquad
    \bar a\mapsto e^{c_1-c_0}\bar a,
    \qquad
    a_{\rm n}\mapsto a_{\rm n}.
$$
Hence the normalized comparison does not depend on the arbitrary additive constants in either potential. At the same time,
$$
    \frac{a_{\rm n}(x)}{a_{\rm n}(y)}
    =\frac{a_0(x)}{a_0(y)},
$$
so normalization preserves the ratio of scaling factors between different regions. For the Euler-Maruyama scheme~\eqref{eqn:RALMC}, let $G\sim\mathcal N(0,I_d)$ denote the fresh Gaussian vector $\xi_{k+1}$ in a single update, independent of the current iterate. Using the same $G$ on both sides, the normalized RALMC proposal from $x$ with stepsize $\eta$ satisfies
\begin{equation}\label{eq:normalized:RALMC:proposal}
\begin{aligned}
    x-\eta a_{\rm n}(x)\nabla U_0(x)
    +\sqrt{2\eta a_{\rm n}(x)}\,G
    &=x-\frac{\eta}{\bar a}a_0(x)\nabla U_0(x)
    +\sqrt{2\frac{\eta}{\bar a}a_0(x)}\,G.
\end{aligned}
\end{equation}
Equation~\eqref{eq:normalized:RALMC:proposal} identifies this proposal with the unnormalized RALMC proposal in~\eqref{eqn:RALMC} with stepsize $\eta/\bar a$. Consequently, every advantage stated below on the normalized time scale comes from the spatial variation of the scaling factor, rather than from increasing variance by the same constant.

\paragraph{A normalized contraction stepsize advantage.}
We give a parameterized comparison for a quadratic potential with a convex perturbation of large curvature. The reference remains the Gaussian smoothing of that potential. Fix $\kappa,K,R,\mu>0$ and, for $\delta>0$, set
\begin{equation}\label{eq:GS:sharp:family}
\mathcal C:=[-R,R],\quad
U_\delta(x):=\frac{\kappa}{2}x^2+g_\delta(x),\quad
g_\delta(x):=K\sqrt{x^2+\delta^2},\quad
\pi_\delta(dx)\propto e^{-U_\delta(x)}\mathbf1_{\mathcal C}(x)dx.
\end{equation}
We denote
\begin{align}
& U_{\delta,\mu}(x):=\frac{\kappa}{2}x^2+\mathbb E[g_\delta(x+\mu G)],
\quad G\sim\mathcal N(0,1),\nonumber\\
& h_\delta :=U_\delta-U_{\delta,\mu},\quad
a_\delta:=e^{h_\delta},\quad
\bar a_\delta:=\int_{\mathcal{C}} a_\delta\,d\pi_\delta,\quad
a_\delta^{\rm n}:=\frac{a_\delta}{\bar a_\delta},\quad b_\delta^{\rm n}:=-a_\delta^{\rm n}U_{\delta,\mu}',\quad
\sigma_\delta^{\rm n}:=\sqrt{a_\delta^{\rm n}}.
\label{eq:GS:sharp:coefficients}
\end{align}
This normalization gives $\int_{\mathcal{C}} a_\delta^{\rm n}d\pi_\delta=1$, so the underlying diffusion and PLD have the same stationary mean noise covariance. Using the Euclidean projection $\mathcal P_{\mathcal C}$ onto $\mathcal C=[-R,R]$, define the ideal Gaussian reference RALMC and PLMC kernels by
\begin{align}
Q^{\rm R}_{\delta,\eta}(x,\cdot)
&:=\Law\!\left(\mathcal P_{\mathcal C}\left(x+\eta b_\delta^{\rm n}(x)
+\sqrt{2\eta}\sigma_\delta^{\rm n}(x)G\right)\right),\nonumber\\
Q^{\rm P}_{\delta,\eta}(x,\cdot)
&:=\Law\!\left(\mathcal P_{\mathcal C}\left(x-\eta U_\delta'(x)+\sqrt{2\eta}G\right)\right).
\label{eq:GS:sharp:kernels}
\end{align}

\begin{lemma}[Uniform Gaussian reference coefficient bounds]
\label{lem:GS:sharp:coefficients}
Suppose that
\begin{equation}\label{eq:GS:sharp:admissibility}
\kappa>2K(\kappa R+K)+e^{2K\mu}K^2.
\end{equation}
Then, for every $\delta>0$ and $x,y\in\mathcal C$,
\begin{align}
(b_\delta^{\rm n}(x)-b_\delta^{\rm n}(y))(x-y)
&\leq-e^{-K\mu}\bigl[\kappa-2K(\kappa R+K)\bigr]|x-y|^2,\nonumber\\
|b_\delta^{\rm n}(x)-b_\delta^{\rm n}(y)|&\leq e^{K\mu}\bigl[\kappa+K/\mu+2K(\kappa R+K)\bigr]|x-y|,\nonumber\\
|\sigma_\delta^{\rm n}(x)-\sigma_\delta^{\rm n}(y)|^2
&\leq e^{K\mu}K^2|x-y|^2.
\label{eq:GS:sharp:bounds}
\end{align}
All three bounds are independent of $\delta$. In contrast, $U_\delta''(0)=\kappa+K/\delta$. For any fixed $\kappa,R,\mu>0$, condition \eqref{eq:GS:sharp:admissibility} holds for all sufficiently small $K>0$, without restricting $\delta$.
\end{lemma}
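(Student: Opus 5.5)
The plan is a direct one-dimensional computation. All coefficients in \eqref{eq:GS:sharp:coefficients} are $C^1$ on $\mathcal C=[-R,R]$, since $g_\delta$ is smooth for $\delta>0$ and so is its Gaussian smoothing. The three bounds in \eqref{eq:GS:sharp:bounds} therefore follow from the mean value theorem once I have uniform-in-$\delta$ bounds on $(b_\delta^{\rm n})'$ and $(\sigma_\delta^{\rm n})'$. Write $g_{\delta,\mu}(x):=\mathbb E[g_\delta(x+\mu G)]$, so that $h_\delta=g_\delta-g_{\delta,\mu}$. I will use only two properties of $g_\delta$, and neither depends on $\delta$: it is convex, and it is $K$-Lipschitz, since $|g_\delta'(x)|=K|x|/\sqrt{x^2+\delta^2}\leq K$.

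\emph{Step 1: bounds on $h_\delta$ and the normalized scaling factor.}
\begin{itemize}
\item By Jensen's inequality, $g_{\delta,\mu}\geq g_\delta$, hence $h_\delta\leq0$.
\item By the Lipschitz bound, $h_\delta\geq-K\mu\,\mathbb E|G|\geq-K\mu$.
\item Together these give $a_\delta\in[e^{-K\mu},1]$ on $\mathcal C$, and therefore also $\bar a_\delta\in[e^{-K\mu},1]$.
\item Consequently $a_\delta^{\rm n}\in[e^{-K\mu},e^{K\mu}]$.
\end{itemize}

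\emph{Step 2: derivative bounds.}
\begin{itemize}
\item The derivatives $g_\delta'$ and $g_{\delta,\mu}'=\mathbb E[g_\delta'(x+\mu G)]$ both lie in $[-K,K]$. Hence $|h_\delta'|\leq 2K$ and $|U_{\delta,\mu}'(x)|\leq\kappa R+K$ on $\mathcal C$.
\item Convexity gives $g_{\delta,\mu}''\geq0$. Gaussian integration by parts, $g_{\delta,\mu}'(x)=\mu^{-1}\mathbb E[G\,g_\delta(x+\mu G)]$, differentiated once, gives $g_{\delta,\mu}''(x)=\mu^{-1}\mathbb E[G\,g_\delta'(x+\mu G)]\leq K\,\mathbb E|G|/\mu\leq K/\mu$.
\item Hence $\kappa\leq U_{\delta,\mu}''\leq\kappa+K/\mu$, uniformly in $\delta$.
\end{itemize}

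\emph{Step 3: assembling the three bounds.}
\begin{itemize}
\item Since $a_\delta^{\rm n}=e^{h_\delta}/\bar a_\delta$, we have $(a_\delta^{\rm n})'=a_\delta^{\rm n}h_\delta'$. Thus
$(b_\delta^{\rm n})'=-a_\delta^{\rm n}\bigl(h_\delta'U_{\delta,\mu}'+U_{\delta,\mu}''\bigr)$.
\item Upper bound: $(b_\delta^{\rm n})'\leq -a_\delta^{\rm n}\bigl[\kappa-2K(\kappa R+K)\bigr]$. Condition \eqref{eq:GS:sharp:admissibility} makes the bracket positive, so using $a_\delta^{\rm n}\geq e^{-K\mu}$ and integrating along the segment from $y$ to $x$ gives the one-sided contraction.
\item Two-sided bound: $|(b_\delta^{\rm n})'|\leq e^{K\mu}\bigl[\kappa+K/\mu+2K(\kappa R+K)\bigr]$, which gives the Lipschitz bound.
\item For the diffusion coefficient, $(\sigma_\delta^{\rm n})'=\tfrac12\sigma_\delta^{\rm n}h_\delta'$. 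Hence $|(\sigma_\delta^{\rm n})'|^2\leq\tfrac14 a_\delta^{\rm n}\cdot4K^2\leq e^{K\mu}K^2$, which gives the third bound.
\end{itemize}

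\emph{Step 4: the remaining claims.}
\begin{itemize}
\item $U_\delta''(0)=\kappa+g_\delta''(0)=\kappa+K/\delta$, by direct computation.
\item With $\kappa,R,\mu$ fixed, the right side of \eqref{eq:GS:sharp:admissibility} tends to $0$ as $K\to0$, so the condition holds for small $K$ regardless of $\delta$.
\end{itemize}

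The only delicate point is Step 1, and specifically the normalization. The lower bound on $a_\delta^{\rm n}$, which drives the contraction constant, needs $\bar a_\delta\leq1$. This comes from the sign $h_\delta\leq0$, so convexity of $g_\delta$ is used essentially here. Without it we would only get $a_\delta^{\rm n}\geq e^{-2K\mu}$.
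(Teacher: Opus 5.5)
Your proposal is correct and follows the paper's proof essentially step for step: Jensen's inequality together with the $K$-Lipschitz bound gives $-K\mu\leq h_\delta\leq 0$, hence $e^{-K\mu}\leq a_\delta^{\rm n}\leq e^{K\mu}$; convexity and Gaussian integration by parts give $\kappa\leq U_{\delta,\mu}''\leq\kappa+K/\mu$; and the three estimates then follow from the derivative formulas for $b_\delta^{\rm n}$ and $\sigma_\delta^{\rm n}$, integrated along segments. Your remark is also accurate: the sign $h_\delta\leq 0$, and not merely $|h_\delta|\leq K\mu$, is what keeps $a_\delta^{\rm n}$ in $[e^{-K\mu},e^{K\mu}]$ rather than $[e^{-2K\mu},e^{2K\mu}]$, and this is exactly where convexity of $g_\delta$ enters.
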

\begin{proof}
The proof is given in Appendix~\ref{sec:proof:GS:sharp:coefficients}.
\end{proof}
Lemma~\ref{lem:GS:sharp:coefficients} shows that Gaussian smoothing keeps the effective coefficient bounds uniform as $\delta\to0$, even though the target curvature diverges. Proposition~\ref{prop:GS:sharp:contraction} converts these bounds into a common contraction stepsize interval by coupling two RALMC updates with the same Gaussian noise and using nonexpansiveness of projection. Under~\eqref{eq:GS:sharp:admissibility}, define the net dissipativity margin and drift Lipschitz bound by
\begin{equation}\label{eq:GS:sharp:constants}
\begin{aligned}
\gamma_*&:=e^{-K\mu}\bigl[\kappa-2K(\kappa R+K)\bigr]-e^{K\mu}K^2>0,\\
M_*&:=e^{K\mu}\bigl[\kappa+K/\mu+2K(\kappa R+K)\bigr].
\end{aligned}
\end{equation}
For a stepsize satisfying
\begin{equation}\label{eq:GS:sharp:eta}
0<\eta<\eta_c:=\frac{2\gamma_*}{M_*^2},
\end{equation}
set $q_\eta=[1-2\gamma_*\eta+M_*^2\eta^2]^{1/2}$. The following proposition establishes contraction for arbitrary initial laws.

\begin{proposition}[Global contraction with a uniform stepsize interval]
\label{prop:GS:sharp:contraction}
Assume~\eqref{eq:GS:sharp:admissibility} and~\eqref{eq:GS:sharp:eta}, with $q_\eta$ defined above. Let $\nu$ and $\nu'$ be any two initial probability distributions on $\mathcal C$. For a transition kernel $Q$, write $(\nu Q)(A):=\int_{\mathcal C}Q(x,A)\,\nu(dx)$ for every Borel set $A\subseteq\mathcal C$, this is the distribution after one update from an initial state with law $\nu$. Then, for every $\delta>0$,
\begin{equation}\label{eq:GS:sharp:contraction}
\mathcal W_2\left(\nu Q^{\rm R}_{\delta,\eta},\nu'Q^{\rm R}_{\delta,\eta}\right)
\leq q_\eta\mathcal W_2(\nu,\nu'),\qquad 0<q_\eta<1.
\end{equation}
The kernel has a unique invariant probability measure $\pi^{\rm R}_{\delta,\eta}$. Starting from $\nu_0:=\nu$, let $\nu_k:=\nu_0(Q^{\rm R}_{\delta,\eta})^k$ denote the law after $k$ RALMC updates, where the power denotes $k$ successive applications of the transition kernel. Taking $\nu'=\pi^{\rm R}_{\delta,\eta}$ in~\eqref{eq:GS:sharp:contraction} and iterating gives, for every integer $k\geq0$,
$$
\mathcal W_2\left(\nu_k,\pi^{\rm R}_{\delta,\eta}\right)
\leq q_\eta^k\mathcal W_2(\nu_0,\pi^{\rm R}_{\delta,\eta}).
$$
Both the admissible stepsize interval and the contraction factor are independent of $\delta$.
\end{proposition}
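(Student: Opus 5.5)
The plan is a synchronous coupling argument for one step, followed by the standard Banach fixed-point argument in $(\mathcal P(\mathcal C),\mathcal W_2)$. Given $\nu,\nu'$ on $\mathcal C$, let $(X,X')$ be an optimal $\mathcal W_2$ coupling and draw one $G\sim\mathcal N(0,1)$ independent of $(X,X')$. Define
$$
Z:=X+\eta b_\delta^{\rm n}(X)+\sqrt{2\eta}\sigma_\delta^{\rm n}(X)G,\qquad Z':=X'+\eta b_\delta^{\rm n}(X')+\sqrt{2\eta}\sigma_\delta^{\rm n}(X')G,
$$
so that $\mathcal P_{\mathcal C}(Z)\sim\nu Q^{\rm R}_{\delta,\eta}$ and $\mathcal P_{\mathcal C}(Z')\sim\nu' Q^{\rm R}_{\delta,\eta}$. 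Since $\mathcal C=[-R,R]$ is convex, $\mathcal P_{\mathcal C}$ is $1$-Lipschitz, and therefore $\mathcal W_2^2(\nu Q^{\rm R},\nu'Q^{\rm R})\leq\mathbb E|Z-Z'|^2$.

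Next I expand $\mathbb E|Z-Z'|^2$ conditionally on $(X,X')$. Write $\Delta=X-X'$, $\Delta b=b_\delta^{\rm n}(X)-b_\delta^{\rm n}(X')$ and $\Delta\sigma=\sigma_\delta^{\rm n}(X)-\sigma_\delta^{\rm n}(X')$. Because $\mathbb E G=0$ and $\mathbb E G^2=1$, the cross terms involving $G$ vanish, which leaves
$$
\mathbb E|Z-Z'|^2=\mathbb E\bigl[\Delta^2+2\eta\Delta\,\Delta b+\eta^2(\Delta b)^2+2\eta(\Delta\sigma)^2\bigr].
$$
I then apply the three bounds of Lemma~\ref{lem:GS:sharp:coefficients}. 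The first gives $2\eta\Delta\Delta b\leq-2\eta e^{-K\mu}[\kappa-2K(\kappa R+K)]\Delta^2$, the second gives $\eta^2(\Delta b)^2\leq\eta^2M_*^2\Delta^2$, and the third gives $2\eta(\Delta\sigma)^2\leq2\eta e^{K\mu}K^2\Delta^2$. Adding these, the coefficient of $\Delta^2$ is exactly $1-2\gamma_*\eta+M_*^2\eta^2=q_\eta^2$, with $\gamma_*$ as in \eqref{eq:GS:sharp:constants}. This yields \eqref{eq:GS:sharp:contraction}, because the coupling was chosen optimal.

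It remains to check that $0<q_\eta<1$. The upper bound $q_\eta<1$ is equivalent to $\eta<2\gamma_*/M_*^2=\eta_c$. For positivity, note that $\gamma_*\leq e^{-K\mu}\kappa\leq M_*$, so $1-2\gamma_*\eta+M_*^2\eta^2\geq(1-M_*\eta)^2+2\eta(M_*-\gamma_*)\geq0$. Strict positivity holds unless $M_*\eta=1$ and $M_*=\gamma_*$, and that case is excluded since $M_*\geq e^{K\mu}K/\mu>0$ strictly exceeds the contribution needed. In fact, even $q_\eta=0$ would only strengthen the contraction.

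For existence and uniqueness of the invariant measure, $(\mathcal P(\mathcal C),\mathcal W_2)$ is a complete metric space, since $\mathcal C$ is compact. The map $\nu\mapsto\nu Q^{\rm R}_{\delta,\eta}$ sends this space to itself and is a strict contraction. The Banach fixed-point theorem therefore gives a unique fixed point $\pi^{\rm R}_{\delta,\eta}$. Iterating \eqref{eq:GS:sharp:contraction} with $\nu'=\pi^{\rm R}_{\delta,\eta}$ gives the $q_\eta^k$ bound. Independence from $\delta$ follows because $\gamma_*$, $M_*$, $\eta_c$ and $q_\eta$ depend only on $\kappa,K,R,\mu$.

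The only delicate point is the cancellation of the noise cross term. This needs $G$ to be independent of $(X,X')$, which the construction ensures. Everything else is direct from the lemma.
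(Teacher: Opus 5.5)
Your proposal is correct and follows essentially the same route as the paper. Both use a synchronous coupling with a common $G$ independent of an optimal coupling of $(\nu,\nu')$, then nonexpansiveness of $\mathcal P_{\mathcal C}$, then the three bounds of Lemma~\ref{lem:GS:sharp:coefficients} to get exactly $q_\eta^2=1-2\gamma_*\eta+M_*^2\eta^2$, and finally the Banach fixed point theorem on $\mathcal P(\mathcal C)$.

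The one loose spot is your argument for strict positivity of $q_\eta$. It is cleaner to state it as the paper does: $0<\gamma_*<e^{-K\mu}\kappa<M_*$, with strict inequalities because $K,\mu>0$. Then your identity $q_\eta^2=(1-M_*\eta)^2+2\eta(M_*-\gamma_*)$ is strictly positive for every $\eta>0$.
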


\begin{proof}
The proof is given in Appendix~\ref{sec:proof:GS:sharp:contraction}.
\end{proof}
Proposition~\ref{prop:GS:sharp:contraction} provides a contraction interval for RALMC that remains fixed as $\delta\to0$. To establish a stepsize advantage over PLMC, we next examine the projected PLMC transition law near the point $x=0$, where $U_\delta''(0)=\kappa+K/\delta$. The following theorem uses the sensitivity of its mean to bound the Wasserstein distance from below. For any fixed stepsize in the RALMC contraction interval, this bound exceeds the distance between the initial points when $\delta$ is sufficiently small, establishing local expansion of PLMC at the same normalized stepsize.

To specialize Proposition~\ref{prop:GS:sharp:contraction} to deterministic starting points, take $\nu=\delta_x$ and $\nu'=\delta_y$ in~\eqref{eq:GS:sharp:contraction}, where $\delta_x$ denotes the point mass at $x$. Since $\mathcal W_2(\delta_x,\delta_y)=|x-y|$ and $\delta_x Q^{\rm R}_{\delta,\eta}=Q^{\rm R}_{\delta,\eta}(x,\cdot)$, we obtain
\begin{equation}\label{eq:GS:sharp:RALMC:pointwise}
\frac{\mathcal W_2\bigl(Q^{\rm R}_{\delta,\eta}(x,\cdot),
Q^{\rm R}_{\delta,\eta}(y,\cdot)\bigr)}{|x-y|}
\leq q_\eta<1,
\qquad x,y\in\mathcal C,\quad x\neq y.
\end{equation}
Taking $y=0$ in~\eqref{eq:GS:sharp:RALMC:pointwise} gives the RALMC bound to be compared with PLMC in Theorem~\ref{thm:GS:sharp:separation}.

\begin{theorem}[A normalized contraction stepsize separation from PLMC]
\label{thm:GS:sharp:separation}
Assume \eqref{eq:GS:sharp:admissibility}, and let $\eta_c$ be defined by~\eqref{eq:GS:sharp:eta}. Given $\delta>0$, choose $\eta$ such that
\begin{equation}\label{eq:GS:sharp:separation:condition}
0<\eta<\eta_c,\qquad
\eta(\kappa+K/\delta)>1+p_\eta^{-1},
\qquad
p_\eta:=\mathbb P\left(\left|\sqrt{2\eta}G\right|<R\right).
\end{equation}
With this same $\eta$ in both kernels in~\eqref{eq:GS:sharp:kernels}, normalized RALMC satisfies the ratio bound~\eqref{eq:GS:sharp:RALMC:pointwise}, in particular with $y=0$, whereas the actual projected PLMC kernel satisfies
\begin{equation}\label{eq:GS:sharp:PL:expansion}
\liminf_{\substack{x\to0\\x\neq0}}
\frac{\mathcal W_2(Q^{\rm P}_{\delta,\eta}(x,\cdot),
Q^{\rm P}_{\delta,\eta}(0,\cdot))}{|x|}
\geq [\eta(\kappa+K/\delta)-1]p_\eta>1.
\end{equation}
The comparison range is nonempty for sufficiently small $\delta$: every fixed $\eta\in(0,\eta_c)$ satisfies \eqref{eq:GS:sharp:separation:condition} whenever
\begin{equation}\label{eq:GS:sharp:delta:range}
0<\delta<\frac{\eta K}{1+p_\eta^{-1}}.
\end{equation}
Moreover, fix any finite $\eta_{\max}>0$. If $0<\eta\leq\eta_{\max}$ and PLMC is globally nonexpansive in $\mathcal W_2$, then necessarily
\begin{equation}\label{eq:GS:sharp:PL:necessary}
\eta\leq\frac{1+p_\eta^{-1}}{\kappa+K/\delta}
\leq\frac{1+p_{\eta_{\max}}^{-1}}{\kappa+K/\delta}.
\end{equation}
Thus PLMC nonexpansiveness in any fixed bounded stepsize range requires $\eta=O(\delta)$ as $\delta\rightarrow0$, whereas RALMC admits a $\delta$-independent contraction interval under the same stationary noise normalization.
\end{theorem}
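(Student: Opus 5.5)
The RALMC half is immediate: for $\eta<\eta_c$, Proposition~\ref{prop:GS:sharp:contraction} gives \eqref{eq:GS:sharp:RALMC:pointwise}, and taking $y=0$ yields the stated ratio bound. The real work is the PLMC lower bound \eqref{eq:GS:sharp:PL:expansion}. Since $\mathcal W_2\geq\mathcal W_1\geq|\mathbb E[\cdot]-\mathbb E[\cdot]|$, it suffices to bound from below the difference of means
\[
m(x):=\mathbb E\,\mathcal P_{\mathcal C}\bigl(x-\eta U_\delta'(x)+\sqrt{2\eta}G\bigr).
\]
Set $\phi(z):=\mathbb E\,\mathcal P_{\mathcal C}(z+\sqrt{2\eta}G)$ with $\mathcal P_{\mathcal C}(w)=\max(-R,\min(R,w))$. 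Then $\phi$ is $C^1$ and
\[
\phi'(z)=\mathbb P\bigl(|z+\sqrt{2\eta}G|<R\bigr),
\]
so that $\phi'(0)=p_\eta$.

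The map $z(x)=x-\eta U_\delta'(x)$ is $C^1$, because $g_\delta$ is smooth with $g_\delta''(0)=K/\delta$. This gives $z(0)=0$ and $z'(0)=1-\eta(\kappa+K/\delta)$. By the chain rule,
\[
\frac{m(x)-m(0)}{x}\;\longrightarrow\;\phi'(0)\,z'(0)=-p_\eta\bigl[\eta(\kappa+K/\delta)-1\bigr],
\]
hence
\[
\liminf_{x\to0}\frac{\mathcal W_2(Q^{\rm P}_{\delta,\eta}(x,\cdot),Q^{\rm P}_{\delta,\eta}(0,\cdot))}{|x|}\geq p_\eta\bigl[\eta(\kappa+K/\delta)-1\bigr].
\]
This quantity exceeds $1$ exactly under the second condition in \eqref{eq:GS:sharp:separation:condition}. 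The main obstacle is justifying the differentiation under the expectation; I would handle it by writing $\phi(z)-\phi(0)=\int_0^z\mathbb P(|u+\sqrt{2\eta}G|<R)\,du$, which holds because $\mathcal P_{\mathcal C}$ is Lipschitz with a.e.\ derivative $\mathbf 1_{(-R,R)}$, and then using continuity of the integrand in $u$.

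For the nonemptiness claim \eqref{eq:GS:sharp:delta:range}, note that $\eta(\kappa+K/\delta)>\eta K/\delta>1+p_\eta^{-1}$ whenever $\delta<\eta K/(1+p_\eta^{-1})$. The remaining condition $\eta<\eta_c$ is independent of $\delta$, because $\eta_c$ is built from $\gamma_*$ and $M_*$ in \eqref{eq:GS:sharp:constants}, which do not involve $\delta$.

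For the necessity statement \eqref{eq:GS:sharp:PL:necessary}, I argue by contraposition. If global nonexpansiveness holds, the liminf above is at most $1$, so $[\eta(\kappa+K/\delta)-1]p_\eta\leq1$, which rearranges to the first inequality. For the second, $p_\eta=\mathbb P(|G|<R/\sqrt{2\eta})$ is decreasing in $\eta$, so $p_\eta\geq p_{\eta_{\max}}$ and therefore $p_\eta^{-1}\leq p_{\eta_{\max}}^{-1}$. The $O(\delta)$ conclusion follows because $\kappa+K/\delta\geq K/\delta$.
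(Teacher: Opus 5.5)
Your proposal is correct and follows essentially the same route as the paper: bound $\mathcal W_2$ from below by the difference of the projected transition means, compute that derivative at $0$ as $[1-\eta(\kappa+K/\delta)]p_\eta$, and then derive the expansion, the nonempty $\delta$-range, and the necessary condition using the monotonicity of $p_\eta$ in $\eta$. The only difference is how you justify the derivative. You use the $C^1$ function $\phi(z)=\mathbb E\,\mathcal P_{\mathcal C}(z+\sqrt{2\eta}G)$ and the chain rule, via Fubini applied to the absolutely continuous clipping map. The paper instead takes the almost sure pathwise limit of the difference quotient and applies dominated convergence. Both arguments are valid.
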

\begin{proof}
The proof is given in Appendix~\ref{sec:proof:GS:sharp:separation}.
\end{proof}
Theorem~\ref{thm:GS:sharp:separation} compares the two kernels in~\eqref{eq:GS:sharp:kernels} at the same numerical stepsize under the stationary noise normalization in~\eqref{eq:GS:sharp:coefficients}. Fix $\eta\in(0,\eta_c)$, with $\eta_c$ from~\eqref{eq:GS:sharp:eta}. For the same starting points $x$ and $0$, the RALMC ratio in~\eqref{eq:GS:sharp:RALMC:pointwise} with $y=0$ is at most $q_\eta<1$ for every $x\neq0$. For $\delta$ satisfying~\eqref{eq:GS:sharp:delta:range}, the corresponding PLMC ratio in~\eqref{eq:GS:sharp:PL:expansion} has lower limit strictly greater than one as $x\to0$. Both ratios divide the distance between the one step laws by the same initial distance $|x|$. Thus RALMC reduces this distance globally, while PLMC increases it for sufficiently small nonzero $x$, even after projection. The factor $p_\eta$ in~\eqref{eq:GS:sharp:separation:condition} accounts for projection: it is the probability that the proposal from zero lies inside $(-R,R)$.

Equation~\eqref{eq:GS:sharp:PL:necessary} gives the corresponding restriction on PLMC, within any fixed bounded stepsize range, global nonexpansiveness requires $\eta=O(\delta)$. This is a necessary condition, not a sufficient one. RALMC instead retains the fixed contraction interval~\eqref{eq:GS:sharp:eta} as $\delta\to0$. The advantage established here concerns one step Wasserstein contraction, local expansion does not imply failure of PLMC to converge over multiple steps. Moreover, contraction toward $\pi^{\rm R}_{\delta,\eta}$ in Proposition~\ref{prop:GS:sharp:contraction} does not by itself control the error relative to the target $\pi_\delta$. Discretization bias and, when the Gaussian expectations in~\eqref{eq:GS:sharp:coefficients} are approximated, Monte Carlo error require separate estimates.

\paragraph{Spectral gaps and local crossing probabilities.}
We now return to a general constraint set $\mathcal C\subset\mathbb R^d$ satisfying Assumption~\ref{asmpt:set} and a general target potential $U$. The one dimensional family and coefficient condition used in Theorem~\ref{thm:GS:sharp:separation} are not assumed below.

A bottleneck is a region of low conductance separating two parts of the state space. It may be caused by a narrow part of the constraint set or by a low density region inside a regular constraint set. For example, a two mode Gaussian mixture on a ball has a low density separator between the modes. The constant reference case makes the conductance explicit. If $U_0\equiv c_0$ and $\pi(x):=Z^{-1}e^{-U(x)}$ denotes the target density, then
\begin{align}
\label{eq:constant:mobility:density:cancellation}
    \bar a
    &=\int_{\mathcal C}e^{U(x)-c_0}\frac{e^{-U(x)}}{Z}dx
    =\frac{e^{-c_0}|\mathcal C|}{Z},\quad
    a_{\rm n}(x)
    =\frac{Ze^{U(x)}}{|\mathcal C|},\quad a_{\rm n}(x)\pi(x)
    =\frac{1}{|\mathcal C|}.
\end{align}
Consequently, the Dirichlet forms are
\begin{equation}\label{eq:constant:dirichlet:cancellation}
\begin{aligned}
    \mathcal E_{\rm RALD}^{\rm n}(f)
    &=\frac{1}{|\mathcal C|}
      \int_{\mathcal C}\|\nabla f(x)\|^2\,dx,\qquad
    \mathcal E_{\rm PLD}(f)
    =\frac{1}{Z}
      \int_{\mathcal C}e^{-U(x)}\|\nabla f(x)\|^2\,dx.
\end{aligned}
\end{equation}
Thus a low target density suppresses the PLD Dirichlet energy near a density bottleneck, however, $a_{\rm n}$ cancels this density factor for normalized constant reference RALD. Proposition~\ref{prop:dirichlet:comparison} gives the corresponding spectral gap comparison. Proposition~\ref{prop:bottleneck:minorization} then gives a separate finite step result, when $a_{\rm n}>1$ on a bottleneck layer, the probability ratio bound implies a larger one step crossing probability for sufficiently small stepsizes.

\begin{proposition}[Dirichlet form comparison]
\label{prop:dirichlet:comparison}
Suppose that $U$ is continuous on $\overline{\mathcal C}$ and that $a_{\rm n}$ is measurable with the positive finite bounds stated below. On the common form domain $H^1(\mathcal C)\subset L^2(\pi)$, define the Neumann Dirichlet forms of PLD and normalized RALD by
$$
    \mathcal E_{\rm PLD}(f):=\int_{\mathcal C}\|\nabla f\|^2d\pi,
    \qquad
    \mathcal E_{\rm RALD}^{\rm n}(f)
    :=\int_{\mathcal C}a_{\rm n}\|\nabla f\|^2d\pi.
$$
The bounded positive target density and scaling factor make both form norms equivalent to the usual $H^1(\mathcal C)$ norm. These are therefore closed forms with a common domain, their spectral gaps are defined by
$$
    \lambda_{\rm PLD}
    :=
    \inf_{f\in H^1(\mathcal C):\operatorname{Var}_\pi(f)>0}
    \frac{\mathcal E_{\rm PLD}(f)}{\operatorname{Var}_\pi(f)},
    \qquad
    \lambda_{\rm RALD}^{\rm n}
    :=
    \inf_{f\in H^1(\mathcal C):\operatorname{Var}_\pi(f)>0}
    \frac{\mathcal E_{\rm RALD}^{\rm n}(f)}{\operatorname{Var}_\pi(f)}.
$$
If $ a_{{\rm n},\min}:=\inf_{x\in\mathcal C}a_{\rm n}(x)>0,
    \,
    a_{{\rm n},\max}:=\sup_{x\in\mathcal C}a_{\rm n}(x)<\infty, $ then
\begin{equation}\label{eq:gap:comparison}
    a_{{\rm n},\min}\lambda_{\rm PLD}
    \leq
    \lambda_{\rm RALD}^{\rm n}
    \leq
    a_{{\rm n},\max}\lambda_{\rm PLD}.
\end{equation}
\end{proposition}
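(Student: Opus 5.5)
The comparison follows from a pointwise bound on the energy density, integrated and then minimized over the Rayleigh quotient; I will carry it out in three steps.

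First, I will check that both spectral gaps are well defined. Since $U$ is continuous on the compact set $\overline{\mathcal C}$, the density $Z^{-1}e^{-U}$ is bounded above and below by positive constants on $\mathcal C$. Hence $L^2(\pi)$ and $L^2(dx)$ have equivalent norms, and $\int\|\nabla f\|^2d\pi$ is comparable to $\int\|\nabla f\|^2dx$. Together with $a_{{\rm n},\min}\le a_{\rm n}\le a_{{\rm n},\max}$, this shows that both form norms are equivalent to the $H^1(\mathcal C)$ norm, so both forms are closed on the common domain $H^1(\mathcal C)$. It also shows that both infima range over the same class of functions, namely $f\in H^1(\mathcal C)$ with $\operatorname{Var}_\pi(f)>0$. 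This class is nonempty: since $\mathcal C$ has nonempty interior, a nonconstant affine function lies in it.

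Second, for every $f\in H^1(\mathcal C)$ I will use the pointwise inequality
\[
a_{{\rm n},\min}\|\nabla f(x)\|^2\le a_{\rm n}(x)\|\nabla f(x)\|^2\le a_{{\rm n},\max}\|\nabla f(x)\|^2,
\]
valid for a.e.\ $x$. Integrating against $\pi$ gives
\[
a_{{\rm n},\min}\mathcal E_{\rm PLD}(f)\le\mathcal E^{\rm n}_{\rm RALD}(f)\le a_{{\rm n},\max}\mathcal E_{\rm PLD}(f).
\]
Here only measurability and boundedness of $a_{\rm n}$ are needed, together with the fact that the weak gradient is square integrable.

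Third, I will divide by $\operatorname{Var}_\pi(f)>0$. This quantity is the same in both Rayleigh quotients, because both processes share the invariant measure $\pi$; this is the one point where that structure matters. The division gives
\[
a_{{\rm n},\min}\frac{\mathcal E_{\rm PLD}(f)}{\operatorname{Var}_\pi(f)}\le\frac{\mathcal E^{\rm n}_{\rm RALD}(f)}{\operatorname{Var}_\pi(f)}\le a_{{\rm n},\max}\frac{\mathcal E_{\rm PLD}(f)}{\operatorname{Var}_\pi(f)}.
\]
Taking the infimum over the common admissible class preserves these inequalities, since both constants are positive. The result is \eqref{eq:gap:comparison}.

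There is no real obstacle here. The only delicate point is the domain bookkeeping in the first step: the two infima must be taken over identical function classes, and the forms must be genuinely closed. This follows from the two-sided bounds on $e^{-U}$ and on $a_{\rm n}$.
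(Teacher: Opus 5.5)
Your proof is correct and follows the paper's argument. Both integrate the pointwise bound $a_{{\rm n},\min}\leq a_{\rm n}\leq a_{{\rm n},\max}$ to compare the two energies, divide by the common $\operatorname{Var}_\pi(f)$, and take the infimum over the same admissible class; the paper simply writes out the upper-bound infimum step using an $\varepsilon$-approximate minimizer of the PLD quotient, which is the monotonicity-of-infimum fact you invoke directly.
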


\begin{proof}
The proof is given in Appendix~\ref{proof:prop:dirichlet:comparison}.
\end{proof}
The normalization fixes a common time scale for RALD and PLD. Since $\int_{\mathcal C}a_{\rm n}\,d\pi=1$, one has $a_{{\rm n},\min}\leq1\leq a_{{\rm n},\max}$. Consequently, the comparison~\eqref{eq:gap:comparison} alone does not determine whether the normalized RALD spectral gap is larger than the PLD spectral gap. It provides baseline bounds, a strict improvement requires information about how the scaling factor is distributed relative to the gradients of the relevant test functions. For every $f\in H^1(\mathcal C)$ with $\operatorname{Var}_\pi(f)>0$,
\begin{align}\label{diff:ratio}
    \frac{\mathcal E_{\rm RALD}^{\rm n}(f)}{\operatorname{Var}_\pi(f)}
    -\frac{\mathcal E_{\rm PLD}(f)}{\operatorname{Var}_\pi(f)}
    =
    \frac{\displaystyle\int_{\mathcal C}(a_{\rm n}-1)\|\nabla f\|^2d\pi}
    {\operatorname{Var}_\pi(f)}.
\end{align}
For a two well target, a slow mode changes mainly across the low density bottleneck, so $a_{\rm n}>1$ there can make the integral in \eqref{diff:ratio} positive. This observation does not by itself prove $\lambda_{\rm RALD}^{\rm n}>\lambda_{\rm PLD}$ because the two spectral gaps take infima over all admissible $f$.

The preceding proposition concerns continuous time contraction rates. We now compare one Euler-Maruyama step near a density bottleneck. Let $B\subset\mathcal C$ be a low conductance layer and let $D_+\subset\operatorname{int}(\mathcal C)$ be a measurable region of positive Lebesgue measure on the opposite side of $B$. For the PLMC comparison, suppose that $U$ is differentiable on $B$. For $x\in B$ and $G\sim\mathcal N(0,I_d)$, define the Gaussian proposals before projection by
$$
    Z_{\rm R}(x)
    :=x-\eta a_{\rm n}(x)\nabla U_0(x)
       +\sqrt{2\eta a_{\rm n}(x)}\,G,
    \qquad
    Z_{\rm P}(x)
    :=x-\eta\nabla U(x)+\sqrt{2\eta}\,G.
$$
Here $Z_{\rm R}$ is the normalized RALMC proposal and $Z_{\rm P}$ is the PLMC proposal. Denote the corresponding projected one step kernels by $\mathbb P_\eta^{\rm RALMC}$ and $\mathbb P_\eta^{\rm PLMC}$. Since $D_+\subset\operatorname{int}(\mathcal C)$, projection does not change the event of landing in $D_+$:
$$
\begin{aligned}
    \mathbb P_\eta^{\rm RALMC}(x,D_+)
    =\mathbb P(Z_{\rm R}(x)\in D_+),\quad
    \mathbb P_\eta^{\rm PLMC}(x,D_+)
    =\mathbb P(Z_{\rm P}(x)\in D_+).
\end{aligned}
$$
In the constant reference case, Equation~\eqref{eq:constant:mobility:density:cancellation} gives $a_{\rm n}(x)=1/(|\mathcal C|\pi(x))$. Thus a point whose target density is below the uniform density has $a_{\rm n}(x)>1$. Conditionally on the current state $x$, the two proposal covariances therefore satisfy
$$
\begin{aligned}
    \operatorname{Cov}(Z_{\rm R}(x)\mid x)
    =2\eta a_{\rm n}(x)I_d,
    \qquad
    \operatorname{Cov}(Z_{\rm P}(x)\mid x)
    =2\eta I_d,
\\
\operatorname{Cov}(Z_{\rm R}(x)\mid x)
-\operatorname{Cov}(Z_{\rm P}(x)\mid x)
=2\eta(a_{\rm n}(x)-1)I_d\succ0.
\end{aligned}
$$
Thus RALMC has a wider local proposal at a low density bottleneck point. Figure~\ref{fig:bottleneck:geometry} shows the geometry of the comparison. The starting point $x$ is fixed in the bottleneck layer $B$, and both algorithms are compared on the same landing set $D_+$. The distance
$$
    r_x:=\operatorname{dist}(x,D_+)
$$
is the shortest displacement required to cross into that set. At the $1/\eta$ scale, the two Gaussian crossing costs are respectively $r_x^2/(4a_{\rm n}(x))$ and $r_x^2/4$, the drift affects only lower order terms. This is why $a_{\rm n}(x)>1$ produces the exponential advantage in the result below.

\begin{figure}[H]
\centering
\begin{tikzpicture}[
    x=1cm,
    y=1cm,
    line cap=round,
    line join=round,
    >={Latex[length=2.2mm]},
    every node/.style={font=\small},
    label box/.style={fill=white,inner sep=1.5pt},
    guide/.style={black!55,densely dotted}
]
\path[fill=black!2,draw=black,very thick] (0,0) ellipse (4.8 and 1.55);
\begin{scope}
    \clip (0,0) ellipse (4.8 and 1.55);
    \fill[black!14] (-2.75,0) ellipse (1.35 and 0.95);
    \fill[black!14] (2.75,0) ellipse (1.35 and 0.95);
    \fill[black!6] (-0.72,-1.6) rectangle (0.72,1.6);
\end{scope}
\draw[densely dashed] (-0.72,-1.36) to (-0.72,1.36);
\draw[densely dashed] (0.72,-1.36) to (0.72,1.36);

\coordinate (x0) at (-0.40,0);
\coordinate (ystar) at (1.40,0);
\coordinate (z0) at (2.65,0);
\fill (x0) circle (2.1pt);
\node[below left=3pt] at (x0) {$x$};
\fill (ystar) circle (1.5pt);
\node[below right=3pt] at (ystar) {$y_*$};
\fill (z0) circle (1.6pt);
\draw[->,thick] (x0) to[bend left=13]
    node[pos=0.47,above=8pt,label box]
    {$Z(x)\in D_+$} (z0);

\draw[guide] (x0) to (-0.40,-1.82);
\draw[guide] (ystar) to (1.40,-1.82);
\draw[<->] (-0.40,-1.82) to (1.40,-1.82);
\node[below=3pt,label box] at (0.50,-1.82)
    {$r_x=\operatorname{dist}(x,D_+)$};

\node at (-2.75,0.72) {$D_-$};
\node at (3.72,0.78) {$D_+$};
\node at (-4.00,-0.62) {$\mathcal C$};
\node[label box] at (0,1.10) {$B$};
\end{tikzpicture}
\caption{Conditional one step crossing at a density bottleneck. The layer $B$ separates the two high density regions $D_-$ and $D_+$. Starting from a fixed $x\in B$, both algorithms are evaluated on the same event of landing in $D_+$. The point $y_*$ represents a closest point of $\overline{D_+}$ to $x$ (it need not belong to $D_+$ when $D_+$ is open), so the distance is $r_x=\operatorname{dist}(x,D_+)$. Because $D_+\subset\operatorname{int}(\mathcal C)$, projection does not change this landing event.}
\label{fig:bottleneck:geometry}
\end{figure}

Proposition~\ref{prop:bottleneck:minorization} quantifies how the larger proposal variance competes with the change in drift at a fixed stepsize. It first compares the crossing probabilities from a given $x\in B$, then gives a bound valid throughout $B$. Both comparisons use the same starting point, landing set $D_+$, and normalized stepsize $\eta$.

\begin{proposition}[Direct one step crossing comparison]
\label{prop:bottleneck:minorization}
Fix $x\in B$ with $a_{\rm n}(x)>1$ and $r_x:=\operatorname{dist}(x,D_+)>0$. Then, for every $\eta>0$,
\begin{align}
\label{eq:bottleneck:pointwise:ratio}
& \frac{\mathbb P_\eta^{\rm RALMC}(x,D_+)}{\mathbb P_\eta^{\rm PLMC}(x,D_+)}
\nonumber
\\
& \, \geq a_{\rm n}(x)^{-d/2}\exp\Bigg\{
\frac{r_x^2}{4\eta}\left(1-\frac1{a_{\rm n}(x)}\right)-\frac{\operatorname{diam}(\mathcal C)}2\|\nabla U(x)-\nabla U_0(x)\|
-\frac\eta4 a_{\rm n}(x)\|\nabla U_0(x)\|^2\Bigg\}.
\end{align}
For a uniform comparison on $B$, define
$$
\rho:=\inf_{x\in B}a_{\rm n}(x),\qquad
A:=\sup_{x\in B}a_{\rm n}(x),\qquad
r:=\operatorname{dist}(B,D_+).
$$
Suppose $1<\rho\leq A<\infty$, $r>0$, gradients are bounded on $B$. Then, for every $x\in B$ and $\eta>0$,
\begin{align}
\label{eq:bottleneck:finite:ratio}
& \frac{\mathbb P_\eta^{\rm RALMC}(x,D_+)}{\mathbb P_\eta^{\rm PLMC}(x,D_+)}
\nonumber
\\
& \, \geq A^{-d/2}\exp\Bigg\{
\frac{r^2}{4\eta}\left(1-\frac1\rho\right)-\frac{\operatorname{diam}(\mathcal C)}2
\sup_{z\in B}\|\nabla U(z)-\nabla U_0(z)\|
-\frac\eta4\sup_{z\in B}a_{\rm n}(z)\|\nabla U_0(z)\|^2\Bigg\}.
\end{align}
\end{proposition}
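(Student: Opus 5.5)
The plan is to compare the two Gaussian proposal densities pointwise on $D_+$ and then integrate. Since $D_+\subset\operatorname{int}(\mathcal C)$, both one-step probabilities reduce to unprojected Gaussian probabilities, as already noted before the statement:
$$
\mathbb P_\eta^{\rm RALMC}(x,D_+)=\int_{D_+}p_{\rm R}(y)\,dy,\qquad \mathbb P_\eta^{\rm PLMC}(x,D_+)=\int_{D_+}p_{\rm P}(y)\,dy .
$$
Write $a:=a_{\rm n}(x)$. Then $p_{\rm R}$ is the $\mathcal N(x-\eta a\nabla U_0(x),\,2\eta aI_d)$ density and $p_{\rm P}$ is the $\mathcal N(x-\eta\nabla U(x),\,2\eta I_d)$ density. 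Because $D_+$ has positive Lebesgue measure, the denominator is strictly positive, so the ratio is well defined. It therefore suffices to prove that $p_{\rm R}(y)/p_{\rm P}(y)$ is bounded below on $D_+$ by the right-hand side of \eqref{eq:bottleneck:pointwise:ratio}. Multiplying by $p_{\rm P}$ and integrating over $D_+$ then gives the claim.

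For the pointwise bound, set $v:=y-x$ with $y\in D_+$. Then
$$
\frac{p_{\rm R}(y)}{p_{\rm P}(y)}=a^{-d/2}\exp\Big\{\frac{\|v+\eta\nabla U(x)\|^2}{4\eta}-\frac{\|v+\eta a\nabla U_0(x)\|^2}{4\eta a}\Big\}.
$$
Expanding both squares, the exponent equals
$$
\frac{\|v\|^2}{4\eta}\Big(1-\frac1a\Big)+\frac12\langle v,\nabla U(x)-\nabla U_0(x)\rangle+\frac\eta4\|\nabla U(x)\|^2-\frac{\eta a}{4}\|\nabla U_0(x)\|^2 .
$$
I then bound each term from below:
\begin{enumerate}[(i)]
\item Drop the nonnegative term $\frac\eta4\|\nabla U(x)\|^2$.
\item Since $x,y\in\mathcal C$, we have $\|v\|\le\operatorname{diam}(\mathcal C)$. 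Cauchy--Schwarz then bounds the cross term below by $-\frac{\operatorname{diam}(\mathcal C)}2\|\nabla U(x)-\nabla U_0(x)\|$.
\item Since $a>1$, the coefficient $1-1/a$ is positive, and $\|v\|\ge \operatorname{dist}(x,D_+)=r_x$. So the first term is at least $\frac{r_x^2}{4\eta}(1-1/a)$.
\end{enumerate}
Together these give exactly \eqref{eq:bottleneck:pointwise:ratio}, uniformly in $y\in D_+$.

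For the uniform statement \eqref{eq:bottleneck:finite:ratio}, I would apply the pointwise bound and use monotonicity in each quantity:
\begin{enumerate}[(i)]
\item $a_{\rm n}(x)\le A$ gives $a_{\rm n}(x)^{-d/2}\ge A^{-d/2}$.
\item $a_{\rm n}(x)\ge\rho>1$ gives $1-1/a_{\rm n}(x)\ge 1-1/\rho>0$.
\item $r_x\ge\operatorname{dist}(B,D_+)=r$.
\item The gradient terms are replaced by their suprema over $B$, which are finite by the boundedness assumption.
\end{enumerate}
The only delicate point is keeping the sign of $1-1/a$ positive, so that the lower bound $\|v\|\ge r_x$ can be inserted; this is exactly where $a_{\rm n}>1$ is used. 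Otherwise the argument is a direct Gaussian computation, and I do not expect a substantial obstacle.
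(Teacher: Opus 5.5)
Your proposal is correct and follows the paper's proof essentially step for step. Both arguments reduce to unprojected Gaussian probabilities via $D_+\subset\operatorname{int}(\mathcal C)$ and expand the log-density ratio. They then drop $\frac\eta4\|\nabla U(x)\|^2$, bound the cross term by Cauchy--Schwarz with $\operatorname{diam}(\mathcal C)$, bound the quadratic term using $\|y-x\|\geq r_x$ and $a_{\rm n}(x)>1$, and integrate against $p_{\rm P}$ over $D_+$. The uniform bound then follows from the same monotonicity substitutions.
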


\begin{proof}
The proof is given in Appendix~\ref{sec:prop:bottleneck:minorization}.
\end{proof}

In~\eqref{eq:bottleneck:pointwise:ratio}, the positive term $r_x^2(1-1/a_{\rm n}(x))/(4\eta)$ measures the gain from the larger proposal variance, while $a_{\rm n}(x)^{-d/2}$ and the drift terms reduce the lower bound. Thus $a_{\rm n}(x)>1$ guarantees a crossing advantage for sufficiently small $\eta$, but this bound need not certify an advantage at every stepsize. Under the uniform assumptions, \eqref{eq:bottleneck:finite:ratio} guarantees a larger RALMC crossing probability for every $x\in B$ whenever
$$
\frac{r^2}{4\eta}\left(1-\frac1\rho\right)
>\frac d2\log A
+\frac{\operatorname{diam}(\mathcal C)}2\sup_{z\in B}\|\nabla U(z)-\nabla U_0(z)\|
+\frac\eta4\sup_{z\in B}a_{\rm n}(z)\|\nabla U_0(z)\|^2.
$$
The proposition therefore supplies a sufficient condition at a finite stepsize. Corollary~\ref{cor:bottleneck:minorization} addresses the remaining question: is its leading $1/\eta$ exponent sharp? For an open landing set, the answer is yes. The corollary determines the exact logarithmic rates of both probabilities and their ratio, unlike the proposition, it does not require $a_{\rm n}(x)>1$ to state those rates.

\begin{corollary}[Exact exponential crossing advantage]
\label{cor:bottleneck:minorization}
Suppose $D_+$ is a nonempty open subset of $\operatorname{int}(\mathcal C)$. Fix $x\in B$ and assume $r_x=\operatorname{dist}(x,D_+)>0$. Then
\begin{equation}\label{eq:bottleneck:exact:rates}
    \lim_{\eta\rightarrow0}
    \eta\log\mathbb P_\eta^{\rm RALMC}(x,D_+)
    =-\frac{r_x^2}{4a_{\rm n}(x)},\quad
    \lim_{\eta\rightarrow0}
    \eta\log\mathbb P_\eta^{\rm PLMC}(x,D_+)
    =-\frac{r_x^2}{4}.
\end{equation}
Consequently,
\begin{equation}\label{eq:bottleneck:probability:ratio}
    \lim_{\eta\rightarrow0}
    \eta\log
    \frac{\mathbb P_\eta^{\rm RALMC}(x,D_+)}
         {\mathbb P_\eta^{\rm PLMC}(x,D_+)}
    =\frac{r_x^2}{4}
      \left(1-\frac{1}{a_{\rm n}(x)}\right).
\end{equation}
In particular, $a_{\rm n}(x)>1$ is both necessary and sufficient for a strictly positive exponential rate in this same event comparison. Under the uniform assumptions of Proposition~\ref{prop:bottleneck:minorization}, the right hand side of~\eqref{eq:bottleneck:probability:ratio} is at least
$$
    \frac{r^2}{4}\left(1-\frac1\rho\right)>0,
    \qquad x\in B.
$$
\end{corollary}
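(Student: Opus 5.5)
The plan is to compute the small-$\eta$ asymptotics of each Gaussian probability separately. The ratio statement \eqref{eq:bottleneck:probability:ratio} then follows by subtracting the two limits in \eqref{eq:bottleneck:exact:rates}. Since $D_+\subset\operatorname{int}(\mathcal C)$, projection is irrelevant, and $\mathbb P_\eta^{\rm RALMC}(x,D_+)=\mathbb P(Z_{\rm R}(x)\in D_+)$. Here $Z_{\rm R}(x)\sim\mathcal N(m_\eta,2\eta a I_d)$, with $a:=a_{\rm n}(x)$ and $m_\eta:=x-\eta a\nabla U_0(x)$, so $m_\eta\to x$ as $\eta\to0$. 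The PLMC case is the same computation with $a=1$ and mean $x-\eta\nabla U(x)$. It therefore suffices to prove one general statement. For a fixed $a>0$ and means $m_\eta\to x$,
$$\lim_{\eta\to0}\eta\log\mathbb P\bigl(m_\eta+\sqrt{2\eta a}\,G\in D_+\bigr)=-\frac{r_x^2}{4a}.$$

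For the upper bound, note that $\operatorname{dist}(m_\eta,D_+)\geq r_x-|m_\eta-x|=:r_\eta\to r_x$. The event is contained in $\{\sqrt{2\eta a}\,\|G\|\geq r_\eta\}$. The chi tail bound $\mathbb P(\|G\|\geq u)\leq C_d(1+u)^{d}e^{-u^2/2}$ then gives
$$\eta\log\mathbb P\leq -\frac{r_\eta^2}{4a}+\eta\,O\!\bigl(\log(1+r_\eta/\sqrt\eta)\bigr),$$
and this tends to $-r_x^2/(4a)$.

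For the lower bound we use openness of $D_+$. Fix $\epsilon>0$ and pick $y\in D_+$ with $\|y-x\|<r_x+\epsilon$, together with a ball $B(y,\rho)\subset D_+$ where $\rho\leq\epsilon$. The probability is at least $\mathbb P\bigl(\sqrt{2\eta a}\,G\in B(y-m_\eta,\rho)\bigr)$. On this ball the Gaussian density $(4\pi\eta a)^{-d/2}e^{-\|z\|^2/(4\eta a)}$ is bounded below by its value at a point of norm $\|y-m_\eta\|+\rho$. Multiplying by the ball volume and taking $\eta\log$, the polynomial prefactors contribute $O(\eta\log\eta)\to0$. This gives $\liminf\eta\log\mathbb P\geq-(r_x+2\epsilon)^2/(4a)$, and letting $\epsilon\to0$ closes the bound. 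I expect this step to be the only delicate one. The point is that openness of $D_+$ is needed to have a ball of positive radius near the infimal distance, since $y_*$ itself may lie outside $D_+$.

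Applying the general statement with $a=a_{\rm n}(x)$ and with $a=1$ gives \eqref{eq:bottleneck:exact:rates}, and subtracting the two limits gives \eqref{eq:bottleneck:probability:ratio}. The right-hand side of \eqref{eq:bottleneck:probability:ratio} is positive exactly when $a_{\rm n}(x)>1$, because $r_x>0$; this is the necessity and sufficiency claim. Under the uniform assumptions we have $r_x\geq r=\operatorname{dist}(B,D_+)$ and $a_{\rm n}(x)\geq\rho>1$. Since $1-1/a$ is increasing in $a$, the right-hand side is at least $\frac{r^2}{4}(1-1/\rho)>0$.
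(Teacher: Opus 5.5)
Your proposal is correct and follows essentially the same argument as the paper. Both reduce to a single Gaussian proposal with fixed variance parameter $a$ and mean tending to $x$, obtain the $\liminf$ from a small ball inside the open set $D_+$ around a nearly closest point, obtain the $\limsup$ from the distance lower bound, and then subtract the limits and use monotonicity of $1-1/a$. The only difference is the upper bound: you use a chi tail estimate for $\|G\|$, whereas the paper bounds the probability by $|\mathcal C|$ times the supremum of the density over $D_+$ and so relies on boundedness of $\mathcal C$. Both work, and yours does not need $D_+$ to be bounded.
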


\begin{proof}
The proof is given in Appendix~\ref{sec:cor:bottleneck:minorization}.
\end{proof}

Equation~\eqref{eq:bottleneck:exact:rates} shows that both crossing probabilities tend to zero as $\eta\to0$, since the landing set remains a positive distance from $x$. When $a_{\rm n}(x)>1$, the RALMC probability decays more slowly, and~\eqref{eq:bottleneck:probability:ratio} gives the precise exponential growth rate of their ratio. When $a_{\rm n}(x)<1$, the ordering is reversed at this scale, when $a_{\rm n}(x)=1$, the exponential rate of the ratio is zero, so lower order terms determine the comparison. Thus the corollary establishes sharpness at the logarithmic scale, without identifying probability prefactors.

Together, Proposition~\ref{prop:bottleneck:minorization} and Corollary~\ref{cor:bottleneck:minorization} quantify a local mechanism: after the chain reaches a point of $B$ with $a_{\rm n}(x)>1$, normalized RALMC is more likely to land in the opposite region in one sufficiently small step. A large probability ratio does not imply a large absolute crossing probability. Nor do these conditional estimates account for the time needed to reach $B$ or for subsequent returns, comparisons of global mixing times or mean transition times require additional analysis.

\section{Numerical Experiments}

The preceding sections show that RALMC \eqref{eqn:RALMC} applies to constrained sampling with possibly non-differentiable densities and can improve over PLMC \cite{bubeck2018sampling} in the regimes described above. This section compares RALMC with PLMC numerically. Recall that PLMC consists of a Langevin step followed by projection:
\begin{align}\label{alg:PLMC}
    x_{k+1} = \mathcal{P}_{\mathcal{C}}\left(x_k-\eta\nabla U(x_k)+\sqrt{2\eta}\xi_{k+1}\right),
\end{align}
where $\nabla U(x_k)$ may fail to be continuous at some points in $\mathcal{C}$. In the examples below these points form a small exceptional set, and PLMC is used as a numerical benchmark.

We compare the convergence behavior of PLMC and RALMC for several choices of the reference potential $U_0$.

\subsection{Truncated Laplace distribution sampling}
We first sample from the joint distribution of three independent standard Laplace variables truncated by a ball in $\mathbb{R}^3$:
\begin{align}\label{exp:laplace}
    \pi(x)\propto e^{-(\vert x_1\vert+\vert x_2\vert+\vert x_3\vert)},\quad x=(x_1, x_2, x_3)\in\mathcal{C}:=\left\{z\in \mathbb{R}^3: \Vert z\Vert^2\leq 4\right\}.
\end{align}

Reference samples are generated by rejection sampling. We sample each coordinate independently from the one dimensional standard Laplace distribution and discard points outside the constraint set until the required sample size is reached. For RALMC, we choose the zero reference potential $U_0\equiv0$. Since
$$
    U(x)=\|x\|_1
    =|x_1|+|x_2|+|x_3|,
$$
the diffusion coefficient and scaling factor are
$$
    \sigma(x)=e^{\|x\|_1/2},
    \qquad
    a(x)=\sigma(x)^2=e^{\|x\|_1}.
$$
The function $\sigma$ is Lipschitz on $\mathcal C$ and is bounded above and away from zero there. Hence Assumption~\ref{assump:elliptic} and the coefficient regularity required in Theorem~\ref{thm:U:0:const:strong} hold, so Corollary~\ref{cor:complexity:const} applies.

For both RALMC and PLMC, we generated $n = 600$ samples from the same initial point $x_0\in\mathcal{C}$, where $x_0$ was drawn uniformly from $\mathcal{C}$. Each run used $1000$ epochs with stepsize $\eta = 2\times 10^{-3}$, and the experiment was repeated over $100$ runs to obtain one standard deviation intervals. Figure~\ref{fig:visualized_laplace} illustrates the first two coordinates of the target distribution in the left panel, and kernel density estimates from the RALMC and PLMC samples in the middle and right panels. Both algorithms match the target distribution visually.

\begin{figure}[!htbp]
\centering
 \begin{subfigure}{0.3\textwidth}
\centering \includegraphics[width=\linewidth]{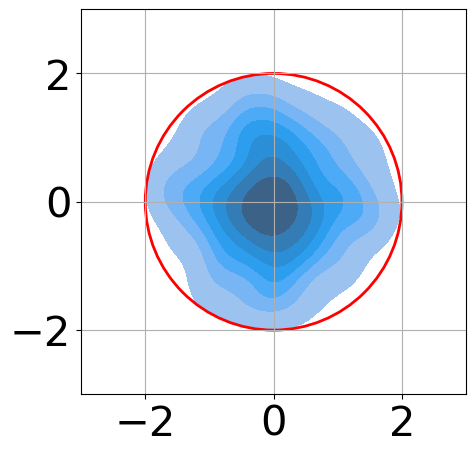} \caption{Target distribution}
 \end{subfigure}%
\hspace{0.5cm}
 \begin{subfigure}{0.3\textwidth}
\centering \includegraphics[width=\linewidth]{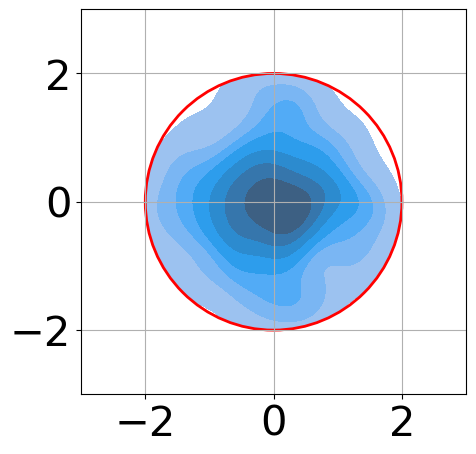} \caption{RALMC}
 \end{subfigure}%
\hspace{0.5cm}
 \begin{subfigure}{0.3\textwidth}
\centering \includegraphics[width=\linewidth]{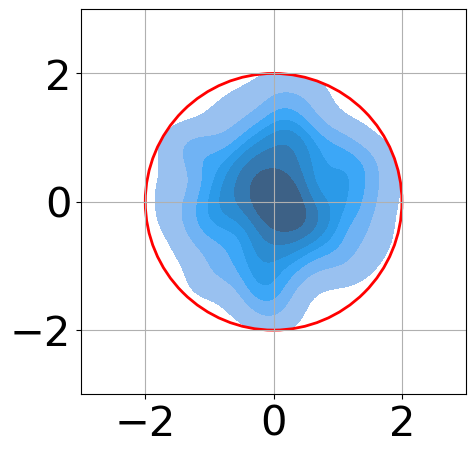} \caption{PLMC}
 \end{subfigure}
\caption{Visualized final sampled density plots for the first 2 dimensions in ball constraint}
 \label{fig:visualized_laplace}
\end{figure}
\FloatBarrier

Figure~\ref{fig:2wasserball_laplace} reports convergence in $\mathcal W_2$ distance. The solid line is the mean over $100$ independent runs, and the shaded region is the one standard deviation interval. The green curve corresponds to RALMC and the blue curve to PLMC. The initial point is uniform on the ball, so it is typically away from the high density region around the origin. For the unprojected RALMC noise increment at such a point,
$$
\begin{aligned}
    \mathbb E\left[
      \left\|\sqrt{2\eta a(x)}\,\xi_{k+1}\right\|^2
      \,\middle|\, x_k=x
    \right]
    &=2d\eta a(x)
      =2d\eta e^{\|x\|_1}.
\end{aligned}
$$
Thus particles in the low density outer part of the ball receive larger updates, while $a(x)\simeq1$ near the high density center. This adaptive redistribution is consistent with the steep initial decrease of the RALMC $\mathcal W_2$ curve. In this experiment the relevant bottleneck mechanism is movement through low density outer regions, rather than passage through a geometrically narrow constraint.

\begin{figure}[!htbp]
\centering \includegraphics[width=0.46\textwidth]{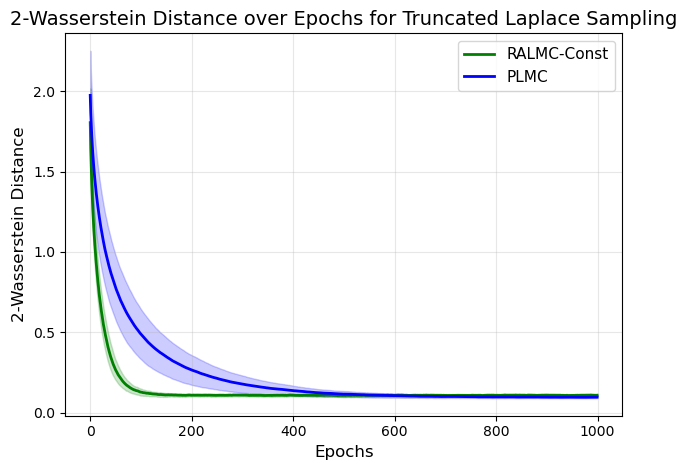} \caption{Convergence of RALMC and PLMC in $\mathcal W_2$ distance for a truncated Laplace distribution within a ball constraint}
 \label{fig:2wasserball_laplace}
\end{figure}
\FloatBarrier

\subsection{Truncated Gaussian Mixture Model}
\label{sec:numerical:GMM}

To further empirically illustrate the crossing advantage of RALMC over PLMC in Corollary~\ref{cor:bottleneck:minorization}, we use a truncated two component Gaussian mixture model as a synthetic multimodal target distribution. We follow a similar setup in \cite{DFTWZ2025} to sample from a truncated two component Gaussian mixture model on $\mathcal{C}:= \{x \in \mathbb{R}^3 : \|x\| \leq 1\}$. Consider two Gaussian distributions $\mathcal{N}(\mu_{1},\Sigma_{1})$ and $\mathcal{N}(\mu_{2},\Sigma_{2})$ with

$$
\mu_1=[\mu,0,0]^{\top},
\qquad
\mu_2=[-\mu,0,0]^{\top},
\qquad
\Sigma_1=\sigma_1^2 I_3,\qquad
\Sigma_2=\sigma_2^2 I_3,
$$
where $\mu,\sigma_1,\sigma_2>0$ and $I_d$ denotes the $d\times d$ identity matrix. The two components are symmetrically located along the first coordinate and have isotropic covariance matrices. This gives a controlled multimodal target. The unnormalized mixture density is

\begin{equation}
\label{eq:gaussian:mix}
\tilde p(x)
=
w_1\varphi(x;\mu_1,\Sigma_1)
+
w_2\varphi(x;\mu_2,\Sigma_2),
\end{equation}
where $w_1,w_2>0$ are mixture weights that satisfy $w_1+w_2=1$ and

$$
\varphi(x,\mu,\Sigma)
:=
\frac{1}{(2\pi)^{\frac{3}{2}}|\Sigma|^{1/2}}
\exp\left(
-\frac12 (x-\mu)^\top \Sigma^{-1}(x-\mu)
\right)
$$
denotes the Gaussian density on $\mathbb{R}^3$. To study sampling under constrained support, we further truncate the mixture to a constraint set $\mathcal{C}$. The truncated Gaussian mixture distribution is defined by

$$
p(x)
=\frac{1}{Z_\mathcal{C}}
\Bigl(
w_1\varphi(x,\mu_1,\Sigma_1)
+
w_2\varphi(x,\mu_2,\Sigma_2)
\Bigl)
\mathbf{1}_{\mathcal{C}}(x),
$$
where
$$
Z_\mathcal{C}
=
\int_{\mathcal{C}}
\Bigl(
w_1\varphi(x,\mu_1,\Sigma_1)
+
w_2\varphi(x,\mu_2,\Sigma_2)
\Bigr)dx
$$
is the normalization constant. The potential function defined by

$$
U(x):=-\log p(x),
\qquad x\in \mathcal{C},
$$
has two separated low energy regions associated with the two Gaussian components, located around $\mu_1$ and $\mu_2$, with a low density barrier between them. This target is therefore suitable for evaluating whether a sampling algorithm can efficiently move between separated modes rather than remaining trapped in a single local region.
\begin{figure}[!htbp]
\centering \includegraphics[width=0.56\textwidth]{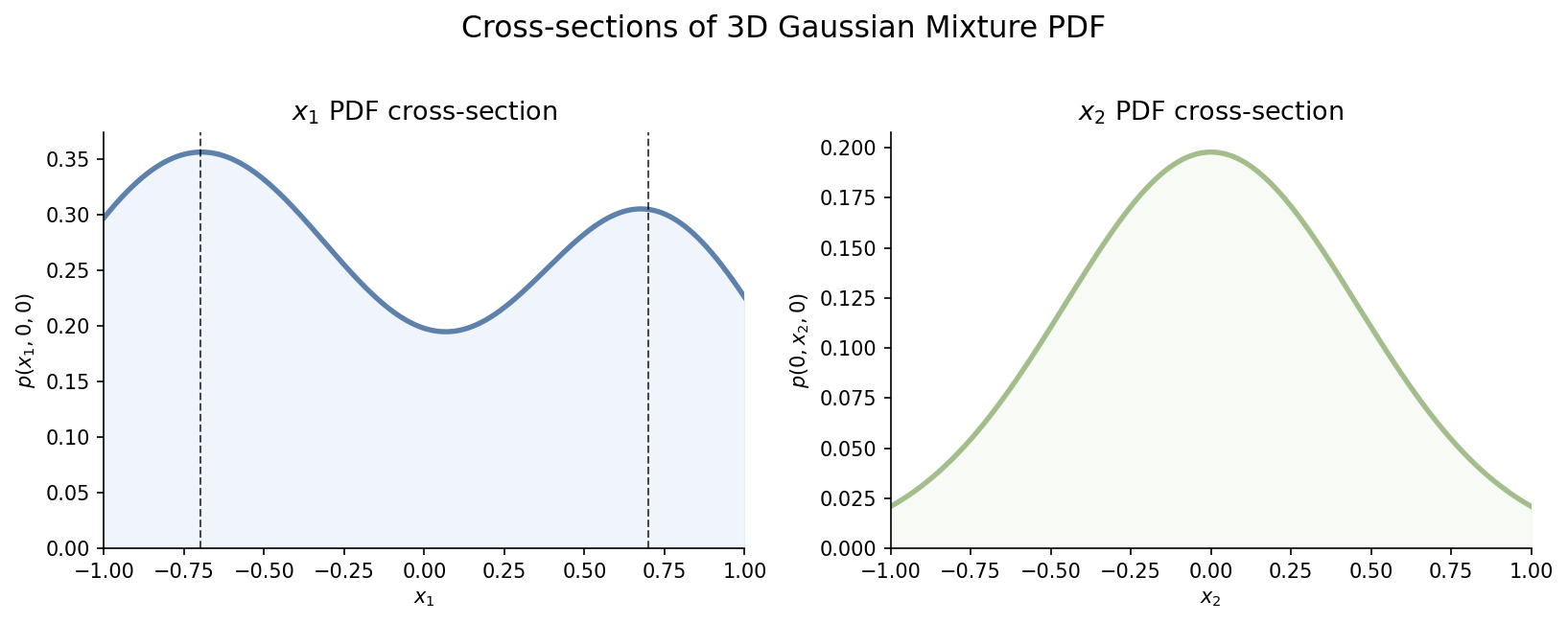} \caption{Cross sections of the target truncated Gaussian mixture model along the first two dimensions}
 \label{fig:GSM_crosssection}
\end{figure}
\FloatBarrier

In this experiment, we set $\mu = 0.7$, $\sigma_1 = 0.4$, $\sigma_2 = 0.5$, and $w_1=0.3$. Figure~\ref{fig:GSM_crosssection} shows the double well shape of the truncated Gaussian mixture through cross sections along the first two coordinates. We compare PLMC with RALMC using the constant reference function $U_0\equiv0$. Inside $\mathcal C$,
$$
    a(x)=e^{U(x)}=\frac{1}{p(x)}.
$$
Let $x_{\rm mode}$ lie near a mode and let $x_{\rm val}$ lie in the valley between the modes. Then
\begin{equation}\label{eq:GMM:mobility:ratio}
    \frac{a(x_{\rm val})}{a(x_{\rm mode})}
    =\frac{p(x_{\rm mode})}{p(x_{\rm val})}>1,
    \qquad
    \frac{\sigma(x_{\rm val})}{\sigma(x_{\rm mode})}
    =\sqrt{\frac{p(x_{\rm mode})}{p(x_{\rm val})}}>1.
\end{equation}
The first cross section in Figure~\ref{fig:GSM_crosssection} has modal heights of approximately $0.30$-$0.35$ and a central valley height of approximately $0.20$. The corresponding scaling factor ratio is therefore about $1.5$-$1.8$, and the proposal standard deviation in the valley is about $1.2$-$1.3$ times its value near a mode. This larger scaling factor in the valley reduces the time spent in the low density separator once it is reached and promotes movement between the two modal regions. The identity \eqref{eq:constant:mobility:density:cancellation} gives the associated continuous time conductance interpretation.

To match the local comparison above, we define the low probability valley $B=\{(x_1,x_2,x_3)\in\mathcal{C}:-0.25\leq x_1\leq0.25\}$ and the region around the left mode $D^+:=\{(x_1,x_2,x_3)\in\mathcal{C}:x_1<-0.5\}$. Specifically, RALMC with a constant reference is
$$
    x_{k+1}^{\rm RALMC}
    =\mathcal{P}_{\mathcal C}\left(
        x_k^{\rm RALMC}
        +\sqrt{2\eta a_{\rm n}\left(x_k^{\rm RALMC}\right)}\,\xi_{k+1}
      \right),
    \qquad \xi_{k+1}\sim\mathcal N(0,I_d),
$$
where
$$
    a_{\rm n}\left(x_k^{\rm RALMC}\right)
    =\frac{3e^{U(x_k^{\rm RALMC})}}{4\pi},
$$
because $\mathcal C$ is the three dimensional unit ball and $p=e^{-U}$ is normalized on $\mathcal C$. 

For each method, this numerical experiment starts with $600$ samples from the uniform distribution in $B$ and iterates for a maximal $K=1000$ epochs in 100 independent runs. We quantify the crossing advantage of RALMC with a constant reference function through the first entering iteration $\tau_{D^+}$ and the transition numbers $N_{D^+}$:
\begin{align}
    \bar\tau_{D^+}&:=\frac{1}{n}\sum_{i=1}^n\tau_{D^+}^{i},\quad\text{where}\quad \tau_{D^+}^{i}:=\min\left\{k\in\mathbb{N}:x^{(i)}_k \in D^+\right\}.\\ N_{D^+}&:=\sum_{i=1}^{n}
\mathbf{1}_{\left\{\tau_{D^+}^{(i)}\leq K\right\}},
\end{align}
where $x^{(i)}_k$ can be interpret as the $i$-th sample in the $k$-th iteration of RALMC or the PLMC.

\begin{figure}[!htbp]
\centering
 \begin{subfigure}{0.3\textwidth}
\centering \includegraphics[width=\linewidth]{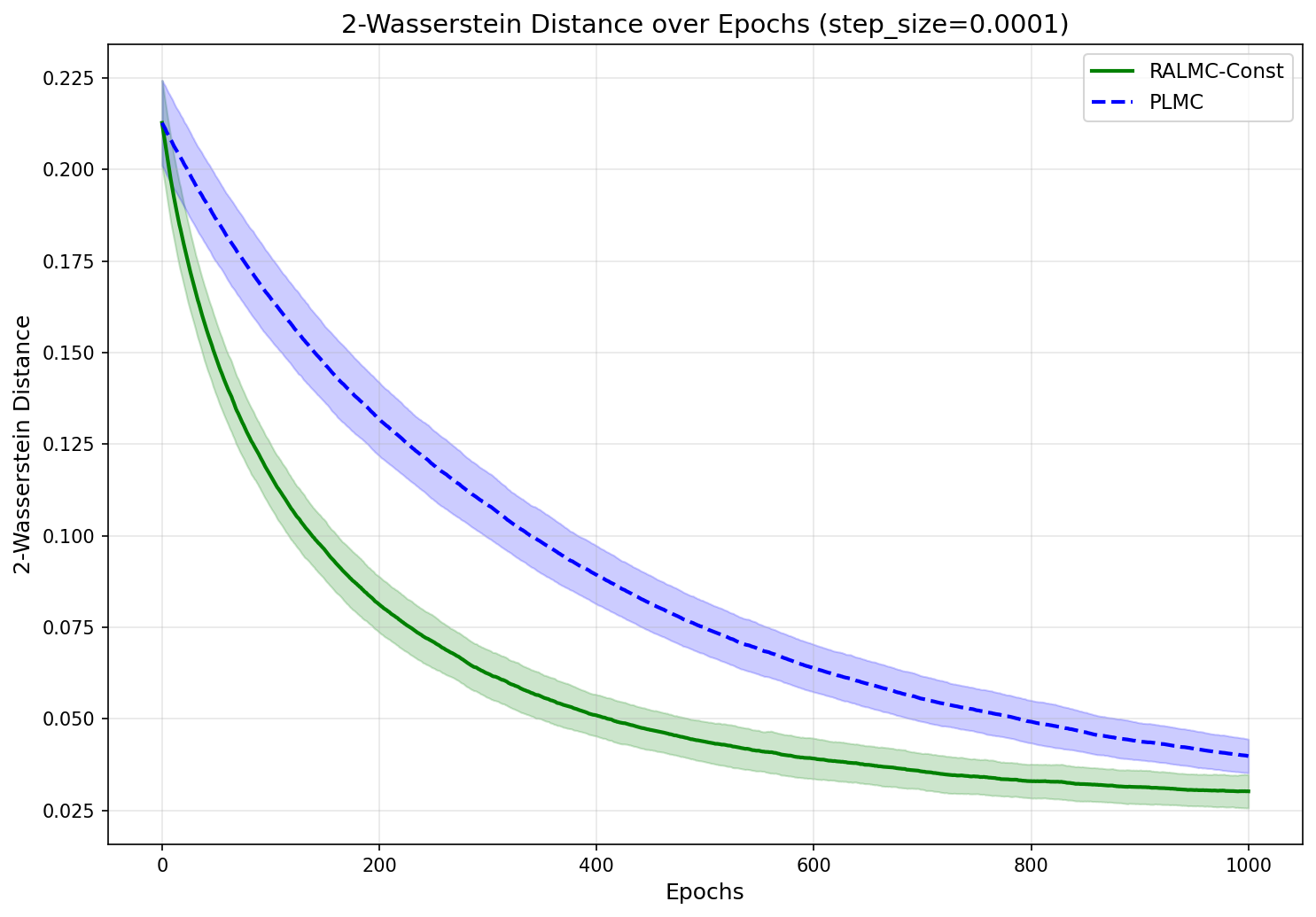} \caption{$\eta=10^{-4}$}
 \end{subfigure}%
\hspace{0.5cm}
 \begin{subfigure}{0.3\textwidth}
\centering \includegraphics[width=\linewidth]{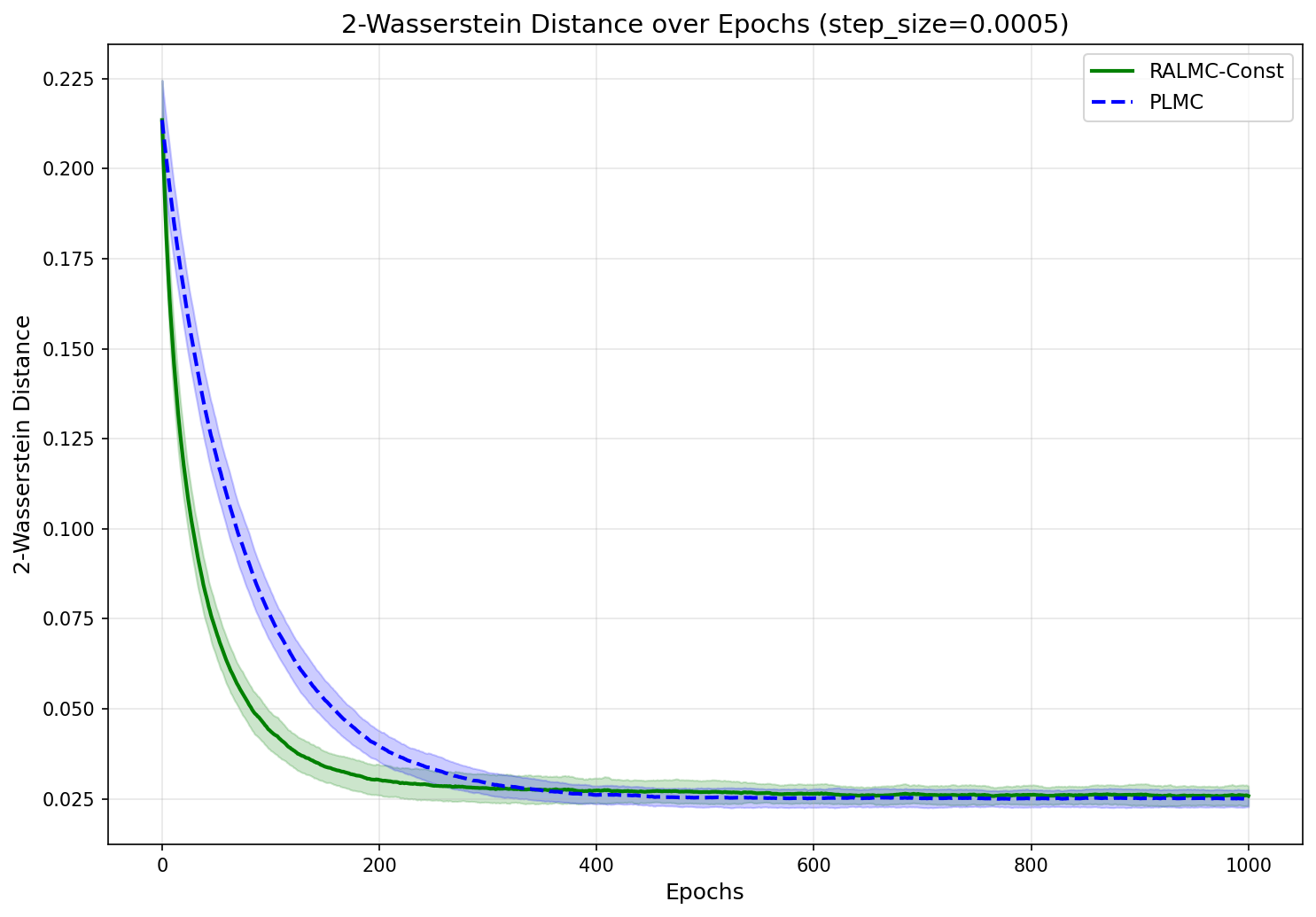} \caption{$\eta=5\times10^{-4}$}
 \end{subfigure}%
\hspace{0.5cm}
 \begin{subfigure}{0.3\textwidth}
\centering \includegraphics[width=\linewidth]{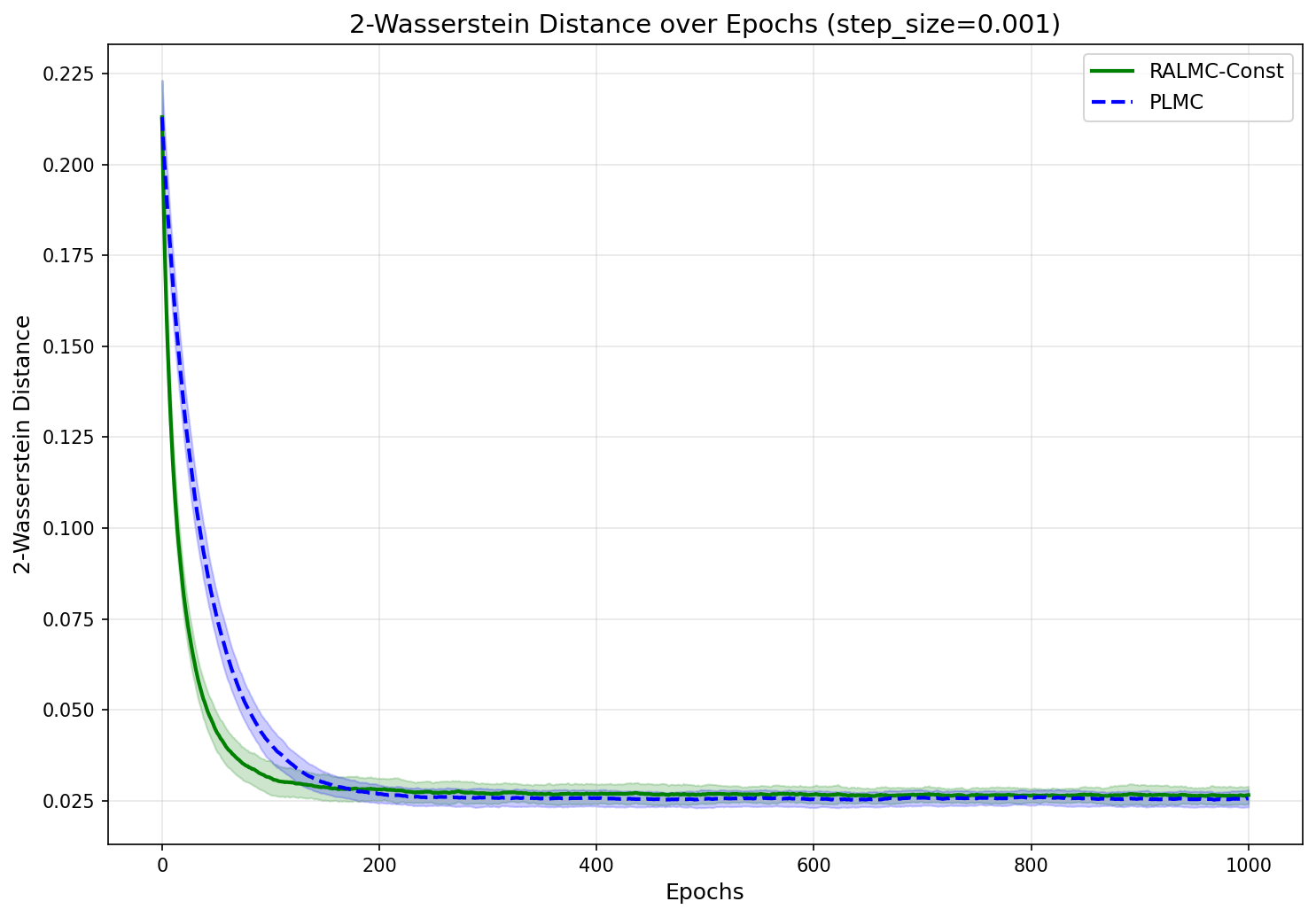} \caption{$\eta=10^{-3}$}
 \end{subfigure}
\caption{Performance of RALMC with a constant potential and PLMC, measured by the $\mathcal W_2$ distance to the target Gaussian mixture model, for different stepsizes $\eta$ over $1000$ epochs.}
 \label{fig:GSM_dist}
\end{figure}
\FloatBarrier

Figure~\ref{fig:GSM_dist} reports the 2-Wasserstein distance between the algorithms and the target distribution across three stepsizes $\eta\in\{10^{-4},5\times10^{-4},10^{-3}\}$. We can observe that RALMC with a constant reference converges faster than PLMC, which is consistent with the scaling factor ratio in \eqref{eq:GMM:mobility:ratio}. An additive shift of $U$ rescales every RALMC proposal by the same constant, so a comparison at the same nominal stepsize also contains a global time rescaling. In contrast, the ratios in \eqref{eq:GMM:mobility:ratio} are invariant under such a shift and isolate the state dependent acceleration across the density valley.

\begin{table}[htbp]
\centering
\begin{tabular}{cccc}
\toprule
Stepsize & Method & \# of Entered $D^+$ & Enter epoch \\
\midrule
$1\times 10^{-4}$ & RALMC-Const & $293.00 \pm 12.36$ & $712.15 \pm 14.13$ \\
 & PLMC & $204.79 \pm 11.06$ & $834.47 \pm 10.29$ \\
\addlinespace
$5\times 10^{-4}$ & RALMC-Const & $519.67 \pm 8.70$ & $361.23 \pm 14.62$ \\
 & PLMC & $427.95 \pm 11.95$ & $503.66 \pm 15.44$ \\
\addlinespace
$1\times 10^{-3}$ & RALMC-Const & $581.70 \pm 4.21$ & $219.32 \pm 10.62$ \\
 & PLMC & $510.79 \pm 8.77$ & $362.68 \pm 15.49$ \\
\bottomrule
\end{tabular}
\caption{First enter performance from $B=\{x\in\mathcal{C}:-0.25\leq x_1\leq0.25\}$ to $D^+=\{x\in\mathcal{C}:x_1<-0.5\}$. Results are mean $\pm$ standard deviation over independent runs. Paths that do not enter $D^+$ are censored at the terminal epoch.}\label{tab:first-enter-statistics}
\end{table}

Table~\ref{tab:first-enter-statistics} reports the enter performance across three stepsizes $\eta\in\{10^{-4},5\times10^{-4},10^{-3}\}$ for each method. Among all the three choices, we can observe that $\bar\tau_{D^+}^{\rm RALMC}<\bar\tau_{D^+}^{\rm PLMC}$ and $N_{D^+}^{\rm RALMC}>N_{D^+}^{\rm PLMC}.$ It demonstrates well that the RALMC with a constant reference can travel across the bottleneck at a higher probability and in a shorter time.

\subsection{Constrained Bayesian logistic regression on real data}

We next evaluate the algorithms on binary classification problems through constrained Bayesian logistic regression. We choose the constraint domain $\mathcal{C}:=\{z\in \mathbb{R}^d: \Vert z\Vert^2\leq 4\}$, so the constrained model corresponds to Bayesian ridge logistic regression.

Suppose we have access to a dataset $Z=\{z_j\}_{j=1}^{n}$ where $z_j=\left(X_j, y_j\right), X_j \in \mathbb{R}^d$ are the features and $y_j \in\{0,1\}$ are the labels with the assumption that $X_j$ are independent and the probability distribution of $y_j$ given $X_j$ and the regression coefficients $\beta \in \mathbb{R}^d$ are given by
\begin{equation}
\mathbb P\left(y_j=1 \mid X_j, \beta\right)=\frac{1}{1+e^{-\beta^{\top} X_j}}.
\end{equation}
Then by the Baye's rule, the goal of the constrained Bayesian logistic regression becomes to sample from the posterior distribution $\pi(\beta) \propto e^{-U(\beta)}\mathbf{1}_\mathcal{C}$ with:
\begin{align}
\label{eq:potential of BLR}
U(\beta)&:=-\left(\sum_{j=1}^{n} y_j\log p\left(y_j \mid X_j, \beta\right)+(1-y_j)\log (1-p\left(y_j \mid X_j, \beta)\right)\right)-\log p(\beta)\nonumber\\
&=\left(\sum_{j=1}^{n} y_j\log \left(1+e^{-\beta^{\top} X_j}\right)+(1-y_j)\log \left(1+e^{\beta^{\top} X_j}\right)\right)-\log p(\beta),
\end{align}
where $\log p(\beta)$ is denoted as the prior density function, omitting the normalization. In our experiments, we choose three different non-differentiable $p(\beta)$ as follows:
\begin{enumerate}
 \item Laplace prior: $\log p(\beta)=-\lambda\sum_{i=1}^d\vert \beta_i\vert$.
 \item Smoothly clipped absolute deviation (SCAD) prior: $\log p(\beta)=-\sum_{i=1}^d g_{\lambda,a}(\beta_i)$, where $a>1$ and
\begin{align}\label{eqn:SCAD}
g_{\lambda,a}(x) =
\begin{cases}
\lambda|x| & \text{if } |x| \leq \lambda, \\
\displaystyle \frac{2a\lambda|x|-x^2-\lambda^2}{2(a-1)} & \text{if } \lambda < |x| \leq a\lambda, \\
\displaystyle \frac{\lambda^2(a+1)}{2} & \text{otherwise.}
\end{cases}
\end{align}
 \item Minimax concave penalty (MCP) prior: $\log p(\beta)=-\sum_{i=1}^d h_{\lambda,a}(\beta_i)$, where $a>1$ and
\begin{align}\label{eqn:MCP}
h_{\lambda,a}(x) =
\begin{cases}
\lambda|x|-\frac{x^2}{2a} & \text{if } |x| \leq a\lambda, \\
\displaystyle \frac{a\lambda^2}{2} & \text{otherwise.}
\end{cases}
\end{align}
\end{enumerate}
Here, the SCAD and MCP priors are adapted from the corresponding regularizers in the literature \cite{Zhang10,Fan01}.

From~\eqref{eq:potential of BLR}, the potential decomposes into the differentiable log likelihood term
$$
f(\beta) :=
\sum_{j=1}^{n}
\log \left(1+e^{-\beta^{\top} X_j}\right),
$$
and the non-differentiable log prior term $\log p(\beta)$. We therefore choose the reference only for the non-differentiable part and keep the differentiable part unchanged. The experiments use the constant reference $U_0(\beta)\equiv0$ and the Gaussian smoothing reference
$$
\tilde{U}_\mu(\beta)
=
\frac{1}{N_0}\sum_{i=1}^{N_0}\log p(\beta+\mu \xi_i),
$$
where the $\xi_i$ are i.i.d. $\mathcal{N}(0,I_d)$, $N_0=300$, and $\mu = 0.5$.

We conducted experiments using RALMC with Gaussian smoothing (RALMC-GS), RALMC with constant reference potential (RALMC-Const), and PLMC for the constrained Bayesian logistic regression with three different prior distributions on the MAGIC Gamma Telescope dataset \footnote{The MAGIC Gamma Telescope dataset is publicly available from: \url{https://archive.ics.uci.edu/ml/datasets/magic+gamma+telescope} }and the Breast Cancer Wisconsin (Diagnostic) dataset \footnote{The Breast Cancer Wisconsin (Diagnostic) dataset is publicly available from: \url{https://archive.ics.uci.edu/dataset/17/breast+cancer+wisconsin+diagnostic}.}. The MAGIC Gamma Telescope dataset is a binary classification benchmark used to distinguish high energy gamma rays from background hadron showers. It comprises $n=19,020$ complete simulated instances, consisting of $12,332$ gamma and $6,688$ hadron samples. Each instance is defined by $d=10$ continuous numeric features. The Breast Cancer Wisconsin (Diagnostic) dataset is a standard benchmark for binary classification, used to distinguish between malignant and benign breast masses. It contains $n=569$ complete samples ($357$ benign, $212$ malignant), each defined by $d=30$ continuous features. For each dataset, we split $80\%$ of the whole dataset as the training set and the rest as the test set. To eliminate the influence that features are measured in entirely different units, we apply standard normalization to every feature for each dataset before training. Accuracy is the standard metric for evaluating binary classification performance. While a formal theoretical analysis of accuracy convergence is beyond the scope of this work, it still serves as a critical empirical indicator of algorithmic efficacy. To this end, we evaluate and compare the test set accuracy convergence for the three algorithms in the following experiments.

\begin{figure}[!htbp]
\centering
 \begin{subfigure}{0.28\textwidth}
\centering \includegraphics[width=\linewidth]{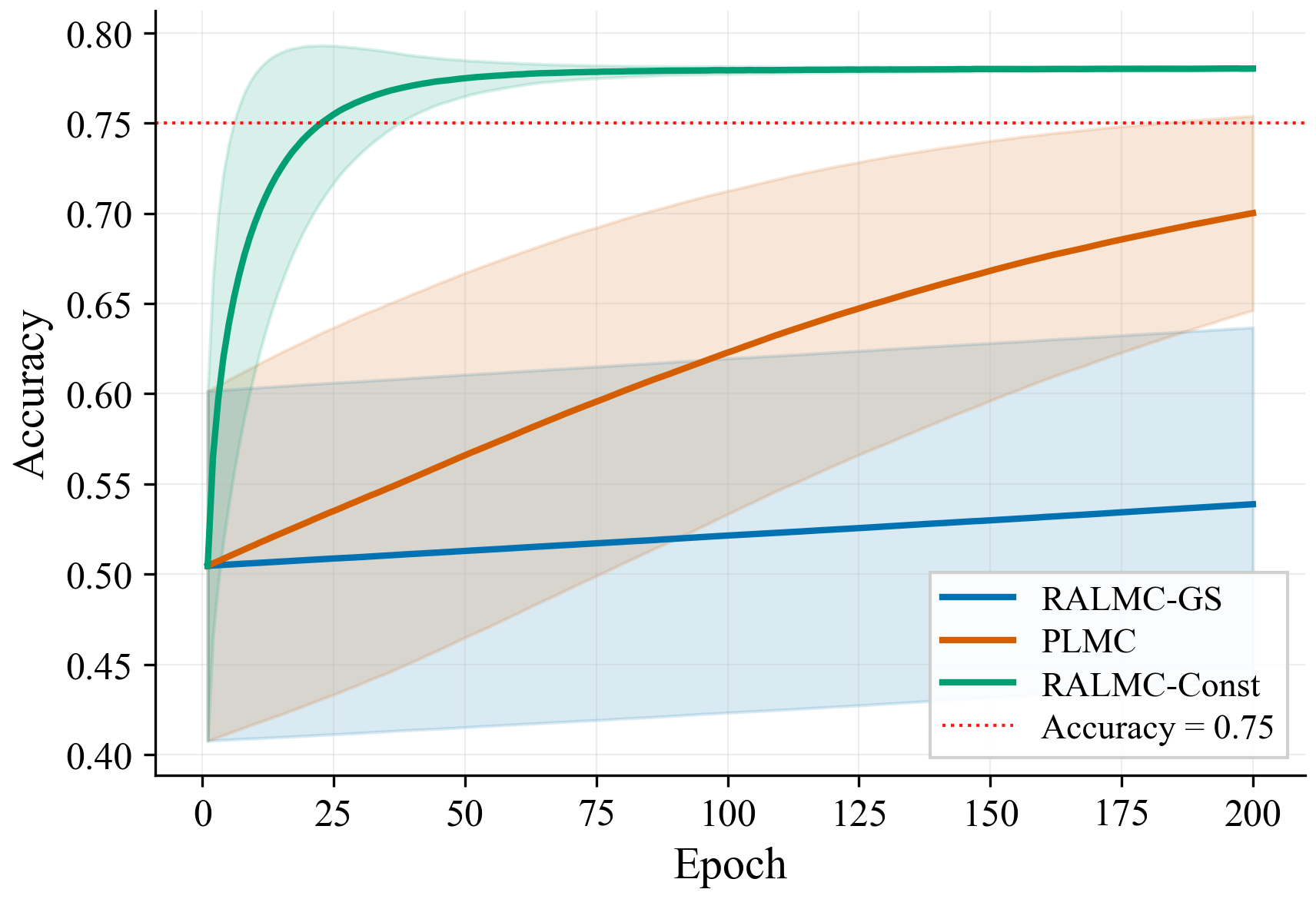} \caption{Laplace prior}
 \end{subfigure}%
\hspace{0.5cm}
 \begin{subfigure}{0.28\textwidth}
\centering \includegraphics[width=\linewidth]{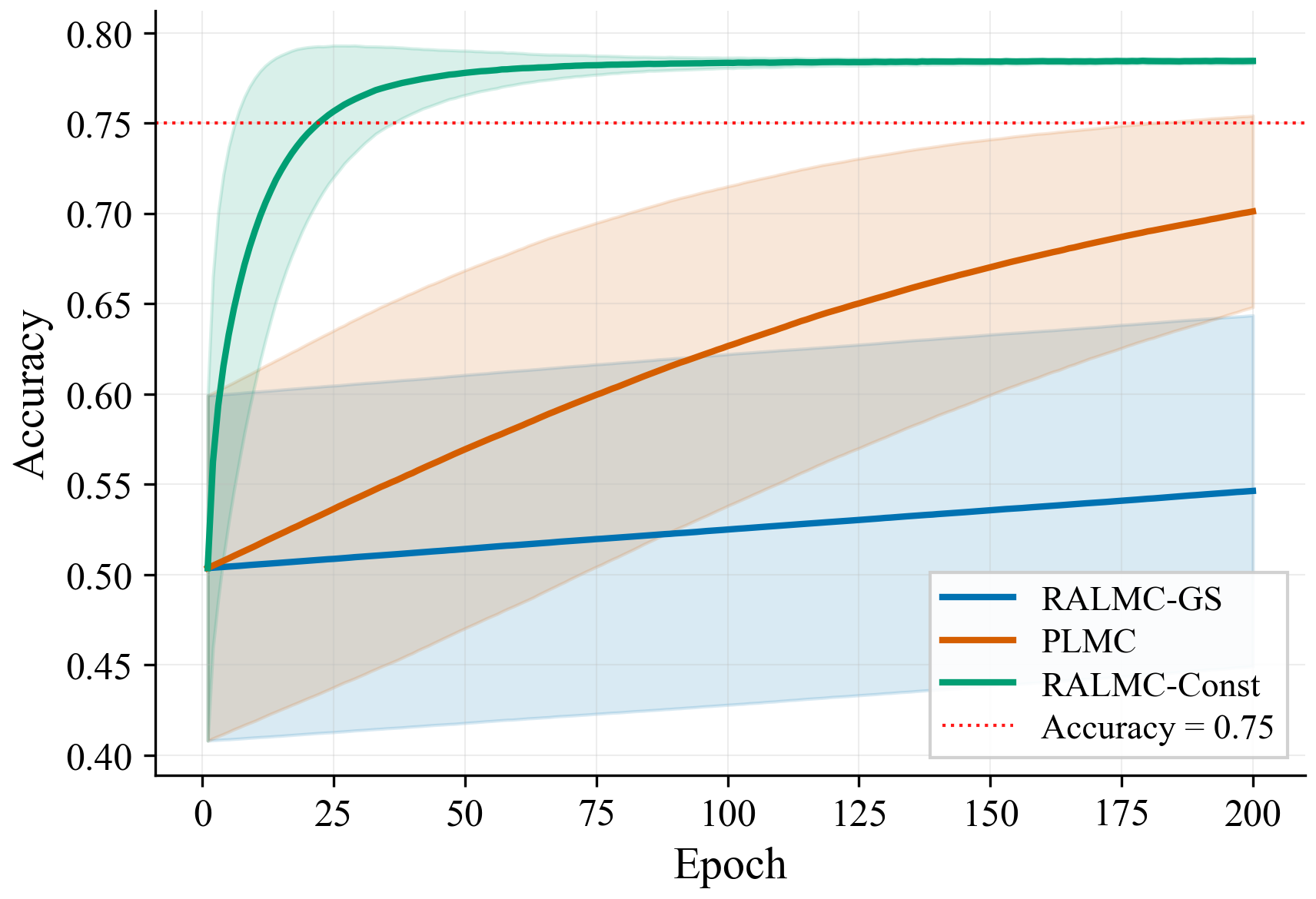} \caption{SCAD prior}
 \end{subfigure}%
\hspace{0.5cm}
 \begin{subfigure}{0.28\textwidth}
\centering \includegraphics[width=\linewidth]{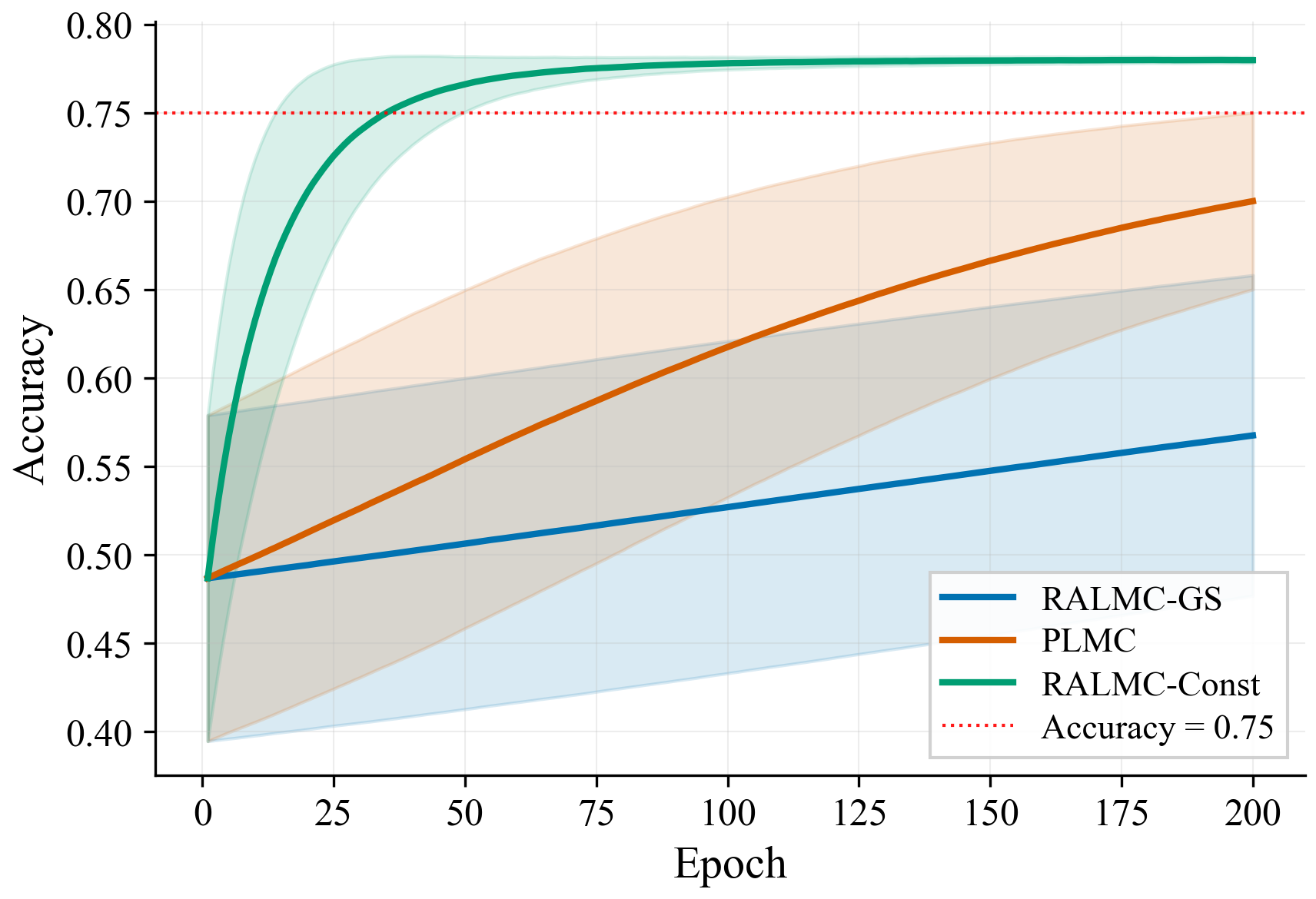} \caption{MCP prior}
 \end{subfigure}
\captionsetup{skip=4pt} \caption{Performance on accuracy of constrained Bayesian logistic regression under different prior distributions through RALMC with Gaussian smoothing, RALMC with constant potential and PLMC for the test set of the MAGIC Gamma Telescope dataset using the stepsize $\eta = 1\times10^{-6}$ in $200$ epochs}
 \label{fig:small_BLR_GS_tele_accuracy}
\end{figure}
\FloatBarrier For the MAGIC Gamma Telescope dataset, we set $\lambda = 1$ and $a = 1.5$, and evaluate the three algorithms at a larger stepsize $\eta = 5\times10^{-4}$ and a smaller stepsize $\eta = 1\times10^{-6}$. As shown in Figure~\ref{fig:small_BLR_GS_tele_accuracy}, RALMC-Const gives the fastest accuracy improvement for all three non-differentiable priors at the smaller stepsize. Figure~\ref{fig:large_BLR_GS_tele_accuracy} shows that RALMC-GS gives the best performance at the larger stepsize. This behavior is consistent with the scaling in RALMC-Const, where positive potential values amplify the effective update size and may cause instability unless the nominal stepsize is small. In contrast, RALMC-GS remains stable for larger stepsizes because the approximation error $U-\hat{U}_\mu$ is controlled through the smoothing parameter $\mu$. The same pattern appears on the Breast Cancer Wisconsin (Diagnostic) dataset in Figures~\ref{fig:small_BLR_GS_bc_accuracy} and~\ref{fig:large_BLR_GS_bc_accuracy}. For this dataset, we set $\lambda = 0.5$, use $\eta = 5\times10^{-4}$ as the smaller stepsize and $\eta = 2\times10^{-2}$ as the larger stepsize, and keep the remaining parameters unchanged.

\begin{figure}[!htbp]
\centering
 \begin{subfigure}{0.3\textwidth}
\centering \includegraphics[width=\linewidth]{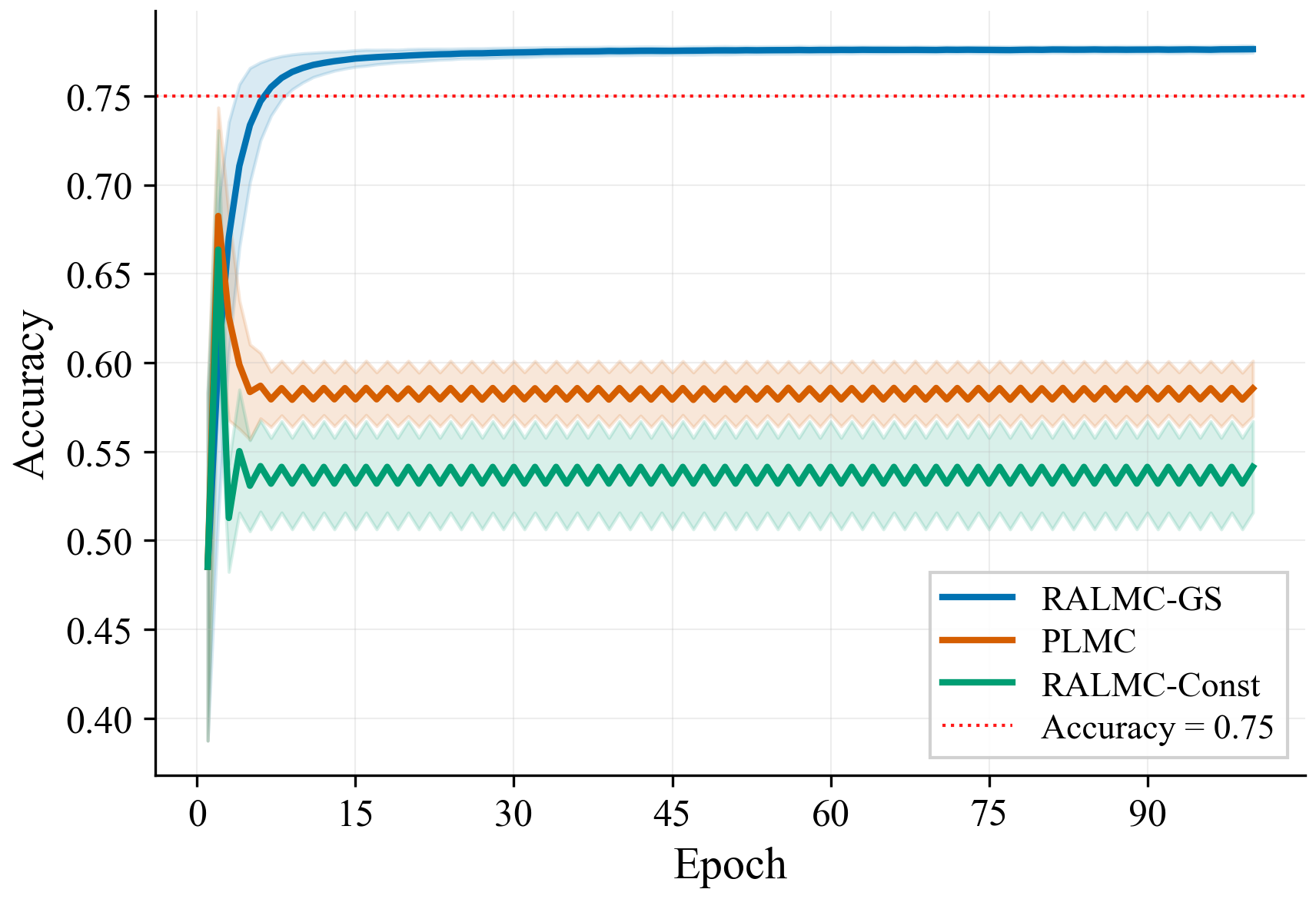} \caption{Laplace prior}
 \end{subfigure}%
\hspace{0.5cm}
 \begin{subfigure}{0.3\textwidth}
\centering \includegraphics[width=\linewidth]{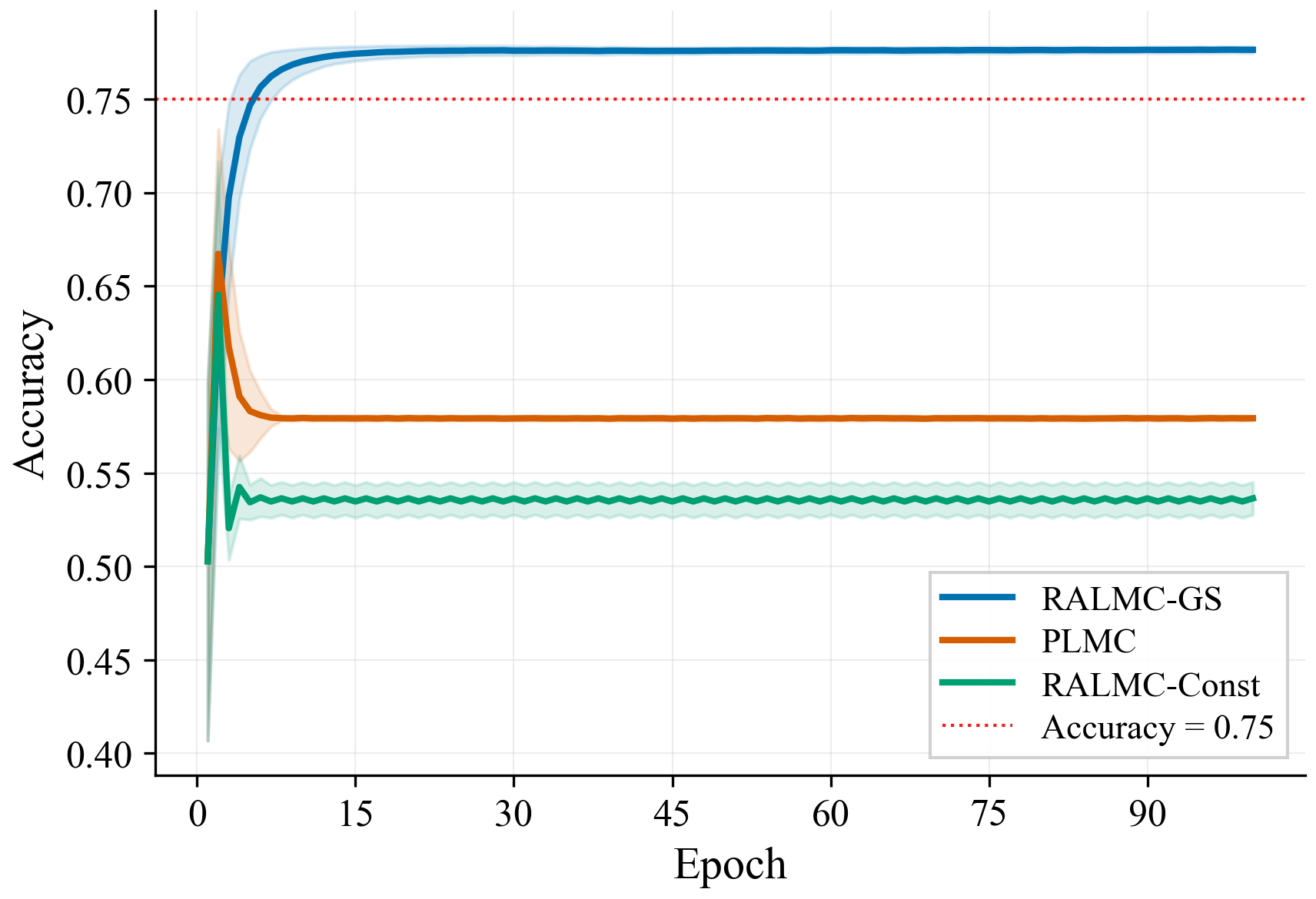} \caption{SCAD prior}
 \end{subfigure}%
\hspace{0.5cm}
 \begin{subfigure}{0.3\textwidth}
\centering \includegraphics[width=\linewidth]{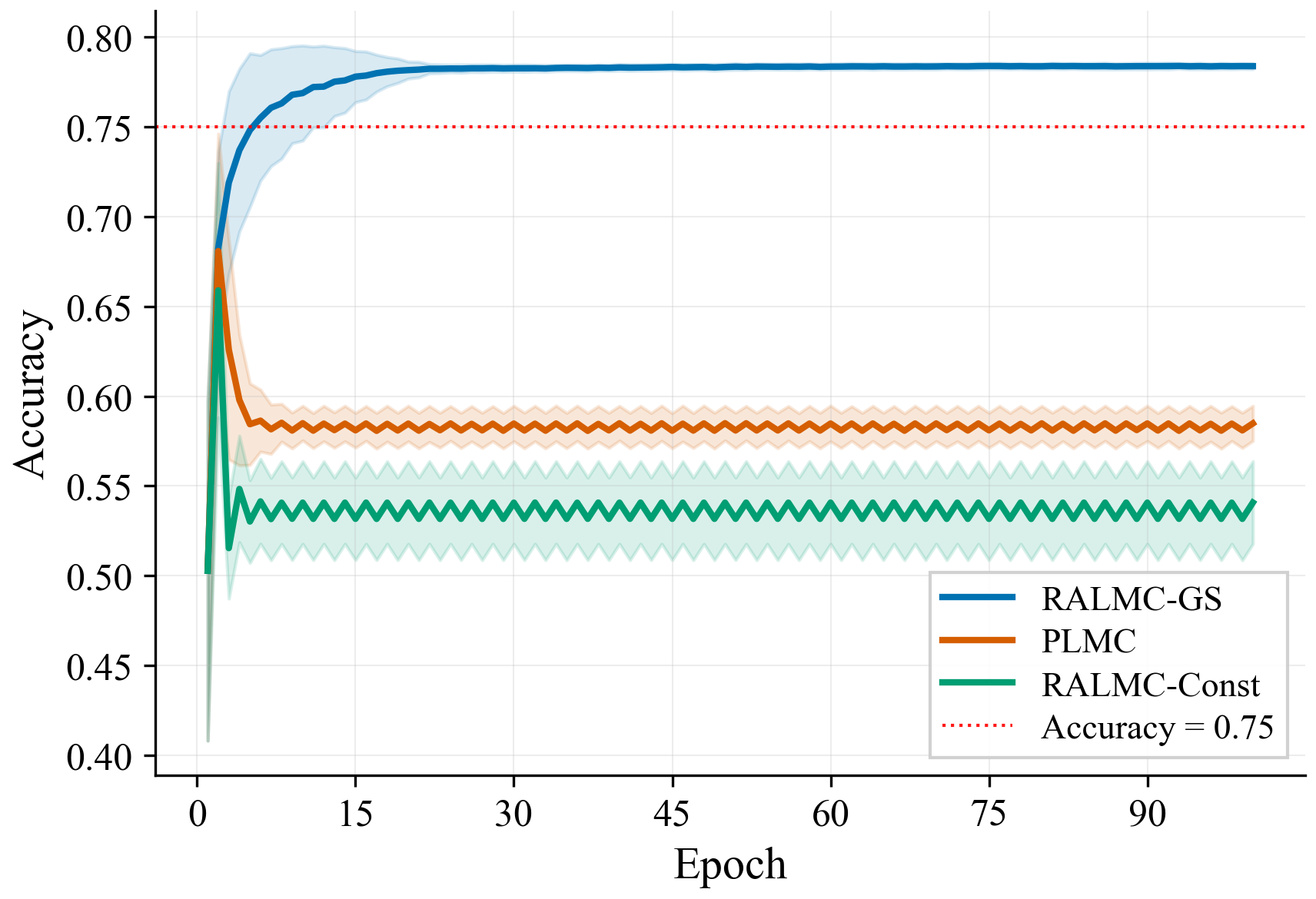} \caption{MCP prior}
 \end{subfigure}
\caption{Performance on accuracy of constrained Bayesian logistic regression under different prior distributions through RALMC with Gaussian smoothing, RALMC with constant potential and PLMC for the test set of the MAGIC Gamma Telescope dataset using the stepsize $\eta = 5\times10^{-4}$ in $100$ epochs}
 \label{fig:large_BLR_GS_tele_accuracy}
\end{figure}

\begin{figure}[!htbp]
\centering
 \begin{subfigure}{0.3\textwidth}
\centering \includegraphics[width=\linewidth]{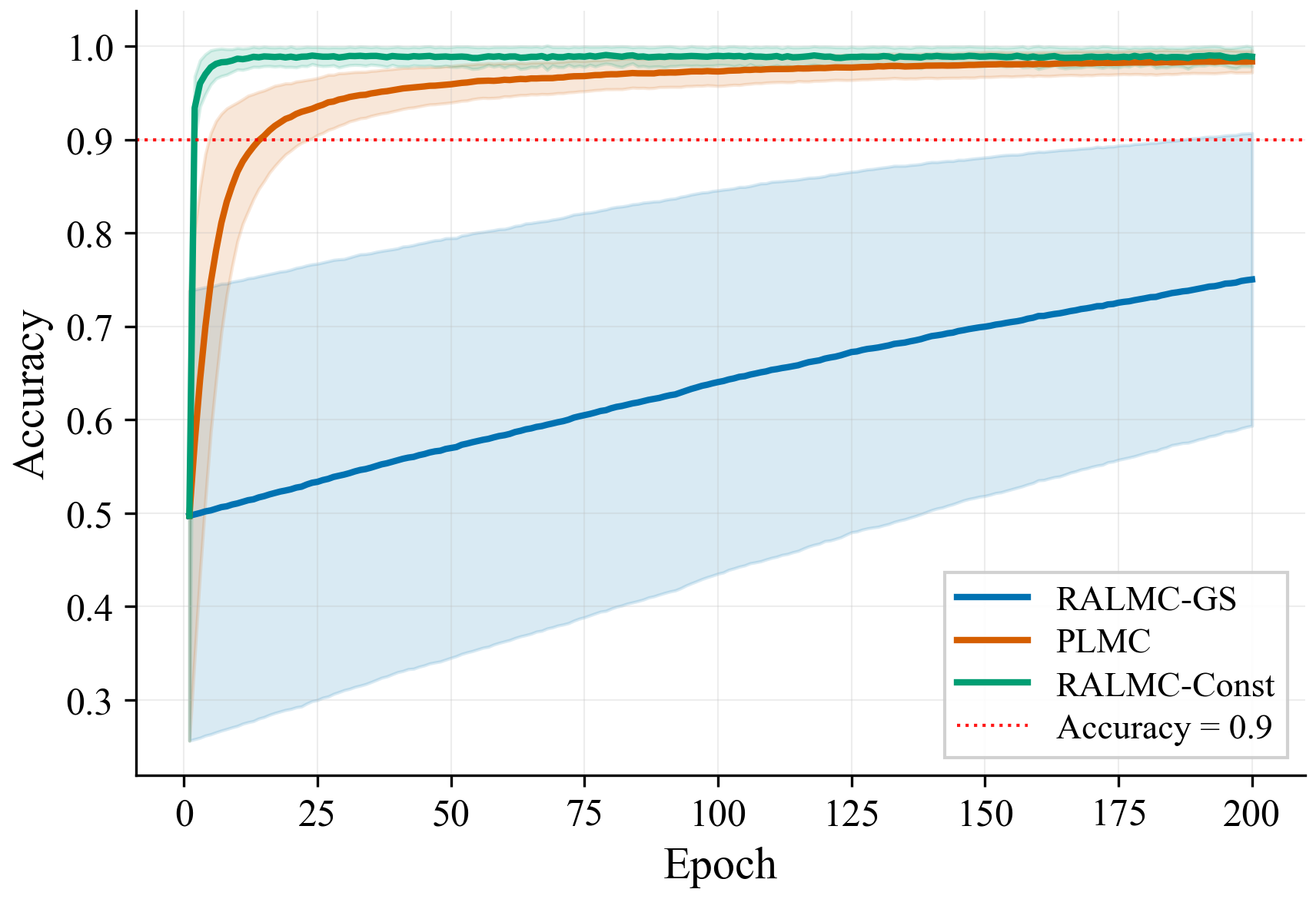} \caption{Laplace prior}
 \end{subfigure}%
\hspace{0.5cm}
 \begin{subfigure}{0.3\textwidth}
\centering \includegraphics[width=\linewidth]{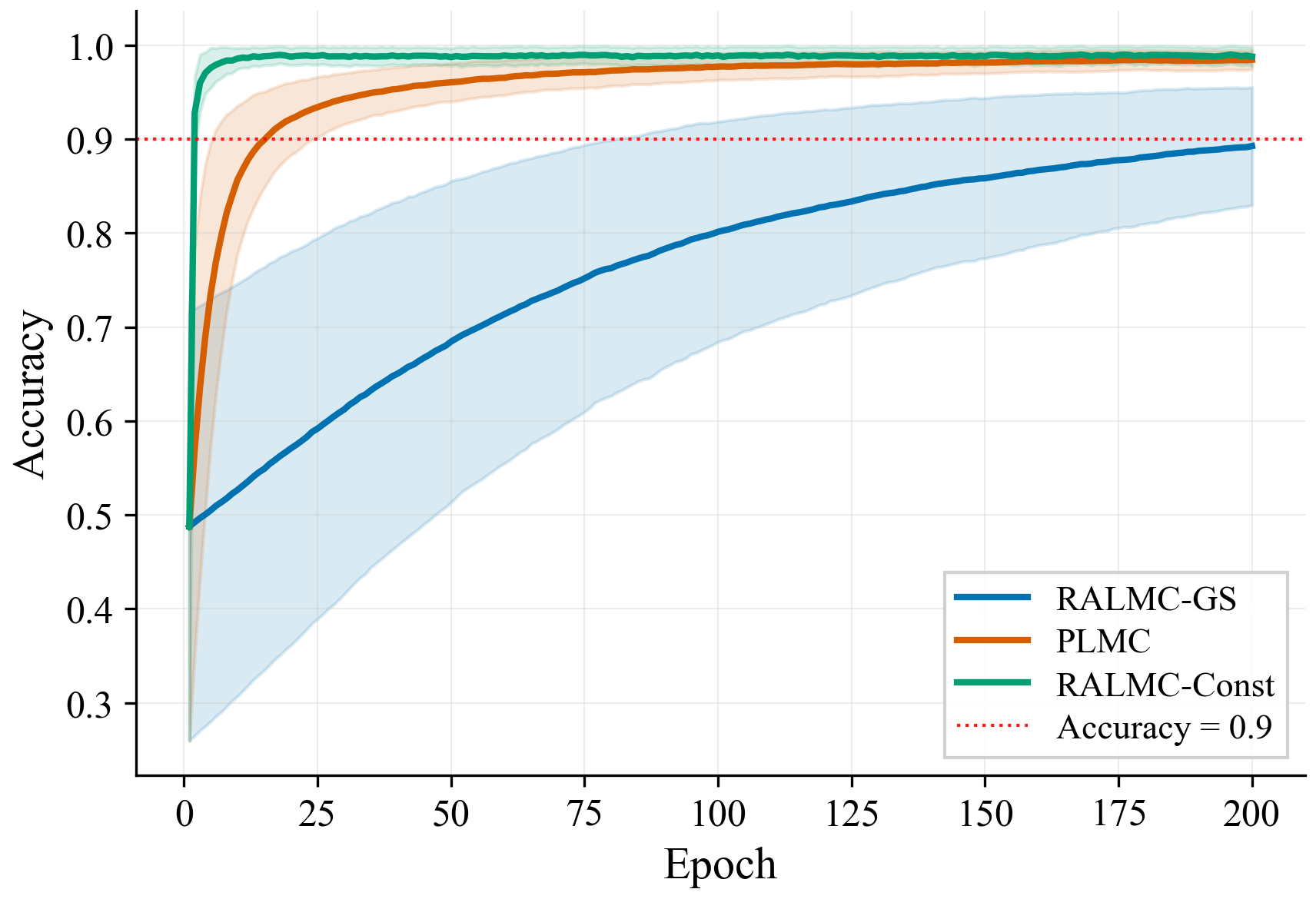} \caption{SCAD prior}
 \end{subfigure}%
\hspace{0.5cm}
 \begin{subfigure}{0.3\textwidth}
\centering \includegraphics[width=\linewidth]{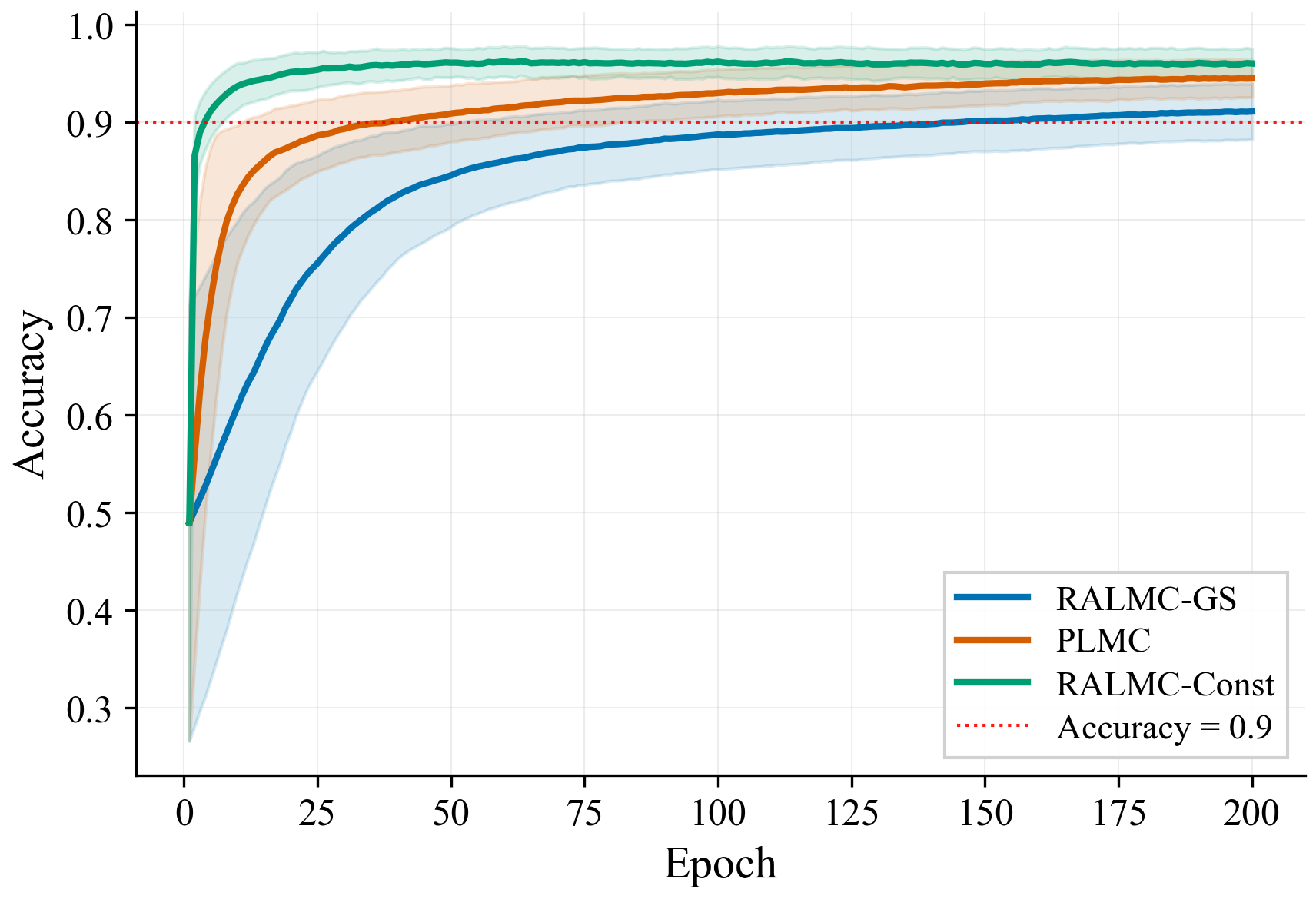} \caption{MCP prior}
 \end{subfigure}
\caption{Performance on accuracy of constrained Bayesian logistic regression under different prior distributions through RALMC with Gaussian smoothing, RALMC with constant potential and PLMC for the test set of the Breast Cancer Wisconsin (Diagnostic) dataset using the stepsize $\eta = 1\times10^{-4}$ in $200$ epochs}
 \label{fig:small_BLR_GS_bc_accuracy}
\end{figure}

\begin{figure}[!htbp]
\centering
 \begin{subfigure}{0.3\textwidth}
\centering \includegraphics[width=\linewidth]{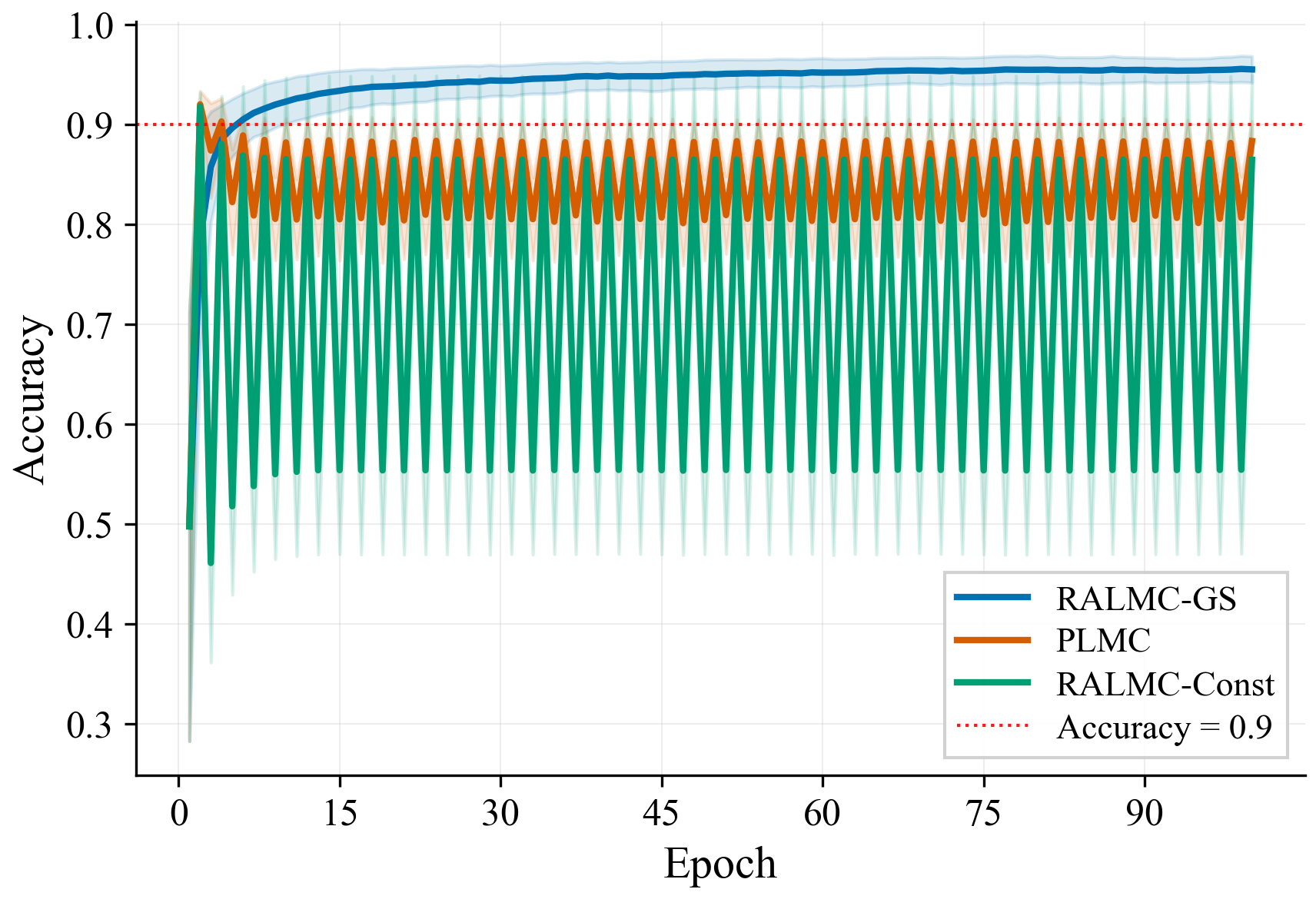} \caption{Laplace prior}
 \end{subfigure}%
\hspace{0.5cm}
 \begin{subfigure}{0.3\textwidth}
\centering \includegraphics[width=\linewidth]{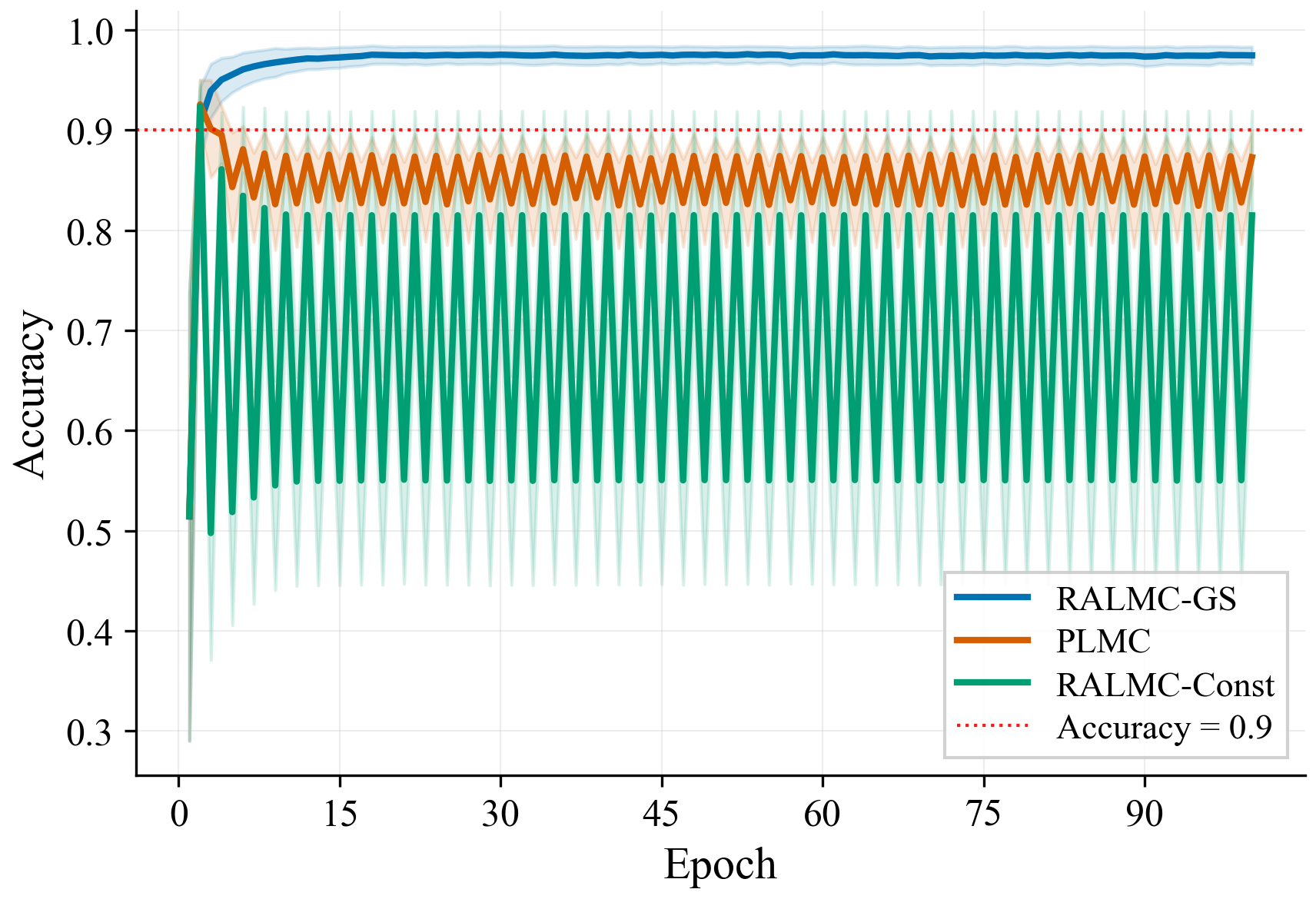} \caption{SCAD prior}
 \end{subfigure}%
\hspace{0.5cm}
 \begin{subfigure}{0.3\textwidth}
\centering \includegraphics[width=\linewidth]{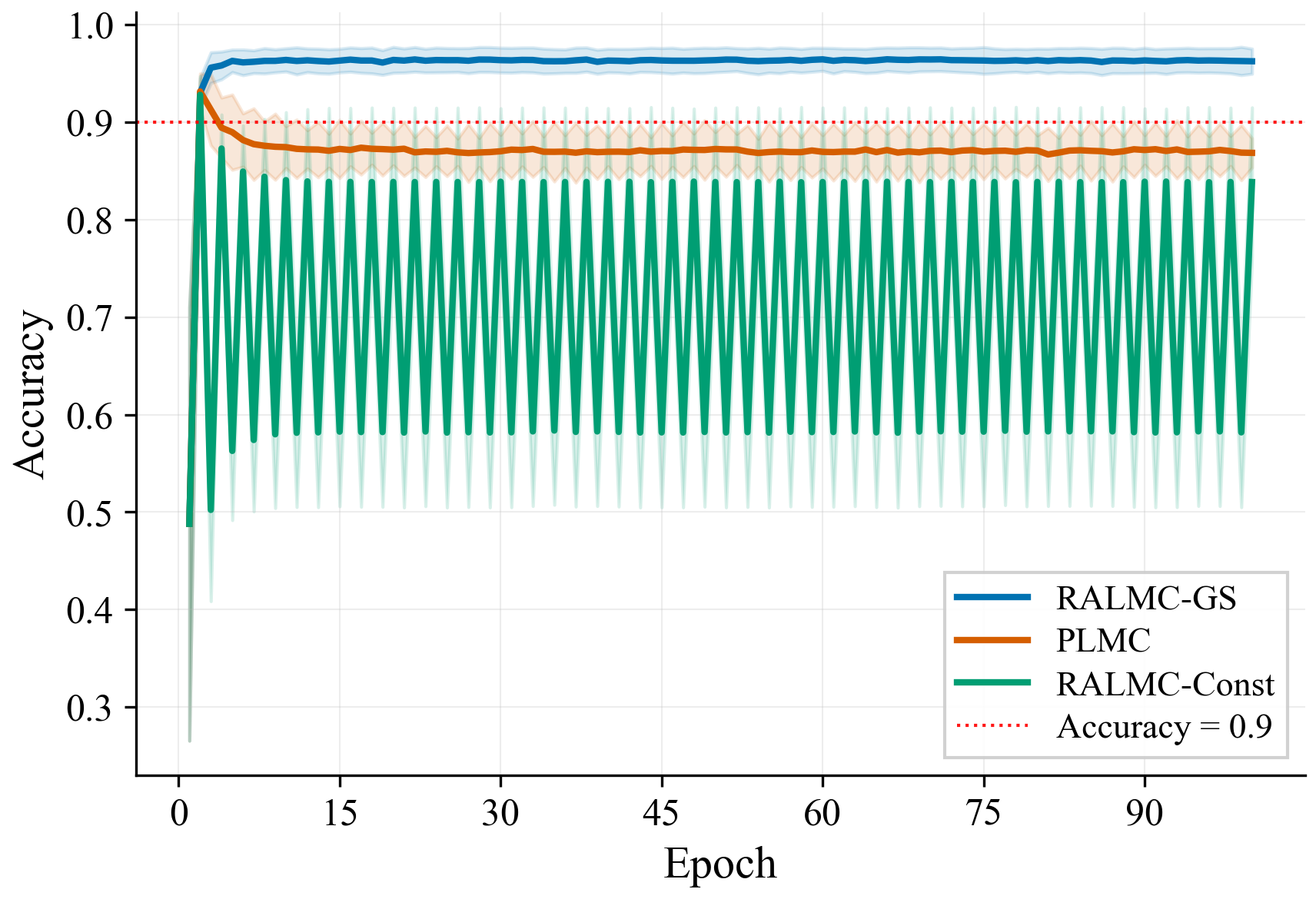} \caption{MCP prior}
 \end{subfigure}
\caption{Performance on accuracy of constrained Bayesian logistic regression under different prior distributions through RALMC with Gaussian smoothing, RALMC with constant potential and PLMC for the test set of the Breast Cancer Wisconsin (Diagnostic) dataset using the stepsize $\eta = 2\times10^{-2}$ in $100$ epochs}
 \label{fig:large_BLR_GS_bc_accuracy}
\end{figure}
\FloatBarrier

\section{Conclusion}
For constrained sampling in machine learning, first order Langevin algorithms such as PLMC have become standard tools for large scale problems, yet their reliance on differentiable log densities limits their applicability. This paper introduced RALD and its discretization RALMC, which use a smooth anchored reference potential and a state dependent scaling factor to handle non-differentiable constrained targets. The analysis gives explicit convergence guarantees and iteration complexity in 2-Wasserstein distance. The numerical results support the use of RALMC for nonsmooth constrained sampling and illustrate the complementary behavior of constant and Gaussian smoothed references.

\section*{Acknowledgments}

The authors would like to thank Qi Feng for helpful discussions. Xiaoyu Wang is supported by the Guangzhou-HKUST(GZ) Joint Funding Program (No.2025A03J3556). Xicheng Zhang is partially supported by NSFC grants of China (No. 12131019).

\bibliographystyle{alpha}
\bibliography{reflected}

\newcommand{\etalchar}[1]{$^{#1}$}
\begin{thebibliography}{CLGL{\etalchar{+}}20}

\bibitem[AC21]{Ahn2021}
Kwangjun Ahn and Sinho Chewi.
\newblock Efficient constrained sampling via the mirror-{L}angevin algorithm.
\newblock In {\em Advances in Neural Information Processing Systems (NeurIPS)},
  volume~34, pages 28405--28418. Curran Associates, Inc., 2021.

\bibitem[ADFDJ03]{andrieu2003introduction}
Christophe Andrieu, Nando De~Freitas, Arnaud Doucet, and Michael~I Jordan.
\newblock An introduction to {MCMC} for machine learning.
\newblock {\em Machine Learning}, 50(1):5--43, 2003.

\bibitem[BB00]{benamou2000computational}
Jean-David Benamou and Yann Brenier.
\newblock A computational fluid mechanics solution to the {Monge-Kantorovich}
  mass transfer problem.
\newblock {\em Numerische Mathematik}, 84:375--393, 2000.

\bibitem[BBCG08]{Bakry2008}
Dominique Bakry, Franck Barthe, Patrick Cattaiux, and Arnaud Guillin.
\newblock A simple proof of the {P}oincar\'e inequality for a large class of
  probability measures.
\newblock {\em Electronic Communications in Probability}, 13:60--66, 2008.

\bibitem[BDMP17]{Brosse}
Nicolas Brosse, Alain Durmus, \'{E}ric Moulines, and Marcelo Pereyra.
\newblock Sampling from a log-concave distribution with compact support with
  proximal {L}angevin {M}onte {C}arlo.
\newblock In {\em Proceedings of the 2017 Conference on Learning Theory},
  volume~65, pages 319--342. PMLR, 2017.

\bibitem[BEL15]{bubeck2015finite}
Sebastien Bubeck, Ronen Eldan, and Joseph Lehec.
\newblock Finite-time analysis of projected {L}angevin {M}onte {C}arlo.
\newblock In {\em Advances in Neural Information Processing Systems},
  volume~28. Curran Associates, Inc., 2015.

\bibitem[BEL18]{bubeck2018sampling}
S{\'e}bastien Bubeck, Ronen Eldan, and Joseph Lehec.
\newblock Sampling from a log-concave distribution with projected {L}angevin
  {M}onte {C}arlo.
\newblock {\em Discrete \& Computational Geometry}, 59(4):757--783, 2018.

\bibitem[BGK{\etalchar{+}}25]{DIGing}
Waheed~U. Bajwa, Mert Gurbuzbalaban, Mustafa~Ali Kutbay, Lingjiong Zhu, and
  Muhammad Zulqarnain.
\newblock {DIGing-SGLD}: Decentralized and scalable {L}angevin sampling over
  time-varying networks.
\newblock {\em arXiv:2511.12836}, 2025.

\bibitem[BGL13]{Bakry2013}
Dominique Bakry, Ivan Gentil, and Michel Ledoux.
\newblock {\em Analysis and Geometry of Markov Diffusion Operators}, volume
  348.
\newblock Springer, Cham, 2013.

\bibitem[Bre11]{brezis2011functional}
Haim Brezis.
\newblock {\em Functional Analysis, Sobolev Spaces and Partial Differential
  Equations}.
\newblock Universitext. Springer, 2011.

\bibitem[CDJB20]{Chatterji20}
Niladri Chatterji, Jelena Diakonikolas, Michael~I. Jordan, and Peter Bartlett.
\newblock Langevin {M}onte {C}arlo without smoothness.
\newblock In {\em Proceedings of the Twenty Third International Conference on
  Artificial Intelligence and Statistics}, volume 108 of {\em Proceedings of
  Machine Learning Research}, pages 1716--1726, 26--28 Aug 2020.

\bibitem[CHS87]{chiang1987diffusion}
Tzuu-Shuh Chiang, Chii-Ruey Hwang, and Shuenn~Jyi Sheu.
\newblock Diffusion for global optimization in $\mathbb{R}^n$.
\newblock {\em SIAM Journal on Control and Optimization}, 25(3):737--753, 1987.

\bibitem[CLGL{\etalchar{+}}20]{Chewi2020}
Sinho Chewi, Thibaut Le~Gouic, Cheng Lu, Tyler Maunu, Philippe Rigollet, and
  Austin Stromme.
\newblock Exponential ergodicity of mirror-{L}angevin diffusions.
\newblock In {\em Advances in Neural Information Processing Systems (NeurIPS)},
  volume~33, pages 19573--19585. Curran Associates, Inc., 2020.

\bibitem[Dal17]{Dalalyan}
Arnak~S Dalalyan.
\newblock Theoretical guarantees for approximate sampling from smooth and
  log-concave densities.
\newblock {\em Journal of the Royal Statistical Society: Series B (Statistical
  Methodology)}, 79(3):651--676, 2017.

\bibitem[DFT{\etalchar{+}}25]{DFTWZ2025}
Hengrong Du, Qi~Feng, Changwei Tu, Xiaoyu Wang, and Lingjiong Zhu.
\newblock Non-reversible {L}angevin algorithms for constrained sampling.
\newblock {\em arXiv:2501.11743}, 2025.

\bibitem[DK19]{DK2017}
Arnak~S. Dalalyan and Avetik~G. Karagulyan.
\newblock User-friendly guarantees for the {L}angevin {M}onte {C}arlo with
  inaccurate gradient.
\newblock {\em Stochastic Processes and their Applications},
  129(12):5278--5311, 2019.

\bibitem[DM17]{DM2017}
Alain Durmus and Eric Moulines.
\newblock Non-asymptotic convergence analysis for the {U}nadjusted {L}angevin
  {A}lgorithm.
\newblock {\em Annals of Applied Probability}, 27(3):1551--1587, 2017.

\bibitem[DM19]{DM2016}
Alain Durmus and Eric Moulines.
\newblock High-dimensional {B}ayesian inference via the {U}nadjusted {L}angevin
  {A}lgorithm.
\newblock {\em Bernoulli}, 25(4A):2854--2882, 2019.

\bibitem[DMM19]{Durmus19}
Alain Durmus, Szymon Majewski, and Blazej Miasojedow.
\newblock Analysis of {L}angevin {M}onte {C}arlo via convex optimization.
\newblock {\em Journal of Machine Learning Research}, 20(1):2666--2711, 2019.

\bibitem[DMP18]{Durmus18}
Alain Durmus, \'{E}ric Moulines, and Marcelo Pereyra.
\newblock Efficient {B}ayesian computation by proximal {M}arkov {C}hain {M}onte
  {C}arlo: When {L}angevin meets {M}oreau.
\newblock {\em SIAM Journal on Imaging Sciences}, 11(1):473--506, 2018.

\bibitem[FL01]{Fan01}
Jianqing Fan and Runze Li.
\newblock Variable selection via nonconcave penalized likelihood and its oracle
  properties.
\newblock {\em Journal of the American Statistical Association},
  96(456):1348--1360, 2001.

\bibitem[FOT11]{FOT2011}
Masatoshi Fukushima, Yoichi Oshima, and Masayoshi Takeda.
\newblock {\em Dirichlet Forms and Symmetric Markov Processes}, volume~19 of
  {\em De Gruyter Studies in Mathematics}.
\newblock Walter de Gruyter, second revised and extended edition, 2011.

\bibitem[GCSR95]{gelman1995bayesian}
Andrew Gelman, John~B Carlin, Hal~S Stern, and Donald~B Rubin.
\newblock {\em Bayesian Data Analysis}.
\newblock Chapman \& Hall/CRC Press, 1995.

\bibitem[GGHZ21]{DistMCMC19}
Mert G\"urb\"uzbalaban, Xuefeng Gao, Yunhan Hu, and Lingjiong Zhu.
\newblock Decentralized stochastic gradient {L}angevin dynamics and
  {H}amiltonian {M}onte {C}arlo.
\newblock {\em Journal of Machine Learning Research}, 22(239):1--69, 2021.

\bibitem[GHZ24]{GHZ2022}
Mert G\"{u}rb\"{u}zbalaban, Yuanhan Hu, and Lingjiong Zhu.
\newblock Penalized overdamped and underdamped {L}angevin {M}onte {C}arlo
  algorithms for constrained sampling.
\newblock {\em Journal of Machine Learning Research}, 25(263):1--67, 2024.

\bibitem[GIWZ24]{GIWZ2024}
Mert G\"{u}rb\"{u}zbalaban, Mohammad~Rafiqul Islam, Xiaoyu Wang, and Lingjiong
  Zhu.
\newblock Generalized {EXTRA} stochastic gradient {L}angevin dynamics.
\newblock {\em arXiv preprint arXiv:2412.01993}, 2024.

\bibitem[GNZZ25]{GNZZ2025}
Mert G\"{u}rb\"{u}zbalaban, Hoang~M. Nguyen, Xicheng Zhang, and Lingjiong Zhu.
\newblock Anchored {L}angevin algorithms.
\newblock {\em arXiv:2509.19455}, 2025.

\bibitem[GSS22]{Goldman21}
Jacob~Vorstrup Goldman, Torben Sell, and Sumeetpal~Sidhu Singh.
\newblock Gradient-based {M}arkov chain {M}onte {C}arlo for {B}ayesian
  inference with non-differentiable priors.
\newblock {\em Journal of the American Statistical Association},
  117(540):2182--2193, 2022.

\bibitem[HKRC18]{hsieh2018mirrored}
Ya-Ping Hsieh, Ali Kavis, Paul Rolland, and Volkan Cevher.
\newblock Mirrored {L}angevin dynamics.
\newblock In {\em Advances in Neural Information Processing Systems},
  volume~31. Curran Associates, Inc., 2018.

\bibitem[HKS89]{stroock-langevin-spectrum}
Richard~A Holley, Shigeo Kusuoka, and Daniel~W Stroock.
\newblock Asymptotics of the spectral gap with applications to the theory of
  simulated annealing.
\newblock {\em Journal of Functional Analysis}, 83(2):333--347, 1989.

\bibitem[IZ26]{2026IslamZhu}
Mohammad~Rafiqul Islam and Lingjiong Zhu.
\newblock Decentralized proximal stochastic gradient {L}angevin dynamics.
\newblock {\em arXiv:2605.00723}, 2026.

\bibitem[Lam21]{Lamperski2021}
Andrew Lamperski.
\newblock Projected stochastic gradient {L}angevin algorithms for constrained
  sampling and non-convex learning.
\newblock In {\em Conference on Learning Theory}, volume 134, pages 2891--2937.
  PMLR, 2021.

\bibitem[LTVW22]{TaoMirror2021}
Ruilin Li, Molei Tao, Santosh~S. Vempala, and Andre Wibisono.
\newblock The mirror {L}angevin algorithm converges with vanishing bias.
\newblock In Sanjoy Dasgupta and Nika Haghtalab, editors, {\em Proceedings of
  The 33rd International Conference on Algorithmic Learning Theory}, volume
  167, pages 718--742. PMLR, 2022.

\bibitem[LZT21]{liSqrtdDimensionDependence2021}
Ruilin Li, Hongyuan Zha, and Molei Tao.
\newblock Sqrt(d) dimension dependence of {{Langevin Monte Carlo}}.
\newblock In {\em International {{Conference}} on {{Learning
  Representations}}}, 2021.

\bibitem[MFWB22]{Mou19}
Wenlong Mou, Nicolas Flammarion, Martin~J. Wainwright, and Peter~L. Bartlett.
\newblock An efficient sampling algorithm for non-smooth composite potentials.
\newblock {\em Journal of Machine Learning Research}, 23(233):1--50, 2022.

\bibitem[NS17]{nesterov2017random}
Yurii Nesterov and Vladimir Spokoiny.
\newblock Random gradient-free minimization of convex functions.
\newblock {\em Foundations of Computational Mathematics}, 17(2):527--566, 2017.

\bibitem[Pey18]{peyre2018comparison}
R{\'e}mi Peyre.
\newblock Comparison between {$W_2$} distance and {$\dot H^{-1}$} norm, and
  localization of {W}asserstein distance.
\newblock {\em ESAIM: Control, Optimisation and Calculus of Variations},
  24(4):1489--1501, 2018.

\bibitem[RRT17]{Raginsky}
Maxim Raginsky, Alexander Rakhlin, and Matus Telgarsky.
\newblock Non-convex learning via stochastic gradient {L}angevin dynamics: a
  nonasymptotic analysis.
\newblock In {\em Conference on Learning Theory}, volume~65, pages 1674--1703.
  PMLR, 2017.

\bibitem[RT96]{RT96}
Gareth~O Roberts and Richard~L Tweedie.
\newblock Exponential convergence of {L}angevin distributions and their
  discrete approximations.
\newblock {\em Bernoulli}, 2(4):341--363, 1996.

\bibitem[S{\l}o01]{slominski2001euler}
Leszek S{\l}omi{\'n}ski.
\newblock Euler's approximations of solutions of {SDE}s with reflecting
  boundary.
\newblock {\em Stochastic Processes and their Applications}, 94(2):317--337,
  2001.

\bibitem[S{\l}o13]{slominski2013penalization}
Leszek S{\l}omi{\'n}ski.
\newblock Weak and strong approximations of reflected diffusions via
  penalization methods.
\newblock {\em Stochastic Processes and their Applications}, 123:752--763,
  2013.

\bibitem[SR20]{SR2020}
Adil Salim and Peter Richt\'{a}rik.
\newblock Primal dual interpretation of the proximal stochastic gradient
  {L}angevin algorithm.
\newblock In {\em Advances in Neural Information Processing Systems (NeurIPS)},
  volume~33, pages 3786--3796. Curran Associates, Inc., 2020.

\bibitem[Stu96]{sturm1996localIII}
Karl-Theodor Sturm.
\newblock Analysis on local {Dirichlet} spaces. {III}. the parabolic {Harnack}
  inequality.
\newblock {\em Journal de Math{\'e}matiques Pures et Appliqu{\'e}es},
  75(3):273--297, 1996.

\bibitem[Stu10]{stuart2010inverse}
Andrew~M Stuart.
\newblock Inverse problems: A {B}ayesian perspective.
\newblock {\em Acta Numerica}, 19:451--559, 2010.

\bibitem[Tan79]{tanaka1979stochastic}
Hiroshi Tanaka.
\newblock Stochastic differential equations with reflecting boundary condition
  in convex regions.
\newblock {\em Hiroshima Mathematical Journal}, 9:163--177, 1979.

\bibitem[TTV16]{teh2016consistency}
Yee~Whye Teh, Alexandre~H Thiery, and Sebastian~J Vollmer.
\newblock Consistency and fluctuations for stochastic gradient {L}angevin
  dynamics.
\newblock {\em The Journal of Machine Learning Research}, 17(1):193--225, 2016.

\bibitem[Vil09]{villani2008optimal}
C{\'e}dric Villani.
\newblock {\em Optimal Transport: Old and New}.
\newblock Springer, Berlin, 2009.

\bibitem[WTWZ26]{WTWZ2025}
Yingli Wang, Changwei Tu, Xiaoyu Wang, and Lingjiong Zhu.
\newblock Accelerated constrained sampling: A large deviations approach.
\newblock {\em Journal of Machine Learning Research}, 27(118):1--61, 2026.

\bibitem[WWZ25]{regime2025}
Xiaoyu Wang, Yingli Wang, and Lingjiong Zhu.
\newblock Regime-switching {L}angevin {M}onte {C}arlo algorithms.
\newblock {\em arXiv preprint arXiv:2509.00941}, 2025.

\bibitem[ZDF{\etalchar{+}}24]{constraint_replica}
Haoyang Zheng, Hengrong Du, Qi~Feng, Wei Deng, and Guang Lin.
\newblock Constrained exploration via reflected replica exchange stochastic
  gradient {L}angevin dynamics.
\newblock In {\em Proceedings of the 41st International Conference on Machine
  Learning}, volume 235, pages 61321--61348. PMLR, 2024.

\bibitem[ZH14]{Zhou14}
Enlu Zhou and Jiaqiao Hu.
\newblock Gradient-based adaptive stochastic search for non-differentiable
  optimization.
\newblock {\em IEEE Transactions on Automatic Control}, 59(7):1818--1832, 2014.

\bibitem[Zha10]{Zhang10}
Cun-Hui Zhang.
\newblock Nearly unbiased variable selection under minimax concave penalty.
\newblock {\em The Annals of Statistics}, 38(2):894--942, 2010.

\bibitem[ZL22]{zheng2022constrained}
Yuping Zheng and Andrew Lamperski.
\newblock Constrained {L}angevin algorithms with {L}-mixing external random
  variables.
\newblock In {\em Advances in Neural Information Processing Systems (NeurIPS)},
  volume~35, pages 20511--20521. Curran Associates, Inc., 2022.

\bibitem[ZPFP20]{Zhang2020}
Kelvin~Shuangjian Zhang, Gabriel Peyr\'{e}, Jalal Fadili, and Marcelo Pereyra.
\newblock Wasserstein control of mirror {L}angevin {M}onte {C}arlo.
\newblock In {\em Conference on Learning Theory}, volume 125, pages 3814--3841.
  PMLR, 2020.

\end{thebibliography}

\appendix

\section{Technical Proofs}

\subsection{Proof of Lemma~\ref{lem:reflected:timechange}}\label{proof:lem:reflected:timechange}
Set $a=e^{U-U_0}$ and $\mathcal L_0f=\Delta f-\langle\nabla U_0,\nabla f\rangle$. Compactness and continuity give $0<a_*\leq a\leq a^*<\infty$. Since $\nabla U_0$ is Lipschitz on the compact convex set $\mathcal C$, \cite[Theorem~4.1]{tanaka1979stochastic} gives a pathwise unique reflected strong solution
$$
dY_s=-\nabla U_0(Y_s)ds+\sqrt{2}\,dB_s-dK_s^0,
$$
where $K^0$ has the outward normal convention of \eqref{simplified eqn:RALD}. Define
$$
A_s:=\int_0^s (a(Y_r))^{-1}dr,\qquad \tau_t:=A^{-1}(t),\qquad X_t:=Y_{\tau_t}.
$$
The bounds on $a$ give $s/a^*\leq A_s\leq s/a_*$ and $a_*t\leq\tau_t\leq a^*t$. Differentiating $A_{\tau_t}=t$ gives $d\tau_t=a(X_t)dt$. Hence,
$$
\int_0^{\tau_t}\nabla U_0(Y_r)\,dr=\int_0^t a(X_u)\nabla U_0(X_u)\,du,\qquad
\langle B_{\tau_\cdot}\rangle_t=\tau_t I_d=\int_0^t a(X_u)I_d\,du.
$$
Define $W_t=\int_0^t (a(X_u))^{-1/2}\,dB_{\tau_u}$. Its quadratic variation is $tI_d$, so it is Brownian motion in the time changed filtration. With $K_t=K^0_{\tau_t}$, the equation becomes
$$
X_t=X_0-\int_0^t a(X_u)\nabla U_0(X_u)\,du+\sqrt2\int_0^t\sqrt{a(X_u)}\,dW_u-K_t.
$$
The increasing time change preserves the normal direction and boundary support of the regulator, proving that $X$ solves~\eqref{simplified eqn:RALD}. Equivalently, for every Neumann test function $f\in C^2(\overline{\mathcal C})$, optional time change and substitution in the drift integral show that
$$
f(X_t)-f(X_0)-\int_0^t a(X_r)\mathcal L_0f(X_r)dr
$$
is a martingale. Conversely, applying the inverse of $S_t=\int_0^t a(X_r)dr$ to any solution of the normally reflected martingale problem for $a\mathcal L_0$ gives a solution of the base reflected problem for $\mathcal L_0$. The two time changes are inverse path transformations, so uniqueness in law of the base problem yields uniqueness in law for $X$. The proof is complete.

\subsection{Proof of Proposition~\ref{prop:continuous:structure}}\label{sec:proof:continuous:structure}

\begin{proof}
\emph{(i)} For a Neumann test function $f$, Lemma~\ref{lem:reflected:timechange} gives
$$
\frac{P_tf(x)-f(x)}{t}=\frac1t\mathbb E^x\left[\int_0^t\mathcal Lf(X_s)\,ds\right]\rightarrow\mathcal Lf(x).
$$
The limit follows from continuity of the paths and bounded continuity of $\mathcal Lf$, proving~\eqref{eqn:generator}.

\emph{(ii)} Set $w=Z^{-1}e^{-U}$ and $q=Z^{-1}e^{-U_0}$. Continuity and compactness give constants $0<c_-\leq c_+<\infty$ bounding both weights. Hence,
$$
\begin{aligned}
c_-\|f\|_{H^1(\mathcal C)}^2 &\leq\|f\|_{L^2(\pi)}^2+\mathcal E(f,f)=\int_{\mathcal C}\left(|f|^2w+\|\nabla f\|^2q\right)\,dx
\leq c_+\|f\|_{H^1(\mathcal C)}^2.
\end{aligned}
$$
Completeness and density of $H^1(\mathcal C)$ therefore make $\mathcal E$ closed and densely defined on $L^2(\pi)$. %
For the truncation $\Phi(r)=\min\{1,\max\{0,r\}\}$, the Sobolev chain rule gives $|\nabla\Phi(f)|\leq|\nabla f|$ almost everywhere, so $\mathcal E(\Phi(f),\Phi(f))\leq\mathcal E(f,f)$, which is the Markov property. Smooth restrictions are dense in $H^1(\mathcal C)$ by the extension property of the domain, and dense in $C(\mathcal C)$ in the uniform norm, giving regularity. If $f$ is constant on a neighborhood of the support of $g$, the gradient integral defining $\mathcal E(f,g)$ vanishes, giving strong locality.

Let $\mathsf A\geq0$ be the associated self adjoint operator and $S_t=e^{-t\mathsf A}$. For a Neumann test function $f$ and $g\in H^1(\mathcal C)$, integration by parts gives
$$
\mathcal E(f,g)=-\frac1Z\int_{\mathcal C}g\,\nabla\cdot\left(e^{-U_0}\nabla f\right)\,dx
=-\langle\mathcal Lf,g\rangle_{L^2(\pi)}.
$$
Thus $-\mathsf Af=\mathcal Lf$ and $\mathsf A1=0$. The associated conservative diffusion solves the Neumann martingale problem for $\pi$ almost every initial point by \cite[Theorems~7.2.1, 7.2.2 and~5.2.2, Eq.~(5.2.26)]{FOT2011}, using boundedness and a countable dense family of Neumann tests in the norm $\|f\|_\infty+\|\mathcal Lf\|_\infty$. Uniqueness in Lemma~\ref{lem:reflected:timechange} identifies $S_t=P_t$ on $L^2(\pi)$. Hence $P_t1=1$, and spectral calculus gives $P_tf\in\mathcal D(\mathsf A)$ for $t>0$ and
$$
\frac{d}{dt}\|P_tf\|_{L^2(\pi)}^2
=-2\langle P_tf,\mathsf AP_tf\rangle_{L^2(\pi)} =-2\mathcal E(P_tf,P_tf).
$$

\emph{(iii)} On the closed convex state space $\mathcal C$, the unweighted Neumann form has the complete Euclidean intrinsic metric. Convexity gives volume doubling and the Poincar\'e inequality on $D_r=\mathcal C\cap B(x,r)$,
$$
\begin{gathered}
|D_{2r}|\leq2^d|D_r|,\quad f_{D_r}:=\frac1{|D_r|}\int_{D_r}f\,dx, \quad \int_{D_r}|f-f_{D_r}|^2dx\leq c_dr^2\int_{D_r}\|\nabla f\|^2dx.
\end{gathered}
$$
The bounds on $w,q$ make the weighted measure and form uniformly comparable to the unweighted ones, and their intrinsic metrics equivalent. Thus \cite[Theorem~3.5 and Corollary~3.6]{sturm1996localIII} gives the parabolic Harnack inequality, and \cite[Proposition~3.1, Theorems~4.1 and~4.8]{sturm1996localIII} gives a jointly continuous, strictly positive kernel $\kappa_t(x,y)$ on $(0,\infty)\times\mathcal C^2$, including the boundary. 

To justify continuity in the initial point, couple the base diffusions $Y^{x_n}$ and $Y^x$ with the same Brownian motion, where $x_n\to x$. Additive noise cancels in their difference, and the convex reflection sign gives
$$
\sup_{0\leq u\leq T}\left\|Y_u^{x_n}-Y_u^x\right\|\leq e^{\operatorname{Lip}(\nabla U_0)T}\|x_n-x\|.
$$
Since $a^{-1}$ is uniformly continuous on $\mathcal C$, the clocks $A_u^{x_n}=\int_0^u a(Y_r^{x_n})^{-1}dr$ converge uniformly to $A_u^x$ on bounded intervals. Their slopes lie between $1/a^*$ and $1/a_*$, so their inverse clocks also converge uniformly on bounded intervals. Continuity of the base paths now gives $X_t^{x_n}=Y^{x_n}_{\tau_t^{x_n}}\to Y^x_{\tau_t^x}=X_t^x$ almost surely. For bounded continuous $f$, dominated convergence yields $P_tf(x_n)\to P_tf(x)$, proving the Feller property.
For $f\in C(\mathcal C)$, the identity $P_tf(x)=\int_{\mathcal C}\kappa_t(x,y)f(y)\pi(dy)$ from part~(ii) therefore extends from $\pi$ almost every $x$ to every $x$ by continuity and full support of $\pi$. %
The Lebesgue density is $p_t(x,y)=\kappa_t(x,y)w(y)$. For fixed $t>0$, both factors are continuous and strictly positive on the compact set $\mathcal C^2$. Consequently,
$$
0<\min_{x,y\in\mathcal C}p_t(x,y)\leq\max_{x,y\in\mathcal C}p_t(x,y)<\infty,
$$
which proves the asserted extrema.
\end{proof}

\subsection{Proof of Theorem~\ref{thm:Gibbs}}\label{sec:thm:Gibbs}

\begin{proof}
By Proposition~\ref{prop:continuous:structure}~(ii), $P_t$ is self adjoint on $L^2(\pi)$ and satisfies $P_t1=1$. Hence, for every bounded measurable $f$,
$$
\begin{aligned}
    \int_{\mathcal C}P_tf\,d\pi
    &=\langle1,P_tf\rangle_{L^2(\pi)} =\langle P_t1,f\rangle_{L^2(\pi)} =\int_{\mathcal C}f\,d\pi.
\end{aligned}
$$
Thus $\pi$ is invariant. %
To prove uniqueness, fix $t>0$. Proposition~\ref{prop:continuous:structure}~(iii) gives a jointly continuous, strictly positive transition density on $\mathcal C^2$. Since $\pi(dy)=w(y)dy$ with $w$ continuous and strictly positive, the density $\kappa_t(x,y)=dP_t(x,\cdot)/d\pi(y)$ has the same properties. Compactness therefore gives $\min_{x,y\in\mathcal C}\kappa_t(x,y)>0$. Set $\varepsilon_t=\frac12\min_{x,y\in\mathcal C}\kappa_t(x,y)$. Since $\int_{\mathcal C}\kappa_t(x,y)\pi(dy)=1$, we have $0<\varepsilon_t\leq1/2$. Thus
$$
P_t(x,dy)=\varepsilon_t\pi(dy)+(1-\varepsilon_t)\frac{\kappa_t(x,y)-\varepsilon_t}{1-\varepsilon_t}\pi(dy).
$$
The second fraction defines a Markov kernel, which contracts total variation. If $\nu$ is another invariant probability measure, then
$$
\mathrm{TV}(\nu,\pi)=\mathrm{TV}(\nu P_t,\pi P_t)\leq(1-\varepsilon_t)\mathrm{TV}(\nu,\pi).
$$
Since $\varepsilon_t>0$, this forces $\mathrm{TV}(\nu,\pi)=0$, proving uniqueness.
\end{proof}

\subsection{Proof of Theorem~\ref{thm:chi:square}}\label{sec:thm:chi:square}

\begin{proof}
First, a finite Poincar\'e constant exists under the stated assumptions. The density $w=Z^{-1}e^{-U}$ has bounds $0<w_{\min}\leq w\leq w_{\max}<\infty$. Let $C_{\mathcal C}$ be a Lebesgue Poincar\'e constant on the bounded convex domain and $f_{\mathcal C}=|\mathcal C|^{-1}\int_{\mathcal C}f\,dx$. For $f\in H^1(\mathcal C)$,
$$
\begin{aligned}
\operatorname{Var}_\pi(f)
&\leq w_{\max}\int_{\mathcal C}|f-f_{\mathcal C}|^2dx\leq w_{\max}C_{\mathcal C}\int_{\mathcal C}\|\nabla f\|^2dx
\leq\frac{w_{\max}}{w_{\min}}C_{\mathcal C}\int_{\mathcal C}\|\nabla f\|^2d\pi.
\end{aligned}
$$
Thus $C_P=(w_{\max}/w_{\min})C_{\mathcal C}$ is admissible. The convergence argument below uses any valid $C_P$, not necessarily this bound. Denote $h_0:=d\mu_0/d\pi$. The assumption $\chi^2(\mu_0\Vert\pi)<\infty$ gives $h_0\in L^2(\pi)$. Since the semigroup is self adjoint by Proposition~\ref{prop:continuous:structure}~(ii), the density evolves according to
$$
    h_t=P_t^*h_0=P_th_0.
$$
For every $t>0$, Proposition~\ref{prop:continuous:structure}~(ii) gives $h_t\in\mathcal D(\mathcal L)$. Since $P_t1=1$ and $\int h_0d\pi=1$, one has $\int h_td\pi=1$. Therefore
$$
\begin{aligned}
    \frac{d}{dt}\chi^2(\mu_t\Vert\pi)
    &=\frac{d}{dt}\|h_t-1\|_{L^2(\pi)}^2
    =2\langle h_t-1,\mathcal Lh_t\rangle_{L^2(\pi)}\\
    &=2\langle h_t,\mathcal Lh_t\rangle_{L^2(\pi)} =-2\mathcal E(h_t,h_t) =-2\int_{\mathcal C}e^{U-U_0}\|\nabla h_t\|^2d\pi.
\end{aligned}
$$
By the definition of $\alpha_0$ and the Poincar\'e inequality,
$$
\begin{aligned}
    \mathcal E(h_t,h_t)
    &\geq\alpha_0\int_{\mathcal C}\|\nabla h_t\|^2d\pi
    \geq\frac{\alpha_0}{C_P}\operatorname{Var}_\pi(h_t)\\
    &=\frac{\alpha_0}{C_P}\|h_t-1\|_{L^2(\pi)}^2 =\frac{\alpha_0}{C_P}\chi^2(\mu_t\Vert\pi).
\end{aligned}
$$
Hence, for every $t>0$,
$$
    \frac{d}{dt}\chi^2(\mu_t\Vert\pi)
    \leq-\frac{2\alpha_0}{C_P}\chi^2(\mu_t\Vert\pi).
$$
Applying Gr\"{o}nwall's inequality on $[s,t]$ with $0<s<t$ gives
$$
    \chi^2(\mu_t\Vert\pi)
    \leq e^{-2\alpha_0(t-s)/C_P}\chi^2(\mu_s\Vert\pi).
$$
Strong continuity of $P_s$ on $L^2(\pi)$ yields $h_s\to h_0$ in $L^2(\pi)$ as $s\to0$. Letting $s\to0$ proves
$$
    \chi^2(\mu_t\Vert\pi)
    \leq e^{-2\alpha_0t/C_P}\chi^2(\mu_0\Vert\pi).
$$
This completes the proof.
\end{proof}

\subsection{Proof of Corollary~\ref{cor:W:2}}\label{sec:cor:W:2}

\begin{proof}
Writing $h_t=d\mu_t/d\pi$ and applying Cauchy-Schwarz inequality gives
$$
    \mathrm{TV}(\mu_t,\pi)
    =\frac12\int_{\mathcal C}|h_t-1|\,d\pi
    \leq\frac12\sqrt{\chi^2(\mu_t\Vert\pi)}.
$$
Choose a maximal coupling $(Y,Z)$ of $\mu_t$ and $\pi$, so that $\mathbb P(Y\neq Z)=\mathrm{TV}(\mu_t,\pi)$. Since $Y,Z\in\mathcal C\subset B(0,R)$,
$$
\mathcal W_2^2(\mu_t,\pi)\leq\mathbb E\|Y-Z\|^2\leq4R^2\mathbb P(Y\neq Z)=4R^2\mathrm{TV}(\mu_t,\pi).
$$
Consequently,
$$
    \mathcal W_2(\mu_t,\pi)
    \leq\sqrt2R\,\chi^2(\mu_t\Vert\pi)^{1/4}
    \leq2^{3/4}R\,\chi^2(\mu_t\Vert\pi)^{1/4}.
$$
Applying Theorem~\ref{thm:chi:square} proves the desired bound.
\end{proof}

\subsection{Proof of Lemma~\ref{lemma:bound:bm}}\label{sec:lemma:bound:bm}
\begin{proof}
We adapt the proof of Lemma 2.3 in~\cite{bubeck2018sampling}. Let $N:=\lceil T/\eta\rceil$. For $i=0,\dots,N-1$, define
$$
Y_i:=\sup_{u\in[i\eta,(i+2)\eta]}
\left\|W_u-W_{(i+1)\eta}\right\|.
$$

We first reduce the modulus of continuity to $\max_{0\leq i\leq N-1}Y_i$. Fix $s,t\in[0,T]$ with $|t-s|\leq\eta$. By symmetry, assume $s\leq t$. If $s=T$, then $s=t$. Otherwise, let $i=\lfloor s/\eta\rfloor\leq N-1$. Then $s,t\in[i\eta,(i+2)\eta]$ and by the triangle inequality,
$$
\|W_t-W_s\|
\leq \left\|W_t-W_{(i+1)\eta}\right\|+\left\|W_s-W_{(i+1)\eta}\right\|
\leq 2Y_i
\leq 2\max_{0\leq j\leq N-1}Y_j.
$$
Taking the supremum over $|t-s|\leq \eta$ yields
\begin{equation}
\label{eq:reduce_to_Y}
    \sup_{s,t\in[0,T],|t-s|\leq \eta}\|W_t-W_s\|^2 \leq 4\max_{0\leq i\leq N-1}Y_i^2.
\end{equation}
Next, we bound $\mathbb E\left[ \max_{0\leq i\leq N-1}Y_i^2 \right]$ via an $L^p$ maximal inequality. For any $p\geq 1$, by Jensen's inequality,
\begin{equation}\label{eq:max_lp}
    \mathbb E\left[ \max_{0\leq i\leq N-1}Y_i^2 \right]
    \leq \mathbb E\left[ \left( \sum_{i=0}^{N-1}Y_i^{2p} \right)^{1/p} \right]
    \leq \left( \mathbb E\left[ \sum_{i=0}^{N-1}Y_i^{2p} \right] \right)^{1/p}
    \leq N^{1/p}\|Y_0^2\|_p
    =N^{1/p}\|Y_0\|_{2p}^2,
\end{equation}
where we used the fact that $(Y_i)_{i\geq 0}$ are identically distributed by stationary increments. We now bound $\|Y_0\|_{2p}$. Note that
$$
Y_0=\sup_{u\in[0,2\eta]}\|W_u-W_{\eta}\|
\leq \sup_{v\in[0,\eta]}\|W_{\eta+v}-W_{\eta}\|
   +\sup_{v\in[0,\eta]}\|W_{\eta}-W_{\eta-v}\|.
$$
By stationary increments and time reversal, both suprema on the right hand side have the same distribution as $\sup_{v\in[0,\eta]}\|W_v\|$. Hence, by Minkowski's inequality,
$$
\|Y_0\|_{2p}\leq 2\left\Vert \sup_{v\in[0,\eta]}\|W_v\| \right\Vert _{2p}.
$$
Since $\|\cdot\|$ is convex and $W_u$ is a martingale, the process $M_u:=\|W_u\|$ is a nonnegative submartingale. Thus, by Doob's maximal inequality, for $p\geq 1$,
\begin{equation}
\label{eq:doob}
\|Y_0\|_{2p}
\leq 2\left\Vert \sup_{u\in[0,\eta]} M_u \right\Vert _{2p}
\leq 4\|M_\eta\|_{2p}
=4\|\,\|W_\eta\|\,\|_{2p}.
\end{equation}
Since $W_\eta\stackrel{d}{=}\sqrt{\eta}\,G$ with $G\sim\mathcal{N}(0,I_d)$, standard Gaussian moment bounds imply that for $p\geq 2$,
\begin{equation}
\label{eq:gauss_moment}
\|\,\|W_\eta\|\,\|_{2p}^2
=\eta\,\|\,\|G\|\,\|_{2p}^2
\leq c_1\,\eta\, d\, p,
\end{equation}
for a universal constant $c_1>0$. Combining \eqref{eq:doob} and \eqref{eq:gauss_moment}, we obtain for $p\geq 2$,
\begin{equation}
\label{eq:Y0_bound}
\|Y_0\|_{2p}^2 \leq c_2\,\eta\, d\, p,
\end{equation}
with a universal constant $c_2>0$. Plugging \eqref{eq:Y0_bound} into \eqref{eq:max_lp} gives
$$
\mathbb{E}\left[\max_{0\leq i\leq N-1} Y_i^2\right]
\leq c_2\, N^{1/p}\,\eta\, d\, p.
$$
Choose $p:=\lceil \log N\rceil$. Since $T/\eta\geq e$, we have $N\geq3$ and hence $p\geq2$. Also $N^{1/p}\leq e$. Moreover, $N\leq T/\eta+1\leq2T/\eta$ and $\log(T/\eta)\geq1$, so
\begin{equation}
p\leq\log N+1
\leq\log(T/\eta)+\log2+1
\leq(2+\log2)\log(T/\eta).
\end{equation}
Thus,
$$
\mathbb{E}\left[\max_{0\leq i\leq N-1} Y_i^2\right]
\leq c_3\, \eta\, d\, \log(T/\eta),
$$
for a universal constant $c_3>0$. Finally, using \eqref{eq:reduce_to_Y},
$$
\mathbb{E}\left[\sup_{s,t\in[0,T],|t-s|\leq \eta}\|W_t-W_s\|^2\right]
\leq 4\,\mathbb{E}\left[\max_{0\leq i\leq N-1} Y_i^2\right]
\leq C\, d\,\eta\log(T/\eta),
$$
where $C:=4c_3$ is a universal constant. This completes the proof.
\end{proof}

\subsection{Proof of Lemma~\ref{lemma:bound:K}}\label{sec:lemma:bound:K}

\begin{proof}
We first estimate the continuous regulator $\hat K$. Since $\operatorname{dist}(X_0,\partial\mathcal C)\geq r_0$ almost surely, $B(X_0,r_0)\subset\mathcal C$ almost surely. For $\hat L(ds)$ almost every $s$, the point $X_0+r_0\hat\nu_s$ belongs to $\mathcal C$. Since $\hat\nu_s\in N_{\mathcal C}^{\rm out}(\hat X_s)$, we have $\left\langle\hat\nu_s,X_0+r_0\hat\nu_s-\hat X_s\right\rangle\leq0$. Hence,
$$
\left\langle\hat X_s-X_0,\hat\nu_s\right\rangle
\geq r_0.
$$
And thus, we can get
\begin{align}
\int_0^t\left\langle\hat X_s-X_0,d\hat K_s\right\rangle
\geq r_0|\hat K|_t.\label{eq:hat:K:normal:lower}
\end{align}
Define
\begin{align}
\widehat D_t
&:=\int_0^t\left\langle\hat X_s-X_0, \, b\left(\bar X_{\lfloor s/\eta\rfloor\eta}\right)\right\rangle ds, \qquad
\widehat M_t:=\int_0^t\left\langle\hat X_s-X_0,
\sqrt2\sigma\left(\bar X_{\lfloor s/\eta\rfloor\eta}\right)dW_s\right\rangle,\nonumber\\
\widehat Q_t
&:=\int_0^t
\left\|\sqrt2\sigma\left(\bar X_{\lfloor s/\eta\rfloor\eta}\right)I_d\right\|_{\rm F}^2ds.
\end{align}
Since $\hat K$ has finite variation, the trace of the quadratic variation of $\hat X$ is $\widehat Q_t$. It\^o's formula, integrated from $0$ to $t$ with $\hat X_0=X_0$, therefore gives
\begin{equation}
\left\|\hat X_t-X_0\right\|^2
=2\widehat D_t+2\widehat M_t
-2\int_0^t\left\langle\hat X_s-X_0,d\hat K_s\right\rangle
+\widehat Q_t.\label{eq:hat:K:Ito:integrated}
\end{equation}
By \eqref{eq:hat:K:normal:lower} and \eqref{eq:hat:K:Ito:integrated},
\begin{align}
2r_0|\hat K|_t
&\leq2\widehat D_t+2\widehat M_t+\widehat Q_t
-\left\|\hat X_t-X_0\right\|^2
\leq2|\widehat D_t|+2|\widehat M_t|+\widehat Q_t.\label{eq:hat:K:pathwise}
\end{align}
Because $\hat X_s,X_0\in\mathcal C\subset B(0,R)$, the Cauchy-Schwarz inequality, It\^o's isometry, and \eqref{eqn:b:sigma:bdd} yield
\begin{align}
|\widehat D_t|^2
&\leq\left(\int_0^t2RB_bds\right)^2
=4R^2B_b^2t^2,\label{eq:hat:K:D}\\
\mathbb E|\widehat M_t|^2
&=2\mathbb E\int_0^t
\left\|\sigma\left(\bar X_{\lfloor s/\eta\rfloor\eta}\right)
\left(\hat X_s-X_0\right)\right\|^2ds\nonumber\\
&\leq2\mathbb E\int_0^t
\left\|\sigma\left(\bar X_{\lfloor s/\eta\rfloor\eta}\right)I_d\right\|_{\rm op}^2
\left\|\hat X_s-X_0\right\|^2ds
\leq8R^2S_\sigma^2t,\label{eq:hat:K:M}\\
\widehat Q_t^2
&\leq\left(\int_0^t2S_\sigma^2ds\right)^2
=4S_\sigma^4t^2.\label{eq:hat:K:Q}
\end{align}
Using $(a+b+c)^2\leq3(a^2+b^2+c^2)$ in \eqref{eq:hat:K:pathwise}, we conclude that
\begin{align}
\mathbb E|\hat K|_t^2
&\leq3r_0^{-2}\left(
\mathbb E|\widehat D_t|^2+\mathbb E|\widehat M_t|^2
+\frac14\mathbb E\widehat Q_t^2\right)\nonumber\\
&\leq3r_0^{-2}\left(4R^2B_b^2+S_\sigma^4\right)t^2
+24r_0^{-2}R^2S_\sigma^2t
:=At^2+Bt.\label{eq:hat:K:proof:final}
\end{align}
Now we compute the bound on the discretization grid. Let $t_j=j\eta$ and write
\begin{equation}\label{eq:discrete:increment:proof}
u_j:=\eta b\left(\bar X_{t_{j-1}}\right)
+\sqrt2\sigma\left(\bar X_{t_{j-1}}\right)\Delta W_j,
\qquad
k_j:=\Delta\bar K_j,
\qquad
\Delta W_j:=W_{t_j}-W_{t_{j-1}}.
\end{equation}
By~\eqref{eq:discrete:regulator:increment},
\begin{equation}\label{eq:discrete:update:proof}
\bar X_{t_j}=\bar X_{t_{j-1}}+u_j-k_j,
\qquad
k_j\in N_{\mathcal C}^{\rm out}(\bar X_{t_j}).
\end{equation}
On $\{k_j\neq0\}$, the point $X_0+r_0k_j/\|k_j\|$ belongs to $\mathcal C$. Hence, the definition of the outward normal cone implies
\begin{align}
\left\langle k_j,
X_0+r_0\frac{k_j}{\|k_j\|}-\bar X_{t_j}\right\rangle \leq 0,
\qquad
\left\langle\bar X_{t_j}-X_0,k_j\right\rangle
\geq r_0\|k_j\|.\label{eq:projection:interior:ball}
\end{align}
The same inequality is immediate when $k_j=0$. Expanding the square in \eqref{eq:discrete:update:proof} gives
\begin{align}
\left\|\bar X_{t_j}-X_0\right\|^2
=\left\|\bar X_{t_{j-1}}-X_0\right\|^2
+2\left\langle\bar X_{t_{j-1}}-X_0,u_j\right\rangle+\|u_j\|^2
-2\left\langle\bar X_{t_j}-X_0,k_j\right\rangle-\|k_j\|^2.\label{eq:discrete:energy:identity}
\end{align}
Summing~\eqref{eq:discrete:energy:identity} from $j=1$ to $k$, using $\bar X_0=X_0$, and applying~\eqref{eq:projection:interior:ball}, we obtain
\begin{align}
2r_0\sum_{j=1}^k\|k_j\|
\leq2\sum_{j=1}^k
\left\langle\bar X_{t_{j-1}}-X_0,u_j\right\rangle
+\sum_{j=1}^k\|u_j\|^2
\leq2|D_T|+2|M_T|+Q_T,\label{eq:bar:K:pathwise}
\end{align}
where
\begin{align}
&D_T:=\sum_{j=1}^k\eta
\left\langle\bar X_{t_{j-1}}-X_0,b\left(\bar X_{t_{j-1}}\right)\right\rangle,\\
&M_T:=\sum_{j=1}^k
\left\langle\bar X_{t_{j-1}}-X_0,
\sqrt2\sigma\left(\bar X_{t_{j-1}}\right)\Delta W_j\right\rangle,\quad
Q_T:=\sum_{j=1}^k\|u_j\|^2.
\end{align}
Since $|\bar K|_T=\sum_{j=1}^k\|k_j\|$, the drift bound and the discrete It\^o isometry give
\begin{align}
|D_T|^2
&\leq\left(\sum_{j=1}^k2RB_b\eta\right)^2
=4R^2B_b^2T^2,\label{eq:bar:K:D}\\
\mathbb E|M_T|^2
&=2\sum_{j=1}^k\eta\,
\mathbb E\left[
\sigma\left(\bar X_{t_{j-1}}\right)^2
\left\|\bar X_{t_{j-1}}-X_0\right\|^2\right]
\leq8R^2S_\sigma^2T.\label{eq:bar:K:M}
\end{align}
For the last term, \eqref{eqn:b:sigma:bdd} implies
\begin{equation}\label{eq:bar:K:Q:pathwise}
Q_T
\leq2B_b^2T\eta
+\frac{4S_\sigma^2}{d}Q_W,
\qquad
Q_W:=\sum_{j=1}^k\|\Delta W_j\|^2.
\end{equation}
Since $Q_W/\eta$ has the chi squared distribution with $kd$ degrees of freedom,
\begin{equation}\label{eq:QW:second:moment}
\mathbb E\left[Q_W^2\right]
=\eta^2\bigl((kd)^2+2kd\bigr)
=d^2T^2+2dT\eta.
\end{equation}
Using $(a+b)^2\leq2a^2+2b^2$, $d\geq1$, and $0<\eta\leq1$ in \eqref{eq:bar:K:Q:pathwise} yields
\begin{align}
\mathbb E\left[Q_T^2\right]
\leq8B_b^4T^2\eta^2
+\frac{32S_\sigma^4}{d^2}\mathbb E Q_W^2
\leq\left(8B_b^4+32S_\sigma^4\right)T^2+64S_\sigma^4T.
\label{eq:bar:K:Q}
\end{align}
Finally,~\eqref{eq:bar:K:pathwise} and $(a+b+c)^2\leq3(a^2+b^2+c^2)$ give
\begin{align}
\mathbb E\left[|\bar K|_T^2\right]
&\leq3r_0^{-2}\left(
\mathbb E|D_T|^2+\mathbb E|M_T|^2+\frac14\mathbb E Q_T^2
\right)\nonumber\\
&\leq3r_0^{-2}\left(4R^2B_b^2+2B_b^4+8S_\sigma^4\right)T^2
+3r_0^{-2}\left(8R^2S_\sigma^2+16S_\sigma^4\right)T
=\bar A T^2+\bar B T.
\end{align}
This completes the proof.
\end{proof}

\subsection{Proof of Lemma~\ref{lemma:alg2bri}}\label{sec:lemma:alg2bri}

\begin{proof}
Denote $r(s):=\lfloor s/\eta\rfloor\eta$. For $0\leq t\leq T$, set
\begin{equation}\label{eq:bridge:DeltaY}
\Delta_Y(t):=\sup_{0\leq s\leq t}\left\|\bar Y_s-\hat Y_s\right\|.
\end{equation}
The convex Skorokhod comparison estimate, with the outward regulator convention $X=Y-K$, gives, for every $t\leq T$,
\begin{align}
\left\|\bar X_t-\hat X_t\right\|^2
\leq\left\|\bar Y_t-\hat Y_t\right\|^2
-2\int_0^t
\left\langle
\left(\bar Y_t-\hat Y_t\right)-\left(\bar Y_s-\hat Y_s\right),
d\left(\bar K_s-\hat K_s\right)
\right\rangle.\label{eq:tanaka:bridge}
\end{align}
This is the convex Skorokhod comparison of \cite{tanaka1979stochastic}, used for penalization in \cite[Eq.~(4.3)]{slominski2013penalization}. Denote $D=\bar Y-\hat Y$, $V=\bar K-\hat K$, and $E=D-V=\bar X-\hat X$. Since $V_0=0$, the bounded variation identity with values after each jump gives
$$
\begin{aligned}
\|E_t\|^2&=\|D_t\|^2-2\langle D_t,V_t\rangle+\|V_t\|^2\\
&=\|D_t\|^2-2\int_0^t\langle D_t-D_s,dV_s\rangle-2\int_0^t\langle E_s,dV_s\rangle-\sum_{s\leq t}\|\Delta V_s\|^2.
\end{aligned}
$$
The normal cone conditions imply $\left\langle\bar X_s-\hat X_s,d\bar K_s\right\rangle\geq0$ and $\left\langle\bar X_s-\hat X_s,d\hat K_s\right\rangle\leq0$, so that $\int_0^t\langle E_s,dV_s\rangle\geq0$. Dropping this nonpositive contribution and the negative jump sum proves~\eqref{eq:tanaka:bridge}. By the definition of $\Delta_Y(t)$,
\begin{align}
2\left|\int_0^t
\left\langle
\left(\bar Y_t-\hat Y_t\right)-\left(\bar Y_s-\hat Y_s\right),
d\left(\bar K_s-\hat K_s\right)
\right\rangle\right|\nonumber
&\leq
4\Delta_Y(t)\operatorname{Var}_{[0,t]}\left(\bar K-\hat K\right)\nonumber\\
&\leq4\Delta_Y(t)\left(|\bar K|_t+|\hat K|_t\right).
\label{eq:tanaka:bridge:variation}
\end{align}
Taking the supremum in~\eqref{eq:tanaka:bridge} and using \eqref{eq:tanaka:bridge:variation} gives
\begin{equation}\label{eq:slominski:bridge:comparison}
\sup_{0\leq s\leq t}\left\|\bar X_s-\hat X_s\right\|^2
\leq\Delta^2_Y(T)
+4\Delta_Y(t)\left(|\bar K|_t+|\hat K|_t\right).
\end{equation}
The inequality applies to the c\`adl\`ag driver $\bar Y$ and the continuous driver $\hat Y$. In particular, it retains the contribution of $\bar K$. Since $\bar Y_{\lfloor s/\eta\rfloor\eta}=\hat Y_{\lfloor s/\eta\rfloor\eta}$ and the coefficients of $\hat Y$ are constant on $\left[\lfloor s/\eta\rfloor\eta,s\right]$,
\begin{align}
\bar Y_s-\hat Y_s
&=-\int_{\lfloor s/\eta\rfloor\eta}^s b\left(\bar X_{\lfloor s/\eta\rfloor\eta}\right)du
-\int_{\lfloor s/\eta\rfloor\eta}^s\sqrt2\sigma\left(\bar X_{\lfloor s/\eta\rfloor\eta}\right)dW_u\nonumber\\
&=-\left(s-\lfloor s/\eta\rfloor\eta\right)b\left(\bar X_{\lfloor s/\eta\rfloor\eta}\right)
-\sqrt2\sigma\left(\bar X_{\lfloor s/\eta\rfloor\eta}\right)\left(W_s-W_{\lfloor s/\eta\rfloor\eta}\right).
\label{eq:bridge:Y:difference}
\end{align}
Therefore, using $(a+b)^2\leq2a^2+2b^2$ and \eqref{eqn:b:sigma:bdd},
\begin{align}
\Delta^2_Y(T)
&\leq2B_b^2\eta^2
+4S_\sigma^2\sup_{0\leq s\leq T}
\left\|W_s-W_{r(s)}\right\|^2.\label{eq:bridge:Y:pathwise}
\end{align}
Lemma~\ref{lemma:bound:bm} gives
\begin{equation}\label{eq:bridge:Y:L2}
\mathbb E\left[\Delta^2_Y(T)\right]
\leq2B_b^2\eta^2+4CdS_\sigma^2\eta\log(T/\eta).
\end{equation}
Taking expectations in~\eqref{eq:slominski:bridge:comparison} at $t=T$ and applying the Cauchy-Schwarz inequality twice yield
\begin{align}
\mathbb E\left[\sup_{0\leq t\leq T}
\left\|\bar X_t-\hat X_t\right\|^2\right]
&\leq\mathbb E\left[\Delta^2_Y(T)\right]
+4\mathbb E\left[\Delta_Y(T)|\bar K|_T\right]
+4\mathbb E\left[\Delta_Y(T)|\hat K|_T\right]\nonumber\\
&\leq2B_b^2\eta^2+4CdS_\sigma^2\eta\log(T/\eta)\nonumber\\
&\quad+4\left(2B_b^2\eta^2
+4CdS_\sigma^2\eta\log(T/\eta)\right)^{1/2}
\cdot\left(\sqrt{\bar A T^2+\bar B T}
+\sqrt{AT^2+BT}\right).
\label{eq:bridge:first:bound}
\end{align}
Since $T\geq e$,
\begin{align}
\sqrt{AT^2+BT}
&\leq T\left(\sqrt A+\sqrt{B/T}\right)
\leq T\left(\sqrt A+\sqrt{B/e}\right),\nonumber\\
\sqrt{\bar A T^2+\bar B T}
&\leq T\left(\sqrt{\bar A}+\sqrt{\bar B/e}\right).
\label{eq:bridge:regulator:simplification}
\end{align}
Moreover,
\begin{equation}
\left(2B_b^2\eta^2
+4CdS_\sigma^2\eta\log(T/\eta)\right)^{1/2}\leq\sqrt2B_b\eta
+2\sqrt{Cd}\,S_\sigma\sqrt{\eta\log(T/\eta)}.
\label{eq:bridge:Y:L2:square:root}
\end{equation}
Substituting \eqref{eq:bridge:regulator:simplification} and \eqref{eq:bridge:Y:L2:square:root} into \eqref{eq:bridge:first:bound} gives
\begin{align}
\mathbb E\left[\sup_{0\leq t\leq T}
\left\|\bar X_t-\hat X_t\right\|^2\right]
&\leq2B_b^2\eta^2+4CdS_\sigma^2\eta\log(T/\eta)\nonumber\\
&\qquad\qquad+4TH_K\left(
\sqrt2B_b\eta
+2\sqrt{Cd}\,S_\sigma\sqrt{\eta\log(T/\eta)}
\right).
\label{eq:bridge:before:uniform}
\end{align}
To collect these terms, note that $0<\eta\leq1$, $T\geq e$, and $\log(T/\eta)\geq1$ imply
$$
\begin{aligned}
\eta^2&\leq\sqrt\eta\,T\log(T/\eta),\qquad \eta\log(T/\eta)\leq\sqrt\eta\,T\log(T/\eta),\\
T\eta&\leq\sqrt\eta\,T\log(T/\eta),\qquad T\sqrt{\eta\log(T/\eta)}\leq\sqrt\eta\,T\log(T/\eta).
\end{aligned}
$$
Applying these inequalities term by term in~\eqref{eq:bridge:before:uniform} gives
$$
\begin{aligned}
\mathbb E\left[\sup_{0\leq t\leq T}\left\|\bar X_t-\hat X_t\right\|^2\right]
&\leq\left[2B_b^2+4CdS_\sigma^2+4H_K\left(\sqrt2B_b+2\sqrt{Cd}\,S_\sigma\right)\right]\sqrt\eta\,T\log(T/\eta)\\
&=C_0\sqrt\eta\,T\log(T/\eta),
\end{aligned}
$$
where the last equality is~\eqref{eq:bridge:C0}. This proves~\eqref{eq:bridge:uniform}.
\end{proof}

\subsection{Proof of Theorem~\ref{global:error:2}}\label{sec:global:error:2}

\begin{proof}
Recall that $K$ and $\hat K$ are outward regulators of bounded variation on every finite time interval in the decompositions $X=Y-K$ and $\hat X=\hat Y-\hat K$. Let $L(dt):=d|K|_t$ and $\hat L(dt):=d|\hat K|_t$ be their total variation measures. The polar decomposition of the vector measures $dK$ and $d\hat K$ gives measurable vector fields $\nu$ and $\hat\nu$ such that
\begin{equation}
dK_t=\nu_tL(dt),
\qquad
d\hat K_t=\hat\nu_t\hat L(dt),
\label{eq:theorem:reflection:polar}
\end{equation}
with $\|\nu_t\|=1$ for $L(dt)$ almost every $t$ and $\|\hat\nu_t\|=1$ for $\hat L(dt)$ almost every $t$. The defining properties of the two Skorokhod problems further give
$$
\nu_t\in N_{\mathcal C}^{\rm out}(X_t)
\quad L(dt)\text{-almost everywhere},
\qquad
\hat\nu_t\in N_{\mathcal C}^{\rm out}(\hat X_t)
\quad \hat L(dt)\text{-almost everywhere},
$$
and $L$ and $\hat L$ are supported on the times at which $X$ and $\hat X$ lie on $\partial\mathcal C$, respectively. This establishes \eqref{eq:theorem:reflection:polar} from the defining properties of the two Skorokhod decompositions.

We now verify the sign of each reflection term. For $L(dt)$ almost every $t$, one has $\nu_t\in N_{\mathcal C}^{\rm out}(X_t)$. Since $\hat X_t\in\mathcal C$, substitute $y=\hat X_t$ into \eqref{eq:outward:normal:cone} to obtain
\begin{align}
\left\langle X_t-\hat X_t,\nu_t\right\rangle\geq0
\qquad L(dt)\text{-almost everywhere}.
\label{eq:theorem:reflection:first:sign}
\end{align}
For $\hat L(dt)$ almost every $t$, one has $\hat\nu_t\in N_{\mathcal C}^{\rm out}(\hat X_t)$. Since $X_t\in\mathcal C$, substitute $y=X_t$ into \eqref{eq:outward:normal:cone} to obtain
\begin{align}
\left\langle X_t-\hat X_t,\hat\nu_t\right\rangle\leq0
\qquad \hat L(dt)\text{-almost everywhere}.
\label{eq:theorem:reflection:second:sign}
\end{align}
Combining \eqref{eq:theorem:reflection:polar}-\eqref{eq:theorem:reflection:second:sign} yields
\begin{align}
\int_0^t\left\langle X_s-\hat X_s,d\left(K_s-\hat K_s\right)\right\rangle
&=\int_0^t\left\langle X_s-\hat X_s,\nu_s\right\rangle L(ds)-\int_0^t\left\langle X_s-\hat X_s,\hat\nu_s\right\rangle\hat L(ds)\geq0.\label{Skorokhod:ineqn}
\end{align}
By~\eqref{simplified eqn:RALD} and~\eqref{eqn:hat:x},
\begin{equation}
d\hat{X}_t = b\left(\bar{X}_{\lfloor t/\eta \rfloor\eta}\right)dt + \sqrt{2}\sigma\left(\bar{X}_{\lfloor t/\eta \rfloor\eta}\right)dW_t - d\hat{K}_t, \qquad \hat{X}_0=X_0,
\end{equation}
and
\begin{equation}
d{X}_t = b({X}_t)dt + \sqrt{2}\sigma({X}_t)dW_t - d{K}_t.
\end{equation}
Both $X$ and $\hat X$ are continuous semimartingales. Set $Z_t:=X_t-\hat X_t$ and $r(t):=\lfloor t/\eta\rfloor\eta$. Then
\begin{align}
dZ_t
&=\left[b(X_t)-b(\hat X_t)+b(\hat X_t)-b\left(\bar X_{r(t)}\right)\right]dt
\nonumber
\\
&\qquad\qquad+\sqrt2\left[\sigma(X_t)-\sigma(\hat X_t)
+\sigma(\hat X_t)-\sigma\left(\bar X_{r(t)}\right)\right]dW_t
-d\left(K_t-\hat K_t\right).
\end{align}
It\^o's formula gives
\begin{align}
d\|Z_t\|^2
&=2\left\langle Z_t,
b(X_t)-b(\hat X_t)+b(\hat X_t)-b\left(\bar X_{r(t)}\right)\right\rangle dt
\nonumber
\\
&\qquad+2\sqrt2\left\langle Z_t,
\left[\sigma(X_t)-\sigma(\hat X_t)
+\sigma(\hat X_t)-\sigma\left(\bar X_{r(t)}\right)\right]dW_t\right\rangle
\nonumber
\\
&\qquad\qquad+2\left\|\left[\sigma(X_t)-\sigma(\hat X_t)
+\sigma(\hat X_t)-\sigma\left(\bar X_{r(t)}\right)\right]I_d\right\|_{\rm F}^2dt
-2\left\langle Z_t,d\left(K_t-\hat K_t\right)\right\rangle
\nonumber
\\
&\leq2\left\langle Z_t,b(X_t)-b(\hat X_t)\right\rangle dt
+\left(\|Z_t\|^2+\left\|b(\hat X_t)-b\left(\bar X_{r(t)}\right)\right\|^2\right)dt
\nonumber
\\
&\qquad+2\sqrt2\left\langle Z_t,
\left[\sigma(X_t)-\sigma(\hat X_t)
+\sigma(\hat X_t)-\sigma\left(\bar X_{r(t)}\right)\right]dW_t\right\rangle
\nonumber
\\
&\qquad\qquad+4\left\|\left[\sigma(X_t)-\sigma(\hat X_t)\right]I_d\right\|_{\rm F}^2dt
+4\left\|\left[\sigma(\hat X_t)-\sigma\left(\bar X_{r(t)}\right)\right]I_d\right\|_{\rm F}^2dt,
\label{eq:ito:X:hat:X}
\end{align}
where~\eqref{Skorokhod:ineqn}, $2\langle u,v\rangle\leq
\|u\|^2+\|v\|^2$, and $2\|A+B\|_{\rm F}^2\leq
4\|A\|_{\rm F}^2+4\|B\|_{\rm F}^2$ were used. Now by assumptions, we can have
\begin{align}
2\left\langle Z_t,b(X_t)-b(\hat X_t)\right\rangle
&\leq-2m\|Z_t\|^2,\qquad
\left\|b(\hat X_t)-b\left(\bar X_{r(t)}\right)\right\|^2 \leq M^2\left\|\hat X_t-\bar X_{r(t)}\right\|^2,\nonumber\\
4\left\|\left[\sigma(X_t)-\sigma(\hat X_t)\right]I_d\right\|_{\rm F}^2
&\leq4\alpha\|Z_t\|^2,\qquad
4\left\|\left[\sigma(\hat X_t)-\sigma\left(\bar X_{r(t)}\right)\right]I_d\right\|_{\rm F}^2\leq4\alpha\left\|\hat X_t-\bar X_{r(t)}\right\|^2.
\label{eq:global:coefficient:bounds}
\end{align}
Let $q:=2m-1-4\alpha>0$ and $c:=M^2+4\alpha$. Apply the product formula to $e^{qt}\|Z_t\|^2$ in \eqref{eq:ito:X:hat:X}, using \eqref{eq:global:coefficient:bounds}. The reflection contribution $-2e^{qt}\langle Z_t,d(K_t-\hat K_t)\rangle$ is a nonpositive measure by the two pointwise normal cone inequalities above. The stochastic integral is square integrable on every fixed interval $[0,T]$, since $Z$, $\sigma$, and $e^{qt}$ are bounded there. Integrating, taking expectations, and using $Z_0=0$ gives
\begin{equation}
\label{eqn:lyapunov}
e^{qT}\mathbb E\|Z_T\|^2
\leq c\int_0^T e^{qs}
\mathbb E\left\|\hat X_s-\bar X_{r(s)}\right\|^2\,ds.
\end{equation}
Since $\bar X$ is constant on $[r(t),r(t)+\eta)$, $\bar X_{r(t)}=\bar X_t$. Lemma~\ref{lemma:alg2bri} gives
\begin{equation}\label{eq:bar:X:hat:X}
\sup_{0\leq t\leq T}\mathbb E\left\|\bar X_t-\hat X_t\right\|^2
\leq\mathbb E\left[\sup_{0\leq t\leq T}
\left\|\bar X_t-\hat X_t\right\|^2\right]
\leq C_0\sqrt\eta\,T\log(T/\eta),
\end{equation}
where $C_0$ is defined in~\eqref{eq:bridge:C0}. Dividing \eqref{eqn:lyapunov} by $e^{qT}$ and using $\int_0^T e^{-q(T-s)}ds\leq1/q$ gives
\begin{align}
\mathbb E\|Z_T\|^2
\leq c\int_0^T e^{-q(T-s)}
\mathbb E\left\|\hat X_s-\bar X_s\right\|^2ds
\leq\frac{c}{q}\sup_{0\leq s\leq T}
\mathbb E\left\|\hat X_s-\bar X_s\right\|^2.
\label{eq:global:gronwall}
\end{align}
Combining~\eqref{eq:global:gronwall} with~\eqref{eq:bar:X:hat:X} yields
\begin{equation}\label{eq:X:hatX:uniform}
\mathbb E\left\|X_T-\hat X_T\right\|^2
\leq \frac{M^2+4\alpha}{2m-1-4\alpha}
C_0\sqrt\eta\,T\log(T/\eta).
\end{equation}
Finally, Minkowski's inequality in $L^2$ and \eqref{eq:bar:X:hat:X}-\eqref{eq:X:hatX:uniform} give
\begin{align}
\mathcal W_2\left(\Law(\bar X_T),\Law(X_T)\right)
&\leq \left(\mathbb E\left\|\bar X_T-X_T\right\|^2\right)^{1/2}\nonumber\\
&\leq \left(\mathbb E\left\|\bar X_T-\hat X_T\right\|^2\right)^{1/2}
+\left(\mathbb E\left\|\hat X_T-X_T\right\|^2\right)^{1/2}\nonumber\\
&\leq \left(1+\sqrt{\frac{M^2+4\alpha}{2m-1-4\alpha}}\right)
\sqrt{C_0\,T\log(T/\eta)}\,\eta^{1/4}.
\end{align}
Squaring this inequality proves the result. The proof is complete.
\end{proof}

\subsection{Proof of Theorem~\ref{thm:final}}\label{sec:thm:final}

\begin{proof}
Fix $T=k\eta\geq e$. By Corollary~\ref{cor:W:2},
\begin{equation}
\mathcal{W}_{2}(\mathrm{Law}(X_{k\eta}),\pi)\leq 2^{3/4}R\left(\chi^{2}(\mu_{0}\Vert\pi)\right)^{1/4}\cdot e^{-\alpha_{0} k\eta/(2C_{P})},
\end{equation}
where $\alpha_{0}$ and $C_{P}$ are defined in Theorem~\ref{thm:chi:square}. Since $2m>1+4\alpha$, Theorem~\ref{global:error:2} gives
\begin{align}
\mathcal{W}_2\left(\mathrm{Law}(\bar{X}_{k\eta}),\mathrm{Law}(X_{k\eta})\right)  & \leq \eta^{1/4}(k\eta)^{1/2}\left(\log(k)\right)^{1/2}
\left(1+\sqrt{\frac{M^2+4\alpha}{2m-1-4\alpha}}\right)\sqrt{C_0},
\end{align}
where $C_{0}$ is given in Theorem~\ref{global:error:2}. Finally, by using the triangle inequality for $\mathcal W_2$ distance and the fact that $\mathrm{Law}(\bar{X}_{k\eta})=\mathrm{Law}(x_{k})$, we conclude that
\begin{align}
\mathcal{W}_{2}(\mathrm{Law}(x_{k}),\pi)
&\leq
\mathcal{W}_{2}(\mathrm{Law}(X_{k\eta}),\pi)
+\mathcal{W}_{2}(\mathrm{Law}(x_{k}),\mathrm{Law}(X_{k\eta}))
\nonumber
\\
&\leq 2^{3/4}R\left(\chi^{2}(\mu_{0}\Vert\pi)\right)^{1/4}\cdot e^{-\alpha_{0} k\eta/(2C_{P})}
+\eta^{1/4}(k\eta)^{1/2}\left(\log(k)\right)^{1/2}\bar{C}_{0},
\end{align}
where $\bar{C}_{0}$ is defined in \eqref{eq:thm:final:Cbar}. This proves \eqref{eq:thm:final:explicit}.

Suppose now that $\sigma\equiv1$ and that $B_b,M,m,R,r_0$ are independent of $d$. Then $\alpha=0$ and $S_\sigma=\|I_d\|_{\mathrm F}=\sqrt d$. The definitions in Lemmas~\ref{lemma:bound:K} and~\ref{lemma:alg2bri} give
$$
\begin{gathered}
A=\mathcal O(d^2),\quad B=\mathcal O(d),\quad
\bar A=\mathcal O(d^2),\quad \bar B=\mathcal O(d^2),\quad
H_K=\mathcal O(d),\quad C_0=\mathcal O(d^2).
\end{gathered}
$$
Since $M$ and $m$ are independent of $d$, \eqref{eq:thm:final:Cbar} yields
$$
\bar C_0
=\left(1+\sqrt{\frac{M^2}{2m-1}}\right)\sqrt{C_0} =\mathcal O(d).
$$
Substituting this estimate into \eqref{eq:thm:final:explicit} proves \eqref{eq:thm:final:dimension}. The proof is complete.
\end{proof}

\subsection{Proof of Corollary~\ref{cor:complexity}}\label{sec:cor:complexity}

\begin{proof}
Let $A_0$ and $T_\ep$ be as in the statement. The definition of $T_\ep$ gives
$$
    \frac{\alpha_0T_\ep}{2C_P}
    \geq\log\left(1 \vee \frac{2A_0}{\ep}\right).
$$
Consequently,
$$
    A_0\exp\left(-\frac{\alpha_0T_\ep}{2C_P}\right)
    \leq\frac{A_0}{\max\left\{1,2A_0/\ep\right\}}
    \leq\frac\ep2.
$$

Choose $k$ as in \eqref{eq:complexity:valid:k} and set $\eta=T_\ep/k$. Then $0<\eta\leq1$ and $k\eta=T_\ep\geq e$, so Theorem~\ref{thm:final} applies. The fourth power of its discretization term is
$$
\begin{aligned}
    \left(\bar C_0\eta^{1/4}(k\eta)^{1/2}(\log k)^{1/2}\right)^4
    &=\bar C_0^4\frac{T_\ep}{k}T_\ep^2(\log k)^2=\frac{\bar C_0^4T_\ep^3(\log k)^2}{k}
    \leq\frac{\ep^4}{16}.
\end{aligned}
$$
Taking fourth roots shows that the discretization term is at most $\ep/2$. Therefore
$$
    \mathcal W_2(\Law(x_k),\pi)
    \leq\frac\ep2+\frac\ep2
    =\ep.
$$

Finally, the function $k/(\log k)^2$ tends to infinity. The smallest integer satisfying \eqref{eq:complexity:valid:k} is bounded, up to logarithmic factors, by
$$
    T_\ep+\frac{\bar C_0^4T_\ep^3}{\ep^4}.
$$
Since $T_\ep$ grows at most logarithmically in $1/\ep$ when the remaining parameters are fixed, this proves the stated complexity. Under the additional scaling assumptions in Theorem~\ref{thm:final}, the second part of that theorem gives $\bar C_0=\mathcal O(d)$, which yields the $d^4$ dependence in \eqref{under:additional}. This completes the proof.
\end{proof}

\subsection{Proof of Proposition~\ref{prop:const:strong:euler}}\label{sec:prop:const:strong:euler}
\begin{proof}
Fix an interior ball $B(z,r)\subset\mathcal C$, and write
$$
D:=\operatorname{diam}(\mathcal C), \quad S:=\sup_{x\in\mathcal C}\|\sigma(x)I_d\|_{\mathrm F}.
$$
Use the bridge $\hat X$ from \eqref{eqn:hat:x} with $b=0$. The regulator estimates in the proof of Lemma~\ref{lemma:bound:K} use only bounded coefficients and an interior ball. To allow every initial point, expand the squared distance to $z$, rather than to $X_0$. The initial squared distance is then at most $D^2$. The same continuous and discrete energy identities give
$$
r|\hat K|_s\leq D^2/2+|\hat M_s|+S^2s,\qquad r|\bar K|_s\leq D^2/2+|\bar M_s|+\tfrac12 Q_s,
$$
Here, with $\Delta W_{j+1}:=W_{(j+1)\eta}-W_{j\eta}$, the martingale terms are
$$
\begin{aligned}
\hat M_s:=\int_0^s\left\langle\hat X_u-z,
\sqrt2\sigma\left(\bar X_{\lfloor u/\eta\rfloor\eta}\right)\,dW_u\right\rangle,\quad \bar M_s:=\sum_{j=0}^{n-1}\left\langle\bar X_{j\eta}-z,
\sqrt2\sigma\left(\bar X_{j\eta}\right)\Delta W_{j+1}\right\rangle,
\end{aligned}
$$
and 
\[
Q_s:=\sum_{j=0}^{n-1}\left\|\sqrt2\sigma\left(\bar X_{j\eta}\right)\Delta W_{j+1}\right\|^2.
\]
Since the states lie in $\mathcal C$ and the coefficients are bounded, It\^o's isometry and orthogonality of the discrete martingale increments give $\mathbb E|\hat M_s|^2,\mathbb E|\bar M_s|^2\leq2D^2S^2s$. Since $\sigma$ is scalar,
$$
Q_s\leq\frac{2S^2}{d}\sum_{j=0}^{n-1}\|\Delta W_{j+1}\|^2,\qquad
\mathbb E Q_s^2\leq4S^4(s^2+2s\eta/d)\leq12S^4s^2.
$$
Consequently,
\begin{align}
\mathbb E\left[|\hat K|_s^2\right]&\leq
R_{\rm c}(s):=3r^{-2}\left(D^4/4+2D^2S^2s+S^4s^2\right),\nonumber\\
\mathbb E\left[|\bar K|_s^2\right]&\leq
R_{\rm d}(s):=3r^{-2}\left(D^4/4+2D^2S^2s+3S^4s^2\right).
\label{eq:const:regulator:bounds}
\end{align}
These bounds are uniform over $X_0\in\mathcal C$, in particular, they do not require the initial law to avoid the boundary. The Brownian maximum argument of Lemma~\ref{lemma:bound:bm}, applied on the $n$ grid intervals, yields
$$
\mathbb E\left[\max_{0\leq j<n}\sup_{j\eta\leq t\leq(j+1)\eta}
\|W_t-W_{j\eta}\|^2\right]\leq C_{\rm B}d\eta(1+\log n)
$$
for a universal $C_{\rm B}$. Indeed, the one dimensional reflection bound, followed by a union bound over intervals and integration of the tail, gives $C_{\rm B}\eta(1 + \log n)$ per coordinate. This version holds for every $s>0$. Thus,
$$
\mathbb E\left[\sup_{t\leq s}\left\|\bar Y_t-\hat Y_t\right\|^2\right]\leq\beta\eta(1+\log n),\qquad \beta:=2C_{\rm B}S^2.
$$
The convex Skorokhod comparison \eqref{eq:slominski:bridge:comparison} and Cauchy-Schwarz inequality give
$$
\mathbb E\left[\sup_{t\leq s}\left\|\bar X_t-\hat X_t\right\|^2\right]
\leq\beta\eta(1+\log n)+4\sqrt{\beta\eta(1+\log n)}\,V_s,\qquad
V_s:=\sqrt{R_{\rm c}(s)}+\sqrt{R_{\rm d}(s)}.
$$
Since $(1 + \log n)\leq n$, we have $\eta(1+\log n)\leq n\eta=s$. Define
\begin{equation}\label{eq:const:Is}
G_s:=\beta\sqrt s+4\sqrt\beta\,V_s,\qquad
I_s:=\left(1+\sqrt{e^{4\ell_\sigma^2s}-1}\right)\sqrt{G_s}.
\end{equation}
The bridge error is at most $G_s\sqrt{\eta(1+\log n)}$. For $f(t)=\mathbb E\|X_t-\hat X_t\|^2$, the reflection sign already proved in \eqref{Skorokhod:ineqn}, It\^o's formula, and coefficient Lipschitz continuity imply
$$
f(t)\leq4\ell_\sigma^2\int_0^t\left(f(u)+G_s\sqrt{\eta(1+\log n)}\right)\,du,\qquad t\leq s.
$$
Therefore, Gr\"{o}nwall's inequlity gives
\[
f(s)\leq\left(e^{4\ell_\sigma^2s}-1\right)G_s\sqrt{\eta(1+\log n)}.
\]
This formula also holds when $\ell_\sigma=0$, since then $f(s)=0$. The synchronous construction and the $L^2$ triangle inequality now yield
$$
\begin{aligned}
\mathcal W_2\left(\Law(\bar X_s),\Law(X_s)\right)
&\leq\left(\mathbb E\left\|\bar X_s-\hat X_s\right\|^2\right)^{1/2}+\left(f(s)\right)^{1/2}\\
&\leq\left(1+\sqrt{e^{4\ell_\sigma^2s}-1}\right)\sqrt{G_s}[\eta(1+\log n)]^{1/4}\\
&:=I_s[\eta(1+\log(s/\eta))]^{1/4}.
\end{aligned}
$$
The estimates are uniform over the initial point in $\mathcal C$. Conditioning on any common initial law and integrating preserves the same bound, proving~\eqref{eq:const:strong:euler:log-rate}.
\end{proof}

\subsection{Proof of Lemma~\ref{lem:const:semigroup:stability}}\label{sec:const:semigroup:stability}
\begin{proof}
Synchronous coupling, the convex reflection sign, and It\^o's formula give $\mathbb E\|X_t^x-X_t^y\|^2\leq e^{2\ell_\sigma^2t}\|x-y\|^2$. Integrating over a coupling of the initial laws proves \eqref{eq:const:short:stability}. For smooth $u$, integration along the segment in $\mathcal C$ and Cauchy-Schwarz inequality give
$$
|P_su(y)-P_su(x)|\leq\bigl(\mathbb E\|X_s^y-X_s^x\|^2\bigr)^{1/2}\left(\mathbb E\int_0^1\left\|\nabla u\left((1-\theta)X_s^x+\theta X_s^y\right)\right\|^2d\theta\right)^{1/2}.
$$
The mean square bound gives $X_s^y\to X_s^x$ in probability as $y\to x$. Dividing by $\|y-x\|$ and using bounded uniform continuity of $\nabla u$ yields
$$
\limsup_{\substack{y\to x\\y\in\mathcal C,\ y\neq x}}
\frac{|P_su(y)-P_su(x)|}{\|y-x\|}
\leq e^{\ell_\sigma^2s}\bigl[P_s(|\nabla u|^2)(x)\bigr]^{1/2}.
$$
The kernel bound $\kappa_s\leq M_s$ and integration of local slopes along segments yield
\begin{equation}\label{eq:const:H1:smoothing}
\operatorname{Lip}(P_su)\leq e^{\ell_\sigma^2s}\sqrt{M_s}\,
\|\nabla u\|_{L^2(\pi)}.
\end{equation}
Smooth approximation in $H^1(\mathcal C)$ and $\|P_sv\|_\infty\leq\sqrt{M_s}\|v\|_2$ extend \eqref{eq:const:H1:smoothing} to $u\in H^1(\mathcal C)$. Spectral calculus for $-\mathcal L$ from Proposition~\ref{prop:continuous:structure}~(ii) gives, for $\pi(f)=0$ and $v\geq0$,
$$
\mathcal E(P_{s+v}f,P_{s+v}f)
\leq\frac{1}{2es}\|P_vf\|_2^2
\leq\frac{e^{-2g_0v}}{2es}\|f\|_2^2,
$$
because $\sup_{z\geq0}ze^{-2sz}=1/(2es)$ and the spectral gap is at least $g_0$. Using $\mathcal E(u,u)\geq\lambda\|\nabla u\|_2^2$ in \eqref{eq:const:H1:smoothing}, we obtain
$$
\operatorname{Lip}(P_tf)
\leq e^{\ell_\sigma^2s}\sqrt{\frac{M_s}{2es\lambda}}\,
e^{-g_0(t-2s)}\|f\|_2,\qquad t\geq2s.
$$
To pass from test functions to densities, first note that the heat kernel bound makes $\mu P_t$ and $\nu P_t$ have bounded densities for $t\geq s$. Their difference has zero integral. For any mean zero $f\in L^2(\pi)$ with $\|f\|_2\leq1$,
$$
\left|\int_{\mathcal C}f\,d(\mu P_t-\nu P_t)\right|=\left|\int_{\mathcal C}P_tf\,d(\mu-\nu)\right|\leq\operatorname{Lip}(P_tf)\mathcal W_1(\mu,\nu).
$$
Taking the supremum over these test functions gives the $L^2(\pi)$ norm of the density difference. Combining the preceding Lipschitz estimate with $\mathcal W_1(\mu,\nu)\leq\mathcal W_2(\mu,\nu)$ gives
\begin{equation}\label{eq:const:density:stability}
\left\|\frac{d(\mu P_t-\nu P_t)}{d\pi}\right\|_2
\leq e^{\ell_\sigma^2s}\sqrt{\frac{M_s}{2es\lambda}}\,
e^{-g_0(t-2s)}\mathcal W_2(\mu,\nu).
\end{equation}
For probability densities $a,b\in L^2(\pi)$ with $a,b\geq m>0$, solve the weak Neumann equation
$$
\int_{\mathcal{C}}\nabla\psi\cdot\nabla v\,d\pi=\int_{\mathcal{C}}(b-a)v\,d\pi,\qquad \pi(\psi)=0.
$$
On the mean zero subspace of $H^1(\mathcal C)$, the Poincar\'e inequality makes the gradient form coercive, so Lax-Milgram~\cite[Corollary~5.8]{brezis2011functional} gives a unique mean zero solution. Taking $v=\psi$ in its weak equation gives
$$
\|\nabla\psi\|_2^2=\int_{\mathcal C}(b-a)\psi\,d\pi\leq\|b-a\|_2\|\psi\|_2\leq\sqrt{C_P}\|b-a\|_2\|\nabla\psi\|_2.
$$
Dividing when $\|\nabla\psi\|_2>0$, with the zero case immediate, yields $\|\nabla\psi\|_2^2\leq C_P\|a-b\|_2^2$. For the density path $a_\theta=(1-\theta)a+\theta b$, the flux $(d\pi/dx)\nabla\psi$ satisfies the weak continuity equation with zero boundary flux. The Benamou-Brenier action bound~\cite{benamou2000computational} gives
$$
\mathcal W_2^2(a\pi,b\pi)
\leq\int_0^1\int_{\mathcal{C}}\frac{|\nabla\psi|^2}{a_\theta}\,d\pi\,d\theta
\leq\frac{C_P}{m}\|a-b\|_2^2.
$$
This weak formulation requires no differentiability of $d\pi/dx$, see \cite[Section~2]{peyre2018comparison}. For $t\geq s$, the semigroup property gives $m_s\leq\kappa_t\leq M_s$, so the densities of $\mu P_t,\nu P_t$ lie in $L^2(\pi)$ and are bounded below by $m_s$ for arbitrary initial laws. Applying the transport estimate to the densities of $\mu P_t$ and $\nu P_t$, then using~\eqref{eq:const:density:stability}, gives
$$
\begin{aligned}
\mathcal W_2(\mu P_t,\nu P_t)&\leq\sqrt{\frac{C_P}{m_s}}\left\|\frac{d(\mu P_t-\nu P_t)}{d\pi}\right\|_2\\
&\leq e^{\ell_\sigma^2s}\sqrt{\frac{C_PM_s}{2es\lambda m_s}}e^{-g_0(t-2s)}\mathcal W_2(\mu,\nu)\\
&=H_se^{-g_0(t-2s)}\mathcal W_2(\mu,\nu),
\end{aligned}
$$
since $g_0=\lambda/C_P$ and $H_s$ is defined in~\eqref{eq:const:heatkernel:constants}. This proves~\eqref{eq:const:long:stability}.
\end{proof}

\subsection{Proof of Theorem~\ref{thm:U:0:const:strong}}\label{sec:thm:U:0:const:strong}
\begin{proof}
Let $Q_\eta$ denote one projected Euler-Maruyama step and $Q=Q_\eta^n$. Proposition~\ref{prop:const:strong:euler}, conditioned on the initial state, gives for every probability measure $\nu$ on $\mathcal C$
$$
\mathcal W_2(\nu Q,\nu P_s)\leq
\delta_s:=I_s[\eta(1+\log(s/\eta))]^{1/4}.
$$
Set $\rho_i:=\mu_0Q^iP_{(m-i)s}$ for $i=0,\ldots,m$, so that $\rho_0=\mu_0P_{ms}$ and $\rho_m=\mu_0Q^m$. For $i=1,\ldots,m$, write $\nu_i:=\mu_0Q^{i-1}$ and $j=m-i$. The semigroup property gives
$$
\rho_i=(\nu_iQ)P_{js},\qquad
\rho_{i-1}=(\nu_iP_s)P_{js}.
$$
Since $\mathcal W_2(\nu_iQ,\nu_iP_s)\leq\delta_s$, Lemma~\ref{lem:const:semigroup:stability} yields
$$
\mathcal W_2(\rho_i,\rho_{i-1})
\leq\delta_s
\begin{cases}
1,&j=0,\\
e^{\ell_\sigma^2s},&j=1,\\
H_se^{-g_0(j-2)s},&j\geq2.
\end{cases}
$$
For $m\geq3$, the triangle inequality and the change of index $j=m-i$ give
$$
\begin{aligned}
\mathcal W_2\left(\mu_0Q^m,\mu_0P_{ms}\right)&\leq\sum_{i=1}^m\mathcal W_2(\rho_i,\rho_{i-1})\\
&\leq\delta_s\left(1+e^{\ell_\sigma^2s}+H_s\sum_{j=2}^{m-1}e^{-g_0(j-2)s}\right)\\
&\leq\delta_s\left(1+e^{\ell_\sigma^2s}+H_s\sum_{j=2}^{\infty}e^{-g_0(j-2)s}\right)\\
&=\delta_s\left(1+e^{\ell_\sigma^2s}+\frac{H_s}{1-e^{-g_0s}}\right)=B_s\delta_s.
\end{aligned}
$$
For $m=1$, only $j=0$ occurs, while for $m=2$ only $j=0,1$ occur. The same triangle inequality therefore gives
$$
\mathcal W_2\left(\mu_0Q,\mu_0P_s\right)\leq\delta_s\leq B_s\delta_s,\quad
\mathcal W_2\left(\mu_0Q^2,\mu_0P_{2s}\right)\leq\delta_s\left(1+e^{\ell_\sigma^2s}\right)\leq B_s\delta_s.
$$
Thus the estimate holds for every integer $m\geq1$. To identify the numerical law, use the one step transition kernel $Q_\eta$ and the initial law $\Law(\bar X_0)=\mu_0$. For every Borel set $A\subseteq\mathcal C$ and integer $k\geq0$, the Markov property gives
$$
\mathbb P(\bar X_{(k+1)\eta}\in A)=\int_{\mathcal C}Q_\eta(x,A)\,\Law(\bar X_{k\eta})(dx).
$$
Induction yields $\Law(\bar X_{k\eta})=\mu_0Q_\eta^k$. Since $Q=Q_\eta^n$ and $n\eta=s$,
$$
\mu_0Q^m=\mu_0(Q_\eta^n)^m=\mu_0Q_\eta^{nm}=\Law(\bar X_{nm\eta})=\Law(\bar X_{ms}).
$$
Substituting this identity, $T=ms$, the definition of $\delta_s$, and $F_s=B_sI_s$ into the preceding estimate gives
$$
\begin{aligned}
\mathcal W_2\left(\Law(\bar X_T),\mu_0P_T\right)&=\mathcal W_2\left(\mu_0Q^m,\mu_0P_{ms}\right)\\
&\leq B_s\delta_s=B_sI_s\left[\eta\left(1+\log\frac{s}{\eta}\right)\right]^{1/4}
=F_s\left[\eta\left(1+\log\frac{s}{\eta}\right)\right]^{1/4},
\end{aligned}
$$
which proves~\eqref{eq:const:uniform:comparison}.
Under the additional initial $\chi^2$ condition, Corollary~\ref{cor:W:2}, $\alpha_0\geq\lambda$, and $g_0=\lambda/C_P$ give
$$
\begin{aligned}
\mathcal W_2(\Law(\bar X_{ms}),\pi)
&\leq\mathcal W_2(\mu_0Q^m,\mu_0P_{ms}) +\mathcal W_2(\mu_0P_{ms},\pi)\leq B_s\delta_s+A_0e^{-g_0ms/2}.
\end{aligned}
$$
Substituting $F_s=B_sI_s$ and $T=ms$ proves \eqref{eq:const-ref-unconditional-W2}.
\end{proof}

\subsection{Proof of Corollary~\ref{cor:complexity:const}}\label{sec:cor:complexity:const}
\begin{proof}
The chosen lower bound on $m$ gives
$$
A_0e^{-g_0ms/2}\leq\frac{A_0}{1+2A_0/\ep}=\frac{\ep A_0}{\ep+2A_0}\leq\frac\ep2.
$$ 
Since $\eta=s/n$, the fourth power of the discretization term is
$$
F_s^4\eta(1+\log(s/\eta))=F_s^4s\,\frac{(1 + \log n)}n\leq\frac{\ep^4}{16}.
$$
Thus the two terms in \eqref{eq:const-ref-unconditional-W2} sum to at most $\ep$. For fixed $s$, choose $m$ to be the larger of $1$ and the ceiling of its stated lower bound, so $m=\mathcal O(1+\log(1/\ep))$. For sufficiently small $\ep$, the explicit choice
$
n=\left\lceil16F_s^4s\,\ep^{-4}[\log(1/\ep)]^2\right\rceil
$
satisfies the required condition, because $\log n=4\log(1/\ep)+2\log\log(1/\ep)+\mathcal O(1)$ and hence
$
\frac{n}{(1+\log n)(16F_s^4s\,\ep^{-4})}\rightarrow\infty.
$
Thus $n=\widetilde{\mathcal O}(\ep^{-4})$ and $k=mn=\widetilde{\mathcal O}(\ep^{-4})$.
\end{proof}

\subsection{Proof of Corollary~\ref{cor:const:conditioning}}\label{sec:const:conditioning}
\begin{proof}
Set $\tau=1/g_0$ and $u_x=\kappa_\tau(x,\cdot)-1$. Symmetry and the semigroup identity give, for $v\geq0$,
$$
\pi(u_x)=0,\qquad
\|u_x\|_2^2=\kappa_{2\tau}(x,x)-1,\qquad
\kappa_{2\tau+v}(x,y)-1=\langle u_x,P_vu_y\rangle_2.
$$
In particular, the kernel identities follow by expanding $u_x=\kappa_\tau(x,\cdot)-1$ and applying symmetry and the semigroup property:
$$
\begin{aligned}
\|u_x\|_2^2&=\int_{\mathcal C}\kappa_\tau(x,z)\kappa_\tau(z,x)\pi(dz)-1=\kappa_{2\tau}(x,x)-1,\\
\langle u_x,P_vu_y\rangle_2&=\int_{\mathcal C}\int_{\mathcal C}\kappa_\tau(x,z)\kappa_v(z,w)\kappa_\tau(w,y)\pi(dw)\pi(dz)-1=\kappa_{2\tau+v}(x,y)-1, \quad v>0.
\end{aligned}
$$
For $v=0$, the second identity follows directly from $P_0=I$. Since $\pi(u_y)=0$, the spectral gap gives $\|P_vu_y\|_2\leq e^{-g_0v}\|u_y\|_2$. Consequently,
$$
\left|\langle u_x,P_vu_y\rangle_2\right|\leq e^{-g_0v}\sqrt{\kappa_{2\tau}(x,x)-1}\sqrt{\kappa_{2\tau}(y,y)-1}\leq e^{-g_0v}A_{\rm hk}.
$$
Taking the supremum gives
$$
\sup_{x,y}|\kappa_{2\tau+v}(x,y)-1|
\leq \exp(-g_0v)A_{\rm hk}.
$$
At $v=s_*-2/g_0=g_0^{-1}\log\max\{1,2A_{\rm hk}\}$, the right side is at most $1/2$. Thus $m_{s_*}\geq1/2$ and $M_{s_*}\leq3/2$. Using $s_*=a_*/g_0$ in \eqref{eq:const:heatkernel:constants}, we obtain
$$
\begin{aligned}
H_{s_*} &=\exp(\ell_\sigma^2s_*)\sqrt{\frac{M_{s_*}}{2e\,g_0s_*m_{s_*}}}
\leq\exp\!\left(\frac{\ell_\sigma^2}{g_0}a_*\right)\sqrt{\frac{3/2}{2e\,a_*(1/2)}}\\
&=\exp\!\left(\frac{\ell_\sigma^2}{g_0}a_*\right)\sqrt{\frac{3}{2e\,a_*}}
\leq\exp\!\left(\frac{\ell_\sigma^2}{g_0}a_*\right)\sqrt{\frac{3}{2a_*}}.
\end{aligned}
$$
Since $g_0s_*=a_*\geq2$ and $e^{\ell_\sigma^2s_*}\geq1$, the definitions in~\eqref{eq:const:Fs} and~\eqref{eq:const:Is} give
$$
\begin{aligned}
B_{s_*}&\leq e^{\ell_\sigma^2s_*}\left(2+\frac{\sqrt{3/(2ea_*)}}{1-e^{-a_*}}\right)\leq e^{\ell_\sigma^2s_*}\left(2+\frac{\sqrt{3/(4e)}}{1-e^{-2}}\right)\leq3e^{\ell_\sigma^2s_*},\\
I_{s_*}&=\left(1+\sqrt{e^{4\ell_\sigma^2s_*}-1}\right)\sqrt{G_{s_*}}\leq2e^{2\ell_\sigma^2s_*}\sqrt{G_{s_*}}.
\end{aligned}
$$
Under $\ell_\sigma^2/g_0\leq\ell_*$, their product satisfies
$$
F_{s_*}\leq6\exp\left(3\ell_\sigma^2s_*\right)\sqrt{G_{s_*}}
\leq6\exp(6\ell_*)\max\{1,2A_{\rm hk}\}^{3\ell_*}\sqrt{G_{s_*}},
$$
which proves \eqref{eq:const:conditioning:Fs}. Finally, $\operatorname{Lip}(\sigma)\leq\sqrt\Lambda\,\operatorname{Lip}(U)/2$ gives $\ell_\sigma^2\leq d\Lambda\,\operatorname{Lip}(U)^2/4$, proving the stated sufficient condition.
\end{proof}

\subsection{Proof of Lemma~\ref{lemma:propositions of Gaussian smoothing}}
\label{sec:proof:Gaussian:preliminary:bounds}

\begin{proof}
For every $\mu>0$, $g_\mu$ is smooth, $g$ has at most linear growth, while every derivative of the Gaussian kernel is integrable against that growth, locally uniformly in $x$. Differentiation under the convolution integral is therefore valid. Since $g_\mu(x)=\mathbb E[g(x+\mu\xi)]$ and $\xi\sim\mathcal N(0,I_d)$,
$$
\begin{aligned}
    |h_\mu(x)|
    &=\left|g(x)-\mathbb E[g(x+\mu\xi)]\right|\\
    &\leq\mathbb E|g(x)-g(x+\mu\xi)|
    \leq K\mu\mathbb E\|\xi\|
    \leq K\mu\left(\mathbb E\|\xi\|^2\right)^{1/2}
    =K\mu\sqrt d.
\end{aligned}
$$
This proves \eqref{eq:Gaussian:hmu:sup}. For any $x,y\in\mathcal C$, the Lipschitz continuity of $g$ with constant $K$ gives the same Lipschitz bound for $g_\mu$,
\begin{align}
|g_{\mu}(x)-g_{\mu}(y)|
&=\left|\mathbb E\!\left[g(x+\mu\xi)-g(y+\mu\xi)\right]\right|\leq \mathbb E\!\left[
|g(x+\mu\xi)-g(y+\mu\xi)|
\right]\leq K\|x-y\|.
\end{align}
Hence,
\begin{align}
|h_{\mu}(x)-h_{\mu}(y)|
\leq |g(x)-g(y)|+|g_{\mu}(x)-g_{\mu}(y)|\leq 2K\|x-y\|,
\end{align}
which proves~\eqref{eq:Gaussian:hmu:Lipschitz}. Since $g_{\mu}$ is differentiable and Lipschitz with constant $K$, for every unit vector $v\in\mathbb R^d$,
\begin{equation}
|\langle\nabla g_{\mu}(x),v\rangle|
=\lim_{t\to0}
\frac{|g_{\mu}(x+tv)-g_{\mu}(x)|}{|t|}
\leq K.
\end{equation}
Taking the supremum over $\|v\|=1$ gives $\|\nabla g_{\mu}(x)\|\leq K$. Since $0\in\mathcal C\subset B(0,R)$,
\begin{align}
\|\nabla U_{\mu}(x)\|
&\leq \|\nabla f(x)\|+\|\nabla g_{\mu}(x)\|\nonumber\\
&\leq \|\nabla f(0)\|+L_f\|x\|+K
\leq \|\nabla f(0)\|+L_fR+K.
\end{align}
Taking the supremum over $x\in\mathcal C$ proves \eqref{eq:Gaussian:Gmu:explicit}.

It remains to verify the weak convexity and smoothness of $g_\mu$. Fix $x,y\in\mathbb R^d$ and $\lambda\in[0,1]$. Applying the weak convexity inequality with parameter $\rho$ in \eqref{assumption:g:convex:Gaussian:smoothing} to the two points $x+\mu\xi$ and $y+\mu\xi$, we obtain, for every $\xi\in\mathbb R^d$,
\begin{align}
g\!\left(\lambda x+(1-\lambda)y+\mu\xi\right)
&=g\!\left(\lambda(x+\mu\xi)
+(1-\lambda)(y+\mu\xi)\right)\nonumber\\
&\leq \lambda g(x+\mu\xi)+(1-\lambda)g(y+\mu\xi)
+\frac{\rho}{2}\lambda(1-\lambda)
\|(x+\mu\xi)-(y+\mu\xi)\|^2\nonumber\\
&=\lambda g(x+\mu\xi)+(1-\lambda)g(y+\mu\xi)
+\frac{\rho}{2}\lambda(1-\lambda)\|x-y\|^2.
\end{align}
Taking expectations with respect to $\xi$ on both sides yields
\begin{align}
g_{\mu}\!\left(\lambda x+(1-\lambda)y\right)
&\leq \lambda g_{\mu}(x)+(1-\lambda)g_{\mu}(y)
+\frac{\rho}{2}\lambda(1-\lambda)\|x-y\|^2.
\end{align}
This is precisely the weak convexity inequality with parameter $\rho$ in \eqref{assumption:g:convex:Gaussian:smoothing}. Hence $g_{\mu}$ is weakly convex with parameter $\rho$. Differentiating the Gaussian convolution kernel and using $\mathbb E\xi=0$ gives
$$
    \nabla g_\mu(x)
    =\frac1\mu\mathbb E\left[\xi g(x+\mu\xi)\right] =\frac1\mu\mathbb E\left[\xi\left(g(x+\mu\xi)-g(x)\right)\right].
$$
Consequently,
\begin{align}
\|\nabla g_{\mu}(x)-\nabla g_{\mu}(y)\|
&\leq \frac1\mu\mathbb E\!\left[
\|\xi\|\,|g(x+\mu\xi)-g(y+\mu\xi)|
\right]\leq \frac{K}{\mu}\mathbb E\|\xi\|\,\|x-y\|
\leq \frac{K\sqrt d}{\mu}\|x-y\|.
\end{align}
Since $g_\mu$ is differentiable and weakly convex with parameter $\rho$, the function
$$
    x\mapsto g_\mu(x)+\frac{\rho}{2}\|x\|^2
$$
is differentiable and convex. The monotonicity of its gradient gives
$$
\begin{aligned}
0 &\leq
\left\langle
\nabla g_\mu(x)+\rho x-\nabla g_\mu(y)-\rho y,
x-y
\right\rangle=
\langle\nabla g_\mu(x)-\nabla g_\mu(y),x-y\rangle
+\rho\|x-y\|^2.
\end{aligned}
$$
Therefore,
$$
\begin{aligned}
\langle\nabla U_\mu(x)-\nabla U_\mu(y),x-y\rangle
&=
\langle\nabla f(x)-\nabla f(y),x-y\rangle
+\langle\nabla g_\mu(x)-\nabla g_\mu(y),x-y\rangle\\
&\geq(m_f-\rho)\|x-y\|^2.
\end{aligned}
$$
The two gradient Lipschitz bounds also give
$$
\begin{aligned}
\|\nabla U_\mu(x)-\nabla U_\mu(y)\|
&\leq
\|\nabla f(x)-\nabla f(y)\|
+\|\nabla g_\mu(x)-\nabla g_\mu(y)\|\leq
\left(L_f+\frac{K\sqrt d}{\mu}\right)\|x-y\|.
\end{aligned}
$$
The proof is complete.
\end{proof}
\subsection{Proof of Lemma~\ref{lemma:Gaussian:sufficient:conditions}}
\label{sec:proof:Gaussian:sufficient:conditions}

\begin{proof}
Fix $x,y\in\mathcal C$ with $x\neq y$ and define
$$
    r_\mu(x) :=\nabla U_\mu(x)(e^{h_\mu(x)}-1).
$$
Then
\begin{equation}\label{eq:Gaussian:b:perturbation}
    b(x)=-\nabla U_\mu(x)-r_\mu(x).
\end{equation}
By the definition of $\varepsilon_{b,\mu}$,
$$
    \|r_\mu(x)-r_\mu(y)\|
    \leq
    \varepsilon_{b,\mu}\|x-y\|.
$$
Since $U_\mu$ is strongly convex with constant $m_f-\rho$,
\begin{align}
    \langle b(x)-b(y),x-y\rangle
    &=
    -\langle\nabla U_\mu(x)-\nabla U_\mu(y),x-y\rangle
    -\langle r_\mu(x)-r_\mu(y),x-y\rangle
    \nonumber\\
    &\leq
    -(m_f-\rho)\|x-y\|^2
    +\varepsilon_{b,\mu}\|x-y\|^2
    =
    -m_\mu\|x-y\|^2.
    \label{eq:Gaussian:b:dissipativity}
\end{align}
By Lemma~\ref{lemma:propositions of Gaussian smoothing},
$$
    \left\|\nabla U_\mu(x)-\nabla U_\mu(y)\right\|
    \leq
    \left(L_f+\frac{K\sqrt d}{\mu}\right)\|x-y\|.
$$
Combining this estimate with \eqref{eq:Gaussian:b:perturbation} gives
\begin{align}
    \|b(x)-b(y)\|
    &\leq
    \|\nabla U_\mu(x)-\nabla U_\mu(y)\|
    +\|r_\mu(x)-r_\mu(y)\|
    \nonumber\\
    &\leq
    \left(
    L_f+\frac{K\sqrt d}{\mu}
    +\varepsilon_{b,\mu}
    \right)\|x-y\|
   =M_\mu\|x-y\|.
    \label{eq:Gaussian:b:Lipschitz}
\end{align}
Moreover,
\begin{align}
    \left\|\sigma(x)I_d-\sigma(y)I_d\right\|_{\mathrm F}
    =
    \sqrt d\,
    \left|e^{h_\mu(x)/2}-e^{h_\mu(y)/2}\right|
    \leq
    \sqrt d\,\varepsilon_{\sigma,\mu}\|x-y\|
    =
    \sqrt{\alpha_\mu}\|x-y\|.
    \label{eq:Gaussian:sigma:Lipschitz}
\end{align}
It remains to verify the required relations among the constants. From \eqref{eq:Gaussian:perturbation:small},
$$
    \varepsilon_{b,\mu}
    +2\alpha_\mu < m_f-\rho-\frac12.
$$
Therefore
$$
\begin{aligned}
    m_\mu &=m_f-\rho-\varepsilon_{b,\mu}>\frac12+2\alpha_\mu >\alpha_\mu,
\end{aligned}
$$
so $m_\mu>0$ and $0\leq\alpha_\mu<m_\mu$. Hence \eqref{eq:Gaussian:b:dissipativity}, \eqref{eq:Gaussian:b:Lipschitz}, and \eqref{eq:Gaussian:sigma:Lipschitz} verify Assumptions~\ref{asmpt:strongly:convex} and \ref{asmpt:smooth}. Finally,
$$
    2m_\mu-1-4\alpha_\mu = 2\left( m_f-\rho-\frac12 -\varepsilon_{b,\mu} -2\alpha_\mu
    \right)
    = 2\left( m_f-\rho-\frac12 -\varepsilon_{b,\mu} -2d\varepsilon_{\sigma,\mu}^2
    \right)
    >0.
$$
Thus, $2m_\mu>1+4\alpha_\mu$, and all coefficient assumptions of Theorem~\ref{global:error:2} hold.
\end{proof}
\subsection{Verification of the explicit Gaussian smoothing criterion}
\label{sec:Gaussian:explicit:criterion}
Recall that
$$
    H_\mu=K\mu\sqrt d,
    \qquad
    L_\mu=L_f+\frac{K\sqrt d}{\mu},
    \qquad
    G=\|\nabla f(0)\|+L_fR+K.
$$
Lemma~\ref{lemma:propositions of Gaussian smoothing} gives
$$
\begin{aligned}
    &|h_\mu(x)|\leq H_\mu,
    \qquad
    |h_\mu(x)-h_\mu(y)|
    \leq2K\|x-y\|,\\
    &\left\|\nabla U_\mu(x)-\nabla U_\mu(y)\right\|
    \leq L_\mu\|x-y\|,
    \qquad
    \|\nabla U_\mu(x)\|\leq G.
\end{aligned}
$$
The mean value theorem gives
$$
\begin{aligned}
    \left|e^{h_\mu(x)/2}-e^{h_\mu(y)/2}\right|
    \leq
    \frac12e^{H_\mu/2}
    |h_\mu(x)-h_\mu(y)|
    \leq
    Ke^{H_\mu/2}\|x-y\|.
\end{aligned}
$$
Hence,
\begin{equation}
    \varepsilon_{\sigma,\mu}
    \leq Ke^{H_\mu/2}.
\end{equation}

For $r_\mu(x)=\nabla U_\mu(x)\left(e^{h_\mu(x)}-1\right)$,
$$
\begin{aligned}
    \|r_\mu(x)-r_\mu(y)\|
    &\leq
    \left\|\nabla U_\mu(x)-\nabla U_\mu(y)\right\|
    \left|e^{h_\mu(x)}-1\right|+
    \|\nabla U_\mu(y)\|
    \left|e^{h_\mu(x)}-e^{h_\mu(y)}\right|\\
    &\leq L_\mu\left(e^{H_\mu}-1\right)\|x-y\|
    +Ge^{H_\mu}|h_\mu(x)-h_\mu(y)|\\
    &\leq
    \left[
    L_\mu\left(e^{H_\mu}-1\right) +2KGe^{H_\mu}
    \right]\|x-y\|.
\end{aligned}
$$
Therefore,
\begin{equation}
    \varepsilon_{b,\mu}
    \leq
    L_\mu\left(e^{H_\mu}-1\right)+2KGe^{H_\mu}.
\end{equation}
Combining the two bounds yields
$$
\begin{aligned}
    \varepsilon_{b,\mu}
    +2d\varepsilon_{\sigma,\mu}^2 &\leq L_\mu\left(e^{H_\mu}-1\right) +2KGe^{H_\mu} +2dK^2e^{H_\mu}.
\end{aligned}
$$
Thus \eqref{eq:Gaussian:perturbation:explicit} implies \eqref{eq:Gaussian:perturbation:small}. As an alternative sufficient condition, if
\begin{equation}\label{eq:Gaussian:perturbation:asymptotic}
    \limsup_{\mu\rightarrow0}
    \left(
    \varepsilon_{b,\mu}
    +2d\varepsilon_{\sigma,\mu}^2
    \right)
    <m_f-\rho-\frac12,
\end{equation}
then \eqref{eq:Gaussian:perturbation:small} holds for every sufficiently small $\mu>0$. 

\subsection{Proof of Lemma~\ref{lemma:Gaussian:MC:error}}
\label{sec:proof:Gaussian:MC:error}

\begin{proof}
Throughout this proof, $\mathbb E$ denotes expectation with respect to all random variables in the synchronous coupling. For fixed $x\in\mathcal C$, let $\mathbb E_{p_N,k}$ denote expectation only over the $N$ samples $\{\xi_{i,k}\}_{i=1}^N$ used to approximate the Gaussian smoothing function at iteration $k$, and let $\mathbb E_{g_N,k}$ denote expectation only over the $N$ samples $\{\hat\xi_{i,k}\}_{i=1}^N$ used to approximate its gradient. All other quantities are regarded as constants under these two expectations.

Before projection, the difference between the two synchronously coupled algorithms is
\begin{align}
&\tilde x_k-x_k
+\eta\bigl(\tilde b_k(\tilde x_k)-b(x_k)\bigr)
+\sqrt{2\eta}\bigl(\tilde\sigma_k(\tilde x_k)-\sigma(x_k)\bigr)
\xi_{k+1}.
\end{align}
By the nonexpansiveness of the orthogonal projection onto the convex set $\mathcal C$, we obtain
\begin{align}
\mathbb E\|\tilde x_{k+1}-x_{k+1}\|^2
&\leq\mathbb E\Bigl\|\tilde x_k-x_k
+\eta\bigl(\tilde b_k(\tilde x_k)-b(x_k)\bigr)
+\sqrt{2\eta}\bigl(\tilde\sigma_k(\tilde x_k)-\sigma(x_k)\bigr)
\xi_{k+1}\Bigr\|^2
\nonumber\\
&=\mathbb E\Bigl\|\tilde x_k-x_k
+\eta\bigl(b(\tilde x_k)-b(x_k)\bigr)
+\eta\bigl(\tilde b_k(\tilde x_k)-b(\tilde x_k)\bigr)\Bigr\|^2
\nonumber\\
&\quad+2\eta d\,\mathbb E\left[\Bigl|
\sigma(\tilde x_k)-\sigma(x_k)
+\tilde\sigma_k(\tilde x_k)-\sigma(\tilde x_k)\Bigr|^2\right]\nonumber\\
&\leq(1+\eta)\mathbb E
\left[\|\tilde x_k-x_k+\eta(b(\tilde x_k)-b(x_k))\|^2\right]
+\eta(1+\eta)
\mathbb E\left[\left\|\tilde b_k(\tilde x_k)-b(\tilde x_k)\right\|^2\right]\nonumber\\
&\qquad\qquad+4\eta d\,\mathbb E\left[\left|\sigma(\tilde x_k)-\sigma(x_k)\right|^2\right]
+4\eta d\,\mathbb E\left[|\tilde\sigma_k(\tilde x_k)
-\sigma(\tilde x_k)|^2\right],
\label{eq:Gaussian:coupling:first}
\end{align}
where the equality uses that $\xi_{k+1}$ is independent of all quantities at iteration $k$, $\mathbb E[\xi_{k+1}]=0$, and $\mathbb E\|\xi_{k+1}\|^2=d$. The last inequality is obtained by applying $2\langle a,b\rangle\leq\eta\|a\|^2+\eta^{-1}\|b\|^2$ to the first term and applying $|a+b|^2\leq2|a|^2+2|b|^2$ to the second term.

The first term in \eqref{eq:Gaussian:coupling:first} is controlled by the dissipativity and Lipschitz bounds for the ideal drift $b$ in \eqref{eq:Gaussian:b:dissipativity} and \eqref{eq:Gaussian:b:Lipschitz}:
\begin{align}
\|\tilde x_k-x_k+\eta(b(\tilde x_k)-b(x_k))\|^2&=\|\tilde x_k-x_k\|^2
+2\eta\langle\tilde x_k-x_k,b(\tilde x_k)-b(x_k)\rangle
+\eta^2\|b(\tilde x_k)-b(x_k)\|^2\nonumber\\
&\leq(1-2m_\mu\eta+M_\mu^2\eta^2)
\|\tilde x_k-x_k\|^2.
\label{eq:Gaussian:ideal:Euler:contraction}
\end{align}
The third term is controlled directly by the coefficient condition for $\sigma$:
\begin{equation}
d|\sigma(\tilde x_k)-\sigma(x_k)|^2
=\|\sigma(\tilde x_k)I_d-\sigma(x_k)I_d\|_{\mathrm F}^2
\leq\alpha_\mu\|\tilde x_k-x_k\|^2.
\label{eq:Gaussian:ideal:sigma:difference}
\end{equation}
The remaining terms are required to bound the approximation errors from the Monte Carlo simulation. For any fixed $x\in\mathcal C$, decompose the drift and diffusion errors as
\begin{align}
\tilde b_k(x)-b(x)
&=-\bigl(\widetilde{\nabla} U_{\mu,k}(x)-\nabla U_\mu(x)\bigr)
e^{h_\mu(x)-\left(\tilde U_{\mu,k}(x)-U_\mu(x)\right)}\nonumber\\
&\qquad\qquad\qquad\qquad-\nabla U_\mu(x)e^{h_\mu(x)}
\left(e^{-\left(\tilde U_{\mu,k}(x)-U_\mu(x)\right)}-1\right),
\label{eq:Gaussian:b:MC:difference}\\
\tilde\sigma_k(x)-\sigma(x)
&=e^{h_\mu(x)/2}
\left(e^{-(\tilde U_{\mu,k}(x)-U_\mu(x))/2}-1\right).
\label{eq:Gaussian:sigma:MC:difference}
\end{align}
Thus, the first summand in~\eqref{eq:Gaussian:b:MC:difference} requires a second moment bound for the gradient estimator and an exponential moment bound for the function estimator. The second summand in~\eqref{eq:Gaussian:b:MC:difference} and the error in~\eqref{eq:Gaussian:sigma:MC:difference} require only exponential moment bounds for the function estimator. We derive these bounds in that order. For the gradient estimator, the Gaussian integration by parts identity and $\mathbb E_{g_N,k}[\hat\xi_{i,k}]=0$ give
\begin{align}
\mathbb E_{g_N,k}\left[\widetilde{\nabla} U_{\mu,k}(x)\right]
&=\nabla f(x)+\frac{1}{\mu N}\sum_{i=1}^N
\mathbb E_{g_N,k}\left[
\hat\xi_{i,k}\left(g\left(x+\mu\hat\xi_{i,k}\right)-g(x)\right)\right]\nonumber\\
&=\nabla f(x)+\nabla g_\mu(x)=\nabla U_\mu(x).
\end{align}
Independence within the simulation therefore gives
\begin{align}
\mathbb E_{g_N,k}\left\|\widetilde{\nabla} U_{\mu,k}(x)-\nabla U_\mu(x)\right\|^2
&\leq\frac1N\mathbb E_{g_N,k}\left\|
\frac{\hat\xi_{1,k}}{\mu}
\left(g\left(x+\mu\hat\xi_{1,k}\right)-g(x)\right)\right\|^2\nonumber\\
&\leq\frac{K^2}{N}\mathbb E_{g_N,k}\|\hat\xi_{1,k}\|^4
=\frac{K^2d(d+2)}{N}.
\label{eq:Gaussian:gradient:MC:bound}
\end{align}
Here the variance of the average of $N$ independent centered vectors is $1/N$ times the variance of one vector. The Lipschitz bound $|g(x+\mu\hat\xi)-g(x)|\leq K\mu\|\hat\xi\|$ and the Gaussian moment $\mathbb E\|\hat\xi\|^4=d(d+2)$ give the remaining steps. For the function estimator, define
$$
    F_x(z_1,\ldots,z_N):=\frac1N\sum_{i=1}^N g(x+\mu z_i).
$$
For $z=(z_1,\ldots,z_N)$ and $z'=(z_1',\ldots,z_N')$ in $\mathbb R^{dN}$,
$$
    |F_x(z)-F_x(z')|
    \leq\frac{K\mu}{N}\sum_{i=1}^N\|z_i-z_i'\|
    \leq\frac{K\mu}{\sqrt N}
    \left[\sum_{i=1}^N\|z_i-z_i'\|^2\right]^{1/2}.
$$
Thus, $F_x$ is $K\mu/\sqrt N$ Lipschitz on $\mathbb R^{dN}$. Since $(\xi_{1,k},\ldots,\xi_{N,k})$ is standard Gaussian in $\mathbb R^{dN}$ and
$$
    \tilde U_{\mu,k}(x)-U_\mu(x)
    =F_x(\xi_{1,k},\ldots,\xi_{N,k}) -\mathbb E_{p_N,k}\left[F_x(\xi_{1,k},\ldots,\xi_{N,k})\right],
$$
the Gaussian concentration inequality gives
\begin{equation}
\mathbb E_{p_N,k}\left[\exp\!\left(\lambda
\bigl(\tilde U_{\mu,k}(x)-U_\mu(x)\bigr)\right)\right]
\leq\exp\!\left(\frac{\lambda^2K^2\mu^2}{2N}\right),
\qquad \forall\lambda\in\mathbb R.
\label{eq:Gaussian:potential:mgf}
\end{equation}
Unbiasedness of the Monte Carlo simulation and Jensen's inequality give, for every $a>0$,
\begin{equation}
\mathbb E_{p_N,k}\left[
e^{-a(\tilde U_{\mu,k}(x)-U_\mu(x))}\right]\geq1.
\label{eq:Gaussian:potential:Jensen}
\end{equation}
Combining \eqref{eq:Gaussian:potential:mgf} with $\lambda=-2$ and \eqref{eq:Gaussian:potential:Jensen} gives
\begin{align}
\mathbb E_{p_N,k}\left |e^{-(\tilde U_{\mu,k}(x)-U_\mu(x))}-1\right |^2
&=\mathbb E_{p_N,k}\left[e^{-2(\tilde U_{\mu,k}(x)-U_\mu(x))}\right]
-2\mathbb E_{p_N,k}\left[e^{-(\tilde U_{\mu,k}(x)-U_\mu(x))}\right]+1\nonumber\\
&\leq e^{2K^2\mu^2/N}-1,
\label{eq:Gaussian:potential:exp:error}
\end{align}
Returning to the expectation of the drift error \eqref{eq:Gaussian:b:MC:difference}, the unbiasedness of the Monte Carlo gradient estimator makes the cross term vanish after taking $\mathbb E_{g_N,k}$. Taking $\mathbb E_{p_N,k}$ then gives
\begin{align}
& \mathbb E_{p_N,k}\mathbb E_{g_N,k}
\left\|\tilde b_k(x)-b(x)\right\|^2
\nonumber\\
&\qquad=e^{2h_\mu(x)}
\mathbb E_{g_N,k}
\left\|\widetilde{\nabla} U_{\mu,k}(x)-\nabla U_\mu(x)\right\|^2
\mathbb E_{p_N,k}\left[e^{-2(\tilde U_{\mu,k}(x)-U_\mu(x))}\right]\nonumber\\
&\qquad\qquad\qquad+\left\|\nabla U_\mu(x)\right\|^2e^{2h_\mu(x)}
\mathbb E_{p_N,k}
\left[\left(e^{-(\tilde U_{\mu,k}(x)-U_\mu(x))}-1\right)^2\right]
\nonumber\\
&\qquad\leq \frac{K^2d(d+2)}{N}
e^{2K\mu\sqrt d+2K^2\mu^2/N}\nonumber\\
&\qquad\qquad\qquad+\left(\|\nabla f(0)\|+L_fR+K\right)^2
e^{2K\mu\sqrt d}\left(e^{2K^2\mu^2/N}-1\right)
=:A_{\mu,N}.
\end{align}
Here in the inequality, the first term uses \eqref{eq:Gaussian:hmu:sup}, \eqref{eq:Gaussian:gradient:MC:bound} and \eqref{eq:Gaussian:potential:mgf} with $\lambda=-2$. The second term uses \eqref{eq:Gaussian:Gmu:explicit}, \eqref{eq:Gaussian:potential:exp:error}, and \eqref{eq:Gaussian:hmu:sup}. For the diffusion error in \eqref{eq:Gaussian:sigma:MC:difference}, \eqref{eq:Gaussian:potential:mgf} with $\lambda=-1$ and \eqref{eq:Gaussian:potential:Jensen} with $a=1/2$ give
$$
\begin{aligned}
\mathbb E_{p_N,k}
\left[\left(e^{-(\tilde U_{\mu,k}(x)-U_\mu(x))/2}-1\right)^2\right]
&=\mathbb E_{p_N,k}\left[ e^{-(\tilde U_{\mu,k}(x)-U_\mu(x))}\right] -2\mathbb E_{p_N,k}
\left[e^{-(\tilde U_{\mu,k}(x)-U_\mu(x))/2}\right]+1\\
&\leq e^{K^2\mu^2/(2N)}-1.
\end{aligned}
$$
Combining this estimate with \eqref{eq:Gaussian:hmu:sup} yields
\begin{align}
\mathbb E_{p_N,k}\left[|\tilde\sigma_k(x)-\sigma(x)|^2\right]
&=e^{h_\mu(x)}
\mathbb E_{p_N,k}
\left[\left(e^{-(\tilde U_{\mu,k}(x)-U_\mu(x))/2}-1\right)^2\right]
\nonumber\\
&\leq e^{K\mu\sqrt d}
\left(e^{K^2\mu^2/(2N)}-1\right)
=B_{\mu,N}.
\end{align}
Since $\tilde x_k$ is independent of all the Monte Carlo samples at iteration $k$, iterated expectation gives
\begin{align}
\mathbb E\left\|\tilde b_k(\tilde x_k)-b(\tilde x_k)\right\|^2
&=\mathbb E\left[
\left.\mathbb E_{p_N,k}\mathbb E_{g_N,k}
\|\tilde b_k(x)-b(x)\|^2\right |{x=\tilde x_k}\right]
\leq A_{\mu,N},\nonumber\\
\mathbb E\left|\tilde\sigma_k(\tilde x_k)-\sigma(\tilde x_k)\right|^2
&=\mathbb E\left[
\left.\mathbb E_{p_N,k}|\tilde\sigma_k(x)-\sigma(x)|^2
\right |{x=\tilde x_k}\right]
\leq B_{\mu,N}.
\label{eq:Gaussian:MC:random:bounds}
\end{align}
Substituting~\eqref{eq:Gaussian:ideal:Euler:contraction}, \eqref{eq:Gaussian:ideal:sigma:difference}, and \eqref{eq:Gaussian:MC:random:bounds} into \eqref{eq:Gaussian:coupling:first} closes the one step update:
\begin{align}
\mathbb E\|\tilde x_{k+1}-x_{k+1}\|^2
&\leq\left[(1+\eta)\left(1-2m_\mu\eta+M_\mu^2\eta^2\right)
+4\alpha_\mu\eta\right]\mathbb E\|\tilde x_k-x_k\|^2\nonumber\\
&\qquad\qquad+\eta\left((1+\eta)A_{\mu,N}+4dB_{\mu,N}\right).
\label{eq:Gaussian:coupling:recursion}
\end{align}

We finally solve the resulting recursion. Under \eqref{eq:Gaussian:eta:conservative}, the multiplicative coefficient above satisfies
\begin{align}
&(1+\eta)(1-2m_\mu\eta+M_\mu^2\eta^2)
+4\alpha_\mu\eta
\nonumber
\\
&\qquad=1-\left(2m_\mu-1-4\alpha_\mu\right)\eta
+(M_\mu^2-2m_\mu)\eta^2+M_\mu^2\eta^3\nonumber\\
&\qquad\leq1-(2m_\mu-1-4\alpha_\mu)\eta
+\left(\max\left\{0,M_\mu^2-2m_\mu\right\}+M_\mu^2\right)\eta^2\nonumber\\
&\qquad\leq1-\frac{2m_\mu-1-4\alpha_\mu}{2}\eta.
\end{align}
Since $m_\mu\leq M_\mu$,
\begin{equation}
1-2m_\mu\eta+M_\mu^2\eta^2
=(1-m_\mu\eta)^2+(M_\mu^2-m_\mu^2)\eta^2\geq0.
\end{equation}
We have $1-(2m_\mu-1-4\alpha_\mu)\eta/2$ lies in $[0,1)$. Starting from $\tilde x_0=x_0$, iterate~\eqref{eq:Gaussian:coupling:recursion} to obtain
$$
\begin{aligned}
\mathbb E\|\tilde x_k-x_k\|^2
&\leq\eta\bigl((1+\eta)A_{\mu,N}+4dB_{\mu,N}\bigr)\sum_{j=0}^{k-1}\left(1-\frac{2m_\mu-1-4\alpha_\mu}{2}\eta\right)^j\\
&=\frac{2\bigl((1+\eta)A_{\mu,N}+4dB_{\mu,N}\bigr)}{2m_\mu-1-4\alpha_\mu}\left[1-\left(1-\frac{2m_\mu-1-4\alpha_\mu}{2}\eta\right)^k\right]\\
&\leq\frac{2\bigl((1+\eta)A_{\mu,N}+4dB_{\mu,N}\bigr)}{2m_\mu-1-4\alpha_\mu}.
\end{aligned}
$$
For $k=0$, the same final bound holds because the error is zero. Finally, the synchronous construction above is itself a coupling of $\nu_k=\Law(x_k)$ and $\tilde\nu_k=\Law(\tilde x_k)$. Hence,
$$
\mathcal W_2^2(\nu_k,\tilde\nu_k)
\leq
\mathbb E\|x_k-\tilde x_k\|^2
\leq
\frac{2}{2m_\mu-1-4\alpha_\mu}
\left((1+\eta)A_{\mu,N}+4dB_{\mu,N}\right),
$$
which is exactly \eqref{eq:Gaussian:MC:W2}. The proof is complete.
\end{proof}

\subsection{Proof of Theorem~\ref{theorem:Gaussian:MC:convergence}}
\label{sec:proof:Gaussian:MC:convergence}

\begin{proof}
Fix $\mu$ satisfying the hypotheses. By~\eqref{eq:Gaussian:m:M:alpha} and~\eqref{eq:Gaussian:perturbation:small},
$$
2m_\mu-1-4\alpha_\mu=2\left(m_f-\rho-\frac12-\varepsilon_{b,\mu}-2d\varepsilon_{\sigma,\mu}^2\right)>0.
$$
Lemma~\ref{lemma:Gaussian:sufficient:conditions} therefore verifies Assumptions~\ref{asmpt:strongly:convex} and~\ref{asmpt:smooth} for the exact coefficients in~\eqref{eq:Gaussian:exact:coefficients}, with $(m,M,\alpha)=(m_\mu,M_\mu,\alpha_\mu)$. The present theorem supplies the domain regularity, potential regularity, Poincar\'e inequality, and initial law assumptions required for continuous time convergence. In particular, Proposition~\ref{prop:continuous:structure} supplies the transition semigroup and heat kernel properties. The common initial law also satisfies the interior support condition in Theorem~\ref{thm:final}. Finally, \eqref{eq:Gaussian:eta:conservative} implies $0<\eta\leq1$, and $k\eta\geq e$ is assumed. Thus Theorem~\ref{thm:final} applies to the ideal iterates $x_k$.

Now substituting $U_0=U_\mu$ and the constants in~\eqref{eq:Gaussian:Cbar} into~\eqref{eq:thm:final:explicit} gives
$$
\mathcal W_2(\nu_k,\pi)\leq 2^{3/4}R\left(\chi^2(\nu_0\Vert\pi)\right)^{1/4}e^{-\alpha_{0,\mu}k\eta/(2C_P)}+\bar C_{0,\mu}\eta^{1/4}(k\eta)^{1/2}(\log k)^{1/2}.
$$
Here the first term is the continuous time mixing error and the second is the discretization error. For the same initial law and stepsize, Lemma~\ref{lemma:Gaussian:MC:error} applies to the ideal and Monte Carlo iterates. Taking square roots in~\eqref{eq:Gaussian:MC:W2} yields
$$
\mathcal W_2(\tilde\nu_k,\nu_k)\leq\left(\frac{2\bigl((1+\eta)A_{\mu,N}+4dB_{\mu,N}\bigr)}{2m_\mu-1-4\alpha_\mu}\right)^{1/2}.
$$
Consequently, the triangle inequality gives
$$
\begin{aligned}
\mathcal W_2(\tilde\nu_k,\pi)&\leq\mathcal W_2(\tilde\nu_k,\nu_k)+\mathcal W_2(\nu_k,\pi)\\
&\leq 2^{3/4}R\left(\chi^2(\nu_0\Vert\pi)\right)^{1/4}e^{-\alpha_{0,\mu}k\eta/(2C_P)}+\bar C_{0,\mu}\eta^{1/4}(k\eta)^{1/2}(\log k)^{1/2}\\
&\qquad+\left(\frac{2\bigl((1+\eta)A_{\mu,N}+4dB_{\mu,N}\bigr)}{2m_\mu-1-4\alpha_\mu}\right)^{1/2},
\end{aligned}
$$
which proves~\eqref{eq:Gaussian:final:W2}.

Fix $\ep>0$ and keep $\mu$ fixed. Since $\alpha_{0,\mu}\geq e^{-K\mu\sqrt d}>0$ by~\eqref{eq:Gaussian:Cbar} and the initial $\chi^2$ divergence is finite, choose a finite $T\geq e$ such that
$$
2^{3/4}R\left(\chi^2(\nu_0\Vert\pi)\right)^{1/4}e^{-\alpha_{0,\mu}T/(2C_P)}\leq\frac\ep3.
$$
For each integer $k$, set $\eta=T/k$. With $T$ fixed, the discretization term becomes
\begin{align}\label{discretization:term}
\bar C_{0,\mu}\eta^{1/4}(k\eta)^{1/2}(\log k)^{1/2}=\bar C_{0,\mu}T^{3/4}\left(\frac{(\log k)^2}{k}\right)^{1/4}\rightarrow0.
\end{align}
The upper bound on $\eta$ in~\eqref{eq:Gaussian:eta:conservative} is strictly positive and independent of $k$ and $N$. Since $T/k\to0$, one can choose an integer $k$ large enough that this stepsize condition holds and the discretization term in \eqref{discretization:term} is at most $\ep/3$. This choice preserves $k\eta=T\geq e$.

It remains to choose $N$ with $T,k,\eta$ now fixed. To verify the required decay directly from~\eqref{eq:Gaussian:A} and~\eqref{eq:Gaussian:B}, use $e^t-1=\int_0^t e^s\,ds\leq te^t$ for $t\geq0$. Since $N\geq1$, this gives
$$
\begin{aligned}
&A_{\mu,N}\leq\frac{e^{2K\mu\sqrt d+2K^2\mu^2}}{N}\left[K^2d(d+2)+2K^2\mu^2\left(\|\nabla f(0)\|+L_fR+K\right)^2\right],
\\
&B_{\mu,N}\leq\frac{K^2\mu^2}{2N}e^{K\mu\sqrt d+K^2\mu^2/2}.
\end{aligned}
$$
Thus both quantities are $\mathcal{O}(N^{-1})$ for the fixed model and smoothing level, and the Monte Carlo error is $\mathcal{O}(N^{-1/2})$. Choose an integer $N$ large enough that
$$
(1+\eta)A_{\mu,N}+4dB_{\mu,N}\leq\frac{\ep^2}{18}\left(2m_\mu-1-4\alpha_\mu\right).
$$
The square root term in~\eqref{eq:Gaussian:final:W2} is then at most $\ep/3$. Each of the three errors is at most $\ep/3$, so their sum is at most $\ep$, the proof is complete.
\end{proof}

\subsection{Proof of Lemma~\ref{lem:GS:sharp:coefficients}}
\label{sec:proof:GS:sharp:coefficients}
\begin{proof}
We first bound the normalized scaling factor, then the derivatives of the drift and diffusion coefficients in~\eqref{eq:GS:sharp:coefficients}. Since $g_\delta$ is convex and $K$-Lipschitz, Jensen's inequality and $\mathbb E|G|\leq1$ give
$$
-K\mu\leq U_\delta-U_{\delta,\mu}\leq0,
\qquad
e^{-K\mu}\leq a_\delta^{\rm n}(x) =\frac{e^{U_\delta(x)-U_{\delta,\mu}(x)}}{\int_{\mathcal C}e^{U_\delta-U_{\delta,\mu}}\,d\pi_\delta}
\leq e^{K\mu}.
$$
The quadratic parts cancel in $U_\delta'-U_{\delta,\mu}'$, so
$$
|U_{\delta,\mu}'|\leq\kappa R+K,\qquad |U_\delta'-U_{\delta,\mu}'|\leq2K,\qquad
\kappa\leq U_{\delta,\mu}''\leq\kappa+K/\mu.
$$
The last bound follows from convexity and Gaussian integration by parts:
$U_{\delta,\mu}''-\kappa=\mu^{-1}\mathbb E[Gg_\delta'(x+\mu G)]$, with $|g_\delta'|\leq K$.

For the drift, differentiating $b_\delta^{\rm n}=-a_\delta^{\rm n}U_{\delta,\mu}'$ gives
$$
(b_\delta^{\rm n})' =-a_\delta^{\rm n}\bigl[U_{\delta,\mu}'' +(U_\delta'-U_{\delta,\mu}')U_{\delta,\mu}'\bigr].
$$
Since~\eqref{eq:GS:sharp:admissibility} implies $\kappa-2K(\kappa R+K)>0$, the preceding bounds yield
\begin{align*}
(b_\delta^{\rm n})'\leq-e^{-K\mu}\bigl[\kappa-2K(\kappa R+K)\bigr],\qquad
|(b_\delta^{\rm n})'|\leq e^{K\mu}\bigl[\kappa+K/\mu+2K(\kappa R+K)\bigr].
\end{align*}
For the diffusion, differentiating $\sigma_\delta^{\rm n}=\sqrt{a_\delta^{\rm n}}$ gives directly
$$
|(\sigma_\delta^{\rm n})'|^2 =\tfrac14a_\delta^{\rm n}|U_\delta'-U_{\delta,\mu}'|^2
\leq e^{K\mu}K^2.
$$
The fundamental theorem of calculus gives
$$
\begin{aligned}
&(b_\delta^{\rm n}(x)-b_\delta^{\rm n}(y))(x-y)=(x-y)^2\int_0^1(b_\delta^{\rm n})'(y+t(x-y))\,dt\leq-e^{-K\mu}[\kappa-2K(\kappa R+K)]|x-y|^2,\\
&|b_\delta^{\rm n}(x)-b_\delta^{\rm n}(y)|\leq|x-y|\int_0^1|(b_\delta^{\rm n})'(y+t(x-y))|\,dt\leq e^{K\mu}[\kappa+K/\mu+2K(\kappa R+K)]|x-y|.
\end{aligned}
$$
Similarly, Cauchy-Schwarz inequality on $[0,1]$ gives
$$
|\sigma_\delta^{\rm n}(x)-\sigma_\delta^{\rm n}(y)|^2\leq|x-y|^2\int_0^1|(\sigma_\delta^{\rm n})'(y+t(x-y))|^2\,dt\leq e^{K\mu}K^2|x-y|^2.
$$
These are exactly the three estimates in~\eqref{eq:GS:sharp:bounds}.
\end{proof}

\subsection{Proof of Proposition~\ref{prop:GS:sharp:contraction}}
\label{sec:proof:GS:sharp:contraction}
\begin{proof}
Fix $\delta>0$ and $0<\eta<\eta_c$. For deterministic $x,y\in\mathcal C$, use the same $G\sim\mathcal N(0,1)$ to define
$$
\begin{aligned}
X_1^x&=\mathcal P_{\mathcal C}\!\left(x+\eta b_\delta^{\rm n}(x)
+\sqrt{2\eta}\sigma_\delta^{\rm n}(x)G\right),\quad
X_1^y=\mathcal P_{\mathcal C}\!\left(y+\eta b_\delta^{\rm n}(y) +\sqrt{2\eta}\sigma_\delta^{\rm n}(y)G\right).
\end{aligned}
$$
These variables have marginal laws $Q^{\rm R}_{\delta,\eta}(x,\cdot)$ and $Q^{\rm R}_{\delta,\eta}(y,\cdot)$. Since projection onto $\mathcal C$ is nonexpansive, $\mathbb EG=0$, and $\mathbb EG^2=1$,
\begin{align*}
\mathbb E|X_1^x-X_1^y|^2
&\leq\mathbb E\left|x-y+\eta\bigl(b_\delta^{\rm n}(x)-b_\delta^{\rm n}(y)\bigr)
+\sqrt{2\eta}\bigl(\sigma_\delta^{\rm n}(x)-\sigma_\delta^{\rm n}(y)\bigr)G\right|^2\\
&=|x-y|^2+2\eta(x-y)\bigl(b_\delta^{\rm n}(x)-b_\delta^{\rm n}(y)\bigr)\\
&\qquad+\eta^2|b_\delta^{\rm n}(x)-b_\delta^{\rm n}(y)|^2
+2\eta|\sigma_\delta^{\rm n}(x)-\sigma_\delta^{\rm n}(y)|^2.
\end{align*}
The mixed term involving $G$ vanishes because the other factors are deterministic. Substituting the three bounds in~\eqref{eq:GS:sharp:bounds}, in the same order, gives
\begin{align*}
\mathbb E|X_1^x-X_1^y|^2
&\leq\Bigl\{1-2\eta e^{-K\mu}\bigl[\kappa-2K(\kappa R+K)\bigr]
+\eta^2M_*^2+2\eta e^{K\mu}K^2\Bigr\}|x-y|^2\\
&=\bigl(1-2\gamma_*\eta+M_*^2\eta^2\bigr)|x-y|^2
=q_\eta^2|x-y|^2,
\end{align*}
where the equality uses the definition of $\gamma_*$ in~\eqref{eq:GS:sharp:constants}. That definition and~\eqref{eq:GS:sharp:admissibility} imply $0<\gamma_*<e^{-K\mu}\kappa<M_*$. Therefore
$$
q_\eta^2 =(1-\gamma_*\eta)^2+(M_*^2-\gamma_*^2)\eta^2>0,
\qquad
q_\eta^2=1-\eta(2\gamma_*-M_*^2\eta)<1,
$$
the last inequality following from $0<\eta<2\gamma_*/M_*^2$.

Now let $(X_0,Y_0)$ be an optimal coupling of $\nu$ and $\nu'$, so that $\mathbb E|X_0-Y_0|^2=\mathcal W_2^2(\nu,\nu')$. Take $G$ independent of this pair and apply the two updates above with $x=X_0$ and $y=Y_0$. The resulting $(X_1,Y_1)$ is a coupling of $\nu Q^{\rm R}_{\delta,\eta}$ and $\nu'Q^{\rm R}_{\delta,\eta}$. Conditioning on $(X_0,Y_0)$ and then taking expectation yields
$$
\begin{aligned}
\mathcal W_2^2\left(\nu Q^{\rm R}_{\delta,\eta},\nu'Q^{\rm R}_{\delta,\eta}\right)
&\leq\mathbb E|X_1-Y_1|^2\leq q_\eta^2\mathbb E|X_0-Y_0|^2 =q_\eta^2\mathcal W_2^2(\nu,\nu').
\end{aligned}
$$
Taking square roots proves~\eqref{eq:GS:sharp:contraction}.

Finally, $\mathcal P_2(\mathcal C)$ is complete in $\mathcal W_2$, and compactness of $\mathcal C$ ensures that every probability measure on $\mathcal C$ has finite second moment. The map $\nu\mapsto\nu Q^{\rm R}_{\delta,\eta}$ sends this space into itself and is a strict contraction with factor $q_\eta<1$. The Banach fixed point theorem therefore gives a unique fixed point $\pi^{\rm R}_{\delta,\eta}$, that is,
$\pi^{\rm R}_{\delta,\eta}Q^{\rm R}_{\delta,\eta}=\pi^{\rm R}_{\delta,\eta}$.
Using this invariance and writing $\nu_{k+1}=\nu_kQ^{\rm R}_{\delta,\eta}$, the one step estimate gives
$$
\mathcal W_2\left(\nu_{k+1},\pi^{\rm R}_{\delta,\eta}\right)
\leq q_\eta\mathcal W_2\left(\nu_k,\pi^{\rm R}_{\delta,\eta}\right).
$$
Induction proves the stated $k$-step bound. Since $\gamma_*$ and $M_*$ depend only on $\kappa,K,R,\mu$, both $\eta_c$ and $q_\eta$ are independent of $\delta$.
\end{proof}

\subsection{Proof of Theorem~\ref{thm:GS:sharp:separation}}
\label{sec:proof:GS:sharp:separation}
\begin{proof}
The RALMC assertion follows from~\eqref{eq:GS:sharp:contraction}, since the first condition in~\eqref{eq:GS:sharp:separation:condition} places $\eta$ in the interval~\eqref{eq:GS:sharp:eta}. We prove the PLMC assertions by computing the response of the projected transition mean to a change in the initial point.

Fix $\delta,\eta>0$ and define
\begin{equation}\label{eq:GS:sharp:projected:mean}
F_\delta(x):=x-\eta U_\delta'(x),\qquad
m_\delta(x):=\int_{\mathcal C}z\,Q^{\rm P}_{\delta,\eta}(x,dz)
=\mathbb E\left[\mathcal P_{\mathcal C}\left(F_\delta(x)+\sqrt{2\eta}G\right)\right].
\end{equation}
The last equality follows from~\eqref{eq:GS:sharp:kernels}. For the potential in~\eqref{eq:GS:sharp:family},
$$
U_\delta'(x)=\kappa x+\frac{Kx}{\sqrt{x^2+\delta^2}},\qquad U_\delta''(x)=\kappa+\frac{K\delta^2}{(x^2+\delta^2)^{3/2}}.
$$
Hence $F_\delta(0)=0$ and $F_\delta'(0)=1-\eta(\kappa+K/\delta)$. Projection onto $[-R,R]$ is the clipping map $\mathcal P_{\mathcal C}(z)=\min\{R,\max\{-R,z\}\}$, whose derivative is $\mathbf1_{(-R,R)}(z)$ away from $z=\pm R$. Since $\sqrt{2\eta}G$ has a continuous density, these two exceptional points have probability zero. Thus, almost surely,
$$
\lim_{x\to0}\frac{\mathcal P_{\mathcal C}(F_\delta(x)+\sqrt{2\eta}G)
-\mathcal P_{\mathcal C}(\sqrt{2\eta}G)}{x} =F_\delta'(0)\mathbf1_{\{|\sqrt{2\eta}G|<R\}}.
$$
To pass this limit through the expectation in~\eqref{eq:GS:sharp:projected:mean}, choose $0<\varepsilon<R$. Nonexpansiveness of projection and the mean value theorem bound the absolute difference quotient by
$$
\frac{|F_\delta(x)-F_\delta(0)|}{|x|}
\leq\sup_{|u|\leq\varepsilon}\left|F_\delta'(u)\right|<\infty,
\qquad 0<|x|\leq\varepsilon.
$$
This deterministic bound is integrable for the fixed $\delta,\eta$. Dominated convergence therefore yields
\begin{equation}\label{eq:GS:sharp:mean:derivative}
m_\delta'(0)=F_\delta'(0)\mathbb P\left(\left|\sqrt{2\eta}G\right|<R\right)
=\left[1-\eta\left(\kappa+\frac K\delta\right)\right]p_\eta.
\end{equation}

Then for any coupling $(Y,Z)$ of two probability measures $\rho,\rho'$ with finite second moments, Jensen's inequality gives $|\mathbb EY-\mathbb EZ|^2\leq\mathbb E|Y-Z|^2$. Taking the infimum over couplings yields
$$
\mathcal W_2(\rho,\rho')\geq\left|\int z\,\rho(dz)-\int z\,\rho'(dz)\right|.
$$
Apply this inequality to $Q^{\rm P}_{\delta,\eta}(x,\cdot)$ and $Q^{\rm P}_{\delta,\eta}(0,\cdot)$, whose means are defined in~\eqref{eq:GS:sharp:projected:mean}. Dividing by $|x|$ and using~\eqref{eq:GS:sharp:mean:derivative} gives
\begin{equation}\label{eq:GS:sharp:mean:W2:lower}
\liminf_{\substack{x\to0\\x\neq0}}
\frac{\mathcal W_2(Q^{\rm P}_{\delta,\eta}(x,\cdot),Q^{\rm P}_{\delta,\eta}(0,\cdot))}{|x|}
\geq\lim_{x\to0}\frac{|m_\delta(x)-m_\delta(0)|}{|x|}=|1-\eta(\kappa+K/\delta)|p_\eta.
\end{equation}
Under~\eqref{eq:GS:sharp:separation:condition}, $\eta(\kappa+K/\delta)>1+p_\eta^{-1}>1$. The absolute value in~\eqref{eq:GS:sharp:mean:W2:lower} is consequently $\eta(\kappa+K/\delta)-1$, and its product with $p_\eta$ is strictly greater than one. This proves~\eqref{eq:GS:sharp:PL:expansion}.

Fix any $\eta\in(0,\eta_c)$, with $\eta_c$ given by~\eqref{eq:GS:sharp:eta}. Since $R,\eta>0$, one has $p_\eta>0$, so the upper bound in~\eqref{eq:GS:sharp:delta:range} is positive. For every $\delta$ in that range,
$$
\eta(\kappa+K/\delta)>\eta K/\delta>1+p_\eta^{-1}.
$$
Thus both conditions in~\eqref{eq:GS:sharp:separation:condition} hold. Finally, if PLMC is globally nonexpansive, then the numerator in~\eqref{eq:GS:sharp:mean:W2:lower} is at most $|x|$ for every $x\in\mathcal C$. Therefore, 
$
|1-\eta(\kappa+K/\delta)|p_\eta\leq1$ gives $
\eta(\kappa+K/\delta)\leq1+p_\eta^{-1}$. For $0<\eta\leq\eta_{\max}$,
$$
p_\eta=\mathbb P\!\left(|G|<\frac R{\sqrt{2\eta}}\right)
\geq\mathbb P\!\left(|G|<\frac R{\sqrt{2\eta_{\max}}}\right)
=p_{\eta_{\max}}>0.
$$
Combining these inequalities proves~\eqref{eq:GS:sharp:PL:necessary}. In particular,
$$
\eta\leq\frac{1+p_{\eta_{\max}}^{-1}}{\kappa+K/\delta}
\leq\frac{1+p_{\eta_{\max}}^{-1}}{K}\,\delta,
$$
which establishes the claimed $O(\delta)$ restriction with a constant independent of $\delta$.
\end{proof}

\subsection{Proof of Proposition~\ref{prop:dirichlet:comparison}}
\label{proof:prop:dirichlet:comparison}
\begin{proof}
Both Rayleigh quotients are taken over the same functions $f\in H^1(\mathcal C)$ with $\operatorname{Var}_\pi(f)>0$. Since $\|\nabla f\|^2\geq0$, integrating the bounds $a_{{\rm n},\min}\leq a_{\rm n}(x)\leq a_{{\rm n},\max}$ gives
\begin{equation}\label{eq:gap:energy:comparison}
a_{{\rm n},\min}\mathcal E_{\rm PLD}(f)
\leq\mathcal E_{\rm RALD}^{\rm n}(f)
\leq a_{{\rm n},\max}\mathcal E_{\rm PLD}(f).
\end{equation}
For the lower bound, divide the left inequality in~\eqref{eq:gap:energy:comparison} by $\operatorname{Var}_\pi(f)$ to obtain
$$
\frac{\mathcal E_{\rm RALD}^{\rm n}(f)}{\operatorname{Var}_\pi(f)}
\geq a_{{\rm n},\min}
\frac{\mathcal E_{\rm PLD}(f)}{\operatorname{Var}_\pi(f)}
\geq a_{{\rm n},\min}\lambda_{\rm PLD}.
$$
Taking the infimum over all admissible $f$ yields
$\lambda_{\rm RALD}^{\rm n}\geq a_{{\rm n},\min}\lambda_{\rm PLD}$.

For the upper bound, fix $\varepsilon>0$. By the definition of $\lambda_{\rm PLD}$, there exists $f_\varepsilon\in H^1(\mathcal C)$ with $\operatorname{Var}_\pi(f_\varepsilon)>0$ such that
$$
\frac{\mathcal E_{\rm PLD}(f_\varepsilon)}{\operatorname{Var}_\pi(f_\varepsilon)}
\leq\lambda_{\rm PLD}+\varepsilon.
$$
Using $f_\varepsilon$ as a test function for the RALD Rayleigh quotient and applying the right inequality in~\eqref{eq:gap:energy:comparison}, we obtain
$$
\lambda_{\rm RALD}^{\rm n}
\leq\frac{\mathcal E_{\rm RALD}^{\rm n}(f_\varepsilon)}{\operatorname{Var}_\pi(f_\varepsilon)}\leq a_{{\rm n},\max}
\frac{\mathcal E_{\rm PLD}(f_\varepsilon)}{\operatorname{Var}_\pi(f_\varepsilon)}
\leq a_{{\rm n},\max}(\lambda_{\rm PLD}+\varepsilon).
$$
Letting $\varepsilon\rightarrow0$ proves the upper bound in~\eqref{eq:gap:comparison}. This argument does not require either infimum to be attained.
\end{proof}

\subsection{Proof of Proposition~\ref{prop:bottleneck:minorization}}
\label{sec:prop:bottleneck:minorization}

\begin{proof}
If $u\notin\mathcal C$, then $\mathcal P_{\mathcal C}(u)\in\partial\mathcal C$. Since $D_+\subset\operatorname{int}(\mathcal C)$, it follows that
$$
\begin{aligned}
    \left\{\mathcal P_{\mathcal C}(Z_{\rm R}(x))\in D_+\right\}
    =\left\{Z_{\rm R}(x)\in D_+\right\},\quad
    \left\{\mathcal P_{\mathcal C}(Z_{\rm P}(x))\in D_+\right\}
    =\left\{Z_{\rm P}(x)\in D_+\right\}.
\end{aligned}
$$
Thus the projected crossing probabilities are integrals of the Gaussian proposal densities over $D_+$. For fixed $x\in B$, these densities are
$$
\begin{aligned}
p_{\rm R}(x,y)&=(4\pi\eta a_{\rm n}(x))^{-d/2}
\exp\left[-\frac{\|y-x+\eta a_{\rm n}(x)\nabla U_0(x)\|^2}{4\eta a_{\rm n}(x)}\right],\\
p_{\rm P}(x,y)&=(4\pi\eta)^{-d/2}
\exp\left[-\frac{\|y-x+\eta\nabla U(x)\|^2}{4\eta}\right].
\end{aligned}
$$
Expanding the squared norms, the quadratic, cross, and drift terms give, respectively,
\begin{equation}\label{eq:proof:bottleneck:density:ratio}
\begin{aligned}
\log\frac{p_{\rm R}(x,y)}{p_{\rm P}(x,y)}
&=-\frac d2\log a_{\rm n}(x)
+\frac{1-a_{\rm n}(x)^{-1}}{4\eta}\|y-x\|^2\\
&\qquad+\frac12\langle y-x,\nabla U(x)-\nabla U_0(x)\rangle
+\frac\eta4\left(\|\nabla U(x)\|^2-a_{\rm n}(x)\|\nabla U_0(x)\|^2\right).
\end{aligned}
\end{equation}
For $y\in D_+$, $r_x\leq\|y-x\|\leq\operatorname{diam}(\mathcal C)$. Since $a_{\rm n}(x)>1$, the quadratic term is bounded below using $r_x$. Cauchy-Schwarz inequality gives
$$
\langle y-x,\nabla U(x)-\nabla U_0(x)\rangle
\geq-\operatorname{diam}(\mathcal C)\|\nabla U(x)-\nabla U_0(x)\|.
$$
Dropping the nonnegative term $\eta\|\nabla U(x)\|^2/4$ in~\eqref{eq:proof:bottleneck:density:ratio} and exponentiating gives, for every $y\in D_+$,
$$
\begin{aligned}
p_{\rm R}(x,y)&\geq a_{\rm n}(x)^{-d/2}
\exp\Bigg\{\frac{r_x^2}{4\eta}\left(1-\frac1{a_{\rm n}(x)}\right)-\frac{\operatorname{diam}(\mathcal C)}2\|\nabla U(x)-\nabla U_0(x)\| -\frac\eta4a_{\rm n}(x)\|\nabla U_0(x)\|^2\Bigg\}
\\
&\qquad\qquad\qquad\qquad\qquad\cdot
p_{\rm P}(x,y).
\end{aligned}
$$
The factor multiplying $p_{\rm P}(x,y)$ is independent of $y$. Integrating this inequality over $D_+$ therefore yields
$$
\begin{aligned}
&\mathbb P_\eta^{\rm RALMC}(x,D_+)
=\int_{D_+}p_{\rm R}(x,y)\,dy\\
&\,\geq a_{\rm n}(x)^{-d/2}
\exp\Bigg\{\frac{r_x^2}{4\eta}\left(1-\frac1{a_{\rm n}(x)}\right)-\frac{\operatorname{diam}(\mathcal C)}2\|\nabla U(x)-\nabla U_0(x)\| -\frac\eta4a_{\rm n}(x)\|\nabla U_0(x)\|^2\Bigg\}
\\
&\qquad\qquad\qquad\qquad\qquad\cdot
\int_{D_+}p_{\rm P}(x,y)\,dy.
\end{aligned}
$$
Since the Gaussian density is strictly positive and $|D_+|>0$,
$$
\int_{D_+}p_{\rm P}(x,y)\,dy
=\mathbb P_\eta^{\rm PLMC}(x,D_+)>0.
$$
Dividing the preceding inequality by this positive quantity cancels the integral on its right hand side and gives
$$
\begin{aligned}
&\frac{\mathbb P_\eta^{\rm RALMC}(x,D_+)}{\mathbb P_\eta^{\rm PLMC}(x,D_+)}
\\
&\geq a_{\rm n}(x)^{-d/2}
\exp\Bigg\{\frac{r_x^2}{4\eta}\left(1-\frac1{a_{\rm n}(x)}\right)-\frac{\operatorname{diam}(\mathcal C)}2\|\nabla U(x)-\nabla U_0(x)\| -\frac\eta4a_{\rm n}(x)\|\nabla U_0(x)\|^2\Bigg\},
\end{aligned}
$$
which is~\eqref{eq:bottleneck:pointwise:ratio}. For the uniform estimate, $a_{\rm n}(x)\leq A$ gives $a_{\rm n}(x)^{-d/2}\geq A^{-d/2}$. Also, $a_{\rm n}(x)\geq\rho>1$ and $r_x\geq r$ imply
$$
r_x^2\left(1-\frac1{a_{\rm n}(x)}\right)
\geq r^2\left(1-\frac1\rho\right).
$$
Finally, bound each of the two nonnegative drift penalties by its supremum over $B$. Substitution into~\eqref{eq:bottleneck:pointwise:ratio} gives~\eqref{eq:bottleneck:finite:ratio}.
\end{proof}

\subsection{Proof of Corollary~\ref{cor:bottleneck:minorization}}
\label{sec:cor:bottleneck:minorization}

\begin{proof}
We first compute the logarithmic crossing rate for a Gaussian proposal
$$
Y_\eta=x+\eta b+\sqrt{2\eta a}\,G,
$$
where $x\in B$, $a>0$, and $b\in\mathbb R^d$ are fixed as $\eta\to0$. Its density is
$$
p_\eta(y)=(4\pi\eta a)^{-d/2}
\exp\left[-\frac{\|y-x-\eta b\|^2}{4\eta a}\right].
$$
For every $y\in D_+$, the triangle inequality gives
$\|y-x-\eta b\|\geq r_x-\eta\|b\|$. For sufficiently small $\eta$, this lower bound is positive. Since $D_+\subset\mathcal C$,
$$
\mathbb P(Y_\eta\in D_+)
\leq |\mathcal C|(4\pi\eta a)^{-d/2}
\exp\left[-\frac{(r_x-\eta\|b\|)^2}{4\eta a}\right].
$$
Taking logarithms and multiplying by $\eta$, the volume and Gaussian normalization terms tend to zero because $\eta\log\eta\to0$. Hence
$$
\limsup_{\eta\to0}\eta\log\mathbb P(Y_\eta\in D_+)
\leq-\frac{r_x^2}{4a}.
$$
Fix $\varepsilon>0$. By the definition of distance, choose $y_\varepsilon\in D_+$ with $\|y_\varepsilon-x\|<r_x+\varepsilon$. Openness of $D_+$ provides a radius $0<s_\varepsilon<\varepsilon$ such that the ball $\mathcal{B}(y_\varepsilon,s_\varepsilon)$ lies in $D_+$. For every point $y$ of this ball,
$$
\|y-x-\eta b\|
\leq\|y-y_\varepsilon\|+\|y_\varepsilon-x\|+\eta\|b\|
<r_x+2\varepsilon+\eta\|b\|.
$$
Integrating the density over this ball therefore yields
$$
\mathbb P(Y_\eta\in D_+)
\geq |\mathcal{B}(y_\varepsilon,s_\varepsilon)|(4\pi\eta a)^{-d/2}
\exp\left[-\frac{(r_x+2\varepsilon+\eta\|b\|)^2}{4\eta a}\right].
$$
For fixed $\varepsilon$, the ball has positive volume independent of $\eta$. Taking $\liminf$ after multiplying the logarithm by $\eta$ gives
$$
\liminf_{\eta\to0}\eta\log\mathbb P(Y_\eta\in D_+)
\geq-\frac{(r_x+2\varepsilon)^2}{4a}.
$$
Now let $\varepsilon\rightarrow0$ and combine the two bounds to obtain
\begin{equation}\label{eq:proof:bottleneck:gaussian:ldp}
\lim_{\eta\to0}\eta\log\mathbb P(Y_\eta\in D_+)
=-\frac{r_x^2}{4a}.
\end{equation}
For normalized RALMC, take $a=a_{\rm n}(x)$ and $b=-a_{\rm n}(x)\nabla U_0(x)$ in Equation~\eqref{eq:proof:bottleneck:gaussian:ldp}. For PLMC, take $a=1$ and $b=-\nabla U(x)$. The projection identities at the start of the proof of Proposition~\ref{prop:bottleneck:minorization} then give Equation~\eqref{eq:bottleneck:exact:rates}. Subtracting the two limits gives Equation~\eqref{eq:bottleneck:probability:ratio}. Since $r_x>0$, its right hand side is strictly positive if and only if $a_{\rm n}(x)>1$. The final uniform bound follows from $r_x\geq r$ and $a_{\rm n}(x)\geq\rho$.
\end{proof}

\end{document}